\definecolor{uiolink} {HTML}{0B5A9D}
\setlist[enumerate]{leftmargin=.5in}
\setlist[itemize]{leftmargin=.5in}
\renewcommand{\hat}{\widehat}
\renewcommand{\tilde}{\widetilde}
\Crefname{figure}{Figure}{Figures}
\title{Implicit regularization in AI meets generalized hardness of approximation in optimization -- Sharp results for diagonal linear networks}
\author{Johan S. Wind\thanks{Department of Mathematics, University of Oslo, Norway (\href{mailto:johanswi@math.uio.no}{johanswi@math.uio.no})} \and  Vegard Antun\thanks{Department of Mathematics, University of Oslo, Norway (\href{mailto:vegarant@math.uio.no}{vegarant@math.uio.no})} \and  Anders C.\ Hansen\thanks{Department of Applied Mathematics and Theoretical Physics, University of Cambridge, UK  (\href{mailto:ach70@cam.ac.uk}{ach70@cam.ac.uk})}}
\newcommand\N{\mathbb{N}}
\newcommand{\R}{\mathbb{R}}
\newcommand{\T}{^{\top}}
\newcommand{\PN}{P_{\Null(A)}}
\newcommand{\norm}[1]{\left\|#1\right\|}
\DeclareMathOperator*{\argmin}{argmin}
\DeclareMathOperator*{\argmax}{argmax}
\DeclareMathOperator{\rank}{rank}
\DeclareMathOperator{\Null}{\mathcal{N}}
\DeclareMathOperator*{\sign}{sign}
\DeclareMathOperator{\diag}{diag}
\DeclareMathOperator{\supp}{supp}
\theoremstyle{plain}
\newtheorem{theorem}{Theorem}[section]
\newtheorem{lemma}[theorem]{Lemma}
\newtheorem{proposition}[theorem]{Proposition}
\newtheorem{corollary}[theorem]{Corollary}
\newtheorem{claim}{Claim}
\theoremstyle{definition}
\newtheorem{definition}[theorem]{Definition}
\newtheorem{example}[theorem]{Example}
\newtheorem{remark}[theorem]{Remark}
\newtheorem{assumption}{Assumption}
\numberwithin{equation}{section}
\begin{document}

\maketitle

\begin{abstract}
Understanding the implicit regularization imposed by neural network architectures and gradient based optimization methods is a key challenge in deep learning and AI. In this work we provide sharp results for the implicit regularization imposed by the gradient flow of Diagonal Linear Networks (DLNs) in the over-parameterized regression setting and, potentially surprisingly, link this to the phenomenon of phase transitions in generalized hardness of approximation (GHA). GHA generalizes the phenomenon of hardness of approximation from computer science to, among others, continuous and robust optimization. It is well-known that the $\ell^1$-norm of the gradient flow of DLNs with tiny initialization converges to the objective function of basis pursuit. We improve upon these results by showing that the gradient flow of DLNs with tiny initialization approximates minimizers of the basis pursuit optimization problem (as opposed to just the objective function), and we obtain new and sharp convergence bounds w.r.t.\ the initialization size. Non-sharpness of our results would imply that the GHA phenomenon would not occur for the basis pursuit optimization problem -- which is a contradiction -- thus implying sharpness.  Moreover, we characterize {\it which} $\ell_1$ minimizer of the basis pursuit problem is chosen by the gradient flow whenever the minimizer is not unique. Interestingly, this depends on the depth of the DLN.

\end{abstract}
\paragraph*{Keywords:} Optimization for AI, implicit regularization, diagonal linear networks, robust optimization, generalised hardness of approximation, optimization for sparse recovery.
\paragraph*{Mathematics Subject Classification (2020):} 90C25 , 68T07, 90C17 (primary) and 15A29, 94A08, 46N10 (secondary).

\setcounter{tocdepth}{2}
\tableofcontents
\section{Introduction}\label{s:intro}

During the past decade, deep learning has transformed a number of historically challenging problems in computer vision, natural language processing, game intelligence, etc. In many of these applications, the trained neural networks used to solve these problems are \emph{over-parameterized}. That is, the neural networks have far more parameters than the number of data points used for training. In this setting, a neural network can typically fit any training data -- including random labels \cite{zhang2017understanding} -- making it hard to explain why deep learning methods generalize so well \cite{DRnonvacuous17}. Moreover, the practical performance of neural networks often improves as the number of parameters grow \cite{kaplan2020scaling, tan2019efficientnet}. These observations have led to the study of the potential implicit regularization (sometimes called implicit bias) imposed by the gradient based methods and different network architectures \cite{arora2019implicit, neyshabur2017geometry, neyshabur2014search}. 

It may seem surprising that there is a link to \emph{generalized hardness of approximation} (GHA), as this phenomenon -- at a first glance -- may seem disconnected from implicit regularization. However, the GHA phenomenon (see \S \ref{sec:gha}), which first appeared in \cite{opt_big} (see also \cite{CSBook} Chapter 8) and analyzed \cite{opt_big, gazdag2022generalised, comp_stable_NN22} in connection with robust and convex optimization
\cite{NemirovskiLRob, Nemirovski_NPhard_Stable, Nemirovski_robust, Nemirovski_robust2}, typically stem from regularization problems (e.g. basis pursuit, Lasso, nuclear norm minimization etc.) Thus, after a second look, it seems natural that there is a link to implicit regularization. Indeed, while the current literature is far from being able to characterize the implicit regularization imposed by deep learning in general, there has recently been made considerable progress on a number of classical problems in scientific computing. For example, there has been a substantial interest in different flavours of matrix factorization/completion \cite{arora2018optimization, arora2019implicit, chou2020gradient, geyer2020low, gissin2019implicit, gunasekar2018implicit, gunasekar2017implicit, neyshabur2017geometry, neyshabur2014search, razin2020implicit, razin2021implicit, razin2022implicit, soudry2018implicit, stoger2021small, bah2022learning}, and (sparse) linear regression \cite{arora2018optimization, chou2021more, even2023s, gissin2019implicit, gunasekar2018implicit,  gunasekar2017implicit, li2021implicit, moroshko2020implicit, pesme2020online, pesme2021implicit,  pesme2023saddle, vaskevicius2019implicit, woodworth2020kernel}. These are all areas where the phenomenon of GHA occurs. 

In this paper, we determine the implicit regularization imposed by gradient descent/flow for so-called \emph{diagonal linear networks} in the over-parameterized regression setting. Moreover, we show that our results imply the existence of diagonal linear networks which can approximate solutions to certain $\ell^1$-regularized optimization problems to arbitrary accuracy. However, the phenomenon of GHA \cite{opt_big} (see also \cite{comp_stable_NN22})  for these $\ell^1$-regularized optimization problems will -- in general -- prevent the existence of any algorithm that can compute these neural networks. Thus, paradoxically, one can prove that the implicit regularization of certain deep learning methods yield solutions that cannot be computed by algorithms. This phenomenon is similar in spirit to hardness of approximation in computer science \cite{arora2009computational}, however, is based on analysis rather than discrete mathematics. 

\subsection{Analyzing the implicit regularization of linear networks}

A linear neural network is a function $\Psi_{\theta}\colon \R^N\to \R$ on the form 
\begin{equation}
    \Psi_{\theta}(x) = \psi_{\theta} \cdot x, 
\end{equation}
where $\cdot$ denotes the standard real vector inner product, and $\psi_{\theta}\in \R^N$ is a vector whose components are parametrized by $N$ parameters $\theta = (\theta_1,\ldots,\theta_N)$.  A parametrization for $\psi_{\theta}$ could, for example, be $\psi_{\theta} = \theta$ for $\theta \in \R^N$ \cite{pesme2020online}, or $\psi_{\theta} = \theta_{+}^{p} -\theta_{-}^{p}$, where $\theta = (\theta_+,\theta_-) \in \R^{2N}_{\geq 0}$, and the power $p>0$ notation means componentwise action on the vectors $\theta_+$ and $\theta_-$ \cite{ gunasekar2017implicit,  li2021implicit, moroshko2020implicit, pesme2021implicit, vaskevicius2019implicit,woodworth2020kernel}. However, the above mentioned  parametrizations are not the only choices used in the literature \cite{chou2021more, even2023s, pesme2023saddle}.

In this work, we analyze different classes of linear neural networks for regression problems with training data $\{(a_1,y_1), \ldots, (a_m,y_m)\}\subset \R^N \times \R$ and a squared error loss functional. For this problem one can write the loss function for a given network parametrization $\psi_{\theta}$ as 
\begin{equation}\label{eq:flow1}
L(\theta) = \frac{1}{2} \sum_{i=1}^{m} (\psi_{\theta}\cdot a_i - y_i)^2 = \frac{1}{2} \|A\psi_{\theta} - y\|_{2}^{2},
\end{equation}
where $A$ is the $m\times N$ matrix whose $i$'th row is $a_i$ and $y$ is the vector $(y_1,\ldots,y_m)$. We assume that the size of the training data $m < N$ and that the $\mathrm{rank}(A) = m$, i.e., that the training data is linearly independent. With these assumptions, the linear system $Ax =y$ has infinitely many solutions and the challenge is to determine the solutions found when minimizing $L$ using gradient-based methods. The solutions sought with these methods generally depend on the parametrization of $\psi_{\theta}$ and the initialization of $\theta$. As we want to understand the role of over-parametrization for these models, we focus on the setting with $d\geq N > m$ parameters.

A much used approach for analyzing the implicit regularization imposed by different parametrizations of $\psi_{\theta}$, is to formulate the problem of minimizing $L$ in \eqref{eq:flow1} as a gradient flow problem. That is, one considers the dynamical system 
\begin{equation} \label{eq:grad_flow1}
  \dot w(t) = -\nabla_{\theta} L(w(t)), \quad\text{with initial data }\quad w(0) = w_0 \in \R^d.
\end{equation}
Here $\nabla_{\theta} L$ denotes the gradient of $L$ with respect to $\theta$ for a given parametrization of $\psi_{\theta}$ and the function $w \colon [0,\infty) \to \R^d$ denotes the flow of the parameters. We use dot notation $\dot w$ for the componentwise derivative of $w$. It is common to view \eqref{eq:grad_flow1} as a continuous counterpart to gradient descent, as we recover the gradient descent steps if we discretize the above equation using the forward Euler method. A key motivation for \eqref{eq:grad_flow1} is that it is often easier to study than its discrete counterpart.

\subsection{Generalized hardness of approximation -- Phase transitions}\label{sec:gha}

\emph{Hardness of approximation} \cite{arora2009computational, arora1998proof, bellare1998free, feige1996interactive, johan1999clique, haastad2001some, sudan2009probabilistically} is a phenomenon in computer science whose discovery in the 1990s lead to a highly active research program yielding several Gödel and Nevanlinna Prizes. It can roughly be described as follows (subject to  $\mathrm{P}\neq \mathrm{NP}$): Given a combinatorial optimization that may be NP-hard,  one may still -- in polynomial time (denoted by P) -- compute an $\epsilon$-approximate solution to this problem for any $\epsilon >\epsilon_0$. However, for any $\epsilon < \epsilon_0$, there does not exist any algorithm (i.e., Turing machine) that can compute such an approximation in polynomial time. We say that the problem has a phase transition at $\epsilon_0$: 

\begin{tabular}{@{}>{\centering}m{0.32\textwidth}>{\centering\arraybackslash}m{0.65\textwidth}@{}}
        \textbf{
			Classical phase
			transition  at $\epsilon_0$ in hardness
            of approximation} \newline
        (Assuming $\mathrm{P}\neq \mathrm{NP}$)
        \vspace{0.5cm}
        &
\begin{tikzpicture}[xscale=1.3]
\draw [thick,->] (3,0) -- (9.3,0);
\draw (3,-.2) -- (3,.2);
\draw (6.5,-.2) -- (6.5, 1.4);
\node[align=center, below] at (3,-.2) {$0$};
\node at (9.5,0) {$\epsilon$};
\node[align=center, below] at (6.5,-.2) {$\epsilon_0$};
\node[align=center, above] at (7.7,0.0) 
{$\epsilon > \epsilon_0:$ \\ Computing \\ $\epsilon$-approx $\in$ P};
\node[align=center, above] at (4.8,0.0)
{$\epsilon < \epsilon_0:$ \\ Computing \\ $\epsilon$-approx is $\notin$ P};
\end{tikzpicture}
\end{tabular}
The phenomenon of hardness of approximation, hinges on the assumption that $\mathrm{P}\neq \mathrm{NP}$. However, if it turns out that $\mathrm{P}=\mathrm{NP}$, then the phenomenon may cease to exist in many cases, and one can design algorithms which in polynomial time can compute approximations to any accuracy.

The phenomenon of GHA
\cite{opt_big, CSBook, gazdag2022generalised, comp_stable_NN22} 
 is similar in spirit to hardness of approximation, however it is in general independent of whether or not $\mathrm{P}=\mathrm{NP}$.  The phenomenon is more general than hardness of approximation, in that it is not just centered around the complexity of a computation in a Turing model (see \Cref{breakdown}), but is about arbitrary classes of computational problems in any model. For example, one may ask whether or not we can compute $\epsilon$-approximate solutions to certain computational problems at all. It turns out that for certain computational problems, the answer depends on the accuracy sought. In particular, for these problems, there exists an $\epsilon^{\mathrm{s}} > 0$, such that no algorithm can compute an $\epsilon$-approximation for $\epsilon < \epsilon^{\mathrm{s}}$. However, for $\epsilon > \epsilon^{\mathrm{s}}$ computing such an approximation is possible (even quickly). Schematically, we can view this as a phase transition as well: 

\begin{tabular}{@{}>{\centering}m{0.32\textwidth}>{\centering\arraybackslash}m{0.65\textwidth}@{}}
 \textbf{Phase transitions at $\epsilon^{\mathrm{s}}$ for generalized hardness of approximation}
        \vspace{0.5cm}
        &
\begin{tikzpicture}[xscale=1.3]
\draw [thick,->] (0,0) -- (6.3,0);
\draw (0,.2) -- (0,-.2) node[below] {$0$};
\draw (3.5,1.4) -- (3.5, -0.2) node[below] {$\epsilon^{\mathrm{s}}$};
\node at (6.5,0) {$\epsilon$};
\node[align=center, above] at (5.0,0.0) 
{\parbox{0.25\textwidth}{\center $\epsilon > \epsilon^{\mathrm{s}}:$ \\  Computing \\ $\epsilon$-approx $\in S_1$ }};
\node[align=center, above] at (1.75,0.0)
{\parbox{0.25\textwidth}{\center $\epsilon < \epsilon^{\mathrm{s}}:$ \\  Computing \\ $\epsilon$-approx $\in S_2$ }};
\end{tikzpicture}
\end{tabular}
For example, one could have 
$
S_1 = P
$
(polynomial solvable)
and
$
S_2 = P^c,
$
as in classical hardness of approximation, or for example
\[
S_1 = P \text{ (polynomial solvable)}, \qquad S_2 = \text{ non-computable}.
\]

\begin{remark}[Strong and weak breakdown epsilons]\label{breakdown}
If $S_1$ denotes the set of computable problems, the $\epsilon^{\mathrm{s}}$ at the center of the above phase transition is called the \emph{strong breakdown epsilon}. In \cite{opt_big}, a weaker version called the \emph{weak breakdown epsilon}, is also considered, that takes into account the runtime of the algorithms. We do, however, not consider this weaker version in this manuscript. 
\end{remark}

A result that will be important to us in what follows is the following theorem which is a very specialized case of the result in \cite{opt_big}. The precise statement can be found in \cref{ss:Ex_non_comp}.

\begin{theorem}[Generalized hardness of approximation and phase transitions in basis pursuit]\label{GHA:BP}
Let $K$, $N$ and $m$ be integers, with $N \geq m$, $N \geq 2$, and let $\kappa = 10^{-K}$. Consider the optimization problem 
\begin{equation}\label{eq:BP123}
\min_{x\in \R^N} \|x\|_{1} \text{ subject to } Ax=y
\end{equation}
where $A \in \R^{m\times N}$, $y \in  \R^m$ and $\|\cdot\|_1$ denotes the $l^1$-norm. For the problem \eqref{eq:BP123}, there exists a class of inputs $\Omega_{K} \subset \R^{m\times N}\times \R^m$ of computable inputs for which the following hold simultaneously (where accuracy
is measured in the Euclidean norm):
\begin{enumerate}[label=(\roman*)]
    \item For $\epsilon < \kappa$, no algorithm can produce an $\epsilon$-approximate solution to a minimizer of \eqref{eq:BP123} for all inputs $(A,y) \in \Omega_{K}$.
    \item For $\epsilon > \kappa$, there exists an algorithm which can compute an $\epsilon$-approximate solution to a minimizer of \eqref{eq:BP123} for all inputs $(A,y) \in \Omega_{K}$.
\end{enumerate}
\end{theorem}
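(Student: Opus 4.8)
This is a very specialised instance of the generalised hardness of approximation results of \cite{opt_big}, so one route is to instantiate the general result of \cite{opt_big} (restated in \cref{ss:Ex_non_comp}) with these parameters; the plan below instead sketches the mechanism. The argument has two halves, calibrated so that the transition sits exactly at $\kappa=10^{-K}$: a \emph{negative} part, exhibiting a family $\Omega_K$ of computable inputs on which no algorithm attains accuracy better than $\kappa$, and a \emph{positive} part, giving an algorithm that handles every input in $\Omega_K$ to accuracy $\kappa$ and hence to any $\epsilon>\kappa$. First I would reduce to the smallest nontrivial case $m=1$, $N=2$: for general $N\ge m$, pad $A$ with zero rows and columns and $y$ with zeros, so the padded coordinates appear in no equation and are therefore set to $0$ by the basis-pursuit minimiser; this leaves the minimiser and all Euclidean distances unchanged, so a hard two-dimensional core yields a hard instance in $\R^{m\times N}\times\R^m$ for every admissible $(m,N)$.

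For the core, let $A_\gamma=(1,\ \gamma)\in\R^{1\times 2}$ for $\gamma>0$ and put $y_0=\sqrt2\,\kappa$; write $\Xi(\iota)$ for the set of minimisers of \eqref{eq:BP123} at the input $\iota=(A,y)$. A short case analysis of $\min\{|x_1|+|x_2|:x_1+\gamma x_2=y_0\}$ shows that $\ell^1$-minimisation puts all the mass on the coordinate with the larger $|a_i|$: the minimiser is the unique point $(y_0,0)$ when $0<\gamma<1$, the unique point $(0,\ y_0/\gamma)$ when $\gamma>1$, and the whole segment from $(y_0,0)$ to $(0,y_0)$ when $\gamma=1$. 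The two extreme minimisers satisfy $\norm{(y_0,0)-(0,y_0)}_2=2\kappa$ and are the limits of the branch minimisers as $\gamma\to1^{\mp}$. I would then take
\[
\Omega_K=\bigl\{(A_\gamma,y_0):\gamma\in\{1\}\cup\{1-2^{-k}:k\in\N\}\cup\{1+2^{-k}:k\in\N\}\bigr\},
\]
a family of computable inputs (every $\gamma$ is dyadic and $y_0=\sqrt2\cdot10^{-K}$ is computable), embedded into $\R^{m\times N}\times\R^m$ as above.

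For part (i), fix any algorithm $\Gamma$ and any $\epsilon<\kappa$, and run $\Gamma$ on $(A_1,y_0)\in\Omega_K$, answering each oracle request for a $2^{-j}$-approximation of an entry by its exact value. If $\Gamma$ never halts it produces no output and fails; otherwise it halts after finitely many queries, the finest requesting precision $2^{-M}$, and outputs some $\hat x$. Choose $k>M$ with $\norm{(y_0,0)-(0,\ y_0/(1+2^{-k}))}_2>2\epsilon$, which is possible since this distance increases to $2\kappa>2\epsilon$. Both $\iota^+=(A_{1-2^{-k}},y_0)$ and $\iota^-=(A_{1+2^{-k}},y_0)$ lie in $\Omega_K$, and the constant answers $1$ and $y_0$ already supplied are valid $2^{-j}$-approximations of the entries of \emph{both} inputs for every $j\le M<k$; hence $\Gamma$ executes identically and returns the same $\hat x$ on $\iota^+$ and on $\iota^-$. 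Since $\Xi(\iota^+)=\{(y_0,0)\}$ and $\Xi(\iota^-)=\{(0,\ y_0/(1+2^{-k}))\}$ are more than $2\epsilon$ apart, $\hat x$ is $\epsilon$-close to at most one of them, so $\Gamma$ fails on $\iota^+$ or on $\iota^-$; as $\Gamma$ was arbitrary, (i) follows. For part (ii), the fixed point $x_{\mathrm{safe}}=(\kappa/\sqrt2,\ \kappa/\sqrt2)$ satisfies $\mathrm{dist}(x_{\mathrm{safe}},\Xi(\iota))\le\kappa$ for every $\iota\in\Omega_K$ --- it lies on the $\gamma=1$ segment, $\norm{x_{\mathrm{safe}}-(y_0,0)}_2=\kappa$, and $\norm{x_{\mathrm{safe}}-(0,\ y_0/\gamma)}_2\le\kappa$ for all $\gamma\ge1$ --- so the trivial algorithm that ignores its input and outputs $x_{\mathrm{safe}}$ (extended by zeros in the padded coordinates) $\epsilon$-approximates \eqref{eq:BP123} over $\Omega_K$ for every $\epsilon>\kappa$.

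The step I expect to be the main obstacle is the impossibility half, and within it the calibration: $\Omega_K$ must be hard enough that no algorithm beats accuracy $\kappa$ yet tame enough that a single fixed output attains accuracy $\kappa$, so the two branch minimisers must sit at Euclidean distance tending to exactly $2\kappa$ --- never exceeding it --- while the inputs $A_\gamma$ become arbitrarily close. One must also carry out the adaptive-adversary argument rigorously in the precise machine/oracle model of \cref{ss:Ex_non_comp}, accounting for non-halting (and, if relevant there, randomised) algorithms, and handle the set-valuedness of the minimiser at $\gamma=1$ with care, since accuracy is measured as distance to the entire solution \emph{set}. The embedding to general $N\ge m$ and the verification that $\Omega_K$ consists of computable inputs are routine.
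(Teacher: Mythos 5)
Your proposal gives a concrete, self-contained construction, whereas the paper itself does not prove \Cref{GHA:BP} at all — it explicitly defers to \cite{opt_big}, with the precise form being \Cref{thm:extended_smale} (cited as \cite[Prop.\ 8.33]{opt_big}). So you are taking a genuinely different route, and a more instructive one: you exhibit the phase transition directly. Your core construction is correct. Taking $A_\gamma=(1,\gamma)$ and $y_0=\sqrt2\,\kappa$, the $\ell^1$ minimiser is $(y_0,0)$ for $\gamma<1$, $(0,y_0/\gamma)$ for $\gamma>1$, and the segment $\{(y_0(1-\mu),\,y_0\mu):\mu\in[0,1]\}$ for $\gamma=1$; the two branch minimisers are at distance $\sqrt{y_0^2+y_0^2/\gamma^2}\to 2\kappa$ from below as $\gamma\to1^+$; and the point $x_{\mathrm{safe}}=(\kappa/\sqrt2,\kappa/\sqrt2)$ lies on the $\gamma=1$ segment and is within distance $\kappa$ of every branch minimiser. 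The adversary argument is the standard one for this framework and is valid: after a general algorithm has queried precision $2^{-M}$ on $(A_1,y_0)$, the value $1$ remains a legitimate $\Delta_1$-oracle answer for both $A_{1\pm 2^{-k}}$ whenever $k>M$, so by \Cref{definition:Algorithm}(iii) the algorithm returns the same output on both, yet their (singleton) solution sets can be made more than $2\epsilon$ apart for any $\epsilon<\kappa$. The two concerns you flag at the end are both easily discharged in this paper's framework: general algorithms are by definition total mappings, so ``non-halting'' does not arise, and the set-valuedness at $\gamma=1$ never enters the impossibility argument because you compare $\iota^+$ and $\iota^-$, which have unique minimisers.

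One caveat worth recording, though it does not affect the correctness of your proof of \Cref{GHA:BP} as stated: for $m>1$ your embedding pads $A$ with \emph{zero rows}, giving $\mathrm{rank}(A)=1<m$. This is fine for \Cref{GHA:BP}, which places no rank or conditioning restriction on $\Omega_K$, but it does not reproduce the well-conditioned family guaranteed by \Cref{thm:extended_smale} (where $\mathrm{Cond}(AA^*)\le 3.2$ and hence $\mathrm{rank}(A)=m$), and the paper relies on that stronger version in the proof of \Cref{thm:grad_flow}\ref{it:forall} to conclude $\Omega_K\subset\Omega$. If you wanted your construction to serve that purpose too, you would need a rank-preserving padding — e.g.\ for $N>m$, append the block $\begin{bmatrix}0&0&I_{m-1}&0\end{bmatrix}$ in rows $2,\dots,m$ with zeros in the corresponding entries of $y$, which keeps the feasible set, the minimiser, and the Euclidean distances unchanged while making $AA^\top=\mathrm{diag}(1+\gamma^2,1,\dots,1)$, whose condition number stays near $2$ for $\gamma$ near $1$.
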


\begin{remark}[Generalized hardness of approximation in the sciences]
Note that GHA occurs in basis pursuit (with noise)  in the basic settings of compressed sensing -- e.g. when $A$ satisfies the robust nullspace property, see \cite{opt_big}). In these cases the phase transition happens typically at $K = \log(\delta^{-1})$, where $\delta >0$ is the noise parameter.  
\end{remark}

\section{Main results}
In this work we focus on network parametrizations of the form \begin{equation}\label{eq:param}
    \psi_{\theta} = \theta_{+}^{p} - \theta_{-}^{p}, \, \text{ where } \, \theta =(\theta_+,\theta_-) \in \R^{2N}_{\geq 0},
\end{equation}
and where $\theta_{\pm}^{p} = (\theta_{\pm,1}^{p}, \ldots, \theta_{\pm,N}^{p})$ is the vector where each component is raised to the power of $p \geq 2$. This parametrization is called a diagonal linear network, a name which is inspired from matrix factorization (see, e.g., \cite[Sec.\ 4]{woodworth2020kernel} for more on this connection).
Furthermore, we follow the gradient flow approach in \eqref{eq:grad_flow1}, with initial data $w_0 = \alpha {\bf 1} \in \R^{2N}$, where $\alpha > 0$ and ${\bf 1} = (1,\ldots, 1)$. For convenience, we abuse notation slightly and denote the dynamical system by   
\begin{equation} \label{eq:grad_flow}
  \dot \theta(t) = -\nabla L(\theta(t)), \quad\text{with}\quad \theta(0) = \alpha {\bf 1}_{2N},
\end{equation}
where the weights $\theta$ are now vector-valued functions of time $t\geq 0$. The model in \eqref{eq:param} with initialization $\alpha {\bf 1}$ has been studied many places in the literature \cite{gunasekar2017implicit, li2021implicit, moroshko2020implicit, pesme2021implicit, vaskevicius2019implicit, woodworth2020kernel}. Below the main theorems and in \Cref{ss:rel_work}, we expand more on how this relates to our work.   

The choice of $p$ in \eqref{eq:grad_flow} will be clear from the context, but the size of the initialization $\alpha > 0$ will be important to us. We, therefore, denote the solution vector at time $t$ by 
\begin{equation}\label{psi_def}
    \psi_{\alpha}(t) = \theta_{+}^{p}(t) - \theta_{-}^{p}(t), 
\end{equation}
where $\psi_{\alpha}(t)$ is a slight abuse of notation for $\psi_{\theta_\alpha}(t)$, as $\theta(t)$ depends on $\alpha$ according to \eqref{eq:grad_flow}.
We let $\psi_{\alpha}(\infty) \coloneqq \lim_{t\to\infty} \theta_{+}^{p}(t) - \theta_{-}^{p}(t)$ denote the solution vector at convergence. 
Next, consider the basis pursuit optimization problem \eqref{eq:BP123} from \Cref{GHA:BP}, and let 
\begin{equation}\label{eq:BP_argmin}
  \mathcal{U} = \mathcal{U}(A,y) = \argmin_{z \in \R^N} \|z\|_{1} \quad\text{subject to}\quad Az=y
\end{equation}
denote its set of minimizers, and let 
\begin{equation}\label{eq:BP_min}
R = R(A,y) = \min_{z \in \R^N} \|z\|_1 \quad\text{subject to}\quad Az=y
\end{equation}
denote its minimum value.
 We note that the solution set $\mathcal{U}$ always is non-empty, since the assumption $\rank (A) = m \le N$ implies that the set of feasible points is non-empty.

To determine the specific element that $\psi_{\alpha}(\infty)$ approximates in $\mathcal{U}$, we need some more notation.  
For non-negative vectors $z \in \R^{N}_{\geq 0}$, let 
\begin{equation}\label{eq:H_def}
H(z) = -\sum_{i = 1}^N |z_i| \ln (|z_i|), \quad \text{with the (usual) convention that}\quad 0\, \ln(0) \coloneqq 0,
\end{equation}
denote the entropy function of $z$. Moreover, for any $r > 0$ and $z\in \R^N$, we let $\norm{z}_{r} = (\sum_{i = 1}^N |z_i|^r)^\frac{1}{r}$. For completeness, we note that when $r \in (0,1)$, then $\|\cdot\|_r$ is a quasinorm with constant $2^{1/r -1}$, and for $r \geq 1$ this is the usual $\ell^r$-norm. In our main result, we shall see that the gradient flow of \eqref{eq:grad_flow} can get arbitrarily close to a unique minimizer of \eqref{eq:BP_argmin}, by choosing $\alpha > 0$ sufficiently small. The selected minimizer will be given by 
\begin{equation}\label{eq:W_p}
\mathcal{W}_p(A,y) = 
    \begin{cases}
      \argmax_{z \in \mathcal{U}(A,y)} H(z) &\text{ if } p = 2\\
      \argmax_{z \in \mathcal{U}(A,y)} \norm{z}_{2/p} &\text{ if } p \in (2,\infty)
    \end{cases}.
\end{equation}
Uniqueness of $\mathcal{W}_p$ is covered in \Cref{Wp_Gp_id}.

\subsection{The implicit regularization of gradient flow meets GHA}
Our first main result, characterizes the implicit regularization of the gradient flow of \eqref{eq:grad_flow} for the parametrization \eqref{eq:param} with tiny initialization. It also connects this flow to the phenomenon of GHA. 

\begin{theorem}\label{thm:grad_flow}
    Let 
    \[\Omega = \{(A,y) \in \R^{m\times N}\times \R^m : \mathrm{rank}(A)=m, y \neq 0\}.\]
We then have the following.
    \begin{enumerate}[label=(\roman*)] 
        \item \label{it:dep_bound} Let $(A,\cdot) \in \Omega$ and let $p \in [2,\infty)$. Then, there exist constants $C_1, C_2 > 0$, depending on $A$ and $p$, such that for any non-zero $y \in \R^m$ and any $\alpha > 0$, we have
            \begin{equation} \label{e:pe2}
    \norm{\psi_{\alpha}(\infty)-\mathcal{W}_p(A,y)}_2 \le C_1\norm{y}_2\left(\frac{\alpha^p}{\norm{y}_2}\right)^{C_2}.
            \end{equation}
            If $p > 2$, then $C_2 = 1$.
    \item\label{it:forall}  Suppose that there exist $p\geq 2$ and universal constants $C_1, C_2 > 0$ such that 
            \begin{equation} \label{e:forall}
    \norm{\psi_{\alpha}(\infty)-\mathcal{W}_p(A,y)}_2 \le C_1\norm{y}_2\left(\frac{\alpha^p}{\norm{y}_2}\right)^{C_2}, \quad\forall (A,y) \in \Omega.
            \end{equation}
Then, for any $\epsilon > 0$, there exists an algorithm $\Gamma_{\epsilon}$ that, given input $(A,y) \in \Omega$, computes solutions to the basis pursuit optimization problem \eqref{eq:BP_argmin} to accuracy $\epsilon$. That is 
        \[\inf_{x^* \in\mathcal{U}(A,y)}\|\Gamma_{\epsilon}(A,y) - x^*\|_2 \leq \epsilon, \quad \forall (A,y) \in \Omega.\]
            However, this contradicts the phenomenon of generalized hardness of approximation for the basis pursuit problem (\Cref{GHA:BP}). We conclude that there cannot exist any $p\geq 2$ and universal constants $C_1,C_2> 0$ for which \eqref{e:forall} holds. 
    \end{enumerate}
\end{theorem}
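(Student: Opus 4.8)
The two parts are established by rather different means: \ref{it:dep_bound} is a quantitative analysis of the gradient flow \eqref{eq:grad_flow}, while \ref{it:forall} is a short reduction to \Cref{GHA:BP}. For \ref{it:dep_bound}, the first step I would take is a scaling reduction. The flow \eqref{eq:grad_flow} is homogeneous: sending $y\mapsto\lambda y$, $\alpha\mapsto\lambda^{1/p}\alpha$ and rescaling time turns the flow for $(A,y)$ with initialization $\alpha$ into the flow for the rescaled data and multiplies $\psi_\alpha(\infty)$ by $\lambda$; and since $\mathcal U(A,\lambda y)=\lambda\,\mathcal U(A,y)$ while, on $\mathcal U$, both $H$ and $\|\cdot\|_{2/p}$ transform so that their maximizer in \eqref{eq:W_p} is multiplied by $\lambda$, the selected point $\mathcal W_p$ also scales by $\lambda$. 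Both sides of \eqref{e:pe2} then scale by $\lambda$ while $\alpha^p/\|y\|_2$ is invariant, so it suffices to prove, for each fixed $A$ and $p$, that $\|\psi_\alpha(\infty)-\mathcal W_p(A,y)\|_2\le C_1\,\alpha^{pC_2}$ uniformly over unit vectors $y$ (the range $\alpha\ge 1$ being handled separately: there the left-hand side is bounded, so the estimate holds once $C_1$ is large enough).

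\textbf{Mirror-flow description and asymptotics.} Next I would invoke the mirror-flow characterization of diagonal linear networks. Under \eqref{eq:param} with initialization $\alpha\mathbf 1$ — so that $\psi_{\theta(0)}=0$ — the trajectory $\psi_{\theta(t)}$ is a mirror flow for a strictly convex potential $\phi_{\alpha,p}$, and at convergence $\psi_\alpha(\infty)=\argmin\{\phi_{\alpha,p}(z):Az=y\}$. As $\alpha\to0$ this potential admits an expansion whose leading term is a positive multiple of $\|z\|_1$ and whose subleading term is $-G_p(z)$, where $G_2=H$ and $G_p=\|\cdot\|_{2/p}$ for $p>2$ (for $p>2$ the potential moreover approaches $\|\cdot\|_1$ at a \emph{polynomial} rate in $\alpha$, which is ultimately the source of the improved exponent $C_2=1$). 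This refines the classical results cited in the introduction, which only identified the limiting \emph{objective value}. Substituting $z=\mathcal W_p(A,y)$ into the optimality inequality $\phi_{\alpha,p}(\psi_\alpha(\infty))\le\phi_{\alpha,p}(\mathcal W_p)$ and using the expansion then forces, at first order, $\|\psi_\alpha(\infty)\|_1$ close to $R(A,y)$ — hence $\psi_\alpha(\infty)$ close to the solution set $\mathcal U(A,y)$ — and, at second order, closeness of $\psi_\alpha(\infty)$ to the maximizer $\mathcal W_p$ of $G_p$ on $\mathcal U$.

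\textbf{Quantitative upgrade and the main obstacle.} To turn this qualitative picture into the polynomial bound \eqref{e:pe2} with constants depending only on $A$ and $p$, I would combine: (a) an $\ell^1$ error bound — under $\mathrm{rank}(A)=m$ the minimizer set $\mathcal U(A,y)$ of \eqref{eq:BP_argmin} is polyhedral and $\|z\|_1-R(A,y)\ge c_A\,\mathrm{dist}(z,\mathcal U(A,y))$ for all $z$ with $Az=y$; (b) the fact that the coordinates of $\psi_\alpha(\infty)$ outside $\supp\mathcal W_p(A,y)$ remain of size $O(\alpha^p)$, which is what injects the exponent $p$; and (c) a quadratic growth estimate $G_p(\mathcal W_p)-G_p(z)\ge c\,\|z-\mathcal W_p\|_2^2$ for $z\in\mathcal U(A,y)$ near $\mathcal W_p$ (from strict concavity of $H$, resp.\ of $\|\cdot\|_{2/p}$, on $\supp\mathcal W_p$), which is where \Cref{Wp_Gp_id} enters. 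Chaining (a)--(c) through the optimality inequality, and covering the finitely many possible supports of basis-pursuit solutions as $y$ ranges over the sphere so as to obtain a single pair $(C_1,C_2)$, yields the claim; the sharper $C_2=1$ for $p>2$ reflects the polynomial-rate expansion noted above together with the fact that $\|\cdot\|_{2/p}$ remains non-degenerate as coordinates vanish. The step I expect to be the main obstacle is precisely (b) together with this uniformity in $y$: bounding the off-support mass of $\psi_\alpha(\infty)$ by a fixed multiple of $\alpha^p$ with a constant that does not degenerate as $(A,y)$ approaches data for which the basis-pursuit minimizer becomes ill-posed.

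\textbf{Part \ref{it:forall}.} This is a short reduction. Assume \eqref{e:forall} holds with universal constants $C_1,C_2>0$, and define an algorithm $\Gamma_\epsilon$ on input $(A,y)\in\Omega$ as follows: from the oracle data compute $\|y\|_2$ to sufficient precision and pick $\alpha>0$ with $C_1\|y\|_2(\alpha^p/\|y\|_2)^{C_2}<\epsilon/2$ — possible since $C_1,C_2$ are fixed known numbers — then numerically integrate the ODE \eqref{eq:grad_flow} for this $\alpha$ (its right-hand side is smooth and depends continuously on $(A,y)$, so the solution can be approximated to arbitrary accuracy from the oracle data, a standard fact in the GHA framework) up to a time $T$ and with a step size small enough that the computed vector lies within $\epsilon/2$ of $\psi_\alpha(\infty)$, using that the flow converges to $\psi_\alpha(\infty)$; output that vector. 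Then $\|\Gamma_\epsilon(A,y)-\mathcal W_p(A,y)\|_2\le\epsilon$, and since $\mathcal W_p(A,y)\in\mathcal U(A,y)$ we get $\inf_{x^*\in\mathcal U(A,y)}\|\Gamma_\epsilon(A,y)-x^*\|_2\le\epsilon$ for every $(A,y)\in\Omega$. Now fix any $K$, note that the hard-instance class $\Omega_K$ of \Cref{GHA:BP} consists of full-rank inputs with $y\neq 0$, so $\Omega_K\subseteq\Omega$, and apply the construction with $\epsilon=\tfrac12\cdot10^{-K}<\kappa$: then $\Gamma_\epsilon$ produces $\epsilon$-approximate minimizers of \eqref{eq:BP123} for all inputs in $\Omega_K$, contradicting \Cref{GHA:BP}\,(i). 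Hence no universal $C_1,C_2$ can exist, which is exactly the assertion of \ref{it:forall}.
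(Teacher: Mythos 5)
Your scaling reduction at the start is a clean observation (the paper instead carries $\|y\|_2$ through directly), and your part~\ref{it:forall} argument is essentially the paper's: instantiate $\alpha$ small from the assumed universal bound, discretize the gradient flow to accuracy $\epsilon/2$ with a step size computable from $\Delta_1$-information, then observe $\Omega_K\subseteq\Omega$ and contradict \Cref{GHA:BP}. (The paper spends several lemmas making the ``standard fact'' about error-controlled integration rigorous in the SCI framework -- \Cref{prop:grad_desc}, \Cref{lem:comp_const}, \Cref{l:l1_computable} -- but the route is the same.) Your mirror-flow identification $\psi_\alpha(\infty)=\argmin\{Q_p(z):Az=y\}$ also matches the paper's $q^*=\mathcal V_p(A,y)$.

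The gap is in step (c) of part~\ref{it:dep_bound}: the quadratic-growth route cannot deliver $C_2=1$ for $p>2$. Concretely, on $\mathcal U(A,y)$ one has $G_p(z)=R-\tfrac{p}{2}\alpha^{p-2}\|z\|_{2/p}^{2/p}$ (paper's $G_p$), so a bound of the form $\|z\|_{2/p}^{2/p}$-gap $\ge c\|z-\mathcal W_p\|_2^2$ converts a function-value estimate $G_p(m^*)-G_p(g^*)\le\Delta$ into $\|m^*-g^*\|_2\lesssim\alpha^{(2-p)/2}\sqrt{\Delta}$. The only function-value control available from the optimality inequality and the pointwise comparison $g_p\le q_p\le g_p(\cdot+1)-g_p(1)$ is $\Delta\lesssim\alpha^p$ (each nonzero coordinate contributes $\approx\tfrac{p}{2}\alpha^p$ to $Q_p-G_p$), which yields only $\|m^*-g^*\|_2\lesssim\alpha$ -- losing a factor of $\alpha^{p-1}$ relative to the claim. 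The true gap $G_p(m^*)-G_p(g^*)$ is in fact of order $\alpha^{3p-2}$, but a function-value comparison cannot see this. The paper avoids the loss by working at the \emph{gradient} level: from \Cref{qg'_bound} it produces a point $w$ with $\|w-m^*\|_\infty\le\alpha^p$ and $\nabla G_p(w)$ matching the KKT stationarity of $m^*$, then transfers this to $\|w-g^*\|_2\le\mathcal K(\tilde A)\|w-m^*\|_2$ via \Cref{K_lemma}, whose constant $\mathcal K(\tilde A)$ is the Todd--Stewart quantity $\chi_{\tilde B^\top}+1$. That first-order comparison is exactly what injects $\alpha^p$ with no loss. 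Your plan as written has no analogue of this linearization step, and the ``strict concavity'' heuristic does not substitute for it.

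A secondary point: (b), the $O(\alpha^p)$ bound on the off-support coordinates of $\psi_\alpha(\infty)$, is not something you can assume as an independent input; in the paper it is a \emph{consequence} of the combined $q^*\!-\!m^*$ and $m^*\!-\!g^*$ bounds (via \Cref{g0_M0_lemma}), and proving it directly would require essentially the same KKT argument. If you want to keep your decomposition, the fix is to replace (c) by a stationarity/KKT comparison between $m^*$ and $g^*$ in the spirit of \Cref{gm_bound}, rather than comparing objective values; the Hoffman-type bound in (a) is then the right tool for the $q^*\!-\!m^*$ leg, where the paper also uses $\mathcal K(\tilde A)$ in \Cref{l2_l1_bound}.
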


The above theorem extends the current literature in the following ways. 

\begin{enumerate}[label=(\Roman*), leftmargin=2em]
    \item \emph{We bound the gradient flow of $\psi_{\alpha}(\infty)$ against a distinct element in $\mathcal{U}(A,y)$.}

      It is well-known that diagonal linear networks $\psi_{\theta}= \theta_{+}^{p} - \theta_{-}^{p}$ with tiny uniform initialization $\theta = \alpha {\bf 1}$ yield solutions with small $\ell^1$-norm, i.e., that $\|\psi_{\alpha}(\infty)\|_1 - R$ is small, where $R$ is defined in \eqref{eq:BP_min} (we highlight \cite{woodworth2020kernel} and \cite{gunasekar2017implicit}[Cor.\ 2]). This parameterization is standard in the literature, although some works consider other equivalent parameterizations \cite{even2023s, chou2021more}, see \Cref{ss:rel_work} for more details.

      We stress that all results in the literature on implicit regularization prove bounds on the distance $\|\psi_{\alpha}(\infty)\|_1 - R$ for small values of $\alpha$. That is, convergence towards the minimum value of the basis pursuit problem. This is not the same as bounding the distance to any of the minimizers of this optimization problem. Indeed, if $x^* \in\mathcal{U}(A,y)$ is a minimizer of \eqref{eq:BP123} and $x$ is any vector satisfying $Ax=y$ and 
      \[ \|x\|_1 - \|x^*\|_1, \leq \epsilon \quad \text{does not imply that} \quad \|x-x^*\|_1 \leq \epsilon. \]
      That is, closeness to the minimum value (i.e., what is shown in \cite{woodworth2020kernel}) does not imply that one is close to a given minimizer. See, e.g., \cite[Lem.\ 3.10 in SI]{comp_stable_NN22}) for an example. A key technical contribution of this work is to introduce a new implicit regularizer $g_p$ (see \Cref{ss:prelim}), which allows us to bound the distance to a distinct minimizer of $\mathcal{U}(A,y)$. Interestingly, the choice of minimizer depends on the depth $p$ of the network.

    \item \label{conv_rate1}\emph{We improve upon the best known convergence rates for $\psi_{\alpha}(\infty)$ as $\alpha \to 0$. In particular we show that $\psi_{\alpha}(\infty)$ converges  at the rate $\mathcal{O}(\alpha^p)$ when $p > 2$, and a rate depending on $A$ when $p = 2$.}

      This improves on the known bounds on $\|\psi_{\alpha}(\infty)\|_1 - R$ with rate $\mathcal{O}\left(\frac{1}{\log\left(1 / \alpha\right)}\right)$ for $p = 2$ \cite{woodworth2020kernel,chou2021more} and $\mathcal{O}\left(\alpha^{p-2}\right)$ for $p \ge 3$ \cite{chou2021more}. Moreover, in \Cref{fig:positive_results} one can see numerical approximations to $\psi_{\alpha}(\infty)$, which confirms that the error decays at the rate $\alpha^p$ for different values of $p$. See \Cref{s:numerics} for more on how the numerical approximation is computed. We note that most previous works \cite{azulay2021implicit,gunasekar2017implicit,woodworth2020kernel} employ techniques which do not give rates of convergence, usually by checking the KKT optimality conditions for the basis pursuit problem in the limit $\alpha \to 0$.

    \item \emph{Utilizing well-established results in generalized hardness of approximation, we show that the dependence on $A$ in \ref{conv_rate1} is sharp for matrices $A$ with full rank. }

      Note that a consequence of this connection is -- informally speaking -- that a gradient based method (or any other algorithm) will for certain inputs not be able to compute an approximation to any solution of the basis pursuit problem beyond a certain accuracy. In particular, the implicit regularization of $\psi_{\theta} = \theta_{+}^{p} + \theta_{-}^{p}$, might converge towards a vector with small $\ell^1$-norm for small values of $\alpha$, but this vector might not be a minimizer of the basis pursuit optimization problem. This sheds new light on the common belief in the literature that these vectors are solutions of this optimization problem.

The dependence on $A$ in the constants $C_1$ and $C_2$, and subsequent blowup for certain $A$s, is closely related to a constant which we denote by $\chi_{B(A)}$, where $B(A)$ is a matrix that depends on the nullspace of $A$ (see \Cref{s:cond} for details). The constant $\chi_{B(A)}$, which appears in our expressions for $C_1$ and $C_2$, has appeared many places in the literature, for example, in linear programming problems \cite{todd1990dantzig, vavasis1996primal}, certain norms for the pseudo inverse \cite{Stewart89} and for differential equations \cite{vavasis1996stable, vavasis1994stable}.

\end{enumerate}

\begin{figure}
    \begin{center}
    
		\setlength{\tabcolsep}{3pt} 
            \begin{tabular}{@{}>{\footnotesize\centering}m{0.04\textwidth}>{\footnotesize\centering}m{0.30\textwidth}>{\footnotesize\centering}m{0.30\textwidth}>{\footnotesize\centering\arraybackslash}m{0.30\textwidth}@{}}   
                & $A_1$ is a $3\times 5$ matrix & $A_2$ is a $2\times 3$ matrix & $A_3$ is a $2\times 4$ matrix \\           
                \begin{turn}{90}\begin{footnotesize}$\quad\left\|\psi_{\alpha}(\infty) - \mathcal{W}_p(A,y)\right\|_2$\end{footnotesize}\end{turn}
                &\includegraphics[width=\linewidth]{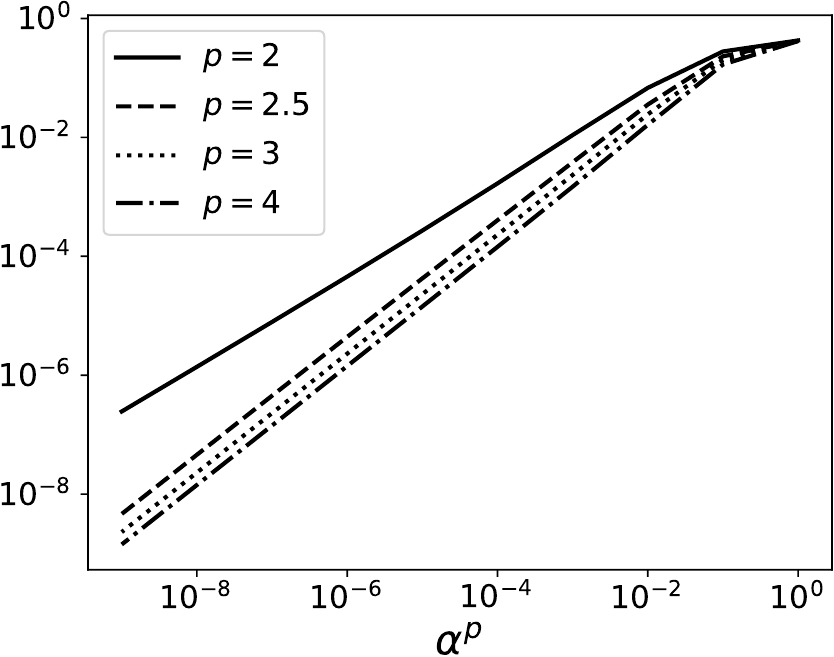} 
                &\includegraphics[width=\linewidth]{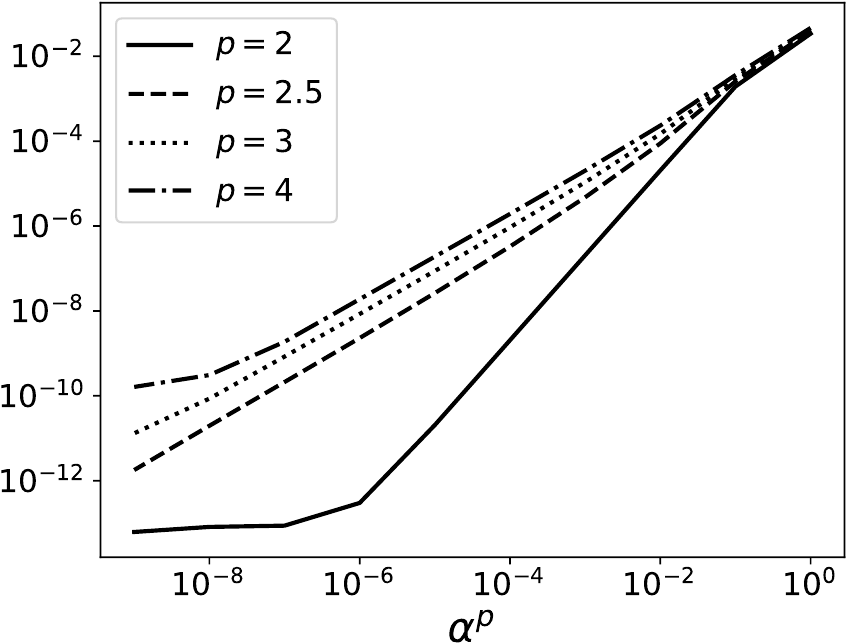}
                &\includegraphics[width=\linewidth]{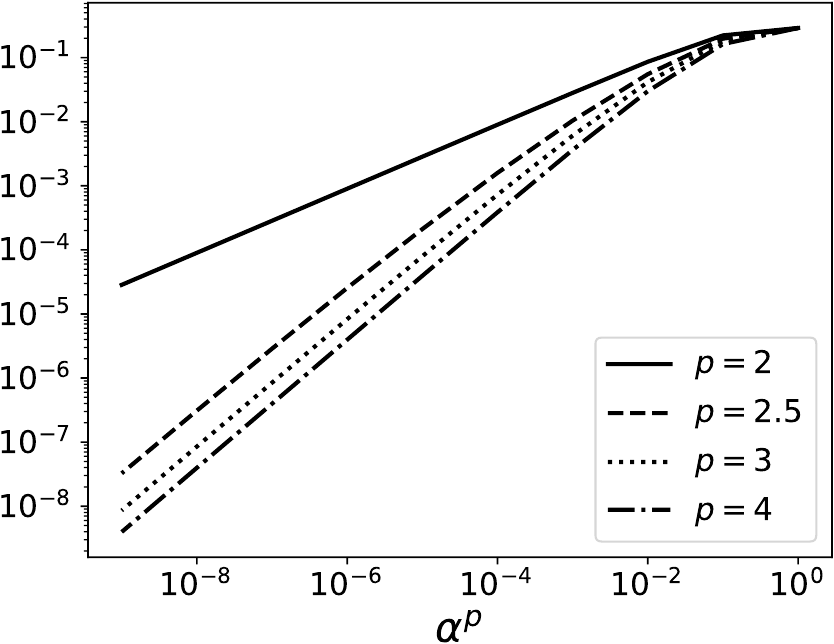}
            \end{tabular}
    \end{center}
    \vspace{-5mm}
    \caption{\label{fig:positive_results} We solve the gradient flow problem \eqref{eq:grad_flow} for different matrices $A_i$ and vectors, and different choices of $\alpha>0$ and $p\geq 2$. As we can see in the figure, the error scales with $\alpha$ according to the error bounds described by \Cref{thm:grad_flow}.}
\end{figure}

\begin{remark}[What if $\mathrm{rank}(A) < m$?]
  In the above theorem we assume that $\mathrm{rank}(A) = m$. If this is not the case, one can still consider the gradient flow of a given pair $(A,y)$, where $\mathrm{rank}(A) < m$. In this case, however, $\mathcal{W}_{p}(A,y)$ is not defined for $y \not\in \mathrm{range}(A)$ as the basis pursuit problem does not have feasible points. In these cases, the vector $\psi_{\alpha}(\infty)$ converges to a vector $\mathcal{W}_p(\tilde A,\tilde{y})$, where the matrix $\tilde A$ and vector $\tilde y$ are simple functions of $A$ and $y$. The precise formula for the pair $(\tilde A, \tilde y)$ can be found in \Cref{rank_A_l_m}.
\end{remark}

\subsection{Connecting the gradient flow to discrete computations}

In \Cref{thm:grad_flow} \ref{it:dep_bound}, we derive convergence estimates for the gradient flow $\psi_{\alpha}$ as $t \to \infty$. However, in practice we need to discretize the gradient flow, and run for a finite time $t$. Our next result -- which largely is used as a stepping stone for proving \Cref{thm:grad_flow} --  addresses these gaps.
\begin{theorem}\label{prop:main1}
    Let $A \in \R^{m\times N}$ with $\mathrm{rank}(A) = m$ and let $y \in \R^m$ be non-zero. Consider $p\geq 2$ and $\alpha > 0$.
    \begin{enumerate}[label=(\roman*)]
    \item\label{it:m1} There exist constants $K_1,K_2 > 0$, depending on $N,p,\alpha, y$ and $A$, such that for all $t\geq 0$, we have
    \begin{align*}
      \norm{\psi_{\alpha}(t)-\psi_{\alpha}(\infty)}_2 \le K_1\exp(-K_2t).
    \end{align*}
  \item\label{it:m2} Let $\widehat \theta_n = (\widehat \theta_{+,n}, \widehat \theta_{-,n})$ denote the $n$th step of gradient descent of \eqref{eq:flow1} with constant steplength $\eta>0$, and initial value  $\theta_{0}=\alpha{\bf 1}$ and parametrization as in \eqref{eq:param}. Let $t, \epsilon > 0$. Then there exists a constant $K_3 > 0$, depending on $N,p,\alpha,\epsilon, y$ and $A$, such that when $\eta \le K_3$ and $n = \lfloor t / \eta \rfloor$, we have
    \[\|\hat\psi_n - \psi_{\alpha}(t)\|_2 \le \epsilon,\]
    where $\hat\psi_n \coloneqq |\hat\theta_{+, n}|^p-|\hat\theta_{-, n}|^p$ is the solution vector computed with gradient descent.  
    \end{enumerate}
\end{theorem}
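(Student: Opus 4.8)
The plan is to establish \ref{it:m1} by showing the residual decays exponentially via a uniform ellipticity estimate for the induced flow, and \ref{it:m2} by a routine forward-Euler comparison on a compact time window.

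\emph{Part \ref{it:m1}.} Set $r(t) := A\psi_\alpha(t)-y$. Differentiating \eqref{eq:grad_flow}--\eqref{eq:param} coordinatewise gives $\dot\theta_{+,j} = -p\,\theta_{+,j}^{p-1}(A\T r)_j$ and $\dot\theta_{-,j} = p\,\theta_{-,j}^{p-1}(A\T r)_j$, hence $\dot\psi_\alpha = -D(t)\,A\T r$ with the diagonal matrix $D(t) = p^2\diag\!\big(\theta_{+,j}^{2p-2}(t)+\theta_{-,j}^{2p-2}(t)\big)_{j}\succeq 0$, so $\tfrac{d}{dt}\|r\|_2^2 = -2\,r\T A D(t) A\T r$; note $\psi_\alpha(0)=\alpha^p{\bf 1}-\alpha^p{\bf 1}=0$, so $r(0)=-y$. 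The crux is the uniform lower bound $D(t)\succeq\delta I$ for some $\delta=\delta(\alpha,p)>0$. This follows from conserved quantities of the flow: for $p=2$ one checks $\tfrac{d}{dt}(\theta_{+,j}\theta_{-,j})=0$, so $\theta_{+,j}\theta_{-,j}\equiv\alpha^2$ and thus $\theta_{+,j}^2+\theta_{-,j}^2\ge 2\alpha^2$; for $p>2$ one checks $\tfrac{d}{dt}\big(\theta_{+,j}^{2-p}+\theta_{-,j}^{2-p}\big)=0$, so $\theta_{\pm,j}^{2-p}<2\alpha^{2-p}$, which (as $2-p<0$) forces $\theta_{\pm,j}>\alpha\,2^{-1/(p-2)}$; in either case every diagonal entry of $D(t)$ is bounded below by a positive constant. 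Since $\rank(A)=m$ gives $AA\T\succeq\sigma_{\min}(A)^2 I$, we get $\tfrac{d}{dt}\|r\|_2^2\le-2\delta\sigma_{\min}(A)^2\|r\|_2^2$ and hence $\|r(t)\|_2\le\|y\|_2\,e^{-\mu t}$ with $\mu:=\delta\sigma_{\min}(A)^2>0$.

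To pass from residual decay to convergence of $\psi_\alpha$, I would use the crude a priori bound coming from monotonicity of $L$ along the flow: $\int_0^t\|\dot\theta(s)\|_2^2\,ds = L(\theta(0))-L(\theta(t))\le\tfrac12\|y\|_2^2$, so by Cauchy--Schwarz $\|\theta(t)\|_2\le\|\theta(0)\|_2+\|y\|_2\sqrt{t/2}$ (this also yields global existence of the flow, as $\nabla L$ is Lipschitz on bounded sets for $p\ge2$). Consequently $\|\dot\theta(s)\|_2=\|\nabla L(\theta(s))\|_2\le\sqrt{2N}\,p\,\|\theta(s)\|_\infty^{p-1}\|A\T r(s)\|_2$ is dominated by a polynomial in $s$ times $e^{-\mu s}$, hence integrable over $[0,\infty)$. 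Therefore $\theta(t)\to\theta(\infty)$ (so $\psi_\alpha(\infty)$ is well defined), $\theta$ is bounded, say $\|\theta(t)\|_\infty\le B$, and $\|\theta(t)-\theta(\infty)\|_2\le\int_t^\infty\|\dot\theta\|_2\,ds\le C\,(1+t)^{(p-1)/2}e^{-\mu t}$; since $z\mapsto|z|^p$ is Lipschitz on $[-B,B]$ this gives $\|\psi_\alpha(t)-\psi_\alpha(\infty)\|_2\le K_1 e^{-K_2 t}$ for any $K_2\in(0,\mu)$ after absorbing the polynomial factor into the exponential. The step I expect to be the main obstacle is the uniform ellipticity $D(t)\succeq\delta I$: it is the only place the structure of the parametrization is genuinely used, and tracking how $\delta$ depends on $\alpha$ and on $A$ is exactly what later forces the $A$-dependent rate in \Cref{thm:grad_flow}\,\ref{it:dep_bound} when $p=2$ (one could alternatively carry out this step via the mirror-flow reformulation of DLNs, but the direct estimate suffices here).

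\emph{Part \ref{it:m2}.} This is the textbook forward-Euler/Gr\"onwall comparison on $[0,t]$. Let $\mathcal K$ be a closed box containing a fixed neighbourhood of the trajectory $\{\theta(s):s\in[0,t]\}$; on $\mathcal K$ the relevant loss (the one from \eqref{eq:flow1} with $|\cdot|^p$ in \eqref{eq:param}, which is $C^2$ for $p\ge2$) has $\nabla L$ Lipschitz with some constant $\ell$ and $\|\nabla L\|_2\le G$. With $e_n:=\|\widehat\theta_n-\theta(n\eta)\|_2$, a one-step Taylor expansion of the exact flow has local truncation error at most $\tfrac12\eta^2\ell G$, giving $e_{n+1}\le(1+\eta\ell)e_n+\tfrac12\eta^2\ell G$ as long as $\widehat\theta_0,\dots,\widehat\theta_n\in\mathcal K$; iterating from $e_0=0$ yields $e_n\le\tfrac12\eta G\,(e^{\ell t}-1)$ for $n\le\lfloor t/\eta\rfloor$. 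Choosing $\eta\le K_3$ small enough that this bound is below half the distance from the trajectory to $\partial\mathcal K$ closes a short induction showing the iterates never leave $\mathcal K$, so the estimate holds unconditionally for such $\eta$. Then $\|\widehat\theta_n-\theta(t)\|_2\le e_n+\|\theta(n\eta)-\theta(t)\|_2\le\tfrac12\eta G\,(e^{\ell t}-1)+G\eta$, and since $z\mapsto|z|^p$ is Lipschitz on $\mathcal K$ we get $\|\widehat\psi_n-\psi_\alpha(t)\|_2\le\sqrt2\,L_p\,\|\widehat\theta_n-\theta(t)\|_2$; shrinking $K_3$ further makes the right-hand side at most $\epsilon$. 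The only subtlety here is that sublevel sets of $L$ are unbounded in $\theta$, so one cannot simply assert a priori that the gradient-descent iterates remain in a fixed compact set — this is precisely what the bootstrap induction above handles.
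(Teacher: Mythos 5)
Your proposal is correct. For part \ref{it:m2} your route is essentially the paper's: discrete Gr\"onwall for forward Euler (\Cref{euler_method}) plus a bootstrap keeping the iterates in a compact set where $\nabla L$ and $\nabla^2 L$ are controlled (\Cref{theta_bound}, \Cref{L_bounds}). For part \ref{it:m1}, however, the second half of your argument is genuinely different. Both proofs share the same core for the residual decay $\|A\psi_\alpha(t)-y\|_2 \le \|y\|_2 e^{-\mu t}$: your conserved quantities ($\theta_{+}\odot\theta_{-}\equiv\alpha^2{\bf 1}$ for $p=2$, $\theta_+^{2-p}+\theta_-^{2-p}\equiv 2\alpha^{2-p}{\bf 1}$ for $p>2$) are exactly what the paper's explicit separation-of-variables solutions encode, and your uniform ellipticity bound $D(t)\succeq\delta I$ is the same information as the paper's lower bounds $h_p'\ge 2$ (resp.\ $\ge 2p/(p-2)$). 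Where you diverge is the passage from residual decay to convergence of $\psi_\alpha$. The paper proves the identity $\psi_\alpha(t)=\mathcal{V}_p(A,A\psi_\alpha(t))$ (\Cref{Apsi_determines_psi}) and then a Lipschitz bound for $\mathcal{V}_p$ in its second argument (\Cref{Vp_lip}), which rests on a nontrivial condition-number estimate for $\nabla^2 Q_p$ (\Cref{kappa_bound}). You instead use the energy-dissipation identity $\int_0^t\|\dot\theta\|_2^2\,ds=L(\theta(0))-L(\theta(t))$ plus Cauchy--Schwarz to get polynomial a priori growth, feed that into $\|\dot\theta\|_2\lesssim\|\theta\|_\infty^{p-1}\|A\T r\|_2$ to conclude integrability of $\dot\theta$, hence convergence and boundedness of $\theta$, and then the exponential tail. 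This is more elementary and self-contained; the only thing it buys slightly less of is the rate: you end with $K_2\in(0,\mu)$ strictly below the residual rate $\mu$, whereas the paper's \Cref{psi_convergence} obtains the full rate $\mu=2p^2\sigma_{\min}^2(A)\alpha^{2p-2}$ directly (and, once boundedness of $\theta$ is established, you could re-estimate $\int_t^\infty\|\dot\theta\|_2\,ds$ with a constant prefactor and recover $\mu$ as well). Worth noting: the $\mathcal{V}_p$ machinery you bypass here is not merely a convenience for this theorem — it is reused in \Cref{psi_infty_eq_q} to identify $\psi_\alpha(\infty)=q^*$, an essential ingredient of \Cref{thm:grad_flow}\ref{it:dep_bound} — so your shortcut simplifies this proof but would not eliminate those lemmas from the paper.
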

Formulas for $K_1,K_2$ and $K_3$ can be found in \Cref{psi_convergence,prop:grad_desc}.
The contributions of this theorem are as follows.

\begin{enumerate}[label=(\Roman*), leftmargin=2em]
\setcounter{enumi}{3}
    \item \emph{We show that $\lim_{t \to \infty}\psi_\alpha(t)$ convergences at an exponential rate.}

      It is standard in the literature to assume that $A\psi_\alpha(\infty) = y$ \cite{woodworth2020kernel,gunasekar2017implicit,azulay2021implicit}, and then prove results utilizing this assumption. To the best of our knowledge, only \cite{chou2021more} proved convergence of $\lim_{t \to \infty}\psi_\alpha(t)$ in a setting similar to ours. However, in \cite{chou2021more} one only proves existence of the limit, giving no rate of convergence.

    \item \emph{We discretize the gradient flow and show that if we choose the step length sufficiently small we can compute an $\epsilon$-approximation to the gradient flow.} 

    Note that a closer look at the constants $K_1$, $K_2$ and $K_3$ will reveal that these constants are straightforward to compute for given values of $N,p,\alpha, \epsilon$ and data $(A,y)$. Thus, the contribution of \Cref{prop:main1} \ref{it:m2} is that it allows us to design an algorithm which can approximate the gradient flow solution $\psi_{\alpha}(\infty)$ to any given accuracy $\tilde \epsilon > 0$, by choosing $t$ sufficiently large and $\eta > 0$ sufficiently small. This is a crucial step towards proving the impossibility result in \Cref{thm:grad_flow} \ref{it:forall}, which uses this algorithm to reach the contradiction. That being said, the proof \Cref{prop:main1} \ref{it:m2} largely utilizes standard results in the literature for initial value problems \cite{hairer1993solving, iserles2009first}.

\end{enumerate}

\begin{figure}[t]
    \begin{center}
		\setlength{\tabcolsep}{1pt} 
            \begin{tabular}{@{}>{\footnotesize\centering}m{0.04\textwidth}>{\footnotesize\centering}m{0.30\textwidth}>{\footnotesize\centering}m{0.30\textwidth}>{\footnotesize\centering\arraybackslash}m{0.30\textwidth}@{}}   
                & $p=3$ & $p=4$ & $p=5$\\
                \begin{turn}{90}\begin{footnotesize}$\quad\left\|\psi_{\alpha}(\infty) - \mathcal{W}_p(A,y)\right\|_2$\end{footnotesize}\end{turn}
                &\includegraphics[width=\linewidth]{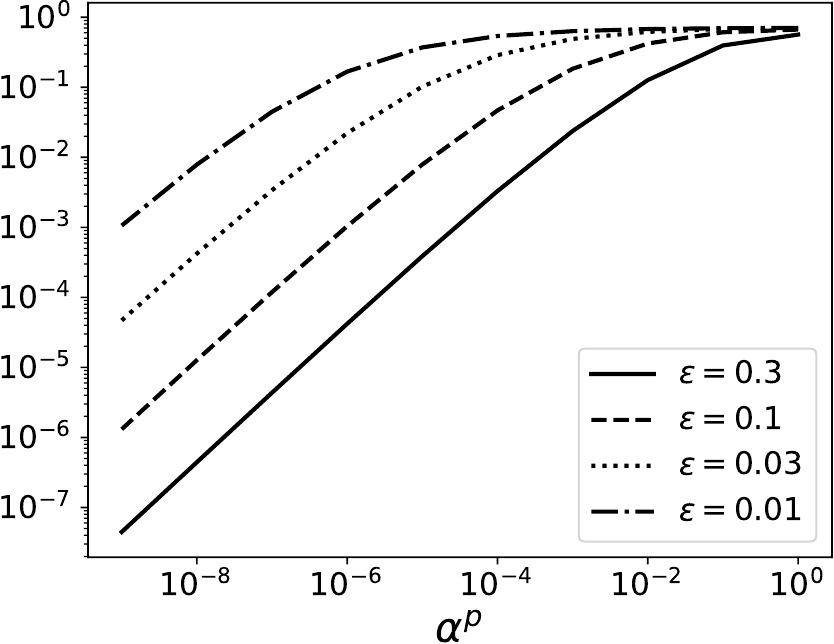} 
                &\includegraphics[width=\linewidth]{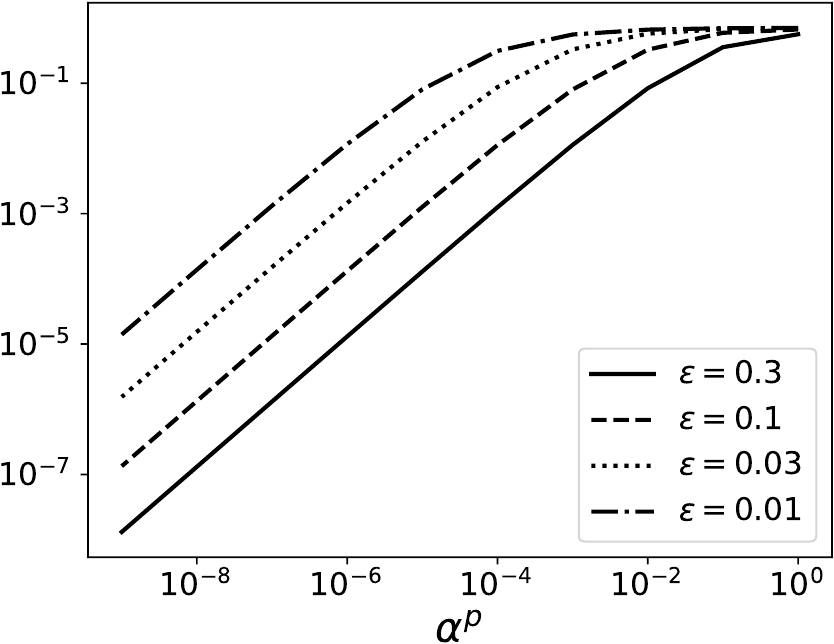}
                &\includegraphics[width=\linewidth]{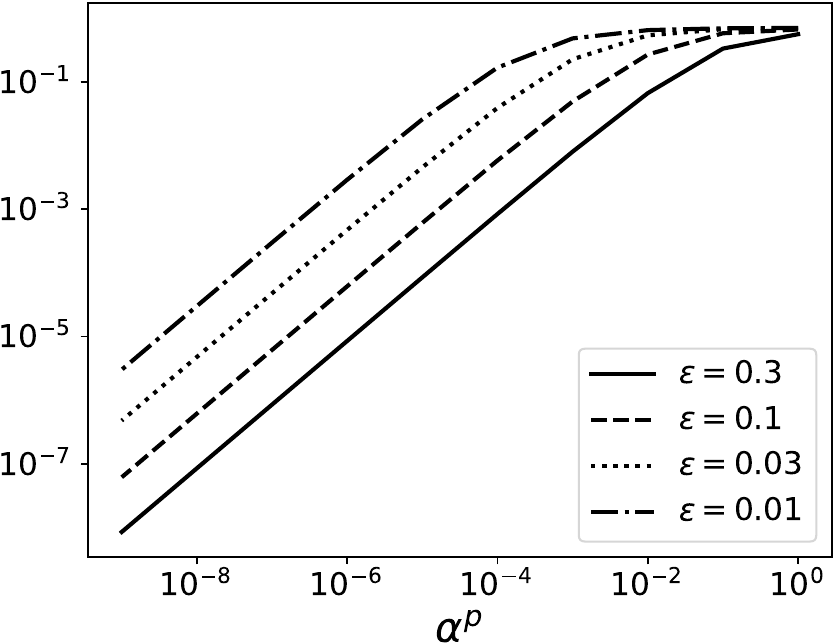}
            \end{tabular}
    \end{center}
    \vspace{-5mm}
    \caption{\label{fig:neg} We solve the gradient flow problem \eqref{eq:grad_flow} for the linear system in \eqref{eq:shift_example} for different choices of $\alpha>0$ and $p\geq 2$.}
\end{figure}

\subsection{Numerical examples}
\label{s:numerics}
In this section, we explore two numerical examples, which illustrate different aspects of \Cref{thm:grad_flow}. In the experiments we compute approximations to $\psi_{\alpha}(\infty)$ for  different data $(A,y)$ and hyperparameters $\alpha$ and $p$.  To compute the approximations to $\psi_{\alpha}(\infty)$, we choose a large value of $t> 0$ and compute an approximation $\hat\psi$ to a solution of the dynamical system \eqref{eq:grad_flow1} at time $t$. This approach is motivated by the findings in \Cref{prop:main1}, but to solve the dynamical system we choose to use an implicit Runge-Kutta solver for improved numerical accuracy. Codes for reproducing the figures in this paper are available at \url{https://github.com/johanwind/which_l1_minimizer}.

\begin{example}\label{ex:positive}
  In this example, we consider three different matrices $A_i$ and corresponding vectors $y_i$, $i=1,2,3$ (see \Cref{s:ex_details} for details). The last two matrices $A_i$ and vectors $y_i$ are designed so that the solution set $\mathcal{U}(A,y)$ consists of more than one element, and such that $\mathcal{W}_p(A_i,y_i)$ predicts different limits depending on the value of $p$. Since $\mathcal{W}_p(A,y) \in  \mathcal{U}(A,y)$ and $\mathrm{card}(\mathcal{U}(A,y)) > 1$, this allows us to check that the computed solution converges to the minimizer prescribed by the theorem, and not one of the other minimizers. In Figure \ref{fig:positive_results}, we compute approximations to $\psi_{\alpha}(\infty)$ for different values of $\alpha$. As we can see from the figure, the error decreases at the rate $\alpha^p$, for the different values of $p > 2$. Moreover, the final accuracy is $10^{-8}$, which indicates that the system converges to $\mathcal{W}_p(A,y)$, rather than any of the other minimizers. This agrees well with \Cref{thm:grad_flow}. 
\end{example}

\begin{example}
    In this example, we show how the constant $C_1$ in \Cref{thm:grad_flow} grows for different matrices $A$. To show this effect, we consider the linear system with  
    \begin{equation}\label{eq:shift_example}
        A = A(\epsilon) = \begin{bmatrix} 1 & 1-\epsilon \end{bmatrix} \quad\text{ and }\quad y=1.
    \end{equation}
For this system, the solution set $\mathcal{U}(A(\epsilon), y) = (1, 0)$ is single-valued for $\epsilon > 0$. In \Cref{fig:neg} we compute approximations $\psi_{\alpha}(\infty)$ for different values of $\alpha$ and $p$, in the same way as we did in \Cref{ex:positive}. In the figure we can see that all the approximations decay with the error rate $\alpha^p$, as $\alpha \to \infty$, but the constant in front of the different matrices are different. In particular, we see that $C_1$ grows for smaller values of $\epsilon$. 
\end{example}

\subsection{Related works}
\label{ss:rel_work}

This paper establishes new convergence results for diagonal linear networks for regression problems, and it connects these results to the phenomenon of generalized hardness of approximation for continuous optimization problems. Below we review connected works and provide some historical background.

\begin{itemize}[leftmargin=10pt]
  \setlength\itemsep{0em}

\item[] {\bf  \emph{Implicit regularization in AI:}} Some of the first systematic numerical investigations into the role over-parametrization plays in obtaining neural networks that generalize well can be found in \cite{neyshabur2017geometry, neyshabur2014search, zhang2017understanding}. Yet, older works focusing on how the optimization algorithm affects the generalization of neural networks also exist \cite{lecun2002efficient}. Early work in this direction for simple models mainly focused on the matrix factorization and completion problem, see e.g., work by Neyshabur, Tomioka, \& Srebro \cite{neyshabur2014search}, or work by Arora, Choen, Hu, \& Luo \cite{arora2019implicit}. As highlighted in the introduction, a large number of works \cite{arora2018optimization, arora2019implicit, chou2020gradient, geyer2020low, gissin2019implicit, gunasekar2018implicit, gunasekar2017implicit, neyshabur2017geometry, neyshabur2014search, razin2020implicit, razin2021implicit, razin2022implicit, soudry2018implicit, stoger2021small, bah2022learning, chou2021more, gunasekar2017implicit, li2021implicit, vaskevicius2019implicit, woodworth2020kernel, gissin2019implicit} have studied over-parametrization for simple models such as over-parameterized linear regression or matrix completion. 

\item[] {\bf  \emph{Generalized hardness of approximation and the Solvability Complexity Index (SCI) hierarchy:}}
The phenomenon of GHA in optimization first appeared in the work by Bastounis et al. in \cite{opt_big} (see also Adcock  \cite{CSBook} Chapter 8) for several continuous optimization problems routinely used in scientific computing and data science. These results were subsequently extended to the world of AI by Colbrook et al. in \cite{comp_stable_NN22} (see also \cite{paradox22}).  Recently, these results have also been applied in problems related to the design of optimal neural networks for linear inverse problems in \cite{gazdag2022generalised}. 
GHA belongs to the mathematical theory behind the SCI hierarchy which was introduced in \cite{Hansen_JAMS} and continued in the work by Ben-Artzi et al. \cite{Ben_Artzi2022, ben2022computing}, by Colbrook et al.  
\cite{colbrook2021computing, colbrook2019foundations, Colbrook_2019} and by Nevanlinna \cite{Hansen2016ComplexityII, CRAS, SCI} and co-authors. See also the work by S. Olver and M. Webb \cite{webb2021spectra}. The SCI hierarchy is directly related to 
to Smale's \cite{smale1981fundamental, smale1997complexity} program on foundations for computations and the seminal work by  McMullen \cite{mcmullen1985families} and 
Doyle \& McMullen \cite{doyle1989solving} on polynomial root-finding. 

\item[] {\bf  \emph{Diagonal linear neural networks:}}
The first indication that the implicit regularization of linear networks with tiny initialization could be connected to $l^1$-minimization can be found in \cite{gunasekar2017implicit}[Cor.\ 2]. This was followed by a more systematic treatment of the subject conducted by Woodworth et al.\ in \cite{woodworth2020kernel}. Herein, the authors found that if $p\geq 2$ is an integer, and if the converged solution $\psi_{\alpha}(\infty)$, satisfies $A\psi_{\alpha}(\infty) = y$, then  $\|\psi_{\alpha}(\infty)\|_1 \to R$ as $\alpha \to 0$. Azulay et al. \cite{azulay2021implicit} extends this work for $p=2$ for different initializations and the model $\psi_{\theta} = \theta_{+}^{(1)} \odot \theta_{+}^{(2)} - \theta_{-}^{(1)} \odot \theta_{-}^{(2)}$, where $\odot$ represents entry-wise multiplication. Similarly, in \cite{chou2021more} Chou, Maly \& Rauhut consider the parameterization $\psi_{\theta} =\theta_{+}^{(1)} \odot \cdots \odot \theta_{+}^{(p)} - \theta_{-}^{(1)} \odot \cdots \odot \theta_{-}^{(p)}$. Their main result says that if one chooses the initialization $\alpha{\bf 1}_{2pN}$, and $\alpha \leq h(R, \epsilon)$ where $h$ is a given function and $\epsilon > 0$, then $\|\psi_{\alpha}(\infty)\|_1 - R \leq \epsilon$. That is, by choosing $\alpha$ sufficiently small, the gradient flow can get arbitrarily close to the minimum of the basis pursuit problem.  

In \cite{vaskevicius2019implicit} Vaskevicius, Kanade \& Rebeschini consider the model $\psi_{\theta} = \theta_{+}^{2} - \theta^{2}_{-}$ with $\theta_0 = \alpha {\bf 1}$, and they analyze the gradient descent algorithm directly.  Their main result, is not on $\ell^1$-minimization, but ensures that one can approximate a sparse vector $x$, under certain conditions. In \cite{li2021implicit}, Li et al.\ extend the work of \cite{vaskevicius2019implicit} to the case with $p > 2$ and slightly different conditions on $A$. Moroshko et al.\ \cite{moroshko2020implicit} studies case $p\geq 2$ for classification problems with an exponential loss function, whereas Pesme, Pillaud-Vivien, \& Flammario \cite{pesme2021implicit} analyze the stochastic gradient descent algorithm for  $p=2$ and the squared loss function. 

\item[] {\bf  \emph{Robust optimization and optimization for sparse recovery:}} GHA is directly linked to robust optimization and the work by Ben-Tal, El~Ghaoui, and Nemirovski \cite{NemirovskiLRob, Nemirovski_NPhard_Stable, Nemirovski_robust, Nemirovski_robust2}. See also recent results on counter\-examples describing non-convergence of specific optimization algorithms \cite{bolte2022iterates, bolte2022curiosities}. In particular, GHA is inevitable for any robust optimization theory for computing minimizers of convex optimization problems. Both implicit regularization in AI and GHA are crucially linked to optimization for sparse recovery, where there is a myriad of work and we can only highlight certain works here, for example (in random order) : Juditsky, Kilin{\c{c}}{-}Karzan, Nemirovski \& Polyak. \cite{Juditsky_2012, Juditsky_2011}, Chambolle \& Pock \cite{chambolle_pock_2016, Chambolle_2011}, Figueiredo, Nowak \& Wright \cite{Mario, Mario2}, Nesterov \& Nemirovski \cite{Nesterov_Nemirovski_Acta}, Colbrook \& Adcock \cite{colbrook2022warpd, neyra2023nestanets, adcock2023restarts}.

\item[] {\bf  \emph{Over-parametrization in optimization:}}
Over-parametrization is also entering optimization as a technique for converting non-smooth problems, such as those based on $\ell^1$-minimization, to smooth optimization problems by introducing more variables. Often this comes at the expense of losing convexity. In \cite{hoff2017lasso} Hoff studied the LASSO problem with a parametrization $x=v\odot u$. Therein the author show that if $(\hat u,\hat v)$ is an optimal solution to the smooth problem, then $\hat x = \hat u \odot \hat v$ is an optimal value for the LASSO problem. This parametrization is studied further by Poon \& Peyré in \cite{ Poon_NeurIPS21, poon2023smooth} which use this technique to develop new optimization algorithms, whereas Zhao, Yang \& He study this technique in the setting of implicit regularization \cite{Zhao_IR22}.

\item[] {\bf  \emph{Hardness of approximation:}}
In 2001 Arora, Feige, Goldwasser, Lund, Lovász, Motwani, Safra, Sudan, \& Szegedy received the Gödel prize for a series of papers \cite{arora1998proof, arora1998probabilistic, feige1996interactive} on probabilistically checkable proofs and its connection to hardness of approximation. This discovery opened up new avenues for understating the difficulty of computing approximate solutions to NP-hard problems. For example, Arora \& Mitchell received the Gödel prize in 2010 for showing that a polynomial-time approximation scheme the Euclidean traveling salesman problem \cite{Arora_PTAS} and Guillotine subdivisions \cite{Mitchell99}, and Khot received in Nevanlinna Prize in 2014 for establishing the Unique Games problem, and showing how a solution to this problem would imply the inapproximability of several NP-hard optimization problems \cite{Khot}.

\end{itemize}

\section{Proofs of \Cref{thm:grad_flow} part \ref{it:dep_bound} \& \Cref{prop:main1} part \ref{it:m1} }
\subsection{Preliminaries}
\label{ss:prelim}
\subsubsection{Notation}
    Let $x$ be a $N$-dimensional real valued vector. 
We denote the support of $x$ by $\supp(x) = \{i \in \{1,\dots,N\} : x_i \ne 0\}$, and use $\overline{\supp(x)} = \{1,\ldots, N\}\setminus \supp(x)$ for its compliment.
Furthermore, we let $\mathcal{R}(x) = \left\{\diag(x)z : z \in [0,\infty)^N\right\} $ denote the signed orthant given by $x$. Note $\mathcal{R}(x)$ intersects the coordinate axes, since $z$ is allowed to be zero. Moreover, $\mathcal{R}(x)$ collapses to a single point if $x=0$. For a real number $t$, we let $\mathrm{sign}(t)$ denote the sign of $t$, with the convention that $\mathrm{sign}(0)=0$. For a vector $x$ we apply $\mathrm{sign}(x)$ componentwise to entry in $x$.
For a matrix $A \in \R^{m \times N}$ with $\mathrm{rank}(A) = r$ we denote the ordered singular values of $A$ by $\sigma_{1}(A) \geq \cdots \geq \sigma_{r}(A) = \sigma_{\min}(A) > 0$. We let $\|A\|_{\mathrm{op}}$ denote the operator norm of $A$, and note that $\|A\|_{\mathrm{op}} = \sigma_{1}(A)$. Furthermore, if $\rank(A) = m$, then 
\begin{align}\label{sigma_min_prop}
  \sigma_{\min}(A) = \inf_{\substack{z \in \R^m \\ \norm{z}_2 = 1}}\norm{A\T z}_2 > 0.
\end{align}
We denote the nullspace of $A$ by $\Null(A)$, and let $P_{\Null(A)} \in \R^{N\times N}$ denote the projection onto the nullspace of $A$. We use $f'$ to denote the derivative of a differentiable function $f\colon V \to \R$, $V\subseteq \R$. Finally, $\lfloor \cdot \rfloor$ denotes the floor function.

\subsubsection{Setup}\label{ss:setup}

Recall the setup from \Cref{s:intro}. Here $p \in [2,\infty)$ denotes the degree of homogeneity in \eqref{eq:param}, and $\alpha > 0$ denotes the initialization scale in \eqref{eq:grad_flow}. We consider a matrix $A \in \R^{m\times N}$ with $\rank(A) = m$, and $y \in \R^m$.
Following Woodworth et al.\ \cite{woodworth2020kernel}, we define the convex function $Q_p \colon \R^N \to \R$, whose minimizers are closely connected to the parameterization in \eqref{eq:param}, and the subsequent gradient flow problem \eqref{eq:grad_flow}. The function $Q_p$ is defined through scalar functions $q_p$, which in turn are defined via functions $h_p$. These are defined as follows:
\begin{equation}
    \begin{split}\label{eq:h_2_def}
      h_2(t) &= 2\sinh(t), \quad \text{for } t \in \R\\
      h_p(t) &= (1-t)^{-\frac{p}{p-2}}-(1+t)^{-\frac{p}{p-2}}, \quad \text{for } t \in (-1,1) \text{ and } p > 2.
    \end{split}
\end{equation}
The different functions $h_p$, as well as the functions $q_p$ and $g_p$ defined below, can be seen in \Cref{fig:setup}.
Note that for both $p=2$ and $p>2$, $h_p$ is smooth, strictly increasing and odd, i.e., $h_p(-t) = -h_p(t)$. Moreover, for $t \in \R$ we have $h'_2(t) \geq 2$, and for $p > 2$ we see that $h'(t) \geq 2p/(p-2)$ for $t \in (-1,1)$.

Next, we define
\begin{equation}\label{q_p_def}
  q_p(u) = \int_0^u h_p^{-1}(v)\ dv,
\end{equation}
where $h_{p}^{-1}$ denotes the inverse of $h_{p}$. Note that $h_{p}^{-1}$ is defined on all of $\R$, since the range of $h_{p}$ is $\R$ and $h_{p}$ is a strictly increasing. For $p=2$, this integral has an explicit solution \cite[Eq. (21)]{woodworth2020kernel},
\[
  q_2(u) = \int_0^u h_2^{-1}(v)\ dv = 2-\sqrt{u^2+4}+u\sinh^{-1}\left(\frac{u}{2}\right).
\]
Moreover, we observe that for $p \geq 2$, $q_p(0) = 0$ and  $q_{p}(-u) = q_{p}(u)$. Furthermore, $q_{p}$ is strictly convex since it is the integral of the strictly increasing function $h_p^{-1}$. See \Cref{prop:q_p_properties} for further details.

We shall see that the solution $\psi_{\alpha}(t) \in \R^N$ in \eqref{psi_def} to the gradient flow problem \eqref{eq:grad_flow}, will be closely related to the optimization problem 
\begin{align}\label{Vp_def}
  \mathcal{V}_p(A,y) \coloneqq \argmin_{z \in \R^N} Q_p(z) \quad\text{subject to}\quad Az = y.
\end{align}
\Cref{Vp_single_valued} tells us that $\mathcal{V}_p(A,y)$ is single valued. Here $Q_p \colon \R^N \to \R$ is the function 
\begin{equation}\label{Q_p_def}
  Q_p(z) = \alpha^p\sum_{i=1}^N q_p\left(\frac{z_i}{\alpha^p}\right),
\end{equation}
where $q_p$ is given by \eqref{q_p_def}, $\alpha > 0$ and $p \geq 2$. Note that $Q_p$ is a strictly convex function since $q_p$ is strictly convex and the sum in \eqref{Q_p_def} is non-negative.

\begin{figure}
    \begin{center}
		\setlength{\tabcolsep}{3pt} 
            \begin{tabular}{@{}>{\centering}m{0.30\textwidth}>{\centering}m{0.29\textwidth}>{\centering\arraybackslash}m{0.30\textwidth}@{}}   
                $h_p$ & $q_p$ & $g_p$ \\
                \includegraphics[width=\linewidth]{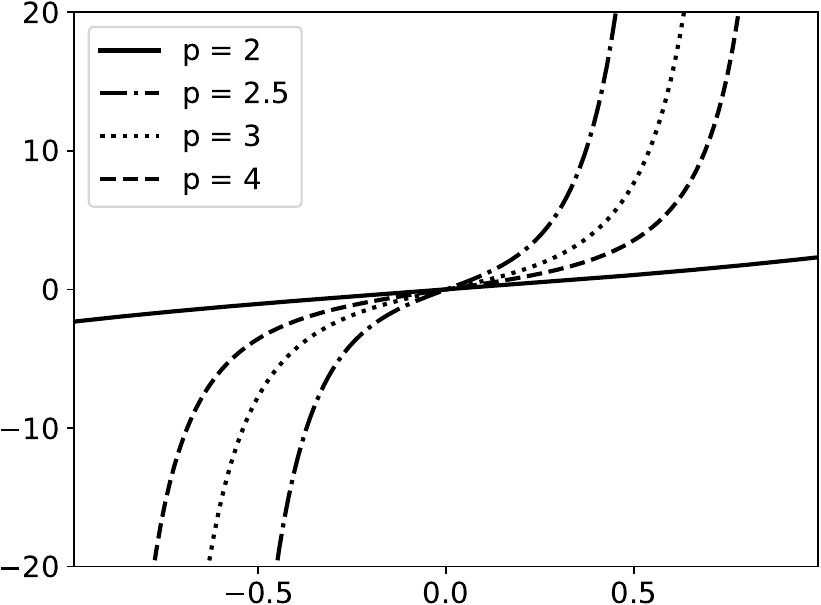} 
                &\includegraphics[width=\linewidth]{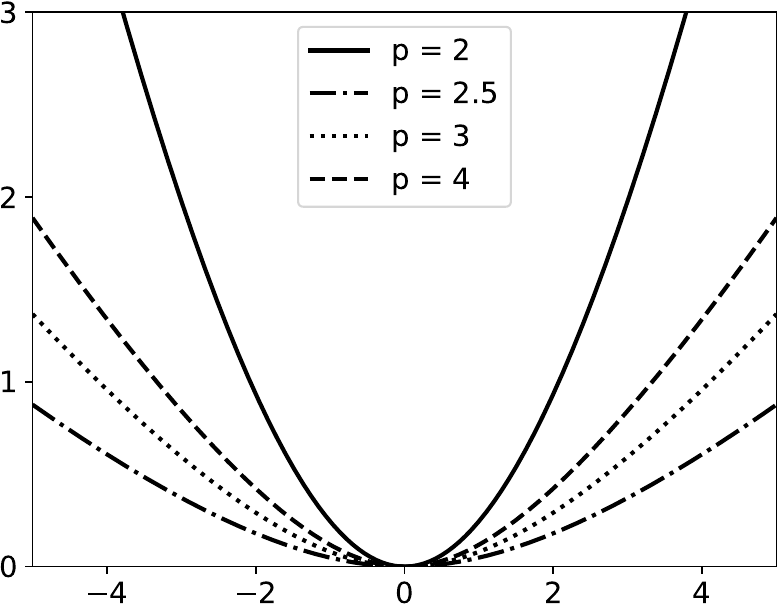}
                &\includegraphics[width=\linewidth]{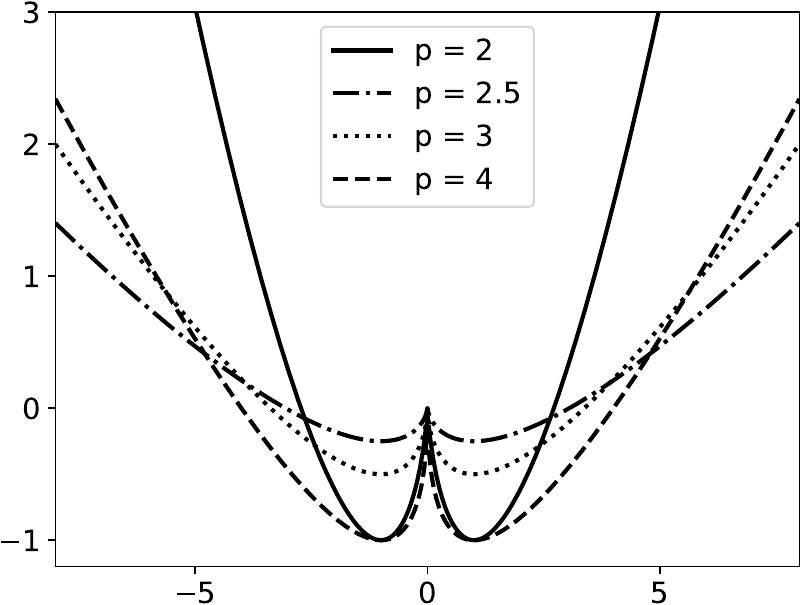}
            \end{tabular}
    \end{center}
    \vspace{-5mm}
    \caption{\label{fig:setup}The functions considered in the \Cref{ss:setup}.}
\end{figure}

Previous works \cite{chou2021more, woodworth2020kernel} have noted that $Q_p$ behaves as the $\ell_1$ norm when $\alpha$ goes to zero. Specifically, these works bound the distance between the minimum value $R(A,y)$ of \eqref{eq:BP_min} and the $\ell_1$-norm of the converged gradient flow  $\norm{\psi_\alpha(\infty)}_1$. In this work, we consider convergence of $\psi_\alpha(\infty)$ (as opposed to $\norm{\psi_\alpha(\infty)}_1$). This requires us to answer {\it which} $\ell_1$ minimizer $\psi_\alpha(\infty)$ convergences to, when there are several $\ell_1$ minimizers. To accomplish this we introduce a function $G_p$, defined below, which captures the behaviour of $Q_p$ when $\alpha \to 0$. The function $G_p$ is also key to getting sharp convergence rates in terms of $\alpha$.
For $u \in \R$, define the functions
\begin{equation}\label{eq:g_def}
    \begin{split}
    g_2(u) &= \begin{cases}
    |u|\ln\left(\tfrac{|u|}{e}\right) & \text{ for }|u| > 0 \\
    0 & \text{ for } u = 0
  \end{cases}, \\
    g_p(u) &= |u| - \tfrac{p}{2} |u|^{2/p}, \quad \text{for } p > 2.
    \end{split}
\end{equation}
Moreover, for $p \geq 2$ and $\alpha > 0$, let $G_p \colon \R^N \to \R$ be the function
\begin{equation}\label{G_p_def}
  G_p(z) = \alpha^p\sum_{i=1}^N g_p\left(\frac{z_i}{\alpha^p}\right).
\end{equation}
For \Cref{thm:grad_flow}, we need to bound $\norm{\psi_{\alpha}(\infty)-\mathcal{W}_p(A,y)}_2$. We will bound this distance through three intermediate points, defined as solutions of the following three optimization problems: 
\begin{alignat}{2}
  \label{eq:q*}
  q^* &= \argmin_{z \in \R^N} Q_p(z) \quad &&\text{ subject to }\quad Az = y,\\
  \label{eq:m*}
  m^* &= \argmin_{z \in \R^N} Q_p(z)       &&\text{ subject to }\quad z \in \mathcal{U}(A,y),\\
  \label{eq:g*}
  g^* &= \argmin_{z \in \R^N} G_p(z)       &&\text{ subject to }\quad z \in \mathcal{U}(A,y).
\end{alignat}
Feasibility of these problems are guaranteed by the assumption $\rank (A) = m$, and uniqueness is guaranteed by \Cref{cor:un_qm,cor:un_g} below. We will show that $\psi_{\alpha}(\infty) = q^* = \mathcal{V}_p(A,y)$ and $\mathcal{W}_p(A,y) = g^*$, and then bound the distances $\norm{q^*-m^*}_2$ and $\norm{m^*-g^*}_2$. This will in turn be used to prove \Cref{thm:grad_flow}.

\subsubsection{A few results related to the setup}

We start by establishing a few key facts about $q_p$ and $g_p$ defined in \Cref{q_p_def,eq:g_def}.

\begin{proposition}[Properties of $q_p$]\label{prop:q_p_properties}
  For any $p \ge 2$, $q_p \colon \R \to \R$ is an even, strictly convex, continuously differentiable function.
\end{proposition}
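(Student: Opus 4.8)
The plan is to establish each of the four claimed properties of $q_p$ directly from its definition $q_p(u) = \int_0^u h_p^{-1}(v)\,dv$, leaning on the structural properties of $h_p$ already recorded in the setup (smooth, strictly increasing, odd, with derivative bounded below by a positive constant on its domain).

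\textbf{Regularity of $h_p^{-1}$.} First I would note that $h_p$ is a smooth bijection from its domain ($\R$ when $p=2$, $(-1,1)$ when $p>2$) onto $\R$, and that $h_p'$ is strictly positive everywhere on that domain (indeed bounded below: $h_2' \ge 2$ and $h_p' \ge 2p/(p-2)$). By the inverse function theorem, $h_p^{-1} \colon \R \to \mathrm{dom}(h_p)$ is therefore smooth, in particular continuous. Consequently $q_p$, being the integral of a continuous function, is well-defined and continuously differentiable on all of $\R$, with $q_p'(u) = h_p^{-1}(u)$ by the fundamental theorem of calculus. (In fact $q_p$ is $C^\infty$, but $C^1$ is all that is claimed.)

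\textbf{Strict convexity.} Since $q_p' = h_p^{-1}$ and $h_p$ is strictly increasing, its inverse $h_p^{-1}$ is strictly increasing; a differentiable function with strictly increasing derivative is strictly convex. This gives strict convexity.

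\textbf{Evenness.} Because $h_p$ is odd, $h_p^{-1}$ is odd as well: if $h_p(s) = v$ then $h_p(-s) = -v$, so $h_p^{-1}(-v) = -s = -h_p^{-1}(v)$. Then the substitution $v \mapsto -v$ in the defining integral yields $q_p(-u) = \int_0^{-u} h_p^{-1}(v)\,dv = \int_0^{u} h_p^{-1}(-v')(-dv') \cdot(-1)= \int_0^u h_p^{-1}(v')\,dv' = q_p(u)$, so $q_p$ is even. (This also re-derives the facts noted in the text that $q_p(0)=0$ and $q_p(-u)=q_p(u)$.)

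There is no real obstacle here; the only point requiring a moment of care is confirming that $h_p^{-1}$ is genuinely defined and smooth on \emph{all} of $\R$ even when $p>2$ and $h_p$ itself is only defined on $(-1,1)$ — this is exactly where one uses that the range of $h_p$ is all of $\R$ (the blow-up of $h_p$ at the endpoints $\pm 1$), which the text has already pointed out. Everything else is a routine application of the inverse function theorem and elementary facts about monotonicity, oddness, and integrals.
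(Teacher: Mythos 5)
Your proof is correct and follows essentially the same route as the paper's: evenness from the oddness of $h_p^{-1}$ via the substitution $v\mapsto -v$, and strict convexity plus $C^1$ regularity from the fact that $q_p' = h_p^{-1}$ inherits strict monotonicity and smoothness from $h_p$. You are merely more explicit about the inverse function theorem and the endpoint blow-up that makes $h_p^{-1}$ defined on all of $\R$ when $p>2$, which the paper leaves implicit.
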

\begin{proof}
  Since $h_p$ is an odd function, so is $h_{p}^{-1}$. It follows that $q_p(-u) = \int_{0}^{-u} h_p^{-1}(v)\ dv = q_{p}(u)$, and hence $q_p$ is even. Next, notice that since $h_{p}$ is a strictly increasing and smooth function, so is $h^{-1}_{p}$. It follows that $q_p$ is strictly convex and smooth.
\end{proof}
An immediate consequence of the above proposition is the following corollary. 
\begin{corollary}\label{prop:Q_p_properties}
  For any $p \ge 2$, $Q_p \colon \R^N \to \R$ is an even, strictly convex, continuously differentiable function. It attains its minimum at the origin.
\end{corollary}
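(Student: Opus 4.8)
The plan is to deduce everything directly from \Cref{prop:q_p_properties}, using the fact that $Q_p$ is a \emph{separable} sum of rescaled copies of $q_p$. Writing $Q_p(z) = \sum_{i=1}^N f(z_i)$ with $f(u) = \alpha^p\, q_p(u/\alpha^p)$, and noting $\alpha^p > 0$, the scalar function $f$ inherits the relevant properties of $q_p$: it is even, strictly convex, and continuously differentiable, with $f(0)=0$ and $f'(u) = q_p'(u/\alpha^p) = h_p^{-1}(u/\alpha^p)$. Evenness of $Q_p$ is then immediate, since $Q_p(-z) = \sum_i f(-z_i) = \sum_i f(z_i) = Q_p(z)$. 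Continuous differentiability is likewise immediate: each summand $z \mapsto f(z_i)$ depends on a single coordinate and is $C^1$ by \Cref{prop:q_p_properties} together with the chain rule, and a finite sum of $C^1$ functions is $C^1$; explicitly $\partial_{z_i} Q_p(z) = h_p^{-1}(z_i/\alpha^p)$.

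For strict convexity I would spell out the one genuinely non-vacuous point, namely that a separable sum of one-dimensional strictly convex functions is strictly convex on $\R^N$. Given distinct $z, w \in \R^N$ and $\lambda \in (0,1)$, pick a coordinate $j$ with $z_j \ne w_j$; then $f(\lambda z_j + (1-\lambda)w_j) < \lambda f(z_j) + (1-\lambda)f(w_j)$ by strict convexity of $f$, while $f(\lambda z_i + (1-\lambda)w_i) \le \lambda f(z_i) + (1-\lambda)f(w_i)$ for every other $i$ by convexity; summing over $i$ preserves the strict inequality coming from the $j$th term, giving $Q_p(\lambda z + (1-\lambda)w) < \lambda Q_p(z) + (1-\lambda)Q_p(w)$.

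Finally, for the location of the minimum: since $q_p$ is even and differentiable, $q_p'(0) = h_p^{-1}(0) = 0$ (because $h_p$, hence $h_p^{-1}$, is odd), so by strict convexity $0$ is the unique minimizer of $q_p$, hence of each scalar summand $f$, and therefore $z = 0$ is the unique minimizer of $Q_p = \sum_i f(z_i)$; equivalently, $\nabla Q_p(0) = 0$ together with strict convexity. There is no real obstacle in this corollary — it is a routine consequence of \Cref{prop:q_p_properties} — and the only step I would bother writing out in full is the passage from strict convexity of the scalar $q_p$ to strict convexity of the separable multivariate $Q_p$.
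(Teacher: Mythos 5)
Your proof is correct and follows the same route the paper intends — the paper simply declares the corollary ``an immediate consequence'' of \Cref{prop:q_p_properties}, and what you have written is exactly the standard unpacking of that claim (separability, one-dimensional strict convexity summing to multivariate strict convexity, $q_p'(0)=h_p^{-1}(0)=0$). Nothing is missing or wrong.
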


When bounding the distances between the minimizers $q^*,m^*,g^*$ defined in (\ref{eq:q*}-\ref{eq:g*}), we will need the following bound.
\begin{proposition}\label[proposition]{qg'_bound}
  Let $p \ge 2$, then $g_p'(u) \le q_p'(u) \le g_p'(u+1)$ for all $u \in (0,\infty)$.
\end{proposition}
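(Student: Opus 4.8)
The plan is to reduce the inequality to a single elementary observation about the algebraic form of $h_p$. By the fundamental theorem of calculus applied to \eqref{q_p_def} we have $q_p'(u) = h_p^{-1}(u)$, so for fixed $u > 0$ set $s \coloneqq q_p'(u) = h_p^{-1}(u)$. Since $h_p$ is odd, strictly increasing, smooth and $h_p(0) = 0$, we get $s > 0$, with in addition $s < 1$ when $p > 2$ (as $h_p$ then has domain $(-1,1)$), and $u = h_p(s)$.

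Next, I would write $h_p(s) = F_p(s) - E_p(s)$ with a ``dominant'' term $F_p$ and a ``correction'' term $E_p$: for $p = 2$, $F_2(s) = e^{s}$ and $E_2(s) = e^{-s}$; for $p > 2$, writing $\beta = p/(p-2) > 0$, $F_p(s) = (1-s)^{-\beta}$ and $E_p(s) = (1+s)^{-\beta}$. The crucial elementary fact is that $0 < E_p(s) < 1$ for every $s > 0$ in the relevant range — immediate since $e^{-s} < 1$ for $s > 0$, and since $(1+s)^{-\beta} < 1$ because $1+s > 1$ and $\beta > 0$. From $u = F_p(s) - E_p(s)$ we then read off $F_p(s) = u + E_p(s)$, hence $u < F_p(s) < u+1$, a chain of three positive numbers.

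Finally, a one-line differentiation of \eqref{eq:g_def} identifies $g_p'$ with the inverse of $F_p$ on $(0,\infty)$: for $p = 2$, $g_2'(v) = \ln v = F_2^{-1}(v)$; for $p > 2$, $g_p'(v) = 1 - v^{(2-p)/p} = 1 - v^{-1/\beta} = F_p^{-1}(v)$ (here $F_p$ is viewed as the strictly increasing bijection of $(-\infty,1)$ onto $(0,\infty)$). Since $F_p^{-1}$ is strictly increasing on $(0,\infty)$, applying it to $u < F_p(s) < u+1$ yields $g_p'(u) < s < g_p'(u+1)$, that is $g_p'(u) < q_p'(u) < g_p'(u+1)$, which proves the claim (with strict inequalities, a fortiori the asserted non-strict ones).

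All steps are short computations. The only points that need a little care are: recognizing that $g_p'$ is \emph{precisely} $F_p^{-1}$ (this is what makes the bound tight, and is presumably the reason $g_p$ was defined in exactly this form); handling $p = 2$ and $p > 2$ in parallel while keeping track of the domains of $h_p^{-1}$ and $F_p$; and verifying the size bound $0 < E_p(s) < 1$. I do not foresee any genuine obstacle beyond this bookkeeping.
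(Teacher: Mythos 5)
Your proof is correct and, despite the different packaging, is essentially the paper's argument: the paper introduces $a(t)=F_p(t)-1$ and $b(t)=F_p(t)$ (in your notation), observes $a\le h_p\le b$ on the positive axis — which is precisely your statement $0<E_p(s)<1$, since $b-h_p=E_p$ and $h_p-a=1-E_p$ — and inverts, identifying $a^{-1}(u)=g_p'(u+1)$ and $b^{-1}(u)=g_p'(u)$. Your pointwise phrasing (fix $u$, set $s=h_p^{-1}(u)$, sandwich $F_p(s)$ between $u$ and $u+1$, then apply $F_p^{-1}=g_p'$) is a clean rewording of the same decomposition and the same elementary bound, so there is no substantive difference to report.
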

\begin{proof}
    First consider the case when $p > 2$. Let $a \colon (-1,1) \to (-1,\infty)$ and $b \colon (-1,1) \to (0,\infty)$ be given by $a(t) = -1+ (1-t)^{-\frac{p}{p-2}}$ and $b(t) = (1-t)^{-\frac{p}{p-2}}$.
    Furthermore, observe that $h_p(t) = a(t)-a(-t) = b(t)-b(-t)$. The inverse of both functions are related to $g_p'$ in the following way
\begin{alignat*}{2}
  a^{-1}(u) &= 1-(u+1)^{\frac{2}{p}-1} = g_p'(u+1) \quad\quad && u \in (-1,\infty),\\
  b^{-1}(u) &= 1-u^{\frac{2}{p}-1} = g_p'(u) && u \in (0,\infty).
\end{alignat*}
    Next, notice that for $t \in (0,1)$ we have $a(t) \le h_p(t) \le b(t)$. Since $a,b,h_p$ are all increasing functions, this implies $a^{-1}(u) \ge h_p^{-1}(u) \ge b^{-1}(u)$ for $u \in (0,\infty)$. So for all $u \in (0,\infty)$ we have 
  \begin{align*}
    g_p'(u) = b^{-1}(u) \le h_p^{-1}(u) = q_p'(u) = h_p^{-1}(u) \le a^{-1}(u) = g_p'(u+1),
  \end{align*}
    which proves the claim for $p>2$.

    Next, consider the case $p = 2$. This time, let $a : \R \to (-1,\infty)$ and $b : \R \to (0,\infty)$ be given by
      $a(t) = \exp(t)-1$, and  $b(t) = \exp(t)$. As before, we observe that $h_p(t) = a(t)-a(-t) = b(t)-b(-t)$. Furthermore, the inverse of these functions are related to $g'_{p}$ as follows
  \begin{align*}
    a^{-1}(u) &= \ln(u+1) = g_2'(u+1), & u \in (-1,\infty)\\
    b^{-1}(u) &= \ln(u) = g_2'(u), & u \in (0,\infty)
  \end{align*}
  Again, for all $t \in (0,\infty)$ we have $a(t) \le h_p(t) \le b(t)$. Repeating the argument from the case $p > 2$, this implies for all $u \in (0,\infty)$ we have $g_2'(u) = b^{-1}(u) \le h_2^{-1}(u) = q_2'(u) \le a^{-1}(u) = g_2'(u+1)$.
\end{proof}

Next we ensure that the set of solutions in \Crefrange{eq:q*}{eq:g*} are single valued. Our first result, \Cref{l:unique_min}, establishes general conditions ensuring that the minimizer exists and is unique. This will be used to establish that $q^*$ and $m^*$ are well defined in the corollary below. The fact that $g^*$ is single-valued is treated in \Cref{cor:un_g}. 
Proving existence requires some attention to technical details. Note that strict convexity is not enough. Indeed, the function $x \mapsto \exp(x)$ defined on $\R$ is a strictly convex function on a closed and convex set, but it does not attain its infimum. Additionally, neither $Q_p$ nor $G_p$ is strongly convex, which would have allowed us to apply, for example, \cite[Thm.\ 2.2.10]{nesterov2018lectures}.

\begin{lemma}\label{l:QG_coercive}
  Let $p \ge 2$ and $\alpha > 0$, then $Q_p$ and $G_p$ are weakly coercive \cite[Def.\ 4.5]{andreasson2005introduction}.
\end{lemma}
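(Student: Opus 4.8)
\emph{Proof plan.} Recall from \cite[Def.\ 4.5]{andreasson2005introduction} that a function defined on $\R^N$ is weakly coercive precisely when all of its sublevel sets are bounded, equivalently when it tends to $+\infty$ along every sequence whose Euclidean norm tends to $+\infty$; proving this on all of $\R^N$ is enough, since weak coercivity on the whole space descends to any subset on which one later wishes to minimize. Since $Q_p$ and $G_p$ are separable sums of the scalar functions $q_p$ and $g_p$ up to the rescaling by $\alpha^p$ (see \eqref{Q_p_def} and \eqref{G_p_def}), the plan is to first pin down the growth of $q_p$ and $g_p$ at $\pm\infty$, and then assemble the coordinatewise information into a statement about the sums.

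First I would treat $g_p$ directly from \eqref{eq:g_def}: for $p=2$ one has $g_2(u)=|u|(\ln|u|-1)$ and for $p>2$ one has $g_p(u)=|u|\bigl(1-\tfrac{p}{2}|u|^{2/p-1}\bigr)$, and in both cases the bracketed factor tends to $+\infty$ (respectively to $1$) as $|u|\to\infty$, so $g_p(u)\to+\infty$ as $|u|\to\infty$. As $g_p$ is also continuous on $\R$, it attains a finite global minimum, which I will denote $g_p^{\min}\coloneqq\min_{u\in\R}g_p(u)$. Next I would establish the same growth for $q_p$; this is not immediate for $p>2$, where $q_p'=h_p^{-1}$ takes values only in $(-1,1)$ so that $q_p$ grows at most linearly, and the point is to compare it with $g_p$. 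By \Cref{qg'_bound} we have $g_p'(u)\le q_p'(u)$ for all $u\in(0,\infty)$; since $q_p(0)=g_p(0)=0$ and $g_p$ is the antiderivative of $g_p'$ vanishing at $0$, integrating from $0$ gives $g_p(u)\le q_p(u)$ for all $u\ge 0$, and by evenness of both functions (\Cref{prop:q_p_properties} and inspection of \eqref{eq:g_def}) this holds for all $u\in\R$. Hence $q_p(u)\to+\infty$ as $|u|\to\infty$; moreover $q_p\ge 0$ by \Cref{prop:q_p_properties}.

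Finally I would pass to the sums. Given $z\in\R^N$, pick an index $j$ with $|z_j|=\norm{z}_\infty\ge\norm{z}_2/\sqrt{N}$. Using $q_p\ge 0$,
\[
  Q_p(z)=\alpha^p\sum_{i=1}^N q_p\!\left(\frac{z_i}{\alpha^p}\right)\ge \alpha^p\, q_p\!\left(\frac{z_j}{\alpha^p}\right),
\]
while using $g_p\ge g_p^{\min}$ for the remaining coordinates,
\[
  G_p(z)=\alpha^p\sum_{i=1}^N g_p\!\left(\frac{z_i}{\alpha^p}\right)\ge \alpha^p\, g_p\!\left(\frac{z_j}{\alpha^p}\right)+\alpha^p(N-1)\,g_p^{\min}.
\]
Since $|z_j|/\alpha^p\ge\norm{z}_2/(\sqrt{N}\,\alpha^p)\to\infty$ whenever $\norm{z}_2\to\infty$, the scalar arguments blow up, hence so do $q_p$ and $g_p$ evaluated there, and therefore $Q_p(z)\to+\infty$ and $G_p(z)\to+\infty$; this is exactly weak coercivity. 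The main obstacle is the growth of $q_p$ when $p>2$: because $q_p'$ is bounded one cannot argue as for a superlinear convex function, so one must route through \Cref{qg'_bound} together with the elementary but essential fact that $g_p$, despite taking negative values near the origin, is itself coercive. Once that comparison is in hand, the separable structure reduces everything to a single-coordinate estimate, with the other $N-1$ coordinates harmlessly absorbed into the constant $\alpha^p(N-1)g_p^{\min}$.
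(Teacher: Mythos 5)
Your proposal is correct and follows essentially the same route as the paper: both arguments rest on the observation that $g_p$ is weakly coercive (read off from its explicit formula) and on \Cref{qg'_bound} combined with $q_p(0)=g_p(0)$ to transfer coercivity to $q_p$ (equivalently, to get $Q_p\ge G_p$). You spell out the implicit step -- that a separable sum of a continuous, weakly coercive, hence bounded-below scalar function is itself weakly coercive -- which the paper leaves unstated, but the ingredients and the key lemma invoked are identical.
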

\begin{proof}
  $G_p$ is weakly coercive since $g_p$ is weakly coercive. For all $z \in \R^n$ we have $Q_p(z) \ge G_p(z)$ since \Cref{qg'_bound} and $q_p(0) = g_p(0)$ give $q_p(u) \ge g_p(u)$ for all $u \in \R$. So $Q_p$ is also weakly coercive.
\end{proof}

\begin{proposition}\label{l:unique_min}
  Let $D \subset \R^N$ and $f \colon D \to \R$ be a continuous, weaky coercive, strictly convex function. Suppose $S \subset D$ is convex, closed and non-empty. Then the optimization problem
  \begin{align*}\label{eq:unique_min}
    \argmin_{z\in D} f(z) \quad\text{subject to}\quad z \in S
  \end{align*}
  has a unique solution. 
\end{proposition}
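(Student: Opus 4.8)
The plan is to establish existence via a Weierstrass-type compactness argument and uniqueness via strict convexity. The two hypotheses doing real work here are \emph{closedness of $S$} (needed for compactness) and \emph{weak coercivity of $f$} (needed to confine the search to a bounded set -- which, as the $\exp$ example above shows, strict convexity alone cannot achieve).

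For uniqueness I would argue as follows: if $z_1 \neq z_2$ both attained the minimal value $v \coloneqq \min_{z \in S} f(z)$, then since $S$ is convex and $S \subseteq D$ the midpoint $w = \tfrac12(z_1+z_2)$ lies in $S \subseteq D$, and strict convexity of $f$ gives $f(w) < \tfrac12 f(z_1) + \tfrac12 f(z_2) = v$, a contradiction. Hence at most one minimizer exists.

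For existence, first pick any $z_0 \in S$ (possible since $S \neq \emptyset$), set $c \coloneqq f(z_0)$, and restrict attention to the sublevel set $S_c \coloneqq \{z \in S : f(z) \le c\}$. This set is non-empty (it contains $z_0$), every minimizer of $f$ over $S$ lies in it, and conversely a minimizer of $f$ over $S_c$ is automatically a minimizer of $f$ over all of $S$, because points of $S \setminus S_c$ have $f$-value strictly above $c \ge f(z_0)$. It then remains to show that $S_c$ is compact, after which continuity of $f$ yields an attained minimum on $S_c$, hence on $S$. Closedness of $S_c$ follows from closedness of $S$ together with continuity of $f$: if $z_k \in S_c$ with $z_k \to z^\ast$, then $z^\ast \in S$ since $S$ is closed, hence $z^\ast \in D$, so $f(z^\ast) = \lim_k f(z_k) \le c$ and $z^\ast \in S_c$. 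Boundedness of $S_c$ is where weak coercivity enters: an unbounded sequence $z_k \in S_c \subseteq D$ would, by weak coercivity, have $f(z_k) \to +\infty$, contradicting $f(z_k) \le c$. Thus $S_c$ is closed and bounded in $\R^N$, hence compact, which gives existence; combined with the uniqueness step, the optimization problem has a unique solution.

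The argument is short, so there is no serious obstacle; the one point requiring a moment's care is that $f$ is assumed continuous only on $D$, and $D$ itself need not be closed, so in the closedness step one must verify that limit points of sequences in $S_c$ remain inside $D$ before invoking continuity of $f$ there -- which is automatic, since they land in the closed subset $S$ of $D$. The role of weak coercivity is precisely to exclude the failure mode of $x \mapsto e^x$, a strictly convex function that slides off to infinity without attaining its infimum.
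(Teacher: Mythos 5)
Your proof is correct, and it is essentially the same argument the paper uses: the paper simply cites \cite[Thm.\ 4.7]{andreasson2005introduction} for existence (a Weierstrass-type theorem relying on weak coercivity to produce a compact sublevel set) and \cite[Prop.\ 4.11]{andreasson2005introduction} for uniqueness (strict convexity), while you have unpacked those two citations into a self-contained proof. The one subtlety you flag -- that limit points of sequences in the sublevel set must land back in $D$ before continuity can be invoked -- is handled correctly via closedness of $S\subseteq D$, so there is no gap.
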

\begin{proof}
  The minimizer exists by \cite[Thm.\ 4.7]{andreasson2005introduction} and is unique by \cite[Prop.\ 4.11]{andreasson2005introduction}.
\end{proof}

\begin{corollary} \label{cor:un_qm}
  Let $A \in \mathbb{R}^{m\times N}$ have rank $m$. Then for any $y \in \R^m$, the minimizers $q^*$ and $m^*$ in \eqref{eq:q*} and \eqref{eq:m*} exist and are unique.
\end{corollary}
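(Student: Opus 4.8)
The plan is simply to verify, in each of the two cases, that the hypotheses of \Cref{l:unique_min} are met, and then to invoke it. In both cases we take $D = \R^N$ and $f = Q_p$. By \Cref{prop:Q_p_properties}, $Q_p$ is continuous (indeed continuously differentiable) and strictly convex on $\R^N$, and by \Cref{l:QG_coercive} it is weakly coercive. Hence the only thing left to check in each case is that the constraint set $S$ is convex, closed, and non-empty.

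For $q^*$, the constraint set is $S = \{z \in \R^N : Az = y\}$. This is an affine subspace of $\R^N$, hence convex and closed, and it is non-empty because $\rank(A) = m$ forces $A$ to be surjective onto $\R^m$. Applying \Cref{l:unique_min} with this $S$ gives existence and uniqueness of $q^*$.

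For $m^*$, the constraint set is $S = \mathcal{U}(A,y)$. Writing $R = R(A,y)$ for the minimum value in \eqref{eq:BP_min}, we have $\mathcal{U}(A,y) = \{z \in \R^N : Az = y,\ \|z\|_1 \le R\}$, which exhibits $\mathcal{U}(A,y)$ as the intersection of the affine subspace $\{z : Az = y\}$ with the closed convex sublevel set $\{z : \|z\|_1 \le R\}$ of the convex function $\|\cdot\|_1$; hence $\mathcal{U}(A,y)$ is convex and closed. It is non-empty by the remark following \eqref{eq:BP_min}, since the assumption $\rank(A) = m$ makes $Ax = y$ feasible and $\|\cdot\|_1$ attains its minimum over the resulting closed feasible set. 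Applying \Cref{l:unique_min} once more, now with $f = Q_p$ and $S = \mathcal{U}(A,y)$, yields existence and uniqueness of $m^*$.

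There is no genuine obstacle here; the proof is a bookkeeping exercise in matching hypotheses. The only points that deserve a moment's care are that $\mathcal{U}(A,y)$ is closed and convex — best seen by rewriting it as the above sublevel-set intersection rather than as an $\argmin$ — and that $\mathcal{U}(A,y)$ is non-empty, which is precisely where the rank hypothesis on $A$ is used, guaranteeing feasibility of $Ax = y$.
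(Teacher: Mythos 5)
Your proof is correct and follows essentially the same route as the paper: verify the hypotheses of \Cref{l:unique_min} with $f = Q_p$, appealing to \Cref{prop:Q_p_properties} for continuity and strict convexity and to \Cref{l:QG_coercive} for weak coercivity. The one minor variation is in justifying that $\mathcal{U}(A,y)$ is closed and convex: the paper cites \Cref{M_s_lemma}, whereas you give a direct argument by rewriting $\mathcal{U}(A,y) = \{z : Az = y,\ \|z\|_1 \le R\}$ as the intersection of an affine set with a sublevel set of $\|\cdot\|_1$ — a clean, self-contained alternative that avoids a forward reference, and is equally valid.
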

\begin{proof}
  This follows from \Cref{l:unique_min}, and we will check the conditions of the proposition. From \Cref{prop:Q_p_properties} we know $Q_p \colon \R^N \to \R$ is continuous, strictly convex and \Cref{l:QG_coercive} says it is weakly coercive. Moreover, the affine set $\{z \in \R^N : Az = y\}$ is closed and convex. The set $\mathcal{U}(A,y)$ is closed and convex by \Cref{M_s_lemma}.
\end{proof}

Since $G_p$ is not convex everywhere, we need a slightly modified proof for $g^*$.
\begin{corollary}\label{cor:un_g}
  Let $A \in \mathbb{R}^{m\times N}$ have rank $m$. Then for any $y \in \R^m$ the minimizer $g^*$ in  \eqref{eq:g*} exists and is unique.
\end{corollary}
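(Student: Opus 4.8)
**Proof plan for Corollary (uniqueness of $g^*$)**

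The plan is to mimic the proof of \Cref{cor:un_qm}, but since $G_p$ fails to be convex on all of $\R^N$ (the scalar $g_p$ is concave near the origin for $p>2$, and $g_2$ is likewise not globally convex), I cannot invoke \Cref{l:unique_min} directly on $G_p$. The key observation is that we are not minimizing $G_p$ over all of $\R^N$, but only over the constraint set $\mathcal{U}(A,y)$, and on that set the relevant behaviour is tamer. So the first step is to establish existence: by \Cref{M_s_lemma} the set $\mathcal{U}(A,y)$ is closed and convex, and it is non-empty since $\rank(A)=m$. Since $G_p$ is continuous and weakly coercive (\Cref{l:QG_coercive}), the restriction $G_p|_{\mathcal{U}(A,y)}$ attains its infimum on the closed set $\mathcal{U}(A,y)$ — here I would cite \cite[Thm.\ 4.7]{andreasson2005introduction} exactly as in \Cref{l:unique_min}, noting that weak coercivity plus closedness of the feasible set suffices for existence regardless of convexity.

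The harder part is uniqueness, and this is where I expect the main obstacle to lie. Here I would use the structural fact, to be proven elsewhere in the paper (around \Cref{Wp_Gp_id}), that on the set $\mathcal{U}(A,y)$ all minimizers of $G_p$ coincide with the maximizers of the concave functional appearing in \eqref{eq:W_p} — namely $H(z)$ when $p=2$ and $\|z\|_{2/p}$ when $p>2$. More precisely: every $z \in \mathcal{U}(A,y)$ has $\|z\|_1 = R$, so on $\mathcal{U}(A,y)$ we have $G_p(z) = \alpha^p \sum_i g_p(z_i/\alpha^p)$ where the "$|u|$" part of each $g_p$ contributes the constant $R$, leaving $G_p(z) = R - (\text{strictly concave functional of }z)$ up to positive scaling — for $p>2$ this is $R - \tfrac p2 \alpha^{p}\sum_i |z_i/\alpha^p|^{2/p} = R - \tfrac p2 \alpha^{p(1-2/p)}\|z\|_{2/p}^{2/p}$, and for $p=2$ an analogous computation gives $G_2(z) = -R + $ (a positive multiple of) $H(z)$ after using $\sum_i |z_i|\ln|z_i| = -H(z)$ and $\sum_i |z_i| = R$. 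Thus minimizing $G_p$ over $\mathcal{U}(A,y)$ is equivalent to maximizing a \emph{strictly concave} function over the convex set $\mathcal{U}(A,y)$. Strict concavity of $t \mapsto t^{2/p}$ on $[0,\infty)$ for $p>2$ (and of $t \mapsto -t\ln t$ on $[0,\infty)$) then forces the maximizer to be unique: if two distinct maximizers existed, their midpoint would strictly increase the objective, contradicting optimality — exactly the argument behind \cite[Prop.\ 4.11]{andreasson2005introduction}.

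One technical subtlety to handle carefully: the functional $\|z\|_{2/p}$ for $p>2$ is a quasinorm, not a norm, so "strict concavity" needs to be understood componentwise via strict concavity of each $z_i \mapsto |z_i|^{2/p}$; since on $\mathcal{U}(A,y)$ the sign pattern is not fixed, I would argue on the level of the entries — strict concavity of $s \mapsto s^{2/p}$ on $[0,\infty)$ gives, for $z \ne z'$ in $\mathcal{U}(A,y)$ differing in coordinate $i$ with $|z_i| \ne |z_i'|$, that $\big|\tfrac{z_i+z_i'}{2}\big|^{2/p} > \tfrac12|z_i|^{2/p} + \tfrac12|z_i'|^{2/p}$ whenever $z_i, z_i'$ have the same sign, and the triangle inequality $\big|\tfrac{z_i+z_i'}{2}\big| \le \tfrac12(|z_i|+|z_i'|)$ combined with monotonicity handles the opposite-sign case; summing over $i$ yields the strict inequality needed. (If the paper's surrounding development makes $\mathcal{W}_p$-uniqueness available as a cited result, e.g.\ via \Cref{Wp_Gp_id}, this whole paragraph collapses to a one-line appeal to that result.) Putting existence and uniqueness together gives the corollary.
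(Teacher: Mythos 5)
Your main argument is correct and reaches the right conclusion, but it takes a different route than the paper. The paper's proof is much shorter: it invokes \Cref{M_s_lemma} to get $\mathcal{U}(A,y)\subset \mathcal{R}(s)$ for a single sign vector $s$, and then simply notes that $G_p$ \emph{restricted to} $\mathcal{R}(s)$ is continuous, strictly convex and weakly coercive, so \Cref{l:unique_min} applies with $D=\mathcal{R}(s)$ and $S=\mathcal{U}(A,y)$. You appear to have missed that \Cref{l:unique_min} is stated for a general domain $D\subset\R^N$, not just $D=\R^N$ --- this is precisely what lets the paper bypass the global non-convexity of $G_p$ in one line. Your route instead exploits that $\|z\|_1=R$ is constant on $\mathcal{U}(A,y)$, so the linear part of $g_p$ drops out and minimizing $G_p$ over $\mathcal{U}(A,y)$ becomes maximizing a strictly concave functional (entropy for $p=2$, $\sum_i |z_i|^{2/p}$ for $p>2$) over a convex set, followed by the standard midpoint argument. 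This is a valid alternative and in fact pre-empts the content of \Cref{Wp_Gp_id}; it also makes visible \emph{why} $g^*$ coincides with $\mathcal{W}_p(A,y)$, which the paper's terse proof does not. Both approaches lean equally on \Cref{M_s_lemma} to fix the orthant: you need it so that $\left|\tfrac{z_i+z_i'}{2}\right|=\tfrac{|z_i|+|z_i'|}{2}$, just as the paper needs it so that $g_p$ is convex on the relevant half-line.

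Two cautions. First, the fallback you mention --- ``collapse to a one-line appeal to \Cref{Wp_Gp_id}'' --- would be circular, since the paper's proof of \Cref{Wp_Gp_id} explicitly relies on \Cref{cor:un_g} to obtain uniqueness of $\mathcal{W}_p$; keep the direct argument. Second, your remark that ``on $\mathcal{U}(A,y)$ the sign pattern is not fixed'' and the consequent opposite-sign case analysis are unnecessary: \Cref{M_s_lemma} guarantees all elements of $\mathcal{U}(A,y)$ lie in the \emph{same} signed orthant $\mathcal{R}(s)$, so any two feasible points agree in sign coordinatewise and the strict-concavity step needs no triangle-inequality case.
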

\begin{proof}
  From \Cref{M_s_lemma} we know the set $\mathcal{U}(A,y)$ is closed, convex and satisfies $\mathcal{U}(A,y) \subset \mathcal{R}(s)$ for some $s \in \{-1,1\}^N$. Now, observe that $G_p$ restricted to $\mathcal{R}(s)$ is a continuous, strictly convex function which is weakly coercive by \Cref{l:QG_coercive}. It satsifies the conditions of \Cref{l:unique_min}, so $m^*$ exists and is unique.
\end{proof}

\begin{corollary}\label{Vp_single_valued}
  Let $A \in \mathbb{R}^{m\times N}$ have rank $m$, $p \ge 2$ and $y \in \R^m$, then $\mathcal{V}_p(A,y)$ exists and is unique.
\end{corollary}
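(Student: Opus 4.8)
The plan is to observe that the optimization problem defining $\mathcal{V}_p(A,y)$ in \eqref{Vp_def} is literally the same problem that defines $q^*$ in \eqref{eq:q*}: both minimize $Q_p$ over the affine feasible set $\{z \in \R^N : Az = y\}$. Hence $\mathcal{V}_p(A,y) = q^*$, and the existence and uniqueness claim is an immediate restatement of (part of) \Cref{cor:un_qm}.

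Concretely, I would invoke \Cref{l:unique_min} with $D = \R^N$, $f = Q_p$, and $S = \{z \in \R^N : Az = y\}$. The function $Q_p$ is continuous and strictly convex by \Cref{prop:Q_p_properties} (i.e.\ \Cref{prop:Q_p_properties}'s predecessor corollary, \texttt{prop:Q\_p\_properties}), and weakly coercive by \Cref{l:QG_coercive}. The set $S$ is a nonempty affine subspace — nonempty precisely because $\rank(A) = m$ forces $y \in \mathrm{range}(A)$ — and every affine subspace is closed and convex. Thus all hypotheses of \Cref{l:unique_min} are met, giving a unique minimizer.

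Since every ingredient has already been assembled for \Cref{cor:un_qm}, there is no real obstacle here; the only thing to be careful about is to point out explicitly that $\mathcal{V}_p$ and $q^*$ are defined by the same variational problem, so that no new argument is needed. The proof is therefore a one-line reduction to \Cref{cor:un_qm}.
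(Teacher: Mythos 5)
Your proposal is correct and follows essentially the same route as the paper: the paper's proof reads simply ``Completely analogous to \Cref{cor:un_qm},'' and you correctly spell this out by checking the hypotheses of \Cref{l:unique_min} (continuity and strict convexity from \Cref{prop:Q_p_properties}, weak coercivity from \Cref{l:QG_coercive}, and the feasible affine set being closed, convex, and nonempty since $\rank(A)=m$). Your additional observation that the variational problems defining $\mathcal{V}_p(A,y)$ and $q^*$ are literally identical is accurate and makes the reduction even more immediate.
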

\begin{proof}
  Completely analogous to \Cref{cor:un_qm}.
\end{proof}

\subsection{Proof of \Cref{prop:main1} part \ref{it:m1}}
\subsubsection{Relating \texorpdfstring{$\psi_{\alpha}$ to $\mathcal{V}_p$}{}}

Ultimately, one of our goals is to bound the distance $\|\psi_{\alpha}(t) - \mathcal{W}_p (A,y)\|_2$ as $t \to \infty$. A first step in this direction is to derive an explicit expression for $\psi_{\alpha}(t)$ at time $t \geq 0$. As the following lemma reveals, the description of $\psi_{\alpha}$ is closely connected to $h_p$ in \Cref{eq:h_2_def}.

\begin{lemma}[The dynamics of $\psi_{\alpha}$]\label[lemma]{psi_integral}
    Let $A \in \R^{m\times N}$ have rank $m$, and let $y \in \R^m$, $p \in [2,\infty)$ and $\alpha > 0$. 
    The state $\psi_{\alpha}(T)$ in \eqref{psi_def}, evolving by the gradient flow \eqref{eq:grad_flow}, at time $T \ge 0$, then satisfies the following componentwise equations
    \begin{align*}
      [\psi_{\alpha}(T)]_i &= \alpha^p h_p\left(C\int_0^T [r(t)]_i\ dt\right), \quad\quad\text{for } i=1,\ldots,N,
\end{align*}
where $r(t) = A\T(y-A\psi_{\alpha}(t))$, and $C = 4$ if $p = 2$ and $C = p(p-2)\alpha^{p-2}$ if $p > 2$.
\end{lemma}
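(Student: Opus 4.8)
The plan is to decouple the coupled gradient flow \eqref{eq:grad_flow} into $2N$ scalar ODEs --- one per coordinate of $\theta_+$ and $\theta_-$, all driven by the common signal $r(t)=A\T(y-A\psi_\alpha(t))$ --- and integrate each one in closed form. First I would note that $\nabla L$ is locally Lipschitz for $p\ge2$ (since $\theta\mapsto\psi_\theta$ is $C^1$ there), so a unique solution exists on a maximal interval; then, applying the chain rule to $L(\theta)=\tfrac12\norm{A(\theta_+^p-\theta_-^p)-y}_2^2$ with $\nabla_\psi L = A\T(A\psi_\theta-y)$ and $\partial\psi_{\theta,j}/\partial\theta_{\pm,i}=\pm p\,\theta_{\pm,i}^{p-1}\delta_{ij}$, I get the componentwise equations
\[
  \dot\theta_{+,i}=p\,\theta_{+,i}^{p-1}\,r_i(t),\qquad \dot\theta_{-,i}=-p\,\theta_{-,i}^{p-1}\,r_i(t),\qquad \theta_{\pm,i}(0)=\alpha .
\]
Because the right-hand sides vanish on $\{\theta_{+,i}=0\}$, resp.\ $\{\theta_{-,i}=0\}$, uniqueness for each scalar equation forces $\theta_{+,i}(t)>0$ and $\theta_{-,i}(t)>0$ throughout the interval of existence. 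I would also record that $r$ is continuous in $t$, so $\rho_i(t):=\int_0^t r_i(s)\,ds$ is $C^1$.

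Next I integrate. For $p=2$ the equation $\dot\theta_{+,i}=2\theta_{+,i}r_i$ is linear; the integrating factor $e^{-2\rho_i(t)}$ gives $\theta_{+,i}(t)=\alpha e^{2\rho_i(t)}$, $\theta_{-,i}(t)=\alpha e^{-2\rho_i(t)}$, hence $[\psi_\alpha(t)]_i=\theta_{+,i}^2-\theta_{-,i}^2=\alpha^2\bigl(e^{4\rho_i}-e^{-4\rho_i}\bigr)=\alpha^2 h_2(4\rho_i(t))$, i.e.\ the claim with $C=4$. For $p>2$ the equation is separable: while $\theta_{+,i}>0$ one has $\tfrac{d}{dt}\theta_{+,i}^{-(p-2)}=-p(p-2)\,r_i(t)$, so $\theta_{+,i}^{-(p-2)}(t)=\alpha^{-(p-2)}-p(p-2)\rho_i(t)$, i.e.\ $\theta_{+,i}(t)=\alpha\bigl(1-C\rho_i(t)\bigr)^{-1/(p-2)}$ with $C=p(p-2)\alpha^{p-2}$, and symmetrically $\theta_{-,i}(t)=\alpha\bigl(1+C\rho_i(t)\bigr)^{-1/(p-2)}$. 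Raising to the $p$-th power and subtracting gives $[\psi_\alpha(t)]_i=\alpha^p\bigl[(1-C\rho_i)^{-p/(p-2)}-(1+C\rho_i)^{-p/(p-2)}\bigr]=\alpha^p h_p(C\rho_i(t))$ by the definition of $h_p$ in \eqref{eq:h_2_def}.

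Two loose ends remain, and the second is where I expect the genuine work. The first --- that $C\rho_i(t)$ stays in $(-1,1)$, the domain of $h_p$ --- is automatic: finiteness and positivity of $\theta_{+,i}(t)$ force $1-C\rho_i(t)>0$, and the same for $\theta_{-,i}(t)$ forces $1+C\rho_i(t)>0$. The real obstacle is global existence of the flow on all of $[0,\infty)$, which is exactly what lets one assert the identity for every $T\ge0$. Here I would exploit the positive $p$-homogeneity of $\theta\mapsto\psi_\theta$: Euler's identity gives $\langle\theta,\nabla_\theta\psi_{\theta,j}\rangle=p\,\psi_{\theta,j}$, hence $\langle\theta(t),\nabla L(\theta(t))\rangle=p\langle A\psi_\alpha(t),A\psi_\alpha(t)-y\rangle$ and $\tfrac{d}{dt}\norm{\theta(t)}_2^2=-2p\langle A\psi_\alpha(t),A\psi_\alpha(t)-y\rangle$. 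Since $L$ is non-increasing along the flow, $\norm{A\psi_\alpha(t)-y}_2\le\norm{A\psi_\alpha(0)-y}_2=\norm{y}_2$ (as $\psi_\alpha(0)=0$), so $\norm{A\psi_\alpha(t)}_2\le2\norm{y}_2$ and therefore $\bigl|\tfrac{d}{dt}\norm{\theta(t)}_2^2\bigr|\le 4p\norm{y}_2^2$; integrating, $\norm{\theta(t)}_2^2\le 2N\alpha^2+4p\norm{y}_2^2\,t$, so $\theta$ cannot blow up in finite time and the maximal interval of existence is $[0,\infty)$. Steps 1--2 then deliver the stated componentwise identity on all of $[0,\infty)$. (If global existence is instead recorded as a separate lemma earlier, one simply cites it and the integration steps finish the proof.)
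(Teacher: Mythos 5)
Your proof follows the same route as the paper: write the gradient flow componentwise and solve each scalar ODE in closed form (integrating factor for $p=2$, separation of variables for $p>2$), then assemble $\psi_\alpha$ via the definition of $h_p$. The paper simply asserts the closed-form solutions by separation of variables, whereas you additionally justify positivity of $\theta_{\pm,i}$, the inclusion $C\rho_i(t)\in(-1,1)$, and global existence on $[0,\infty)$ via Euler's identity and monotonicity of $L$ --- welcome extra rigor, but the essential mechanism is identical.
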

\begin{proof}
    Throughout the proof let $i \in \{1,\dots,N\}$. Next, recall that the gradient flow \eqref{eq:grad_flow} for $t \in [0,\infty)$ is given by 
   \begin{align*}
    \frac{d}{dt}\theta_+(t) &= p\diag(\theta_+(t))^{p-1}r(t)\\
    \frac{d}{dt}\theta_-(t) &=-p\diag(\theta_-(t))^{p-1}r(t)\\
    \theta_+(0) &= \theta_-(0) = \alpha{\bf 1}\\
    [\psi_{\alpha}(t)]_i &= [\theta_+(t)]_i^p-[\theta_-(t)]_i^p,
  \end{align*}
    where the minus sign in \cref{eq:grad_flow} has been incorporated into the definition of $r(t)$.
  These equations can be solved by separation of variables. 
    If $p = 2$ we have that
    \begin{equation*}\label{eq:p2_theta}
        [\theta_+(T)]_i = \alpha\exp\left(2\int_0^T [r(t)]_i\ dt\right)\quad\text{ and }\quad
        [\theta_-(T)]_i = \alpha\exp\left(-2\int_0^T [r(t)]_i\ dt\right),
    \end{equation*}
    for $T \in [0,\infty)$, which gives the state
  \begin{align*}
    [\psi_{\alpha}(T)]_i &= \alpha^2\left(\exp\left(4\int_0^T [r(t)]_i\ dt\right)-\exp\left(-4\int_0^T [r(t)]_i\ dt\right)\right) = \alpha^2 h_2\left(4\int_0^T [r(t)]_i\ dt\right).
  \end{align*}
Next, if $p > 2$ we have that
    \begin{equation}\label{eq:pg2_theta}
      \begin{split}
        [\theta_+(T)]_i &= \left(\alpha^{2-p}-p(p-2)\int_0^T [r(t)]_i\ dt\right)^{-\frac{1}{p-2}}\\
        [\theta_-(T)]_i &= \left(\alpha^{2-p}+p(p-2)\int_0^T [r(t)]_i\ dt\right)^{-\frac{1}{p-2}}.
      \end{split}
    \end{equation}
    for $T \in [0,\infty)$.
    This gives the state
  \begin{align*}
    & [\psi_{\alpha}(T)]_i = \left(\alpha^{2-p}-p(p-2)\int_0^T [r(t)]_i\ dt\right)^{-\frac{p}{p-2}} - \left(\alpha^{2-p}+p(p-2)\int_0^T [r(t)]_i\ dt\right)^{-\frac{p}{p-2}}\\
                &= \alpha^p\left(\left(1-p(p-2)\alpha^{p-2}\int_0^T [r(t)]_i\ dt\right)^{-\frac{p}{p-2}} - \left(1+p(p-2)\alpha^{p-2}\int_0^T [r(t)]_i\ dt\right)^{-\frac{p}{p-2}}\right)\\
                &= \alpha^p h_p\left(p(p-2)\alpha^{p-2}\int_0^T [r(t)]_i\ dt\right).
  \end{align*}
\end{proof}

Our next result shows the close connection between $\psi_{\alpha} (t)$ and the optimization problem \cref{eq:q*}. It will be used in \cref{Apsi_to_y} to show that $\psi_{\alpha}(t) \to q^*$ as $t\to \infty$. 

\begin{lemma}[$A\psi_{\alpha}$ determines $\psi_{\alpha}$]\label[lemma]{Apsi_determines_psi}
    Let $A \in \R^{m\times N}$ have rank $m$ and let $y \in \R^m$, $p \in [2,\infty)$ and $\alpha > 0$. 
    The state $\psi_{\alpha}(t)$ in \eqref{psi_def} to the gradient flow problem \eqref{eq:grad_flow} at time $t \ge 0$ with data $y$ and initialization $\alpha$ satisfies
  \begin{align}\label{argmin_psi}
    \psi_{\alpha}(t) = \mathcal{V}_p(A,A\psi_{\alpha}(t)).
  \end{align}
\end{lemma}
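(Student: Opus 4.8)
The plan is to verify that $\psi_{\alpha}(t)$ satisfies the Karush--Kuhn--Tucker (KKT) conditions for the convex program \eqref{Vp_def} with right-hand side $b = A\psi_{\alpha}(t)$, and then invoke uniqueness of the minimizer. First I would use \Cref{psi_integral} to record, for each coordinate $i$,
\[
  [\psi_{\alpha}(t)]_i = \alpha^p h_p\!\left(C\, [A\T v(t)]_i\right), \qquad v(t) \coloneqq \int_0^t \bigl(y - A\psi_{\alpha}(s)\bigr)\, ds \in \R^m,
\]
where the identity $\int_0^t r(s)\, ds = A\T v(t)$ is immediate from linearity of the integral and the definition $r(s) = A\T(y - A\psi_{\alpha}(s))$ in \Cref{psi_integral}. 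Thus $\int_0^t r(s)\,ds$ lies in $\mathrm{range}(A\T)$.

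Next I would compute $\nabla Q_p$. From \eqref{Q_p_def} and $q_p' = h_p^{-1}$ (see \eqref{q_p_def}) one gets $[\nabla Q_p(z)]_i = q_p'(z_i/\alpha^p) = h_p^{-1}(z_i/\alpha^p)$. Evaluating at $z = \psi_{\alpha}(t)$ and using the displayed formula together with the fact that $h_p$ is a strictly increasing bijection of its domain onto $\R$ (so $h_p^{-1}\circ h_p = \mathrm{id}$, the point where $\alpha > 0$ matters), yields
\[
  [\nabla Q_p(\psi_{\alpha}(t))]_i = C\, [A\T v(t)]_i = [A\T(C v(t))]_i,
\]
i.e. $\nabla Q_p(\psi_{\alpha}(t)) = A\T\lambda$ with multiplier $\lambda = C v(t)$. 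Since $A\psi_{\alpha}(t) = A\psi_{\alpha}(t)$ trivially holds, the pair $(\psi_{\alpha}(t),\lambda)$ satisfies the full KKT system for $\min_z Q_p(z)$ subject to $Az = A\psi_{\alpha}(t)$.

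Finally, since $Q_p$ is convex (indeed strictly convex and continuously differentiable by \Cref{prop:Q_p_properties,prop:Q_p_properties}) and the constraint is affine, stationarity plus feasibility are \emph{sufficient} for global optimality; hence $\psi_{\alpha}(t)$ minimizes this program. Because $\rank(A) = m$ and $A\psi_{\alpha}(t)$ is by construction feasible, \Cref{Vp_single_valued} guarantees the minimizer is unique, giving $\psi_{\alpha}(t) = \mathcal{V}_p(A, A\psi_{\alpha}(t))$. I do not expect a serious obstacle: the only steps needing care are the bookkeeping that $\int_0^t r(s)\,ds \in \mathrm{range}(A\T)$ and that $h_p^{-1}$ is genuinely defined on all of $\R$, both of which are already in place from \Cref{ss:setup} and \Cref{psi_integral}.
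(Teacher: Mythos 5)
Your proposal is correct and follows essentially the same route as the paper: invoke \Cref{psi_integral} to express $\psi_{\alpha}(t)$ through $h_p$ of an integral lying in $\mathrm{range}(A\T)$, apply $h_p^{-1}$ to identify $\nabla Q_p(\psi_{\alpha}(t))$ as $A\T\lambda$, and conclude by the KKT conditions together with uniqueness from \Cref{Vp_single_valued}. The only cosmetic difference is that you name the multiplier $\lambda = Cv(t)$ up front, whereas the paper extracts the same $\lambda$ componentwise at the end.
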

\begin{proof}
    Fix $t \ge 0$. By the definition of $\mathcal{V}_p$ in \eqref{Vp_def} we have
    \begin{align*}
      \mathcal{V}_p(A,A\psi_{\alpha}(t)) = \argmin_{z \in \R^N} Q_p(z) \text{ subject to } Az = A\psi_{\alpha}(t).
    \end{align*}
  A solution $w$ to this optimization problem is optimal if and only if it satisfies the KKT conditions (\Cref{KKT}). Hence, it is sufficient that there exists $\lambda \in \R^m$ such that $\nabla Q_p(w) = A\T \lambda$ and $Aw = A\psi_{\alpha}(t)$. The latter condition is trivially satisfied by $\psi_{\alpha}(t)$. We will show $\nabla Q_p(\psi_{\alpha}(t)) = A\T\lambda$, for the $\lambda$ described below.

  Let $i \in \{1,\ldots, N\}$, and note
  \begin{equation}\label{eq:nabla_Q1}
    [\nabla Q_p(\psi_{\alpha}(t))]_i = q'_p\left(\frac{[\psi_{\alpha}(t)]_i}{\alpha^p}\right) = h_p^{-1}\left(\frac{[\psi_{\alpha}(t)]_i}{\alpha^p}\right).
  \end{equation}
  From \Cref{psi_integral} we know
  \begin{equation}\label{eq:nabla_Q2}
    [\psi_{\alpha}(t)]_i = \alpha^p h_p\left(C\int_0^t [r(u)]_i\ du\right)\quad \text{where}\quad  r(t) = A\T(y-A\psi_{\alpha}(t)),
  \end{equation}
  and $C$ is a constant depending on $p$ and $\alpha$. Combining \eqref{eq:nabla_Q1}
  and \eqref{eq:nabla_Q2}, yields
  \begin{align*}
    \nabla Q_p(\psi_{\alpha}(t))]_i &=   C\int_0^t [r(u)]_i\ du =  C\int_0^t \left[A\T(y-A\psi_{\alpha}(u))\right]_i\ du \\
                                    &=  C\int_0^t \left[ \sum_{j=1}^{m}(A\T)_{i,j}[(y-A\psi_{\alpha}(u))]_{j}\right]_i\ du = [A\T \lambda]_i, 
  \end{align*}
where $\lambda$ is a vector with components
  \[
    \lambda_j = C\int_0^t [y-A\psi_{\alpha}(u)]_j\ du\quad\text{for}\quad j \in \{1,\ldots,m\}.
  \]
\end{proof}

The following lemma will be used to show that $\mathcal{V}_p$ is continuous in its second argument in \Cref{Vp_lip} and provide loose bounds on $q^*$,$m^*$ and $\psi_\alpha$ in \Cref{loose}.
\begin{lemma}\label[lemma]{Vp_bound}
  Let $A \in \R^{m\times N}$ with rank$(A) = m$, $p \ge 2$ and $v \in \R^N$. Then
  \begin{align}\label{Vp_claim}
    \norm{\mathcal{V}_p(A,Av)}_\infty \le \norm{v}_1.
  \end{align}
\end{lemma}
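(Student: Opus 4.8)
The plan is to exploit that $v$ is itself a feasible point for the optimization problem defining $\mathcal{V}_p(A,Av)$, so the minimizer $w := \mathcal{V}_p(A,Av)$ satisfies $Q_p(w) \le Q_p(v)$, and then to convert this inequality about $Q_p$ into the desired $\ell^\infty$/$\ell^1$ bound using the coordinatewise (one-dimensional) structure of $Q_p$.

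First I would record the elementary facts about $q_p$ that are needed, all consequences of \Cref{prop:q_p_properties} and \Cref{prop:Q_p_properties}: $q_p$ is even, $q_p(0)=0$, $q_p \ge 0$ everywhere (being convex with minimum value $0$ attained at the origin), and $q_p$ is strictly increasing on $[0,\infty)$. The crucial extra ingredient is superadditivity of $q_p$ on $[0,\infty)$, namely $q_p(a)+q_p(b) \le q_p(a+b)$ for all $a,b \ge 0$. This follows from convexity together with $q_p(0)=0$: for $t \in [0,1]$ one has $q_p(ta) = q_p\bigl(ta + (1-t)\cdot 0\bigr) \le t\,q_p(a)$; applying this with $t = a/(a+b)$ and with $t = b/(a+b)$ (in the case $a+b>0$; the case $a=b=0$ is trivial) and adding gives the claim. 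By induction, $\sum_{i=1}^N q_p(c_i) \le q_p\bigl(\sum_{i=1}^N c_i\bigr)$ for any nonnegative reals $c_1,\dots,c_N$.

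Now I would combine these. Since $v$ satisfies $Av = Av$ it is feasible for $\mathcal{V}_p(A,Av)$, so $Q_p(w) \le Q_p(v)$. Expanding the right-hand side via the definition \eqref{Q_p_def}, using evenness of $q_p$ and then the superadditivity bound, gives
\[
  Q_p(v) = \alpha^p \sum_{i=1}^N q_p\!\left(\frac{|v_i|}{\alpha^p}\right) \le \alpha^p\, q_p\!\left(\frac{\|v\|_1}{\alpha^p}\right).
\]
On the other hand, picking an index $k$ with $|w_k| = \|w\|_\infty$ and discarding the nonnegative terms $q_p(|w_i|/\alpha^p)$ for $i \ne k$ yields $Q_p(w) \ge \alpha^p\, q_p(\|w\|_\infty/\alpha^p)$. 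Chaining the two displays gives $q_p(\|w\|_\infty/\alpha^p) \le q_p(\|v\|_1/\alpha^p)$, and since $q_p$ is strictly increasing on $[0,\infty)$ this forces $\|w\|_\infty \le \|v\|_1$, which is exactly \eqref{Vp_claim}.

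The only mildly delicate step is the superadditivity of $q_p$; everything else is bookkeeping with properties already established in \S\ref{ss:setup}. Note in particular that strict convexity is not needed here — only convexity, $q_p(0)=0$, nonnegativity, and monotonicity — and that the existence and uniqueness of $w = \mathcal{V}_p(A,Av)$ is already guaranteed by \Cref{Vp_single_valued}.
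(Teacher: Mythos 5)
Your proof is correct and is essentially the paper's argument, just rephrased: your superadditivity inequality $\sum_i q_p(c_i) \le q_p(\sum_i c_i)$ is exactly the Jensen bound the paper uses (with the $(N-1)q_p(0)$ term dropped since $q_p(0)=0$), you invoke the same feasibility inequality $Q_p(w)\le Q_p(v)$ and the same lower bound on $Q_p(w)$, and you conclude by monotonicity of $q_p$ on $[0,\infty)$ directly rather than by the paper's contradiction formulation.
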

\begin{proof}
  Let $w \coloneqq \mathcal{V}_p(A,Av)$. To prove \eqref{Vp_claim}, assume for contradiction that $\norm{w}_\infty > \norm{v}_1$. Then there exists $j \in \{1,\dots,N\}$ such that $|w_j| > \norm{v}_1$. Now, since $w$ is the minimizer over a set including $v$, we have that $Q_p(w) \le Q_p(v)$. Next, observe that for $t_1,t_2 \in \R$ with $|t_1| > |t_2|>0$, we have that $q_p(t_1)=q_p(|t_1|) > q_p(|t_2|)> q_p(0)$, since $q_p$ is a strictly convex and even function. In particular, we have that $q_p \left(\frac{w_j}{\alpha}\right) > q_p\left(\frac{\norm{v}_1}{\alpha}\right)$.
    Using these inequalities, we get 
    \begin{equation}\label{eq:Qp_ineq1}
  \begin{split}
    Q_p(w) &= \alpha^p \sum_{i=0}^N q_p\left(\frac{w_i}{\alpha^p}\right) \ge \alpha^p \left((N-1)q_p(0) + q_p\left(\frac{w_j}{\alpha^p}\right)\right)\\
    &> \alpha^p \left((N-1)q_p(0) + q_p\left(\frac{\norm{v}_1}{\alpha^p}\right)\right).
  \end{split}
  \end{equation}
    Now, since $q_p$ is convex, we apply Jensen's inequality between points $\frac{\norm{v}_1}{\alpha^p}$ and $0$ with weights $\frac{|v_i|}{\norm{v}_1}$, $i \in \{1,\dots,N\}$. This yields
    \begin{equation*}\label{eq:Qp_jensen}
    q_p\left(\frac{|v_i|}{\alpha^p}\right) \le \left(1-\frac{|v_i|}{\norm{v}_1}\right)q_p(0) + \frac{|v_i|}{\norm{v}_1}q_p\left(\frac{\norm{v}_1}{\alpha^p}\right).
  \end{equation*}
  Summing over $i$ we get
  \begin{equation}\label{eq:Qp_ineq2}
    \sum_{i=1}^N q_p\left(\frac{|v_i|}{\alpha^p}\right) \le (N-1)q_p(0)+q_p\left(\frac{\norm{v}_1}{\alpha^p}\right).
  \end{equation}
    Combining \eqref{eq:Qp_ineq1} and \eqref{eq:Qp_ineq2}, now yields
  \begin{align*}
      Q_p(w) > \alpha^p \sum_{i=1}^N q_p\left(\frac{|v_i|}{\alpha^p}\right)
    = \alpha^p \sum_{i=1}^N q_p\left(\frac{v_i}{\alpha^p}\right) = Q_p(v).
  \end{align*}
    which is a contradiction, and hence $\norm{w}_\infty \le \norm{v}_1$.
\end{proof}

\subsubsection{Bounding \texorpdfstring{$\norm{\psi_{\alpha}(t)-\psi_{\alpha}(\infty)}_2$}{}}
We will start by bounding $\norm{A\psi_{\alpha}(t)-y}_2$. Then, we will prove stability of $\mathcal{V}_p$ against perturbations in the second argument. Finally, these will be combined with \Cref{Apsi_determines_psi} to give a bound on $\norm{\psi_{\alpha}(t)-\psi_{\alpha}(\infty)}_2$.

\begin{proposition}[$A\psi_{\alpha}(t) \to y$]\label{Apsi_to_y}
  Let $A \in \R^{m\times N}$ with rank$(A) = m, y \in \R^m$. Let $\psi_{\alpha}(t)$ be the state of the gradient flow problem \eqref{eq:grad_flow} for some $p \ge 2, \alpha > 0$ at time $t$. Then for all $t \in [0,\infty)$
  \begin{align*}
    \norm{A\psi_{\alpha}(t)-y}_2 \le \norm{y}_2 \exp(-Ct)
  \end{align*}
  where $C = 2p^2\sigma_{\min}^2(A)\alpha^{2p-2}$. Specifically, $\lim_{t\to \infty} A\psi_{\alpha}(t) = y$.
\end{proposition}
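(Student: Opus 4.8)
The plan is to control the residual $e(t) \coloneqq y - A\psi_{\alpha}(t)$ and show that it decays exponentially; the stated estimate is then just Gr\"onwall's inequality applied to $\norm{e(t)}_2^2$, together with the observation that the initialization $\theta(0) = \alpha{\bf 1}_{2N}$ forces $\psi_{\alpha}(0) = \alpha^p{\bf 1} - \alpha^p{\bf 1} = 0$, so that $e(0) = y$ and $\norm{e(0)}_2 = \norm{y}_2$.

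First I would differentiate the parametrization $\psi_{\alpha} = \theta_+^p - \theta_-^p$ componentwise and substitute the gradient flow equations used in the proof of \Cref{psi_integral}, namely $\dot\theta_+ = p\diag(\theta_+)^{p-1}r$ and $\dot\theta_- = -p\diag(\theta_-)^{p-1}r$ with $r(t) = A\T e(t)$. The chain rule then gives
\[
  \dot\psi_{\alpha}(t) = D(t)\,A\T e(t), \qquad D(t) \coloneqq p^2\bigl(\diag(\theta_+(t))^{2p-2} + \diag(\theta_-(t))^{2p-2}\bigr),
\]
where $D(t)$ is diagonal with strictly positive entries, since the entries of $\theta_\pm$ stay positive along the flow.

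The crucial step --- and the one I expect to be the main obstacle --- is a uniform lower bound $[D(t)]_{ii} \ge 2p^2\alpha^{2p-2}$. This hinges on noticing that the product $\theta_{+,i}(t)\theta_{-,i}(t)$ is (essentially) conserved by the flow: from the closed forms for $\theta_\pm$ derived in the proof of \Cref{psi_integral} one reads off $\theta_{+,i}\theta_{-,i} = \alpha^2$ when $p = 2$, while for $p > 2$ one obtains $\theta_{+,i}\theta_{-,i} = \bigl(\alpha^{2(2-p)} - c_i^2\bigr)^{-1/(p-2)} \ge \alpha^2$, where $c_i = p(p-2)\int_0^t [r(u)]_i\,du$ and the inequality uses that the base is positive and the exponent negative. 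Applying the AM--GM inequality then yields $\theta_{+,i}^{2p-2} + \theta_{-,i}^{2p-2} \ge 2(\theta_{+,i}\theta_{-,i})^{p-1} \ge 2\alpha^{2p-2}$, which gives the claimed bound on $D(t)$ and hence $e(t)\T A D(t) A\T e(t) \ge 2p^2\alpha^{2p-2}\norm{A\T e(t)}_2^2$.

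To finish I would set $\phi(t) \coloneqq \norm{e(t)}_2^2$, compute $\dot\phi(t) = -2e(t)\T A\dot\psi_{\alpha}(t) = -2e(t)\T A D(t) A\T e(t)$, and combine the previous estimate with $\norm{A\T z}_2 \ge \sigma_{\min}(A)\norm{z}_2$ from \eqref{sigma_min_prop} to get $\dot\phi(t) \le -2C\phi(t)$ with $C = 2p^2\sigma_{\min}^2(A)\alpha^{2p-2}$. Integrating gives $\phi(t) \le \norm{y}_2^2\exp(-2Ct)$, i.e.\ $\norm{A\psi_{\alpha}(t) - y}_2 \le \norm{y}_2\exp(-Ct)$, and letting $t \to \infty$ yields $A\psi_{\alpha}(t) \to y$. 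The only remaining technical wrinkle is that this presupposes the flow exists on all of $[0,\infty)$; but the inequality $\dot\phi \le -2C\phi$ keeps $\norm{e(t)}_2$, hence $A\psi_{\alpha}(t)$, bounded on the maximal interval of existence, and then \Cref{Apsi_determines_psi} together with \Cref{Vp_bound} bounds $\psi_{\alpha}$ itself (and, via the conserved quantities, $\theta_\pm$), ruling out finite-time blow-up.
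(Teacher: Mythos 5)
Your proof is correct and follows essentially the same route as the paper: differentiate $\norm{y-A\psi_\alpha(t)}_2^2$, lower bound the coefficient of $\norm{A\T e}_2^2$ uniformly in $t$, invoke $\sigma_{\min}(A)$, and close with Gr\"onwall. The only difference is cosmetic: the paper writes $\dot\psi_i = \alpha^p C_1 h'_p(\cdot) r_i$ and bounds $h'_p \ge C_2$, whereas you write the identical quantity as $p^2(\theta_{+,i}^{2p-2}+\theta_{-,i}^{2p-2}) r_i$ and bound it via the conserved quantity $\theta_{+,i}\theta_{-,i} \ge \alpha^2$ plus AM--GM; these two arguments are algebraically the same bound in different clothing, and both yield $C = 2p^2\sigma_{\min}^2(A)\alpha^{2p-2}$. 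One small plus in your write-up is the explicit remark about global existence of the flow, which the paper leaves implicit.
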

\begin{proof}
  From \Cref{psi_integral} we get an explicit expression for the flow $\psi_{\alpha} = (\psi_{1},\ldots, \psi_{N})$. Differentiating the $i$th element gives
    \[
        \frac{d\psi_i}{dt}(t) = \alpha^p C_1 h'_p\left(C_1\int_0^t [r(u)]_i\ du\right) [r(t)]_i,\quad\text{where}\quad C_1 = \begin{cases} 
        4 & \text{if } p = 2\\
            p(p-2)\alpha^{p-2} & \text{if } p > 2
        \end{cases}
    \]
    and $r(t) = A\T(y-A\psi_{\alpha}(t))$.
    
    Next, we consider the dynamics of $\norm{y - A\psi_{\alpha}(t)}_2^2$, which is given by
  \begin{align*}
      \frac{d}{dt}\norm{y - A\psi_{\alpha}(t)}_2^2 &= -2\left[\frac{d\psi_{\alpha}}{dt}(t)\right]\T [A\T(y-A\psi_{\alpha}(t))] = -2\left[\frac{d\psi_{\alpha}}{dt}(t)\right]\T r(t)\\
                                      &= -2 \alpha^p C_1 \sum_{i=1}^N h'_p\left(C_1\int_0^t [r(u)]_i\ du\right) [r(t)]_i^2.
  \end{align*}
  Now, note $h'_2(u) = 2\cosh(u) \ge 2$ for all $u \in \R$ and that
    \[h'_p(u) = \frac{p}{p-2}\left((1-u)^{-\frac{2p-2}{p-2}}+(1+u)^{-\frac{2p-2}{p-2}}\right) \ge \frac{2p}{p-2} > 0, \quad \text{for } p>2 \text{ and } u \in (-1,1).\] 
  Now, let 
  \[C_2 = \begin{cases} 2 &\text{if }p=2 \\ \frac{2p}{p-2} & \text{if } p>2\end{cases}, \]
  then
  \begin{align*}
    -2 \alpha^p C_1 \sum_{i=1}^N h'_p\left(C_1\int_0^t [r(t')]_i\ dt'\right) [r(t)]_i^2
    \le -2 \alpha^p C_1 C_2 \sum_{i=1}^N [r(t)]_i^2 = -2 \alpha^p C_1 C_2 \norm{r(t)}_2^2.
  \end{align*}
  Expanding $r(t)$ and using \eqref{sigma_min_prop} we can further bound
  \begin{align*}
    -2\alpha^p C_1 C_2 \norm{r(t)}^2_2 = -2\alpha^p C_1 C_2 \norm{A\T(y-A\psi_{\alpha}(t))}^2_2 \le -2\alpha^p C_1 C_2\sigma_{\min}^2(A) \norm{y-A\psi_{\alpha}(t)}_2^2.
  \end{align*}
    Gather the constants into $C = \alpha^p C_1 C_2 \sigma_{\min}^2(A) > 0$ we have
\begin{equation*}
    \frac{d}{dt}\norm{y- A\psi_{\alpha}(t)}_2^2 \le -2C\norm{y-A\psi_{\alpha}(t)}_2^2.
\end{equation*}
By Grönwall's inequality, this implies
    \[
        \norm{y- A\psi_{\alpha}(t)}_{2}^{2} \leq \norm{y - A\psi_{\alpha}(0)}_{2}^{2}\exp(-2Ct).
    \]
    Applying a square root on both sides, and using the fact that $\psi_{\alpha}(0)=0$, then gives 
    \begin{align*}
    \norm{A\psi_{\alpha}(t)-y}_2 \le \norm{y}_2\exp(-Ct). 
  \end{align*}
  Tracing back the factors of $C$ we get $C = 2p^2\sigma_{\min}^2(A)\alpha^{2p-2}$ for $p \ge 2$. Note that while $C_1$ and $C_2$ were divided into cases $p = 2$ and $p > 2$, the expression for $C$ holds for both cases.
\end{proof}

To prove stability in the second argument of $\mathcal{V}_p$, we will need the following bound on the condition number of the Hessian of $Q_p$.

\begin{lemma}\label{kappa_bound}
  Let $z \in \R^N$, $Q_p$ be as defined in \eqref{Q_p_def}, $p \ge 2$, $\alpha > 0$ and $\kappa(A) = \frac{\norm{A}_\mathrm{op}}{\sigma_{\min}(A)}$ for positive definite matrices $A$, then $\kappa(\nabla^2 Q_p(z)) \le \frac{1}{2}\left(\frac{1}{\alpha^p}\norm{z}_\infty+2\right)^\frac{2p-2}{p}$.
\end{lemma}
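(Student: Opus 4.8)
The plan is to exploit that $\nabla^2 Q_p(z)$ is diagonal and reduce the whole estimate to a one–variable bound on $q_p''$. First I would compute, from \eqref{Q_p_def}, that $[\nabla^2 Q_p(z)]_{ii} = \frac{1}{\alpha^p}q_p''\!\left(\frac{z_i}{\alpha^p}\right)$ with all off-diagonal entries zero; by \Cref{prop:q_p_properties} (and the fact, noted in its proof, that $q_p' = h_p^{-1}$ is smooth, so $q_p$ is $C^2$ with $q_p'' > 0$) these diagonal entries are strictly positive, so $\nabla^2 Q_p(z)$ is positive definite and, being diagonal,
\[
\kappa(\nabla^2 Q_p(z)) \;=\; \frac{\max_{1\le i\le N} q_p''(z_i/\alpha^p)}{\min_{1\le i\le N} q_p''(z_i/\alpha^p)} \;\le\; \frac{\sup_{|u|\le M} q_p''(u)}{\inf_{|u|\le M} q_p''(u)},\qquad M:=\frac{\norm{z}_\infty}{\alpha^p}.
\]

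Second, I would analyse $q_p''$. Since $q_p' = h_p^{-1}$, the inverse function theorem gives $q_p''(u) = 1/h_p'\big(h_p^{-1}(u)\big)$. The function $h_p'$ is even (because $h_p$ is odd) and strictly increasing on the nonnegative part of its domain — a one-line check that $h_p'' > 0$ there, using the explicit formulas in \eqref{eq:h_2_def}. Consequently $q_p''$ is even and nonincreasing in $|u|$, so the supremum above is attained at $u = 0$ and the infimum at $|u| = M$; writing $t^* := h_p^{-1}(M)$, which lies in $[0,\infty)$ for $p=2$ and in $[0,1)$ for $p>2$, this yields $\kappa(\nabla^2 Q_p(z)) \le h_p'(t^*)/h_p'(0)$.

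Third — the only part with any content — I would bound this ratio. For $p > 2$, set $\beta = p/(p-2)$, $a = (1-t^*)^{-\beta} \ge 1$, $b = (1+t^*)^{-\beta} \in (0,1]$, and $\gamma = \frac{\beta+1}{\beta} = \frac{2p-2}{p} \ge 1$. Then $h_p'(t) = \beta(1-t)^{-\beta-1} + \beta(1+t)^{-\beta-1}$, so $h_p'(0) = 2\beta$, while $(1-t^*)^{-\beta-1} = a^{\gamma}$ and $(1+t^*)^{-\beta-1} = b^{\gamma}$ give $h_p'(t^*) = \beta(a^\gamma + b^\gamma)$; also $M = h_p(t^*) = a - b$. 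Hence $\kappa(\nabla^2 Q_p(z)) \le \tfrac12(a^\gamma + b^\gamma)$. Now superadditivity of $x \mapsto x^\gamma$ on $[0,\infty)$ for $\gamma \ge 1$ gives $a^\gamma + b^\gamma \le (a+b)^\gamma$, and $b \le 1$ gives $a + b \le a - b + 2 = M + 2$; combining these proves $\kappa(\nabla^2 Q_p(z)) \le \tfrac12(M+2)^{(2p-2)/p}$. The case $p = 2$ is the easier explicit computation: $h_2'(0) = 2$ and $h_2'(t^*) = 2\cosh(t^*) = \sqrt{4 + M^2}$ since $2\sinh(t^*) = M$, and $\sqrt{4+M^2} \le M+2$ finishes it because $(2p-2)/p = 1$ there.

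I expect no serious obstacle: the work is in getting the algebra of $h_p'$ and of its composition with $h_p^{-1}$ right, and in spotting that the required inequality $a^\gamma + b^\gamma \le (a-b+2)^\gamma$ decomposes cleanly into superadditivity plus the elementary bound $b \le 1$. The only mild nuisance is the split between $p = 2$ and $p > 2$, which forces two parallel but entirely routine computations.
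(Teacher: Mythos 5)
Your proof is correct, and the overall architecture matches the paper's: compute the diagonal Hessian $[\nabla^2 Q_p(z)]_{ii}=\alpha^{-p}/h_p'\bigl(h_p^{-1}(z_i/\alpha^p)\bigr)$, reduce the condition number to a ratio of extreme values of $h_p'\circ h_p^{-1}$, and finish with an elementary algebraic bound. The genuine difference is in how that final bound is organized when $p>2$. In your notation $a=(1-t)^{-\beta}$, $b=(1+t)^{-\beta}$, $\gamma=(2p-2)/p$: the paper bounds $a^\gamma+b^\gamma$ by first discarding $b^\gamma\le 1$, then using $a\le M+1$, then recombining the dropped constant via $u^\gamma+1\le(u+1)^\gamma$; you instead keep the two terms bundled, apply superadditivity $a^\gamma+b^\gamma\le(a+b)^\gamma$ in one shot, and then bound $a+b\le M+2$ using $b\le 1$. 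The two decompositions draw on the same ingredients ($b\le 1$ plus a one-variable superadditivity fact) so neither is materially shorter, but yours preserves the $a$--$b$ symmetry one step longer and reads a bit more cleanly. You also make explicit, which the paper does not bother to, that $q_p''$ is even and nonincreasing in $|u|$ (via $h_p''>0$ on the positive half of the domain), so the extremes in the ratio sit at $|u|=0$ and $|u|=\|z\|_\infty/\alpha^p$; the paper reaches the same endpoint directly from the two-sided pointwise bound $\tfrac{2p}{p-2}\le h_p'(t)\le\tfrac{p}{p-2}\bigl(|h_p(t)|+2\bigr)^{(2p-2)/p}$ without invoking monotonicity.
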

\begin{proof}
  From the definitions of $Q_p$ and $q_p$ we calculate the hessian of $Q_p$. At a point $z \in \R^N$, it is a diagonal matrix whose $i$th diagonal element is
  \begin{align*}
    \nabla^2 Q_p(z)_{ii} &= \alpha^{-p}q''\left(\frac{z_i}{\alpha^p}\right) = \alpha^{-p}(h_p^{-1})'\left(\frac{z_i}{\alpha^p}\right) = \frac{1}{\alpha^{p}h_p'(h_p^{-1}(\frac{z_i}{\alpha^p}))}.
  \end{align*}
  Since $h_p'$ is positive, the claim is equivalent to
  \[\kappa(\nabla^2 Q_p(z)) = \frac{\max_{i \in \{1,\dots,N\}}h_p'(h_p^{-1}(\frac{z_i}{\alpha^p}))}{\min_{i \in \{1,\dots,N\}}h_p'(h_p^{-1}(\frac{z_i}{\alpha^p}))} \le \frac{1}{2}\left(\frac{1}{\alpha^p}\norm{z}_\infty+2\right)^\frac{2p-2}{p}.\]
  First, consider the case $p = 2$. Note that $h_2'(t) = e^t+e^{-t} \le \left|e^{|t|}-e^{-|t|}\right|+2 = |h_2(t)|+2$ for $t \in \R$. So $h_2'(h_2^{-1}(u)) \le |u|+2$ for $u \in \R$. Combining with $h_2'(t) \ge 2$, we have for each $i \in \{1,\dots,N\}$ that $2 \le h_2'(h_2^{-1}\left(\frac{z_i}{\alpha^p}\right)) \le \frac{1}{\alpha^p}\norm{z}_\infty+2$, which implies the claim for $p = 2$.

  Next, let $p > 2$ and $t \in (-1,1)$, then
  \begin{align*}
    h_p'(t) &= \frac{p}{p-2}\left((1-|t|)^{-\frac{2p-2}{p-2}}+(1+|t|)^{-\frac{2p-2}{p-2}}\right)
    \le \frac{p}{p-2}\left((1-|t|)^{-\frac{2p-2}{p-2}}+1\right)\\
            &\le \frac{p}{p-2}\left((|h_p(t)|+1)^{\frac{2p-2}{p}}+1\right)
            \le \frac{p}{p-2}(|h_p(t)|+2)^{\frac{2p-2}{p}}, 
  \end{align*}
  where the last inequality used $u^x+1 \le (u+1)^x$ when $u \ge 0$ and $x \ge 1$. Inserting $t = h_p^{-1}(u)$ with $u \in \R$, we deduce $h_p'(h_p^{-1}(u)) \le \frac{p}{p-2}(|u|+2)^{\frac{2p-2}{p}}$. Combining with $h_p'(u) \ge \frac{2p}{p-2}$, we have for each $i \in \{1,\dots,N\}$ that $\frac{2p}{p-2} \le h_p'(h_p^{-1}\left(\frac{z_i}{\alpha^p}\right)) \le \frac{p}{p-2}(\frac{1}{\alpha^p}\norm{z}_\infty+2)^{\frac{2p-2}{p}}$, which implies the claim.
\end{proof}

The following bound proves that for bounded inputs, $\mathcal{V}_p$ is Lipschitz continuous in its second argument.
\begin{lemma}\label{Vp_lip}
  Let $A \in \R^{m\times N}$ with $\mathrm{rank}(A) = m$, $p \ge 2$ and $a,b \in \R^m$. Then
  \begin{align*}
    \norm{\mathcal{V}_p(A,a)-\mathcal{V}_p(A,b)}_2 \le C \norm{a-b}_2,
  \end{align*}
  where $C \coloneqq \frac{1}{2\sigma_{\min}(A)}\left(\frac{\sqrt N}{\alpha^p\sigma_{\min}(A)}\max\{\norm{a}_2,\norm{b}_2\}+2\right)^\frac{2p-2}{p}$. In particular, $\mathcal{V}_p$ is continuous with respect to its second argument.
\end{lemma}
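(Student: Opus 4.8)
The plan is to interpret $\mathcal V_p(A,\cdot)$ as the proximal-type map associated to minimizing the strictly convex $Q_p$ over an affine subspace, and to control its variation by a standard argument combining the KKT characterization with strong convexity of $Q_p$ \emph{on the relevant bounded region}. First I would set $w_a = \mathcal V_p(A,a)$ and $w_b = \mathcal V_p(A,b)$, so that $Aw_a=a$, $Aw_b=b$. By \Cref{Vp_bound} applied with any $v$ satisfying $Av=a$ and $\norm v_1$ minimal (hence $\norm v_1 \le \sqrt m\,\sigma_{\min}(A)^{-1}\norm a_2$, using $\norm v_1\le\sqrt N\norm v_2$ and $\norm v_2\le\sigma_{\min}(A)^{-1}\norm a_2$ for the minimum-norm solution), we get $\norm{w_a}_\infty \le \sqrt N\,\sigma_{\min}(A)^{-1}\norm a_2$, and similarly for $w_b$; hence both minimizers, and the whole segment between them, lie in the box $\{z : \norm z_\infty \le \rho\}$ with $\rho := \sqrt N\,\sigma_{\min}(A)^{-1}\max\{\norm a_2,\norm b_2\}$.

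Next I would use the KKT conditions (\Cref{KKT}, as in the proof of \Cref{Apsi_determines_psi}): there exist $\lambda_a,\lambda_b\in\R^m$ with $\nabla Q_p(w_a)=A\T\lambda_a$ and $\nabla Q_p(w_b)=A\T\lambda_b$. Consider the inner product $\langle \nabla Q_p(w_a)-\nabla Q_p(w_b),\, w_a-w_b\rangle$. On one hand, since $\nabla Q_p$ is the gradient of a convex function whose Hessian on the box is diagonal with entries bounded below by $\big(\tfrac12(\rho/\alpha^p+2)^{(2p-2)/p}\big)^{-1}$ — this is exactly the lower eigenvalue bound extracted in the proof of \Cref{kappa_bound}, where $\nabla^2 Q_p(z)_{ii} = 1/(\alpha^p h_p'(h_p^{-1}(z_i/\alpha^p)))$ and $h_p'(h_p^{-1}(u))$ is bounded above by $\tfrac12(|u|+2)^{(2p-2)/p}$ (for $p=2$; and $\tfrac{p}{p-2}(|u|+2)^{(2p-2)/p}$ for $p>2$, with $\alpha^{-p}$ cancelling the $\tfrac{p}{p-2}$-type constants appropriately) — by the mean value form of the gradient inequality this inner product is at least $\mu\,\norm{w_a-w_b}_2^2$ with $\mu = \tfrac{2}{(\rho/\alpha^p+2)^{(2p-2)/p}}$ (up to tracking the $p$-dependent constants so that they land exactly on the stated $C$). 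On the other hand, using $\nabla Q_p(w_a)-\nabla Q_p(w_b) = A\T(\lambda_a-\lambda_b)$ and $A(w_a-w_b)=a-b$, the same inner product equals $\langle \lambda_a-\lambda_b,\, a-b\rangle \le \norm{\lambda_a-\lambda_b}_2\,\norm{a-b}_2$.

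It then remains to bound $\norm{\lambda_a-\lambda_b}_2$. Since $A$ has rank $m$, $A\T$ is injective and bounded below by $\sigma_{\min}(A)$ on $\R^m$, so $\norm{\lambda_a-\lambda_b}_2 \le \sigma_{\min}(A)^{-1}\norm{A\T(\lambda_a-\lambda_b)}_2 = \sigma_{\min}(A)^{-1}\norm{\nabla Q_p(w_a)-\nabla Q_p(w_b)}_2$; and the gradient difference is in turn controlled by the \emph{upper} Hessian bound on the box times $\norm{w_a-w_b}_2$. Chaining these inequalities gives $\mu\,\norm{w_a-w_b}_2^2 \le \sigma_{\min}(A)^{-1}\,\Lambda\,\norm{w_a-w_b}_2\,\norm{a-b}_2$, where $\Lambda$ is the upper Hessian bound; after cancelling one factor of $\norm{w_a-w_b}_2$ (trivial if it is zero) and substituting the explicit $\mu,\Lambda,\rho$, the constant collapses to $C = \tfrac{1}{2\sigma_{\min}(A)}\big(\tfrac{\sqrt N}{\alpha^p\sigma_{\min}(A)}\max\{\norm a_2,\norm b_2\}+2\big)^{(2p-2)/p}$, with continuity following immediately. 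The main obstacle is the bookkeeping: one must be careful that the region bound $\rho$, the Hessian estimates of \Cref{kappa_bound}, and the $p$-dependent prefactors in $h_p'$ combine to give \emph{exactly} the claimed $C$ rather than a looser constant — in particular deciding whether to route the argument through $\kappa(\nabla^2 Q_p)$ directly (as \Cref{kappa_bound} suggests) or through separate upper/lower eigenvalue bounds as sketched above; I expect the cleanest path is a one-dimensional-per-coordinate estimate $|q_p'(s)-q_p'(s')| \le L|s-s'|$ valid on $[-\rho/\alpha^p,\rho/\alpha^p]$ together with a matching lower bound, avoiding the condition number entirely.
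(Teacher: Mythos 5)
Your proposal is correct, and it takes a genuinely different route from the paper. The paper's proof writes $w-v$ explicitly in terms of a pseudoinverse-type operator $J = P_{\Null(A)}^\perp - P_{\Null(A)}(P_{\Null(A)} H P_{\Null(A)})^\dagger P_{\Null(A)} H P_{\Null(A)}^\perp$ applied to $A\T(AA\T)^{-1}(b-a)$ (where $H = \int_0^1\nabla^2Q_p\,dt$ is the averaged Hessian), then shows $\norm{J}_{\mathrm{op}} \le \kappa(H)$ by a nontrivial argument involving the auxiliary operator $D = P_{\Null(A)} H P_{\Null(A)} + P_{\Null(A)}^\perp H P_{\Null(A)}^\perp$, and finally bounds $\kappa(H)\le\sup_t\kappa(H_t)$ via \Cref{kappa_bound}. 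Your proof instead runs the standard strong-convexity sensitivity argument: pair the monotonicity inequality $\langle\nabla Q_p(w_a)-\nabla Q_p(w_b),w_a-w_b\rangle\ge\mu\norm{w_a-w_b}_2^2$ with the identity $\langle\nabla Q_p(w_a)-\nabla Q_p(w_b),w_a-w_b\rangle=\langle\lambda_a-\lambda_b,a-b\rangle$ coming from the KKT relations $\nabla Q_p(w_\bullet)=A\T\lambda_\bullet$, then control $\norm{\lambda_a-\lambda_b}_2$ via injectivity of $A\T$ (i.e., $\eqref{sigma_min_prop}$) and a Lipschitz bound $\norm{\nabla Q_p(w_a)-\nabla Q_p(w_b)}_2\le\Lambda\norm{w_a-w_b}_2$. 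This avoids the projector/pseudoinverse bookkeeping entirely. Crucially, your constant $\frac{\Lambda}{\mu\sigma_{\min}(A)}$ lands exactly on the paper's $C$: using the two bounds on $h_p'$ proved inside \Cref{kappa_bound} (lower bound $2$ resp.\ $\tfrac{2p}{p-2}$, and upper bound $|u|+2$ resp.\ $\tfrac{p}{p-2}(|u|+2)^{(2p-2)/p}$) over the box $\norm{z}_\infty\le\rho$ with $\rho=\sqrt N\sigma_{\min}^{-1}(A)\max\{\norm a_2,\norm b_2\}$ — both $w_a,w_b$, hence the segment, lie in this box by \Cref{Vp_bound2} — one computes $\Lambda/\mu\le\tfrac12(\rho/\alpha^p+2)^{(2p-2)/p}$ for every $p\ge 2$, which is precisely what the paper obtains for $\kappa(H)$. (This is no accident: \Cref{kappa_bound} is derived from exactly these separate upper/lower $h_p'$ estimates, so $\sup_t\kappa(H_t)$ and $\Lambda/\mu$ coincide here even though $\kappa(H)\le\Lambda/\mu$ in general.) The only blemishes are minor: you write $\sqrt m$ where $\sqrt N$ is meant in the intermediate $\norm v_1$ bound (corrected one line later), and your stated lower Hessian bound $\mu=2/(\rho/\alpha^p+2)^{(2p-2)/p}$ drops the $\alpha^{-p}$ and the $p$-dependent prefactor — but you flag this explicitly, and the ratio $\Lambda/\mu$ is unaffected since the prefactors cancel. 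Overall your route is more elementary and, to my eye, cleaner than the paper's.
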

\begin{proof}
  Let $z \in \R^m$. By the KKT conditions (\Cref{KKT}) for the definition of $\mathcal{V}_p(A,z)$ \eqref{Vp_def}, we can find $\lambda \in \R^m$ such that
  \begin{align*}
    \nabla Q(\mathcal{V}_p(A,z)) = A\T\lambda \text{ and } A\mathcal{V}_p(A,z) = z.
  \end{align*}
  Multiplying by $\PN$ gives $\PN\nabla Q(\mathcal{V}_p(A,z)) = 0$.

  Inserting $a$ and $b$ for $z$, we find $v \coloneqq \mathcal{V}_p(A,a)$ and $w \coloneqq \mathcal{V}_p(A,b)$ satisfying $\PN\nabla Q(v) = \PN\nabla Q(w) = 0$ and $Av = a, Aw = b$. Combining, and integrating the line segment between $v$ and $w$, we get
  \begin{align*}
    0 &= \PN(\nabla Q(w)-\nabla Q(v)) = \PN\int_0^1 \nabla^2 Q(v+t(w-v)) (w-v)\ dt = \PN H (w-v).
  \end{align*}
  Here $H \coloneqq \int_0^1 \nabla^2 Q(v+t(w-v))\ dt$. Assume $H$ is positive definite, which will be proved later. Additionally, $w$ and $v$ satisfy $A(w-v) = b-a$. Since $H$ is positive definite, solving for $w-v$ gives a unique solution, which is
  \begin{align*}
    w-v = JA\T(AA\T)^{-1}(b-a), \text{ where } J \coloneqq \PN^\perp-\PN(\PN H \PN)^\dagger \PN H\PN^\perp.
  \end{align*}
  Here $^\dagger$ denotes the pseudoinverse.

  Next, let $D \coloneqq \PN H \PN+\PN^\perp H \PN^\perp$. Note that $D-\sigma_{\min}(H)I = \PN(H-\sigma_{\min}(H)I)\PN + \PN^\perp (H-\sigma_{\min}(H)I)\PN^\perp$ is positive semi-definite, so $\sigma_{\min}(D) \ge \sigma_{\min}(H)$. Additionally, we will need the identity $DJ = (\PN^\perp-\PN)H\PN^\perp$. This follows from the identity $\PN H \PN(\PN H \PN)^\dagger = \PN$, which is true since $H$ is positive definite. Now
  \begin{align*}
    \norm{J}_\mathrm{op} &= \norm{D^{-1} DJ}_\mathrm{op} \le \norm{D^{-1}}_\mathrm{op}\norm{DJ}_\mathrm{op} = \norm{D^{-1}}_\mathrm{op}\norm{(\PN^\perp-\PN)H\PN^\perp}_\mathrm{op}\\
                                   &\le \norm{D^{-1}}_\mathrm{op}\norm{\PN^\perp-\PN}_\mathrm{op}\norm{H}_\mathrm{op}\norm{\PN^\perp}_\mathrm{op} \le \norm{H^{-1}}_\mathrm{op}\norm{H}_\mathrm{op} = \kappa(H).
  \end{align*}
  We just proved that $\norm{J}_\mathrm{op} \le \kappa(H)$, so
  \begin{align*}
    \norm{\mathcal{V}_p(A,b)-\mathcal{V}_p(A,a)}_2 &= \norm{w-v}_2 = \norm{JA\T(AA\T)^{-1}(b-a)}_2\\
                                                   &\le \frac{\norm{J}_\mathrm{op}}{\sigma_{\min}(A)}\norm{b-a}_2 \le \frac{\kappa(H)}{\sigma_{\min}(A)}\norm{b-a}_2.
  \end{align*}

  We finish the proof by showing that $H$ is positive definite and $\kappa(H) \le \sigma_{\min}(A)C$. Fix $t \in [0,1]$. Introduce the shorthand $H_t \coloneqq \nabla^2 Q(v+t(w-v))$ such that $H = \int_0^1 H_t\ dt$. By \Cref{kappa_bound}
  \begin{align}\label{kappa_Ht_bound}
    \kappa(H_t) = \kappa(\nabla^2 Q(v+t(w-v))) \le \frac{1}{2}\left(\frac{1}{\alpha^p}\norm{v+t(w-v)}_\infty+2\right)^\frac{2p-2}{p}.
  \end{align}
Note that 
  \Cref{Vp_bound} gives 
  \[
    \norm{v}_\infty = \norm{\mathcal{V}_p(A,a)}_\infty = \norm{\mathcal{V}_p(A,AA\T(AA\T)^{-1}a)}_\infty \le \norm{A\T(AA\T)^{-1}a}_1 \le \frac{\sqrt N}{\sigma_{\min}(A)}\norm{a}_2.
  \]
  Similarly, $\norm{w}_\infty \le \frac{\sqrt N}{\sigma_{\min}(A)}\norm{b}_2$. By convexity of $\norm{\cdot}_\infty$, we have 
  \[
    \norm{v+t(w-v)}_\infty \le \frac{\sqrt N}{\sigma_{\min}(A)}\max\{\norm{a}_2,\norm{b}_2\}.
  \]
  Inserting into \eqref{kappa_Ht_bound} yields
  \begin{align*}
    \kappa(H_t) \le \frac{1}{2}\left(\frac{1}{\alpha^p}\frac{\sqrt N}{\sigma_{\min}(A)}\max\{\norm{a}_2,\norm{b}_2\}+2\right)^\frac{2p-2}{p} = \sigma_{\min}(A)C.
  \end{align*}
  Now
  \begin{align*}
    \norm{H}_\mathrm{op} &= \norm{\int_0^1 H_t\ dt}_\mathrm{op} \le \int_0^1 \norm{H_t}_\mathrm{op}\ dt = \int_0^1 \kappa(H_t)\sigma_{\min}\left(H_t\right)\ dt\\
                         &\le \int_0^1 \sigma_{\min}(A)C\sigma_{\min}\left(H_t\right)\ dt \le \sigma_{\min}(A)C\sigma_{\min}\left(\int_0^1 H_t\ dt\right) = \sigma_{\min}(A)C\sigma_{\min}(H), 
  \end{align*}
  which shows that $\sigma_{\min}(H) > 0 \implies H$ is positive definite and $\kappa(H) = \frac{\norm{H}_\mathrm{op}}{\sigma_{\min}(H)} \le \sigma_{\min}(A)C$ as claimed.
\end{proof}

\subsubsection{\Cref{prop:main1} part \ref{it:m1} -- Precise statement }
With these results at hand, we are now ready to state and prove a precise version of \Cref{prop:main1} part \ref{it:m1}.
\begin{theorem}\label{psi_convergence}
  Let $A \in \R^{m\times N}$ with rank$(A) = m, y \in \R^m$. Let $\psi_{\alpha}(t)$ be the state of the gradient flow problem \eqref{eq:grad_flow} for some $p \ge 2, \alpha > 0$ at time $t$. Then for all $t \in [0,\infty)$
  \begin{align*}
    \norm{\psi_{\alpha}(t)-\psi_{\alpha}(\infty)}_2 \le K_1\exp(-K_2t),
  \end{align*}
  where $K_1 = \alpha^p\left(\frac{2\sqrt N\norm{y}_2}{\sigma_{\min}(A)\alpha^p}+2\right)^\frac{3p-2}{p}$ and $K_2 = 2p^2\sigma_{\min}^2(A)\alpha^{2p-2}$.
\end{theorem}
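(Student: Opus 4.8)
The plan is to combine the three preceding results into a short chain. \Cref{Apsi_determines_psi} gives the fixed-point identity $\psi_{\alpha}(t) = \mathcal{V}_p(A,A\psi_{\alpha}(t))$ for every $t \ge 0$; \Cref{Apsi_to_y} gives the exponential contraction $\norm{A\psi_{\alpha}(t)-y}_2 \le \norm{y}_2\exp(-Ct)$ with $C = 2p^2\sigma_{\min}^2(A)\alpha^{2p-2}$; and \Cref{Vp_lip} gives local Lipschitz stability of $\mathcal{V}_p$ in its second argument. Setting $q^* = \mathcal{V}_p(A,y)$ (the point in \eqref{eq:q*}), the point $A\psi_{\alpha}(t)$ converges exponentially to $y$, so applying $\mathcal{V}_p(A,\cdot)$ should transport this to exponential convergence of $\psi_{\alpha}(t)$ to $q^*$; in particular the limit $\psi_{\alpha}(\infty)$ exists, equals $q^*$, and inherits the same bound.

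First I would record a uniform a priori bound, so that the quantitative Lipschitz estimate of \Cref{Vp_lip} actually applies along the flow. Since $\psi_{\alpha}(0) = 0$, \Cref{Apsi_to_y} gives $\norm{A\psi_{\alpha}(t)-y}_2 \le \norm{y}_2$ for all $t$, hence $\norm{A\psi_{\alpha}(t)}_2 \le 2\norm{y}_2$. Feeding $v = A\T(AA\T)^{-1}A\psi_{\alpha}(t)$ into \Cref{Vp_bound}, exactly as in the proof of \Cref{Vp_lip}, yields $\norm{\psi_{\alpha}(t)}_\infty \le \frac{\sqrt N}{\sigma_{\min}(A)}\norm{A\psi_{\alpha}(t)}_2 \le \frac{2\sqrt N\norm{y}_2}{\sigma_{\min}(A)}$, and similarly $\norm{q^*}_\infty \le \frac{\sqrt N}{\sigma_{\min}(A)}\norm{y}_2$. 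Thus the two arguments of $\mathcal{V}_p$ that occur below both have Euclidean norm at most $2\norm{y}_2$.

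Next I would apply \Cref{Vp_lip} with $a = A\psi_{\alpha}(t)$, $b = y$ and $\max\{\norm{a}_2,\norm{b}_2\} \le 2\norm{y}_2$, and then insert \Cref{Apsi_to_y}, to obtain
\[
  \norm{\psi_{\alpha}(t)-q^*}_2 \le \frac{1}{2\sigma_{\min}(A)}\left(\frac{2\sqrt N\norm{y}_2}{\alpha^p\sigma_{\min}(A)}+2\right)^{\frac{2p-2}{p}}\norm{y}_2\exp(-Ct).
\]
This already forces $\psi_{\alpha}(t)\to q^*$, which legitimises the definition $\psi_{\alpha}(\infty) = \lim_{t\to\infty}\psi_{\alpha}(t) = q^*$, and the same bound holds with $q^*$ replaced by $\psi_{\alpha}(\infty)$. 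The final step is cosmetic: writing $x = \frac{2\sqrt N\norm{y}_2}{\sigma_{\min}(A)\alpha^p}$ one has $\frac{\norm{y}_2}{2\sigma_{\min}(A)} = \frac{\alpha^p x}{4\sqrt N} \le \alpha^p(x+2)$ (using $N\ge 1$), so $\frac{\norm{y}_2}{2\sigma_{\min}(A)}(x+2)^{\frac{2p-2}{p}} \le \alpha^p(x+2)^{\frac{3p-2}{p}} = K_1$, while the rate is exactly $K_2 = C = 2p^2\sigma_{\min}^2(A)\alpha^{2p-2}$.

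Each individual step is routine given the lemmas already established; the only point requiring care is the bookkeeping that keeps $A\psi_{\alpha}(t)$ and $y$ inside the norm-ball on which \Cref{Vp_lip} is quantitative (handled by the a priori bound), together with the observation that it is exponential convergence -- rather than mere existence of a limit -- that lets us identify $\psi_{\alpha}(\infty)$ with $q^*$ without invoking the ad hoc assumption $A\psi_{\alpha}(\infty)=y$ common in the literature.
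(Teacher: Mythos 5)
Your proof is correct and follows the same strategy as the paper's: combine the fixed-point identity from \Cref{Apsi_determines_psi}, the exponential decay of $\norm{A\psi_\alpha(t)-y}_2$ from \Cref{Apsi_to_y}, and the quantitative Lipschitz bound on $\mathcal{V}_p(A,\cdot)$ from \Cref{Vp_lip}. The only substantive difference in ordering is that you apply \Cref{Vp_lip} directly against $b=y$ and $q^*=\mathcal{V}_p(A,y)$, which simultaneously establishes existence of $\psi_\alpha(\infty)$ and its identification with $q^*$, whereas the paper first passes to the limit via continuity and then takes $b=A\psi_\alpha(\infty)$; the two routes are equivalent since $A\psi_\alpha(\infty)=y$. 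One small redundancy: the $\ell^\infty$ bound on $\psi_\alpha(t)$ via \Cref{Vp_bound} is not actually needed for \Cref{Vp_lip}, whose constant depends only on the Euclidean norms of the second arguments $A\psi_\alpha(t)$ and $y$, which are already controlled by \Cref{Apsi_to_y}.
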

\begin{proof}
  Let $t \ge 0$. By \Cref{Apsi_determines_psi} we have $\mathcal{V}_p(A,A\psi_{\alpha}(t)) = \psi_{\alpha}(t)$. Taking the limit, we get $\psi_{\alpha}(\infty) = \lim_{t \to \infty} \mathcal{V}_p(A,A\psi_{\alpha}(t)) = \mathcal{V}_p(A,A\psi_{\alpha}(\infty))$, since $A\psi_{\alpha}(t)$ converges by \Cref{Apsi_to_y}, and $\mathcal{V}_p$ is continuous in its second argument by \Cref{Vp_lip}.

  \Cref{Apsi_to_y} gives $\norm{A\psi_{\alpha}(t)-y}_2 \le \norm{y}_2 \implies \norm{A\psi_{\alpha}(t)}_2 \le 2\norm{y}_2$ and $\norm{A\psi_{\alpha}(\infty)}_2 = \norm{y}_2$. Together, this means we can apply \Cref{Vp_lip} with $a = A\psi_{\alpha}(t)$ and $b = A\psi_{\alpha}(\infty)$ to get
  \begin{align}\label{eq:psi_diff_intermediate}
    \norm{\psi_{\alpha}(t)-\psi_{\alpha}(\infty)}_2 \le \frac{1}{2\sigma_{\min}(A)}\left(\frac{2\sqrt N}{\alpha^p\sigma_{\min}(A)}\norm{y}_2+2\right)^\frac{2p-2}{p} \norm{A\left(\psi_{\alpha}(t) - \psi_{\alpha}(\infty)\right)}_2.
  \end{align}
  Now, by \Cref{Apsi_to_y} we have $A\psi_{\alpha}(\infty) = y$ and $\norm{A\psi_{\alpha}(t)-y}_2 \le \norm{y}_2\exp(-K_2t)$ where $K_2 = 2p^2\sigma_{\min}^2(A)\alpha^{2p-2}$. Combining, that is $\norm{A(\psi_{\alpha}(t)-\psi_{\alpha}(\infty))}_2 \le \norm{y}_2\exp(-K_2t)$. Inserting into \eqref{eq:psi_diff_intermediate}, we get
  \begin{align*}
    \norm{\psi_{\alpha}(t)-\psi_{\alpha}(\infty)}_2 \le \frac{1}{2\sigma_{\min}(A)}\left(\frac{2\sqrt N}{\sigma_{\min}(A)\alpha^p}\norm{y}_2+2\right)^\frac{2p-2}{p} \norm{y}_2\exp(-K_2t) \le K_1\exp(-K_2t),
  \end{align*}
  where $K_1 = \alpha^p\left(\frac{2\sqrt N\norm{y}_2}{\sigma_{\min}(A)\alpha^p}+2\right)^\frac{3p-2}{p}$.
\end{proof}

\subsection{Proof of \Cref{thm:grad_flow} part \ref{it:dep_bound}}

\subsubsection{A constant by Todd and Stewart}\label{s:cond}
For matrices $B \in \R^{m\times N}$ with rank $m$, we let 
\[\chi_{B} = \sup\{\|(BDB^{\top})^{-1}BD\|_{\mathrm{op}}: \text{ where } D \text{ is a $N\times N$ positive definite diagonal matrix} \}.\]
The quantity $\chi_{B}$ appears many places in the literature \cite{Stewart89, todd1990dantzig, vavasis1996stable, vavasis1994stable, vavasis1996primal}, and it has been proved independently by Todd \cite{todd1990dantzig} and Stewart \cite{Stewart89} that $\chi_{B}$ is finite. Moreover, for any non-singular matrix $Q \in \R^{m\times m}$ we have that $\chi_{QB} = \chi_{B}$. That is,  $\chi_{B}$ is invariant to left multiplication by non-singular matrices \cite{vavasis1996primal}. In particular, this means that $\chi_{B}$ depends on the nullspace of $B$, rather than $B$ itself.  

Now, for a matrix $A \in \R^{m\times N}$ let $m' = \mathrm{dim}(\mathcal{N}(A))$ denote the dimension of the nullspace of $A$ and assume that $m' > 0$. Let $\tilde B \in \R^{N\times m'}$ be a matrix whose columns form a basis for the nullspace of $A$. We then define the quantity 
\begin{equation}
    \mathcal{K}(A) = \begin{cases} 1 & \text{if }\mathcal{N}(A) \text{ is trivial} \\
      \chi_{\tilde B^{\top}}+1 &\text{if } \mathcal{N}(A) \text{is non-trivial} 
 \end{cases}.
\end{equation}
It follows from the discussion above, that $\mathcal{K}(A)$ is independent of how we choose the columns of $B$. 
In the lemma below, we shall see that $\mathcal{K}(A)$ naturally appears. Before we state the theorem, we recall that $\mathcal{N}(A)$ denotes the nullspace of $A$ and $P_{\mathcal{N}(A)}$ denotes the projection onto the nullspace.  
\begin{lemma}\label[lemma]{K_lemma}
    Let $A \in \R^{m\times N}$ and let $D \in \R^{N \times N}$ be a diagonal matrix with strictly positive diagonal. Let $u,v \in \R^N$ and suppose that 
  \begin{align}\label{K_lemma_condition}
    P_{\Null(A)}D (u+P_{\Null(A)}v) = 0.
  \end{align}
  Then
  \begin{align}\label{eq:K_bound}
    \norm{u+P_{\Null(A)}v}_2 \leq \mathcal{K}(A)\norm{u}_2.
  \end{align}
\end{lemma}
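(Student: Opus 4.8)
The plan is to reduce \eqref{K_lemma_condition} to a statement purely about the nullspace of \(A\) and the matrix \(\tilde B\) whose columns span \(\Null(A)\), and then to recognize the relevant operator as precisely the one whose norm \(\chi_{\tilde B^\top}\) controls. First I would dispose of the trivial case: if \(\Null(A)\) is trivial, then \(P_{\Null(A)} = 0\), so \eqref{K_lemma_condition} is vacuous and \eqref{eq:K_bound} reads \(\norm{u}_2 \le \norm{u}_2\), which holds with \(\mathcal{K}(A)=1\). So assume \(m' = \dim\Null(A) > 0\). Write \(w \coloneqq u + P_{\Null(A)}v\), so the hypothesis is \(P_{\Null(A)}Dw = 0\) and the goal is \(\norm{w}_2 \le \mathcal{K}(A)\norm{u}_2\). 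Decompose \(w = P_{\Null(A)}w + P_{\Null(A)}^\perp w\). Since \(P_{\Null(A)}v \in \Null(A)\), the component of \(w\) orthogonal to \(\Null(A)\) equals \(P_{\Null(A)}^\perp u\); in particular \(\norm{P_{\Null(A)}^\perp w}_2 = \norm{P_{\Null(A)}^\perp u}_2 \le \norm{u}_2\). The work is therefore to bound \(\norm{P_{\Null(A)}w}_2\) in terms of \(\norm{u}_2\) using the constraint \(P_{\Null(A)}Dw = 0\).

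Next I would parametrize the nullspace component: write \(P_{\Null(A)}w = \tilde B x\) for a unique \(x \in \R^{m'}\) (the columns of \(\tilde B\) are a basis). The constraint \(P_{\Null(A)}Dw = 0\) says \(Dw\) is orthogonal to \(\Null(A)\), i.e. \(\tilde B^\top D w = 0\). Splitting \(w = \tilde B x + P_{\Null(A)}^\perp u\) and expanding, \(\tilde B^\top D \tilde B x = -\tilde B^\top D P_{\Null(A)}^\perp u\). Since \(D\) is positive definite diagonal and \(\tilde B\) has full column rank, \(\tilde B^\top D \tilde B\) is invertible, so \(x = -(\tilde B^\top D \tilde B)^{-1}\tilde B^\top D\, P_{\Null(A)}^\perp u\). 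Now observe that the operator \(x \mapsto \tilde B x\) composed with the preceding map gives \(P_{\Null(A)}w = -\tilde B (\tilde B^\top D \tilde B)^{-1}\tilde B^\top D\, P_{\Null(A)}^\perp u\); but actually for the norm bound it is cleaner to estimate \(\norm{x}\) via \(\norm{(\tilde B^\top D \tilde B)^{-1}\tilde B^\top D}_{\mathrm{op}}\), which by definition of \(\chi\) (applied to \(B = \tilde B^\top\), noting \((\tilde B^\top) D (\tilde B^\top)^\top = \tilde B^\top D \tilde B\)) is at most \(\chi_{\tilde B^\top}\). This yields \(\norm{x}_2 \le \chi_{\tilde B^\top}\norm{P_{\Null(A)}^\perp u}_2 \le \chi_{\tilde B^\top}\norm{u}_2\). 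To pass from \(\norm{x}_2\) to \(\norm{\tilde B x}_2 = \norm{P_{\Null(A)}w}_2\) I need \(\tilde B\) to not inflate norms too much — this is where I would be slightly careful about the choice of basis, and I expect this to be the main obstacle (see below); with an orthonormal basis \(\tilde B\) one has \(\norm{\tilde B x}_2 = \norm{x}_2\) exactly.

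Finally, combine the two pieces by the triangle inequality: \(\norm{w}_2 \le \norm{P_{\Null(A)}w}_2 + \norm{P_{\Null(A)}^\perp w}_2 \le \chi_{\tilde B^\top}\norm{u}_2 + \norm{u}_2 = (\chi_{\tilde B^\top}+1)\norm{u}_2 = \mathcal{K}(A)\norm{u}_2\), as desired. \textbf{The main obstacle} is matching the abstract \(\chi_{\tilde B^\top}\), which is defined via a supremum over all positive definite diagonal \(D\) and is \emph{invariant under left multiplication by nonsingular matrices} but a priori sensitive to the particular basis \(\tilde B\): I would either invoke the stated invariance of \(\chi\) to reduce to an orthonormal basis (so that \(\norm{\tilde B x}_2 = \norm{x}_2\) and \(\tilde B^\top D(\tilde B^\top)^\top = \tilde B^\top D \tilde B\) matches the \(\chi\)-definition exactly), or argue directly that \(\tilde B(\tilde B^\top D\tilde B)^{-1}\tilde B^\top D\) is the \(D\)-orthogonal projection onto \(\Null(A)\) along \(D^{-1}\Null(A)^\perp\), independent of the basis, whose operator norm is what the lemma's \(\mathcal{K}(A)-1\) is meant to bound. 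The rest is routine linear algebra with projections and the triangle inequality.
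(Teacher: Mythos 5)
Your proof is correct and follows essentially the same approach as the paper's: take an orthonormal basis $\tilde B$ of $\Null(A)$ (so $\tilde B^\top\tilde B = I$, $\tilde B\tilde B^\top = P_{\Null(A)}$), rewrite the constraint as $\tilde B^\top D w = 0$, solve via $(\tilde B^\top D\tilde B)^{-1}$, and bound the resulting operator by $\chi_{\tilde B^\top}$ together with a triangle inequality contributing the $+1$. The only cosmetic difference is that you split $w$ into $P_{\Null(A)}w$ and $P_{\Null(A)}^\perp w$ and bound each piece, whereas the paper writes $u + \tilde B\tilde B^\top v = \bigl(I - \tilde B(\tilde B^\top D\tilde B)^{-1}\tilde B^\top D\bigr)u$ and bounds that single operator's norm; both routes yield $\mathcal{K}(A) = \chi_{\tilde B^\top}+1$.
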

\begin{proof}
  First, observe that if $\Null(A) = \{0\}$, then $\mathcal{K}(A) = 1$ and the statement is trivially true. Therefore, assume the nullspace of $A$ is non-trivial and let $m' = \dim (\Null(A)) > 0$ denote the dimension of the nullspace. Let $B \in \R^{N\times m'}$ be a matrix whose columns form an orthonormal basis for $\Null(A)$. Observe that $B\T B = I$ and $B B\T = P_{\Null(A)}$.
Next, note that $B\T D B$ is a symmetric positive definite matrix and hence invertible. Rewriting \eqref{K_lemma_condition} to $BB\T D (u+BB\T v) = 0$ and solving with respect to $B\T v$ gives
  \[ B\T v = -(B\T D B)^{-1}B\T Du. \]
  Inserting this into the left hand side of \eqref{eq:K_bound} yields
  \[
    \norm{u+BB\T v}_2 = \norm{u-B(B\T D B)^{-1}B\T Du}_2 \le \norm{I-B(B\T D B)^{-1}B\T D}_{\mathrm{op}}\norm{u}_2.
  \]
  Next, we bound the operator norm
  \begin{align*}
    \norm{I-B(B\T D B)^{-1}B\T D}_\mathrm{op} &\le \norm{B(B\T D B)^{-1}B\T D}_{\mathrm{op}}+1 \le \norm{(B\T D B)^{-1}B\T D}_{\mathrm{op}}+1\\
                                              &\leq \chi_{B\T}+1 = \mathcal{K}(A). 
  \end{align*}
  Here we used the fact that $\|B\|_{\mathrm{op}} = 1$ for the second inequality.
\end{proof}

\subsubsection{Loose bounds for \texorpdfstring{$q^*, m^*$ and $\psi_{\alpha}$}{}}\label{loose}
We use the general bound on $\mathcal{V}_p(A,Av)$ from \Cref{Vp_bound} to get loose bounds on $q^*$,$m^*$ and $\psi_\alpha$. This will be useful for example in case $\alpha$ is large in \Cref{qm_bound} and $p = 2$ in \Cref{thm:grad_flow}.

\begin{lemma}\label[lemma]{q_infty_bound}
  Let $A \in \R^{m\times N}$ with rank$(A) = m$ and $y \in \R^m$. Let $q^*$ and $m^*$ be given by \eqref{eq:q*} and \eqref{eq:m*}. Then
  \begin{align*}
    \norm{q^*}_\infty \le \norm{m^*}_1 \le \frac{\sqrt N}{\sigma_{\min}(A)}\norm{y}_2.
  \end{align*}
\end{lemma}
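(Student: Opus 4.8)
The plan is to prove the two inequalities separately; both are short consequences of results already available, so essentially no new machinery is required. (Note that $q^*$ and $m^*$ are well defined by \Cref{cor:un_qm}, so the quantities in the statement make sense.)

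For the left inequality $\norm{q^*}_\infty \le \norm{m^*}_1$, the first observation is that $q^*$ in \eqref{eq:q*} is precisely $\mathcal{V}_p(A,y)$ (compare with \eqref{Vp_def}). Since $m^* \in \mathcal{U}(A,y)$ satisfies the basis pursuit constraint, we have $A m^* = y$, hence $q^* = \mathcal{V}_p(A, A m^*)$. Applying \Cref{Vp_bound} with $v = m^*$ then yields $\norm{q^*}_\infty = \norm{\mathcal{V}_p(A, A m^*)}_\infty \le \norm{m^*}_1$, which is exactly the first inequality.

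For the right inequality, I would use that $m^*$ minimizes $\norm{\cdot}_1$ over the feasible set $\{z : Az = y\}$ (this is the defining property of $\mathcal{U}(A,y)$), so $\norm{m^*}_1 \le \norm{z}_1$ for every $z$ with $Az = y$. Testing against the minimum $\ell^2$-norm solution $z_0 \coloneqq A\T(AA\T)^{-1}y$ — which is feasible since $\rank(A)=m$ makes $AA\T$ invertible and $Az_0 = y$ — gives $\norm{m^*}_1 \le \norm{z_0}_1 \le \sqrt{N}\,\norm{z_0}_2$ by Cauchy--Schwarz in $\R^N$. Finally $\norm{z_0}_2 \le \sigma_{\min}(A)^{-1}\norm{y}_2$, either because $A\T(AA\T)^{-1}$ is the Moore--Penrose pseudoinverse of $A$ with operator norm $1/\sigma_{\min}(A)$, or directly from $\norm{z_0}_2^2 = y\T(AA\T)^{-1}y \le \sigma_{\min}(A)^{-2}\norm{y}_2^2$. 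Chaining these bounds yields $\norm{m^*}_1 \le \tfrac{\sqrt{N}}{\sigma_{\min}(A)}\norm{y}_2$.

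The only point requiring attention is recognizing that $m^*$ is feasible for the optimization problem defining $q^*$ (so that \Cref{Vp_bound} is applicable) and, for the other half, that $m^*$ being an $\ell^1$-minimizer lets us compare its norm with that of any convenient feasible point; once these observations are made, the rest is routine norm bookkeeping, so I do not anticipate any genuine obstacle here.
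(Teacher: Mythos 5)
Your proposal is correct and follows essentially the same route as the paper: both halves use \Cref{Vp_bound} applied at $v = m^*$ for the first inequality, and both test $m^*$'s $\ell^1$-minimality against the feasible point $A\T(AA\T)^{-1}y$ together with the standard $\norm{\cdot}_1 \le \sqrt N\norm{\cdot}_2$ and $\sigma_{\min}$ bounds for the second.
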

\begin{proof}
  Applying \Cref{Vp_bound} with $v = m^*$ yields $\norm{q^*}_\infty = \norm{\mathcal{V}_p(A,y)}_\infty = \norm{\mathcal{V}_p(A,Am^*)}_\infty \le \norm{m^*}_1$.

  Let $w = A\T (AA\T)^{-1}y$. Now -- by using \Cref{sigma_min_prop} -- we see that \[Aw = v \implies \norm{y}_2 \ge \sigma_{\min}(A)\norm{w}_2.\] Additionally, since $m^* \in \mathcal{U}(A,v)$, it minimizes the $\ell_1$ norm, so we have
  \begin{align*}
    \norm{m^*}_1 \le \norm{w}_1 \le \sqrt N\norm{w}_2 \le \frac{\sqrt N}{\sigma_{\min}(A)}\norm{y}_2.
  \end{align*}
\end{proof}

\begin{proposition}\label{Vp_bound2}
  Let $A \in \R^{m\times N}$ with $\mathrm{rank}(A) = m$, $p \ge 2$ and $v \in \R^m$. Then
   $ \norm{\mathcal{V}_p(A,v)}_\infty \le \sqrt N\sigma_{\min}^{-1}(A)\norm{v}_2$.
\end{proposition}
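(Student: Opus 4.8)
The plan is to imitate the proof of \Cref{q_infty_bound}, but to bypass the detour through $m^*$ and instead feed \Cref{Vp_bound} directly with the pseudo-inverse solution of the linear system $Ax = v$.

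First I would set $w \coloneqq A\T(AA\T)^{-1}v$, which is well defined because $\rank(A) = m$ forces $AA\T$ to be invertible, and observe that $Aw = v$ by construction. Hence $\mathcal{V}_p(A,v) = \mathcal{V}_p(A,Aw)$, and \Cref{Vp_bound} immediately yields $\norm{\mathcal{V}_p(A,v)}_\infty \le \norm{w}_1$.

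Second, I would convert this $\ell^1$ bound into the claimed $\ell^2$ bound on the right-hand side. By Cauchy--Schwarz, $\norm{w}_1 \le \sqrt N\,\norm{w}_2$, so it remains to show $\norm{w}_2 \le \sigma_{\min}^{-1}(A)\norm{v}_2$. This follows from $\norm{w}_2^2 = v\T(AA\T)^{-1}(AA\T)(AA\T)^{-1}v = v\T(AA\T)^{-1}v \le \norm{(AA\T)^{-1}}_{\mathrm{op}}\norm{v}_2^2$, together with the observation that the eigenvalues of $(AA\T)^{-1}$ are exactly $\sigma_i(A)^{-2}$, so $\norm{(AA\T)^{-1}}_{\mathrm{op}} = \sigma_{\min}(A)^{-2}$. (Equivalently, one can note that $w$ lies in the row space of $A$ and invoke \eqref{sigma_min_prop} as in \Cref{q_infty_bound}.) Chaining the three inequalities gives $\norm{\mathcal{V}_p(A,v)}_\infty \le \sqrt N\,\sigma_{\min}^{-1}(A)\norm{v}_2$.

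I do not expect any genuine obstacle: all ingredients (\Cref{Vp_bound}, invertibility of $AA\T$, and the singular-value identity for $\norm{(AA\T)^{-1}}_{\mathrm{op}}$) are already available. The only point requiring care is matching the constant exactly, i.e. using $\sigma_{\min}(A)^{-2}$ rather than $\norm{A}_{\mathrm{op}}^{-2}$ for $\norm{(AA\T)^{-1}}_{\mathrm{op}}$. Indeed, this proposition is essentially the "$v \in \R^m$" restatement of the chain $\norm{q^*}_\infty \le \norm{m^*}_1 \le \tfrac{\sqrt N}{\sigma_{\min}(A)}\norm{y}_2$ already established in \Cref{q_infty_bound}, so one could also simply reuse that argument verbatim with $y$ replaced by $v$.
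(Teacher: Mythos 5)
Your proposal is essentially identical to the paper's proof: both choose $w = A\T(AA\T)^{-1}v$, apply \Cref{Vp_bound} to get $\norm{\mathcal{V}_p(A,v)}_\infty \le \norm{w}_1$, and then bound $\norm{w}_1 \le \sqrt N\,\sigma_{\min}(A)^{-1}\norm{v}_2$. The only cosmetic difference is that you expand the last step via $\norm{w}_2^2 = v\T(AA\T)^{-1}v$ whereas the paper bounds $\norm{A\T(AA\T)^{-1}}_{\mathrm{op}}$ directly, but the underlying estimate is the same.
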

\begin{proof}
  Apply \Cref{Vp_bound} with $w = A\T (AA\T)^{-1} v$ to get
  \begin{align*}
    \norm{\mathcal{V}_p(A,v)}_\infty &= \norm{\mathcal{V}_p(A,Aw)}_\infty \le \norm{w}_1 = \norm{A\T (AA\T)^{-1}v}_1\\
                                     &\le \sqrt N\norm{A\T (AA\T)^{-1}}_\mathrm{op}\norm{v}_2 \le \frac{\sqrt N}{\sigma_{\min}(A)}\norm{v}_2.
  \end{align*}
\end{proof}

\begin{corollary}\label{psi_bound}
  Let $A \in \R^{m\times N}$ with rank$(A) = m, y \in \R^m$. Let $\psi_{\alpha}(t)$ be the state of the gradient flow problem \eqref{eq:grad_flow} for some $p \ge 2, \alpha > 0$ at time $t$. Then for all $t \in [0,\infty)$
  \begin{align*}
    \norm{\psi_{\alpha}(t)}_\infty \le \frac{2\sqrt N}{\sigma_{\min}(A)}\norm{y}_2.
  \end{align*}
\end{corollary}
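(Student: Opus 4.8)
The plan is to reduce everything to the two facts already established: that the flow is always the $\mathcal{V}_p$-minimizer of its own image under $A$, and that this image stays within a ball of radius $2\norm{y}_2$. First I would invoke \Cref{Apsi_determines_psi} to write $\psi_{\alpha}(t) = \mathcal{V}_p(A, A\psi_{\alpha}(t))$ for every fixed $t \ge 0$. This converts a statement about the dynamical system into a statement about the static optimization problem \eqref{Vp_def} with right-hand side $A\psi_{\alpha}(t)$.

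Next I would control $\norm{A\psi_{\alpha}(t)}_2$. By \Cref{Apsi_to_y} we have $\norm{A\psi_{\alpha}(t) - y}_2 \le \norm{y}_2\exp(-Ct) \le \norm{y}_2$ for all $t \ge 0$, and hence by the triangle inequality $\norm{A\psi_{\alpha}(t)}_2 \le 2\norm{y}_2$. Then I would apply \Cref{Vp_bound2} with $v = A\psi_{\alpha}(t)$, which gives
\[
\norm{\psi_{\alpha}(t)}_\infty = \norm{\mathcal{V}_p(A, A\psi_{\alpha}(t))}_\infty \le \frac{\sqrt N}{\sigma_{\min}(A)}\norm{A\psi_{\alpha}(t)}_2 \le \frac{2\sqrt N}{\sigma_{\min}(A)}\norm{y}_2,
\]
which is exactly the claimed bound. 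This holds for arbitrary $t \in [0,\infty)$, so the corollary follows.

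There is essentially no obstacle here: the statement is a direct concatenation of \Cref{Apsi_determines_psi}, \Cref{Apsi_to_y}, and \Cref{Vp_bound2}, so the only thing to be careful about is that \Cref{Apsi_to_y} is applied at finite $t$ (where $\exp(-Ct) \le 1$) and that the constant $C$ there is strictly positive, which it is since $\rank(A) = m$ forces $\sigma_{\min}(A) > 0$. No new estimate or case distinction on $p$ is needed, since both auxiliary results already hold uniformly for $p \ge 2$.
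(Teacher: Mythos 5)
Your proposal is correct and follows exactly the same route as the paper's proof: apply \Cref{Apsi_determines_psi} to rewrite $\psi_\alpha(t)$ as $\mathcal{V}_p(A,A\psi_\alpha(t))$, bound $\norm{A\psi_\alpha(t)}_2 \le 2\norm{y}_2$ via \Cref{Apsi_to_y} and the (reverse) triangle inequality, then conclude with \Cref{Vp_bound2}.
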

\begin{proof}
  By \Cref{Apsi_to_y} we have for all $t \ge 0$ that $\norm{A\psi_{\alpha}(t)-y}_2 \le \norm{y}_2$. So by the reverse triangle inequality $\norm{A\psi_{\alpha}(t)}_2 \le 2\norm{y}_2$. By \Cref{Apsi_determines_psi} we can write $\psi_{\alpha}(t) = \mathcal{V}_p(A,A\psi_{\alpha}(t))$. We apply \Cref{Vp_bound2} with $v = A\psi_{\alpha}(t)$ to get
\begin{align*}
  \norm{\psi_{\alpha}(t)}_\infty &= \norm{\mathcal{V}_p(A,A\psi_{\alpha}(t))}_\infty \le \frac{\sqrt N}{\sigma_{\min}(A)}\norm{A\psi_{\alpha}(t)}_2 \le \frac{2\sqrt N}{\sigma_{\min}(A)}\norm{y}_2.
\end{align*}
\end{proof}

\subsubsection{Proving \texorpdfstring{$\mathcal{W}_p(A,y) = g^*$ and $\psi_\alpha(\infty) = q^*$}{}}

\begin{proposition}\label[proposition]{Wp_Gp_id}
  Let $A \in \R^{m\times N}$ with rank$(A) = m$, $y \in \R^m$ and $p\geq 2$. 
    Let $g^*$ and $\mathcal{W}_{p}(A,y)$ be given by \eqref{eq:g*} and \eqref{eq:W_p}. The minimizer in \eqref{eq:W_p} is unique, and we have the equality $g^* = \mathcal{W}_p(A,y)$.
\end{proposition}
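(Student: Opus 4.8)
The plan is to use one simple structural fact: every $z\in\mathcal{U}(A,y)$ has the same $\ell^1$-norm, namely the value $R=R(A,y)$ from \eqref{eq:BP_min}, since by \eqref{eq:BP_argmin} the set $\mathcal{U}(A,y)$ consists precisely of the $\ell^1$-minimizers of the feasibility problem $Az=y$. Restricted to this level set of $\|\cdot\|_1$, the functional $G_p$ degenerates — up to an additive constant and an overall sign — into exactly the quantity being extremized in the definition \eqref{eq:W_p} of $\mathcal{W}_p$, so the two optimization problems have the same solution set.

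First I would treat $p>2$. Substituting $g_p(u)=|u|-\tfrac p2|u|^{2/p}$ into $G_p(z)=\alpha^p\sum_{i=1}^N g_p(z_i/\alpha^p)$ and using $(\alpha^p)^{2/p}=\alpha^2$ gives, after a short computation,
\[ G_p(z) = \|z\|_1 - \tfrac p2\,\alpha^{p-2}\sum_{i=1}^N |z_i|^{2/p}. \]
On $z\in\mathcal{U}(A,y)$ the first term equals the constant $R$ and the coefficient $\tfrac p2\alpha^{p-2}$ is strictly positive, so minimizing $G_p$ over $\mathcal{U}(A,y)$ is equivalent to maximizing $\sum_i|z_i|^{2/p}$ over $\mathcal{U}(A,y)$. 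Since $t\mapsto t^{p/2}$ is strictly increasing on $[0,\infty)$ and $\|z\|_{2/p}=\bigl(\sum_i|z_i|^{2/p}\bigr)^{p/2}$, this is in turn equivalent to maximizing $\|z\|_{2/p}$ over $\mathcal{U}(A,y)$. Hence the set of minimizers defining $g^*$ coincides with the set of maximizers defining $\mathcal{W}_p(A,y)$.

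Next I would handle $p=2$. Writing $g_2(u)=|u|\ln|u|-|u|$ for $u\neq0$ (and $0$ at $u=0$), the same substitution and the convention $0\ln0=0$ give
\[ G_2(z) = \sum_{i=1}^N |z_i|\ln|z_i| - (\ln\alpha^2+1)\,\|z\|_1 = -H(z) - (\ln\alpha^2+1)\,\|z\|_1, \]
with $H$ the entropy from \eqref{eq:H_def}. Again, on $\mathcal{U}(A,y)$ the term $\|z\|_1=R$ is constant (its sign is irrelevant), so minimizing $G_2$ over $\mathcal{U}(A,y)$ is equivalent to maximizing $H$ over $\mathcal{U}(A,y)$; thus the minimizer set defining $g^*$ equals the maximizer set defining $\mathcal{W}_2(A,y)$.

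Finally, uniqueness of the minimizer in \eqref{eq:W_p} is immediate from what precedes: by \Cref{cor:un_g} the set $\argmin_{z\in\mathcal{U}(A,y)}G_p(z)=\{g^*\}$ is a singleton, and the two equivalences above identify this singleton with the $\argmax$ set appearing in \eqref{eq:W_p}; hence $\mathcal{W}_p(A,y)$ is well-defined and equals $g^*$. I do not anticipate any real obstacle here: the only things needing a little care are the bookkeeping of the powers of $\alpha$ in the reduction of $G_p$ and the elementary remark that a strictly increasing reparametrization $t\mapsto t^{p/2}$ preserves $\argmax$, so that extremizing $\sum_i|z_i|^{2/p}$ in place of the quasinorm $\|\cdot\|_{2/p}$ is harmless.
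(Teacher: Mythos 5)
Your proposal is correct and follows essentially the same route as the paper's proof: both expand $G_p$ explicitly, exploit that $\|z\|_1=R$ is constant on $\mathcal{U}(A,y)$ so only the remaining term ($\sum_i|z_i|^{2/p}$ for $p>2$, the entropy $H(z)$ for $p=2$) governs the optimum, and invoke \Cref{cor:un_g} for uniqueness. The only cosmetic difference is that the paper opens by observing compactness of $\mathcal{U}(A,y)$ and the Extreme Value Theorem to secure attainment of the maximum in \eqref{eq:W_p}, whereas you deduce attainment from the identification with the already-existing $g^*$, which is equally valid.
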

\begin{proof}
  From \cref{M_s_lemma}, we know that $\mathcal{U}(A,y)$ is a compact and convex set. Using the Extreme Value Theorem and the continuity of $H$ in \cref{eq:H_def} and $\|\cdot\|_{2/p}$, $p > 2$, we can see that the maximum in \cref{eq:W_p} is always attained. We will show that $\mathcal{W}_p(A,y) = g^*$, which together with \Cref{cor:un_g} will show that $\mathcal{W}_p(A,y)$ is single valued.
We first prove the claim for $p > 2$. Observe that $t \mapsto R - C t^{2/p}$, $t\geq 0$ is a strictly decreasing function whenever $C> 0, R\in \R$ and $p>2$. This implies that
    \begin{align*}
      \mathcal{W}_p(A,y) =&\argmax_{z \in \mathcal{U}(A,y)} \norm{z}_{2/p}
      = \argmin_{z \in \mathcal{U}(A,y)} -\norm{z}_{2/p}
      = \argmin_{z \in \mathcal{U}(A,y)} R - C \norm{z}_{2/p}^{2/p}, 
    \end{align*}
    for $p > 2$. 
    Now let $C = \tfrac{p}{2}\alpha^{2-p}$ and $R=R(A,y)$ be the minimum of the basis pursuit problem in \eqref{eq:BP_min}.
    By definition we have that $\|z\|_1 = R$ for all $z \in \mathcal{U}(A,y)$. Furthermore, we know that $\alpha^p g_p\left(\frac{u}{\alpha^p}\right) = |u|-\frac{p}{2}\alpha^{p-2}|u|^{2/p}$ for $u\in\R$. Combining these facts, we get
    \begin{align*}
      \mathcal{W}_p(A,y) 
      &= \argmin_{z \in \mathcal{U}(A,y)} \|z\|_1 - \frac{p}{2}\alpha^{p-2} \norm{z}_{2/p}^{2/p} = \argmin_{z \in \mathcal{U}(A,y)}\sum_{i=1}^{N} |z_i|-\frac{p}{2}\alpha^{p-2}|z_i|^{2/p}  \\
      &= \argmin_{z \in \mathcal{U}(A,y)} \sum_{i=1}^{N} \alpha^{p}g_p \left(\frac{z_i}{\alpha^p}\right) = \argmin_{z \in \mathcal{U}(A,y)} G_{\alpha,p}(z).
    \end{align*}
    This proves the claim for $p > 2$.

    Next, we consider the case where $p = 2$. Then for any constant $K \in \R$, we have that 
    \begin{align*}
      \mathcal{W}_2(A,y) = \argmax_{z \in \mathcal{U}(A,y)} H(z) = \argmin_{z \in \mathcal{U}(A,y)} -H(z) - K 
    \end{align*} 
    Now, take  $K=\ln(e\alpha^2) R$, where $R$ is as in $\eqref{eq:BP_min}$, and observe that $\alpha^2 g_2\left(\frac{u}{\alpha^2}\right) = |u|\big(\ln|u|-\ln(e\alpha^2)\big)$.
    Then 
    \begin{align*}
      \mathcal{W}_2(A,y)   
      &= \argmin_{z \in \mathcal{U}(A,y)} -H(z) - \ln(e\alpha^2)\norm{z}_1 
      = \argmin_{z \in \mathcal{U}(A,y)} \sum_{i=1}^N \left( |z_i|\ln(|z_i|) - \ln(e\alpha^2)|z_i|\right) \\
      &= \argmin_{z \in \mathcal{U}(A,y)} \sum_{i=1}^N \alpha^p g_p\left(\frac{z_i}{\alpha^p}\right)
      = \argmin_{z \in \mathcal{U}(A,y)} G_p(z),
  \end{align*}
  proves the case for $p = 2$.
\end{proof}

\begin{proposition}\label{psi_infty_eq_q}
  Let $A \in \R^{m\times N}$ with rank$(A) = m$, $y \in \R^m, \alpha > 0$ and $p \geq 2$. Let $\psi_{\alpha}(\infty)$ be as defined in \eqref{psi_def} and $q^*$ be given by \eqref{eq:q*}. Then $\psi_{\alpha}(\infty) = q^*$.
\end{proposition}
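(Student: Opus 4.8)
The plan is to show that $\psi_{\alpha}(\infty)$ is a fixed point of the map $v \mapsto \mathcal{V}_p(A,Av)$ in the limit $t\to\infty$, and then to observe that $\mathcal{V}_p(A,y)$ is literally the optimization problem defining $q^*$. Concretely, by \Cref{Apsi_determines_psi} we have $\psi_{\alpha}(t) = \mathcal{V}_p(A,A\psi_{\alpha}(t))$ for every $t \geq 0$. By \Cref{Apsi_to_y}, $A\psi_{\alpha}(t) \to y$ as $t \to \infty$; moreover $\norm{A\psi_{\alpha}(t)-y}_2 \le \norm{y}_2$ for all $t$, so $\norm{A\psi_{\alpha}(t)}_2 \le 2\norm{y}_2$ and the second arguments remain in a fixed bounded set. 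Hence \Cref{Vp_lip} (Lipschitz continuity of $\mathcal{V}_p$ in its second argument on bounded sets) applies and yields $\psi_{\alpha}(t) = \mathcal{V}_p(A,A\psi_{\alpha}(t)) \to \mathcal{V}_p(A,y)$. In particular the limit $\psi_{\alpha}(\infty) = \lim_{t\to\infty}\psi_{\alpha}(t)$ exists and equals $\mathcal{V}_p(A,y)$.

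It then remains only to note that, by \eqref{Vp_def} and \eqref{eq:q*}, the point $\mathcal{V}_p(A,y) = \argmin_{z\in\R^N} Q_p(z)$ subject to $Az=y$ is by definition $q^*$, both being well defined and single valued by \Cref{Vp_single_valued} and \Cref{cor:un_qm}. Combining the two displays, $\psi_{\alpha}(\infty) = \mathcal{V}_p(A,y) = q^*$, which is the claim.

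I expect essentially no obstacle here: the argument above is already carried out verbatim inside the proof of \Cref{psi_convergence}, so this proposition is really a repackaging of that computation. The only point requiring a moment of care is verifying that the hypotheses of \Cref{Vp_lip} hold uniformly along the flow — i.e.\ that the second arguments $A\psi_{\alpha}(t)$ stay in a bounded region — which is immediate from the decay estimate $\norm{A\psi_{\alpha}(t)-y}_2 \le \norm{y}_2\exp(-Ct)$ of \Cref{Apsi_to_y} together with $\psi_{\alpha}(0)=0$.
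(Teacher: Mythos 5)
Your proof is correct and takes essentially the same route as the paper: invoke \Cref{Apsi_determines_psi} to write $\psi_{\alpha}(t) = \mathcal{V}_p(A,A\psi_{\alpha}(t))$, use \Cref{Apsi_to_y} for $A\psi_{\alpha}(t)\to y$, pass to the limit via continuity of $\mathcal{V}_p$ from \Cref{Vp_lip}, and identify $\mathcal{V}_p(A,y)$ with $q^*$ by definition. Your extra note about the second arguments staying bounded (so the Lipschitz bound of \Cref{Vp_lip} applies uniformly) is a sensible clarification that the paper leaves implicit.
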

\begin{proof}
  By \Cref{Apsi_determines_psi}, continuity of the second argument of $\mathcal{V}_p$ by \Cref{Vp_lip}, $A\psi_{\alpha}(t) \to y$ by \Cref{Apsi_to_y}, and finally the definition of $q^*$, we have
  \begin{align*}
    \psi_{\alpha}(\infty) = \lim_{t \to \infty} \psi_{\alpha}(t) = \lim_{t \to \infty} \mathcal{V}_p(A,A\psi_{\alpha}(t)) = \mathcal{V}_p(A,y) = q^*.
  \end{align*}
\end{proof}

\subsubsection{Bounding the distance between \texorpdfstring{$g^*$ and $m^*$}{g* and m*}}
\label{ss:bound_gm}
In order to prove \Cref{thm:grad_flow}, we need to bound the distance $\norm{q^*-g^*}_2$. To do so, we use $m^*$ as an intermediate point. In this section, we bound the distance between $g^*$ and $m^*$, and in \Cref{ss:bound_qm} we bound the distance between $m^*$ and $q^*$.
To bound $\norm{m^*-g^*}_2$ in \Cref{gm_bound}, we need two lemmas describing certain properties of the set $\mathcal{U}(A,y)$. The first lemma says that it lies in a single signed orthant.

\begin{lemma}\label[lemma]{M_s_lemma}
  Let $A \in \R^{m\times N}$ with $\rank(A)=m$, $y \in \R^m$, and let $R=R(A,y)$ be the minimum of the basis pursuit problem \eqref{eq:BP_min}.
  Then there exists $s \in \{-1,1\}^N$ such that the following set equality holds
  \[
    \mathcal{U}(A,y) = \{z \in \mathcal{R}(s) : Az = y,\ s\T z = R\}.
  \]
  That is, the set of $\ell_1$ minimizers $\mathcal{U}(A,y)$ is contained in a signed orthant. Moreover, the set $\mathcal{U}(A,y)$ is closed, convex and bounded. 
\end{lemma}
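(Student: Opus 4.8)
The plan is to establish the three assertions separately, with the signed-orthant identity being the only non-routine part. First I would record that, since $R$ is the \emph{minimum} value of the basis pursuit problem, $\mathcal{U}(A,y) = \{z : Az = y\} \cap \{z : \norm{z}_1 \le R\}$: both sets on the right are closed and convex, so $\mathcal{U}(A,y)$ is closed and convex. Boundedness --- hence, with closedness, compactness --- is immediate from $\norm{z}_2 \le \norm{z}_1 = R$ for every $z \in \mathcal{U}(A,y)$.

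The core step is to show that no coordinate can carry strictly opposite signs across two minimizers: there are no $z_1, z_2 \in \mathcal{U}(A,y)$ and index $i$ with $(z_1)_i > 0 > (z_2)_i$. I would argue by contradiction. If such $z_1, z_2, i$ existed, the midpoint $z = \tfrac12(z_1 + z_2)$ would be feasible, so $\norm{z}_1 \ge R$; but the coordinatewise triangle inequality gives $\norm{z}_1 \le \tfrac12(\norm{z_1}_1 + \norm{z_2}_1) = R$, and this is \emph{strict} because at coordinate $i$ one has $|(z_1)_i + (z_2)_i| < |(z_1)_i| + |(z_2)_i|$ --- a contradiction. Therefore, for each $i$ the set $\{\sign(z_i) : z \in \mathcal{U}(A,y)\} \subseteq \{-1,0,1\}$ contains at most one of $-1, 1$, so I can choose $s_i \in \{-1,1\}$ with $s_i z_i \ge 0$ for all $z \in \mathcal{U}(A,y)$ (choosing $s_i$ arbitrarily among $\pm1$ when every minimizer vanishes at $i$).

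It then remains to verify the claimed set equality for this $s$. For the inclusion $\subseteq$: given $z \in \mathcal{U}(A,y)$, from $s_i \in \{\pm1\}$ and $s_i z_i \ge 0$ we get $s_i z_i = |z_i|$; hence $\diag(s) z \ge 0$, so $z = \diag(s)\big(\diag(s)z\big) \in \mathcal{R}(s)$, while $s\T z = \sum_i |z_i| = \norm{z}_1 = R$ and $Az = y$ by feasibility. For the reverse inclusion: if $z \in \mathcal{R}(s)$ with $Az = y$ and $s\T z = R$, write $z = \diag(s) w$ with $w \ge 0$; then $|z_i| = w_i = s_i z_i$, so $\norm{z}_1 = s\T z = R$, and together with $Az = y$ this exhibits $z$ as a feasible point attaining the minimum value, i.e.\ $z \in \mathcal{U}(A,y)$.

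I do not expect a genuine obstacle here; the one point requiring care is to insist on \emph{strictly} opposite signs in the key step, so that a single bad coordinate already forces the triangle inequality to be strict. A coordinate that is zero in one minimizer and nonzero in another is harmless, which is exactly why $s_i$ only needs to satisfy $s_i z_i \ge 0$ for all minimizers rather than $s_i = \sign(z_i)$.
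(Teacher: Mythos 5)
Your proof is correct and follows essentially the same route as the paper: the key step in both is the midpoint/strict-triangle-inequality argument showing no two minimizers can have strictly opposite signs at a coordinate, after which one picks $s$ with $s_iz_i\ge 0$ on all of $\mathcal{U}(A,y)$ and verifies the set identity using $s\T z=\|z\|_1$ on $\mathcal{R}(s)$. The only cosmetic difference is that you spell out both inclusions and derive closedness/convexity from the description $\mathcal{U}(A,y)=\{Az=y\}\cap\{\|z\|_1\le R\}$, whereas the paper reads these off from the final characterization.
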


\begin{proof}
  First, we claim we can find $s$ such that $\mathcal{U}(A,y) \subset \mathcal{R}(s)$. Assume for a contradiction that no such $s$ exists. Then there exist $a,b \in \mathcal{U}(A,y)$ and $k \in \{1,\dots,N\}$ such that $a_k b_k < 0$. Let $c = \frac{1}{2}(a+b)$. Clearly $Ac = y$. Furthermore, 
  \begin{align*}
    2\norm{c}_1 = \sum_{i = 1}^N|a_i+b_i| \le |a_k+b_k| + \sum_{\substack{i=1\\i\ne k}}^N |a_i|+|b_i| < \norm{a}_1+\norm{b}_1 = 2R.
  \end{align*}
  This contradicts $R$ being the minimum $\ell_1$ norm as defined in \eqref{eq:BP_min}.
Hence, we can pick $s \in \{-1,1\}^N$ such that $\mathcal{U}(A,y) \subset \mathcal{R}(s)$. Note $s\T z = \norm{z}_1\ \forall z \in \mathcal{R}(s)$. Using this we may characterize the set
  \begin{align*}
    \mathcal{U}(A,y) &= \{z \in \R^N : Az = y, \norm{z}_1 = R\} = \{z \in \mathcal{R}(s) : Az = y, \norm{z}_1 = R\}\\
                     &= \{z \in \mathcal{R}(s) : Az = y, s\T z = R\}.
  \end{align*}
The fact that $\mathcal{U}(A,y)$ is closed and convex, is seen from the equality constraints above. Boundedness follows from the fact that the $\ell^1$-ball is bounded.
\end{proof}

The next lemma shows the connection between the support of $g^*$ and the supports of elements in $\mathcal{U}(A,y)$. 
\begin{lemma}\label[lemma]{g0_M0_lemma}
  Let $A \in \R^{m\times N}$ with rank$(A) = m$ and $y \in \R^m$. 
    Let $g^*$ be given by \eqref{eq:g*}.
    If $g^*_k = 0$ for some $k \in \{1,\dots,N\}$, then $z_k = 0$ for all $z \in \mathcal{U}(A,y)$.
\end{lemma}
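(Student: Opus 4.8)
The plan is to argue by contradiction, exploiting the fact that $g_p$ has an infinite one-sided derivative at the origin: $g_2'(u) = \ln u \to -\infty$ and, for $p > 2$, $g_p'(u) = 1 - u^{2/p - 1} \to -\infty$ as $u \downarrow 0$, so that also $g_p(u)/u \to -\infty$ as $u \downarrow 0$. Heuristically this says $G_p$ strictly decreases when a zero coordinate of a feasible point is perturbed towards a nonzero value, so a minimizer of $G_p$ cannot have a zero coordinate unless every feasible point shares it. Concretely, suppose for contradiction that $g^*_k = 0$ but $z_k \neq 0$ for some $z \in \mathcal{U}(A,y)$. By \Cref{M_s_lemma}, $\mathcal{U}(A,y)$ is convex and lies in a single signed orthant $\mathcal{R}(s)$, so $w_\lambda \coloneqq (1-\lambda) g^* + \lambda z \in \mathcal{U}(A,y)$ for $\lambda \in [0,1]$, and every coordinate $[w_\lambda]_i$ keeps a fixed sign $s_i$ (or remains $0$) along the segment. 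Since $g^*$ minimizes $G_p$ over $\mathcal{U}(A,y)$ by \Cref{cor:un_g}, the function $\phi(\lambda) \coloneqq G_p(w_\lambda)$ satisfies $\phi(\lambda) \ge \phi(0)$ on $[0,1]$, and it suffices to produce some small $\lambda_0 > 0$ with $\phi(\lambda_0) < \phi(0)$.

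I would obtain this by splitting $\{1,\dots,N\}$ into $I_1 = \{i : g^*_i \neq 0\}$, $I_2 = \{i : g^*_i = 0,\ z_i \neq 0\}$ (which contains $k$), and $I_3 = \{i : g^*_i = z_i = 0\}$, and estimating each block's contribution to $\phi(\lambda) - \phi(0)$. On $I_3$ the coordinates of $w_\lambda$ vanish identically, contributing $0$. On $I_1$ the fixed-sign property gives $|[w_\lambda]_i| = (1-\lambda)|g^*_i| + \lambda|z_i| \ge \tfrac12 |g^*_i| > 0$ for $\lambda \in [0,\tfrac12]$, so $g_p$ is $C^1$ along this portion of the segment and the mean value theorem bounds the $I_1$ contribution by $C\lambda$ for a constant $C = C(g^*, z, \alpha, p)$. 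On $I_2$ we have $[w_\lambda]_i = \lambda z_i$; since $g_p$ is even and $g_p(u) < 0$ for all sufficiently small $u > 0$, for small $\lambda$ every term $\alpha^p g_p(\lambda z_i/\alpha^p)$ is negative, so the $I_2$ contribution is at most its $k$-th term, namely $\alpha^p g_p(\lambda |z_k|/\alpha^p) = \lambda |z_k|\, h(\lambda)$ with $h(\lambda) \coloneqq g_p(\lambda|z_k|/\alpha^p)\big/(\lambda|z_k|/\alpha^p) \to -\infty$ as $\lambda \downarrow 0$.

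Putting these together gives $\phi(\lambda) - \phi(0) \le C\lambda + \lambda |z_k| h(\lambda)$ for small $\lambda > 0$, hence $\bigl(\phi(\lambda) - \phi(0)\bigr)/\lambda \le C + |z_k| h(\lambda) \to -\infty$, so $\phi(\lambda_0) < \phi(0)$ for $\lambda_0$ small enough, contradicting the minimality of $g^*$. The only point needing real care is verifying that the bounded $I_1$ contribution is genuinely $O(\lambda)$ and is therefore dominated by the divergent gain $\lambda |z_k| h(\lambda)$ from the $k$-th coordinate; this rests on the single-orthant / fixed-sign structure provided by \Cref{M_s_lemma}, which keeps $w_\lambda$ away from the non-smooth locus of $g_p$ on the indices of $I_1$. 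The cases $p = 2$ and $p > 2$ are then handled uniformly through the single fact $\lim_{u \downarrow 0} g_p(u)/u = -\infty$.
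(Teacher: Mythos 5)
Your proof takes essentially the same route as the paper's: argue by contradiction, move from $g^*$ towards $z$ along the segment $w_\lambda = (1-\lambda)g^* + \lambda z$ (which stays in $\mathcal{U}(A,y)$ by \Cref{M_s_lemma}), split coordinates into those where $g^*$ is nonzero, those where $g^*_i=0$ but $z_i\ne 0$, and those where both vanish, and exploit the fact that $g_p$ has an infinite one-sided derivative at $0$ so the second block drives $\bigl(G_p(w_\lambda)-G_p(g^*)\bigr)/\lambda \to -\infty$. Your version is in fact slightly cleaner: you correctly bound the smooth ($I_1$) contribution from \emph{above} by $O(\lambda)$ via the mean value theorem on the fixed-sign segment, so the divergent $I_2$ term dominates and forces $\phi(\lambda)<\phi(0)$. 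The paper's write-up instead invokes the convexity gradient inequality $f(w(\lambda)_i)-f(g^*_i) > f'(g^*_i)(w(\lambda)_i-g^*_i)$, which is a \emph{lower} bound on the $I_1$ part; as stated this gives $\bigl(G_p(w(\lambda))-G_p(g^*)\bigr)/\lambda$ greater than a quantity tending to $-\infty$, which does not by itself yield the needed negativity — one needs the reverse-direction estimate you supplied. So your argument is the same in spirit but reads as a corrected and fully rigorous version of the paper's; no gap on your side.
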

\begin{proof}
  Let $k \in \{1,\ldots, N\}$, and assume for contradiction that $g^*_k = 0$ and there is a $z \in \mathcal{U}(A,y)$, with $z_k \ne 0$. We will show this contradicts the optimality of $g^*$. Consider $\lambda \in (0,1)$, and let $w(\lambda) = (1-\lambda)g^* + \lambda z$. From \Cref{M_s_lemma} it is clear that $w(\lambda) \in \mathcal{U}(A,y)$ for all $\lambda \in (0,1)$.
Let $f(x) = \alpha^p g_p\left(\frac{x}{\alpha^p}\right)$  for $x \in \R$, and notice that $f$ is a strictly convex function, when restricted to one of the intervals $[0,\infty)$ or $(-\infty, 0]$. Moreover, $f$ is differentiable on the open intervals $(0,\infty)$ and $(-\infty, 0)$, but $\lim_{x \to 0^+} \frac{f(x)-f(0)}{x} = \lim_{x \to 0^+} \frac{f(-x)-f(0)}{x} = -\infty$.
Next, let $i \in \supp(g^*)$, and observe that by choosing $\lambda$ sufficiently small, we can ensure that $\lambda |z_i-g^*_i| \le \tfrac{1}{2}|g^*_i|$. For any such choice of $\lambda$, we have $|w_i(\lambda)|\geq |g_{i}^{*}| - \lambda |z_i -g_{i}^{*}| \geq \tfrac{1}{2}|g^{*}_{i}|$, which implies $w_i(\lambda)$ and $g_{i}^{*}$ have the same sign. Using the convexity and differentiability of $f$ on the intervals $(0,\infty)$ and $(-\infty, 0)$ we see that $f(w(\lambda)_i) - f(g_{i}^*) > f'(g_i^*)(w(\lambda)_i - g_i^*)=f'(g_i^*)\lambda(z_i - g_i^*) $ for $\lambda>0$ sufficiently small and $i \in \supp(g^*)$. Next, notice that 
  \begin{align*}
      \frac{G_p(w(\lambda))-G_p(g^*)}{\lambda} &= \sum_{i=1}^N \frac{f(w(\lambda)_i)-f(g^*_i)}{\lambda} \\
      &> \sum_{i\in \supp(g^*)}  f'(g_i^*)(z_i - g_i^*) + \sum_{i \in \overline{\supp(g^*)}} \frac{f(\lambda z_i)-f(0)}{\lambda}.
\end{align*}
Recall $\lim_{x \to 0^+} \frac{f(x)-f(0)}{x} = \lim_{x \to 0^+} \frac{f(-x)-f(0)}{x} = -\infty$. If $z_i \neq 0$ for $i \in \overline{\supp(g^*)}$, it is clear that we can make $\frac{G_p(w(\lambda))-G_p(g^*)}{\lambda} < 0$ by choosing $\lambda>0$ sufficiently small. This contradicts the optimality of $g^*$.
\end{proof}

We are now ready to bound the distance $\norm{m^*-g^*}_2$.
\begin{proposition}\label[proposition]{gm_bound}
    Let $A \in \R^{m\times N}$ with $\mathrm{rank}(A) = m$, $y \in \R^m$, $\alpha > 0$ and $p\geq 2$.
    Let $m^*$ and $g^*$ be given by \eqref{eq:m*} and \eqref{eq:g*}.
  Then there exists a constant $C_A > 0$ only depending on $A$ such that
  \begin{align*}
    \norm{m^*-g^*}_2 \le C_A\alpha^p.
  \end{align*}
\end{proposition}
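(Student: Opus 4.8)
The plan is to control $x:=m^*-g^*$ by combining the first-order optimality conditions of the two constrained problems with the derivative estimate $g_p'\le q_p'\le g_p'(\cdot+1)$ of \Cref{qg'_bound}, and then inverting the resulting relation using the Todd--Stewart constant via \Cref{K_lemma}. First I would pin down the geometry: by \Cref{M_s_lemma}, $\mathcal{U}(A,y)\subseteq\mathcal{R}(s)$ for some $s\in\{-1,1\}^N$ and every element has $\ell^1$-norm $R$; by \Cref{g0_M0_lemma}, every element of $\mathcal{U}(A,y)$ — in particular $m^*$ — is supported on $S:=\supp(g^*)$. Hence $x$ is supported on $S$ and lies in the subspace $T:=\{v:\supp(v)\subseteq S,\ Av=0,\ s^\top v=0\}$, and, since all $S$-coordinates of $g^*$ are nonzero, $g^*$ sits in the relative interior of $\mathcal{R}(s)\cap\R^S$, where both $Q_p$ and $G_p$ are $C^1$.

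Next I would write the optimality conditions. As $m^*$ minimizes the smooth convex $Q_p$ and $g^*$ minimizes $G_p$ (convex on $\mathcal{R}(s)$, differentiable at $g^*$) over the convex set $\mathcal{U}(A,y)$, the variational inequalities give $\langle\nabla Q_p(m^*),g^*-m^*\rangle\ge0$ and $\langle\nabla G_p(g^*),m^*-g^*\rangle\ge0$, hence $\langle\nabla Q_p(m^*)-\nabla G_p(g^*),x\rangle\le0$. I would split $\nabla Q_p(m^*)-\nabla G_p(g^*)=\bigl(\nabla Q_p(m^*)-\nabla Q_p(g^*)\bigr)+\bigl(\nabla Q_p(g^*)-\nabla G_p(g^*)\bigr)=:Hx+\delta$, where $H=\int_0^1\nabla^2Q_p(g^*+tx)\,dt$ is positive-definite diagonal (as $q_p''=1/(h_p'\circ h_p^{-1})>0$ and $\le q_p''(0)$), and $\delta_i=q_p'(g^*_i/\alpha^p)-g_p'(g^*_i/\alpha^p)$ satisfies $0\le|\delta_i|\le g_p'(|g^*_i|/\alpha^p+1)-g_p'(|g^*_i|/\alpha^p)\le g_p''(|g^*_i|/\alpha^p)$ by \Cref{qg'_bound} and oddness. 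From the KKT conditions for $m^*$ and $g^*$, both gradients differ from a vector in $\mathrm{rowspan}(A)+\mathrm{span}(s)$ only in coordinates outside their supports, and those coordinates of $Hx+\delta$ are then controlled, so $P_T(Hx)=-P_T\delta$. Writing $x=B\xi$ with $B$ an orthonormal basis matrix of $T$ gives $\xi=-(B^\top HB)^{-1}B^\top\delta=-(B^\top HB)^{-1}B^\top H\,(H^{-1}\delta)$, whence $\|x\|_2\le\chi_{B^\top}\|H^{-1}\delta\|_2$; since $s$ ranges over finitely many sign patterns and $S$ over finitely many subsets, $\chi_{B^\top}$ is bounded by $\mathcal{K}$ of the relevant matrices, i.e. by a constant depending only on $A$.

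It then remains to see that $\|H^{-1}\delta\|_2=\mathcal{O}(\alpha^p)$. Since $q_p''$ is decreasing in $|u|$, $H_{ii}\ge\alpha^{-p}q_p''\!\bigl(\max(|g^*_i|,|m^*_i|)/\alpha^p\bigr)$, so $|H^{-1}\delta|_i\le\alpha^p\,g_p''(|g^*_i|/\alpha^p)\big/q_p''\!\bigl(\max(|g^*_i|,|m^*_i|)/\alpha^p\bigr)$; because $q_p''(v)/g_p''(v)\to1$ as $v\to\infty$ (both $\sim\tfrac{p-2}{p}v^{2/p-2}$) this ratio is $\mathcal{O}(1)$ whenever $|m^*_i|$ is comparable to $|g^*_i|$, yielding $\|x\|_2\le C_A\alpha^p$. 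Note that it is essential to work with gradients here rather than function values: the value gap $|Q_p-G_p|=\mathcal{O}(\alpha^p)$ combined with strong convexity would only give $\mathcal{O}(\alpha^{p/2})$, whereas the gradient gap $\delta$ is tied coordinatewise to $H$, which is what produces the exponent $p$.

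The hard part is exactly the proviso "$|m^*_i|$ comparable to $|g^*_i|$": $m^*$ may have coordinates in $S$ that vanish or are far smaller than those of $g^*$, in which case $|x_i|$ is of size $|g^*_i|$ rather than $o(1)$ and the estimate for $H^{-1}\delta$ degrades. I would handle this by first bootstrapping a crude $\mathcal{O}(\alpha)$ bound on $\|x\|_2$ — from $0\le G_p(m^*)-G_p(g^*)\le\tfrac{Np}{2}\alpha^p$ together with strong concavity of $\|\cdot\|_{2/p}^{2/p}$ on the orthant — which for $\alpha$ below a threshold forces $|m^*_i|\ge\tfrac12|g^*_i|$ for all $i\in S$; and by using the scaling identity $m^*_\alpha(y)-g^*(y)=\alpha^p\bigl(m^*_1(y/\alpha^p)-g^*(y/\alpha^p)\bigr)$, together with the loose bounds of \Cref{q_infty_bound} and \Cref{Vp_bound2}, to dispatch the complementary range of $\alpha$. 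This last reduction — turning the claim into a uniform-in-$y$ bound at $\alpha=1$ — is also what makes the final constant depend on $A$ alone and not on $y$.
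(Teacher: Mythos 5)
Your outline shares the three essential ingredients with the paper's proof — the KKT conditions over $\mathcal U(A,y)$, the derivative sandwich $g_p'\le q_p'\le g_p'(\cdot+1)$ of \Cref{qg'_bound}, and the Todd--Stewart constant via \Cref{K_lemma} — but you invert a mean-value Hessian $H$ of $Q_p$, whereas the paper sidesteps $H$ entirely by constructing an auxiliary point $w$ with $\nabla G_p(w)=\nabla Q_p(m^*)-\diag(s)\mu^m$ and $\|w-m^*\|_\infty\le\alpha^p$. That construction is what makes the paper's argument clean, because it compares gradients of the \emph{same} function $G_p$ at $w$ and at $g^*$, producing a positive diagonal $D$ with $P_{\Null(\tilde A)}D(w-g^*)=0$; the $\alpha^p$ scale then appears already in $\|w-m^*\|_\infty$, with no need to divide by $H_{ii}$.

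There are several genuine gaps in your version that the paper's construction avoids.

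First, the projection identity $P_T(Hx)=-P_T\delta$ is not established: the KKT multiplier $\mu^m$ for $m^*$ is supported on $\overline{\supp(m^*)}$, which may intersect $S=\supp(g^*)$, so $P_T(\nabla Q_p(m^*)-\nabla\tilde G_p(g^*))=P_T(\diag(s)\mu^m)$ need not vanish. You would first need to rule out $\supp(m^*)\subsetneq S$, but that is exactly what your later bootstrap is meant to supply — there is a circularity. The paper handles this case directly: when $m^*_i=0$ it picks $w_i\in(0,\alpha^p]$ with $g_p'(w_i/\alpha^p)=-\mu^m_is_i$, which absorbs the multiplier into the $w$-construction.

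Second, the estimate $\|H^{-1}\delta\|_2=\mathcal O(\alpha^p)$ needs the ratio $g_p''(u)/q_p''(u)$ to be bounded, but it blows up as $u\to0^+$ (for $p>2$ one has $g_p''(u)=\tfrac{p-2}{p}u^{2/p-2}$ and $q_p''(u)\ge\tfrac{p-2}{p}(u+2)^{2/p-2}$, so the ratio behaves like $(1+2/u)^{(2p-2)/p}$; similarly for $p=2$). So your argument requires $g^*_i/\alpha^p$ bounded away from $0$ for $i\in S$, which depends on $y$ and on $\alpha$, while the proposition promises a constant depending only on $A$.

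Third, the bootstrap and the scaling reduction are only sketched, and the scaling step does not simplify the problem: the identities $m^*_\alpha(y)=\alpha^p m^*_1(y/\alpha^p)$ and $g^*_\alpha(y)=\alpha^p g^*_1(y/\alpha^p)$ are correct, but reducing to $\alpha=1$ leaves you with proving a bound on $\|m^*_1(y)-g^*_1(y)\|_2$ that is uniform in $y$ over all of $\R^m$, which is exactly the difficult uniformity you set out to establish. The ``strong concavity of $\|\cdot\|_{2/p}^{2/p}$'' you invoke for the crude $\mathcal O(\alpha)$ bound is not uniform either (its modulus degenerates both near the boundary of the orthant and at infinity), so the claimed threshold on $\alpha$ below which $|m^*_i|\ge\tfrac12|g^*_i|$ again depends on $y$. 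The auxiliary-point argument in the paper avoids all of this: because $\|w-m^*\|_\infty\le\alpha^p$ holds \emph{unconditionally}, no small-$\alpha$ regime or uniform lower bound on the $g^*_i$ is ever needed.
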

\begin{proof}
  Let $\tilde{G}_p \colon \R^N \to \R$ be defined by 
  \begin{align*}
    \tilde{G}_p(z) = \alpha^p\sum_{i \in \supp(g^*)} g_p\left(\frac{z_i}{\alpha^p}\right), \quad \text{for } z \in \R^N.
  \end{align*}
    Since $g_p(0) = 0$, we know from \Cref{g0_M0_lemma} that $\tilde{G}_p(z) = G_p(z)$ for all $z \in \mathcal{U}(A,y)$. Furthermore, $\tilde{G}_p$ is differentiable at $g^*$. The same is not necessarily true for $G_p$, since $g_p$ is not differentiable at 0.

    By \Cref{M_s_lemma} we know there is an $s \in \{-1,1\}^N$ such that
  \begin{align}\label{gm_M_s_def}
    \mathcal{U}(A,y) &= \{z \in \mathcal{R}(s) : Az = y,\ s\T z = R\},
  \end{align}
    where $R$ is as in \cref{eq:BP_min}. Let $\tilde{A} \coloneqq \begin{bmatrix}A\\s\T\end{bmatrix}$. Then we may rewrite 
        \begin{align}\label{eq:M_def_mod}
    \mathcal{U}(A,y) = \left\{z \in \R^N : \tilde{A}z = \tilde{A}m^*,\ z_i s_i \ge 0\ \forall i \in \{1,\dots,N\}\right\}.
  \end{align}
  As stated above, by \Cref{g0_M0_lemma}, we have $\tilde{G}_p(z) = G_p(z)$ for all $z \in \mathcal{U}(A,y)$. Hence, using \eqref{eq:M_def_mod} to rewrite to a form which we can formulate KKT conditions for, we have
\begin{alignat*}{2}
    g^* &= \argmin_{z \in \R^N} G_p(z) &&\text{ subject to }\quad z \in \mathcal{U}(A,y)\\
        &= \argmin_{z \in \R^N} \tilde{G}_p(z) &&\text{ subject to }\quad z \in \mathcal{U}(A,y)\\
        &= \argmin_{z \in \R^N} \tilde{G}_p(z) &&\text{ subject to }\quad \tilde{A}z = \tilde{A}m^*,\ z_i s_i \ge 0\ \forall i \in \{1,\dots,N\}.
  \end{alignat*}
  Technically, we cannot apply \Cref{KKT} (KKT conditions) yet. Since it requires the function to be continuously differentiable on the whole feasible set. As $g_p$ is not differentiable at zero, $\tilde G_p$ is not differentiable at points $z \in \R^N$ such that there exists an $i \in \supp(g^*)$ with $z_i = 0$. We solve this problem by slightly shrinking the feasible set, so it satisfies $z_is_i \ge \frac{1}{2}|g^*_i|$. Since the minimizer $g^*$ satisfies this constraint, it remains the minimizer.
  \begin{alignat}{2}
    g^* &= \argmin_{z \in \R^N} \tilde{G}_p(z) &&\text{ subject to }\quad \tilde{A}z = \tilde{A}m^*,\ z_i s_i \ge 0\ \forall i \in \{1,\dots,N\}\nonumber\\
        &= \argmin_{z \in \R^N} \tilde{G}_p(z) &&\text{ subject to }\quad \tilde{A}z = \tilde{A}m^*,\ z_i s_i \ge \frac{1}{2}|g^*_i|\ \forall i \in \{1,\dots,N\}\label{g_kkt}.
  \end{alignat}
  Now, because $g^*$ is a minimizer of \eqref{g_kkt}, and $\tilde{G}_p$ is convex and continuously differentiable over the feasible set, $g^*$ has to satisfy the necessary KKT conditions stated in \Cref{KKT}. Specifically, there exist $\lambda^g \in \R^N$ and $\mu^g \in [0,\infty)^N$ such that
  \begin{alignat}{2}
    \nabla \tilde{G}_p(g^*) &= \tilde{A}\T\lambda^g + \diag(s)\mu^g && \label{g_kkt_grad0_first}\\
    (g^*_i s_i-\frac{1}{2}|g^*_i|) \mu^g_i &= 0 &&\quad \forall i \in \{1,\dots,N\}. \label{g_kkt_complementary_slackness}
  \end{alignat}
  Next, we separate the nullspace component of \eqref{g_kkt_grad0_first} by multiplying with $P_{\Null(\tilde{A})}$. We also simplify \eqref{g_kkt_complementary_slackness} using $g^*_i s_i = |g_i^*|$. We have
  \begin{align}
    P_{\Null(\tilde A)}(\nabla \tilde{G}_p(g^*) - \diag(s)\mu^g) &= 0\label{g_kkt_grad0}\\
    g^*_i \mu^g_i &= 0 \quad \forall i \in \{1,\dots,N\}.\label{g_mu0}
  \end{align}

  From \cref{eq:M_def_mod}, we have
  \begin{align*}
    m^*  = \argmin_{z \in \R^N} Q_p(z) \quad \text{ subject to }\quad \tilde{A}z = \tilde{A}m^*,\ z_i s_i \ge 0\ \forall i \in \{1,\dots,N\},
  \end{align*}
  and thus, by repeating the arguments above, there exists $\mu^m \in [0,\infty)^N$ such that
  \begin{align}
    P_{\Null(\tilde A)}(\nabla Q_p(m^*) - \diag(s)\mu^m) &= 0\label{m_kkt_grad0}\\
    m^*_i \mu^m_i &= 0 \quad \forall i \in \{1,\dots,N\}\nonumber.
  \end{align}

  Next, we claim there exists some $w \in \mathcal{R}(s)$, with $\supp(w) = \{1,\ldots,N\}$, satisfying
  \begin{align}
    \nabla G_p(w) &= \nabla Q_p(m^*)-\diag(s)\mu^m \label{w_grad}\\
    \norm{w-m^*}_\infty &\le \alpha^p\nonumber.
  \end{align}

    To prove this claim, we note that the $i$th equation in \cref{w_grad} is given by
  \begin{align*}
    g_p'\left(\frac{w_i}{\alpha^p}\right) = q_p'\left(\frac{m^*_i}{\alpha^p}\right)-\mu^m_i s_i.
  \end{align*}
    From \cref{gm_M_s_def}, we know $m^* \in \mathcal{U}(A,y) \subset \mathcal{R}(s)$, which implies $m^*_i s_i \ge 0$. 

  We first consider the case $m^*_i > 0$. Then $s_i = 1$ and $\mu^m_i = 0$, we need to find $w_i \in (0,\infty)$ such that $g_p'\left(\tfrac{w_i}{\alpha^p}\right) = q_p'\left(\tfrac{m^*_i}{\alpha^p}\right)$. By \Cref{qg'_bound} we have 
  \begin{align*}
    g_p'\left(\frac{m^*_i}{\alpha^p}\right) \le q_p'\left(\frac{m^*_i}{\alpha^p}\right) \le g_p'\left(\frac{m^*_i+\alpha^p}{\alpha^p}\right).
  \end{align*}
  Now, since $g_p'$ is continuous on $(0,\infty)$, the Intermediate Value Theorem says there exists $w_i \in [m^*_i,m^*_i+\alpha^p] \subset (0,\infty)$ such that $g_p'\left(\frac{w_i}{\alpha^p}\right) = q_p'\left(\frac{m^*_i}{\alpha^p}\right)$, and we have $|w_i-m^*_i| \le \alpha^p$ as desired. The case $m^*_i < 0$ is similar.

Next, we look at the case $m^*_i = 0$ and $s_i = 1$. Noting that $q'_p(0) = 0$, we need to find $w_i \in (0,\infty)$ such that $g_p'\left(\frac{w_i}{\alpha^p}\right) = -\mu^m_i s_i \in (-\infty,0]$. It is a simple case by case analysis to show that the range of $g'_p\big((0,1]\big) = (-\infty,0]$ for any $p \ge 2$. This implies there exists a $w_i \in (0,\alpha^p]$ such that $g_p'\left(\frac{w_i}{\alpha^p}\right) = -\mu^m_i s_i$. It is clear that $|w_i-m^*_i| = w_i \le \alpha^p$. The case where $m^*_i = 0$ and $s_i = -1$ follows a similar approach. This proves the claim.

  Next, we want to apply \Cref{K_lemma} to bound the distance $\|w-g^*\|_2$. Therefore, we define the $N\times N$ diagonal matrix $D$, whose $i$th diagonal entry is given by
  \begin{align*}
    D_{ii} = \begin{cases}
      \frac{g'_p\left(\frac{w_i}{\alpha^p}\right)+\mu^g_i s_i}{w_i} &\text{ if } i \in \overline{\supp(g^*)}\\
    \frac{g'_p\left(\frac{w_i}{\alpha^p}\right)-g'_p\left(\frac{g^*_i}{\alpha^p}\right)}{w_i-g^*_i} &\text{ if } i \in \supp(g^*) \text{ and } w_i \ne g^*_i\\
      1 &\text{ if } i \in \supp(g^*) \text{ and } w_i = g^*_i.
    \end{cases}
  \end{align*}
We claim that $D_{ii} > 0$ for all $i \in \{1,\dots,N\}$. To see this, start by observing $g_iw_i \geq 0$, since $g_i,w_i \in \mathcal{R}(s_i)$. Furthermore, since $g_p$ is strictly convex when restricted to either $(-\infty,0]$ or $[0,\infty)$, we have
\begin{equation} \label{eq:D_ii_ineq2}
  \frac{g'_p\left(\frac{w_i}{\alpha^p}\right)-g'_p\left(\frac{g^*_i}{\alpha^p}\right)}{w_i-g^*_i} > 0. 
\end{equation}
Now, since $g'_p(0) = 0$, and $w_i\in \mathcal{R}(s_i)$, it is clear from \eqref{eq:D_ii_ineq2} that $D_{ii} = (g'_p\left(\frac{w_i}{\alpha^p}\right)+\mu^g_i s_i)/w_i > \mu^g_is_i/w_i \ge 0$, when $i \in \overline{\supp(g^*)}$. If $i \in \supp(g^*)$ it is clear from \eqref{eq:D_ii_ineq2}, that $D_{ii} > 0$. This proves the claim. 

Next, from \eqref{g_mu0} we know $\mu^g_i = 0\ \forall i \in \supp(g^*)$. This makes it straightforward to verify
  \begin{align*}
    D(w-g^*) = \nabla G_p(w) - \nabla\tilde{G}_p(g^*) + \diag(\mu^g)s.
  \end{align*}
  From \cref{m_kkt_grad0} and \cref{w_grad}, we have $P_{\Null(\tilde A)}\nabla G_p(w) = 0$. Combining this with \eqref{g_kkt_grad0}, yields 
  \begin{align}\label{eq:PDwg}
    0 = P_{\Null(\tilde A)}(\nabla G_p(w) - \nabla\tilde{G}_p(g^*) + \diag(\mu^g)s) = P_{\Null(\tilde A)}D(w-g^*).
  \end{align}

  Since $m^*,g^* \in \mathcal{U}(A,y)$, we know $m^*-g^* \in \Null(\tilde{A})$. Using this in combination with \eqref{eq:PDwg}, gives
  \begin{align*}
    0 = P_{\Null(\tilde A)}D(w-g^*) = P_{\Null(\tilde A)}D(w-m^*\ +\ m^*-g^*) = P_{\Null(\tilde A)}D(w-m^*+P_{\Null(\tilde A)}(m^*-g^*)).
  \end{align*}
  Applying \Cref{K_lemma} with $u=w-m^*$ and $v = m^*-g^*$ yields
  \begin{align*}
    \norm{w-g^*}_2 \le \mathcal{K}(\tilde{A})\norm{w-m^*}_2.
  \end{align*}
  Finally, we bound
  \begin{align*}
    \norm{m^*-g^*}_2 &\le \norm{m^*-w}_2+\norm{w-g^*}_2 \le (\mathcal{K}(\tilde{A})+1)\norm{w-m^*}_2\\
                     &\le \sqrt N(\mathcal{K}(\tilde{A})+1)\norm{w-m^*}_\infty \le \sqrt N(\mathcal{K}(\tilde{A})+1) \alpha^p\\
                     &\le \sqrt N\left(\max_{s \in \{-1,1\}^N}\mathcal{K}\left(\begin{bmatrix}A\\s\T\end{bmatrix}\right)+1\right) \alpha^p.
  \end{align*}
\end{proof}

\subsubsection{Bounding the distance between \texorpdfstring{$q^*$ and $m^*$}{q* and m*}}
\label{ss:bound_qm}
The next lemma will be used in \Cref{qm_bound}, which presents the concrete bound on $\norm{q^*-m^*}_2$.

\begin{lemma}\label[lemma]{l2_l1_bound}
  Let $A \in \R^{m\times N}$ with $\mathrm{rank}(A) = m$ and $y \in \R^m$. Let $q^*$ and $m^*$ be given by \eqref{eq:q*} and \eqref{eq:m*}, respectively. Then
  \begin{equation*}
    \norm{q^*-m^*}_2 \le C_A(\norm{q^*}_1-\norm{m^*}_1)
  \end{equation*}
  where 
  \begin{equation}\label{eq:def_C_A}
    C_A = \max_{\substack{s \in \{-1,1\}^N \\ P_{\Null(A)} s \ne 0}} \frac{1}{\norm{P_{\Null(A)} s}_2}\mathcal{K}\left(\begin{bmatrix}A\\s\T\end{bmatrix}\right)
  \end{equation}
  if $A$ has non-trivial nullspace, and $C_A = 0$ if $\Null(A) = \{0\}$.
\end{lemma}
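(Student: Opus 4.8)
The plan is to dispose of the trivial nullspace case and then run an argument parallel to the proof of \Cref{gm_bound}: set up KKT conditions for $q^*$ and $m^*$ through the augmented matrix $\tilde A=\begin{bmatrix}A\\s\T\end{bmatrix}$, turn the difference of gradients into a positive diagonal scaling via the mean value theorem, and invoke \Cref{K_lemma}. The new feature compared with \Cref{gm_bound} is that $q^*-m^*$ lies in $\Null(A)$ rather than in $\Null(\tilde A)$, which is precisely why the extra factor $1/\norm{P_{\Null(A)}s}_2$ appears in $C_A$.

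If $\Null(A)=\{0\}$, then $Az=y$ has a unique solution, so $q^*=m^*$ and both sides vanish (and $C_A=0$ by definition). Otherwise fix, by \Cref{M_s_lemma}, a sign vector $s\in\{-1,1\}^N$ with $\mathcal{U}(A,y)\subseteq\mathcal{R}(s)$; then $m^*\in\mathcal{R}(s)$, $\norm{m^*}_1=s\T m^*$, and $\Null(\tilde A)=\Null(A)\cap s^{\perp}$, which (when $P_{\Null(A)}s\ne0$) gives the orthogonal splitting $\Null(A)=\Null(\tilde A)\oplus\mathrm{span}(P_{\Null(A)}s)$. Since $q^*$ minimizes the $C^1$ convex function $Q_p$ over $\{z:Az=y\}$, the KKT conditions (\Cref{KKT}) give $\nabla Q_p(q^*)\in\mathrm{range}(A\T)$, hence $P_{\Null(\tilde A)}\nabla Q_p(q^*)=0$ because $\Null(\tilde A)\subseteq\Null(A)$. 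Rewriting $\mathcal{U}(A,y)$ through $\tilde A$ exactly as in \Cref{gm_bound} and applying \Cref{KKT} to $m^*=\argmin_{z\in\mathcal{U}(A,y)}Q_p(z)$ yields $\nu$ and $\mu^m\in[0,\infty)^N$ with $\nabla Q_p(m^*)=\tilde A\T\nu+\diag(s)\mu^m$ and $m^*_i\mu^m_i=0$, hence $P_{\Null(\tilde A)}(\nabla Q_p(m^*)-\diag(s)\mu^m)=0$. By \Cref{prop:q_p_properties}, $q_p$ is $C^1$ and strictly convex, so the componentwise mean value theorem produces a diagonal matrix $D$ with strictly positive diagonal and $\nabla Q_p(q^*)-\nabla Q_p(m^*)=D(q^*-m^*)$; subtracting the two projected identities gives $P_{\Null(\tilde A)}D\big(q^*-m^*+D^{-1}\diag(s)\mu^m\big)=0$.

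Decomposing $q^*-m^*=P_{\Null(\tilde A)}(q^*-m^*)+c\,P_{\Null(A)}s$ along the orthogonal splitting, this identity has the shape required by \Cref{K_lemma} (with matrix $\tilde A$, $u=c\,P_{\Null(A)}s+D^{-1}\diag(s)\mu^m$, and $v=q^*-m^*$), which together with the triangle inequality gives
\[
  \norm{q^*-m^*}_2\le\frac{\mathcal{K}(\tilde A)\,|s\T(q^*-m^*)|}{\norm{P_{\Null(A)}s}_2}+\big(\mathcal{K}(\tilde A)+1\big)\norm{D^{-1}\diag(s)\mu^m}_2 .
\]
It then remains to bound both right-hand terms by $\norm{q^*}_1-\norm{m^*}_1$. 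For the first term one checks (using the stationarity relation $\nabla Q_p(q^*)\in\mathrm{range}(A\T)$, which forces $\sign(q^*_i)$ to be compatible with the face described in \Cref{M_s_lemma}) that $q^*$ also lies in $\mathcal{R}(s)$, so $\norm{q^*}_1=s\T q^*$ and $s\T(q^*-m^*)=\norm{q^*}_1-\norm{m^*}_1\ge0$. For the second term, complementary slackness localizes $\mu^m$ to $\overline{\supp(m^*)}$, where $[\nabla Q_p(m^*)]_i=q_p'(0)=0$ determines $\mu^m_i$; a coordinatewise estimate using $q_p'=h_p^{-1}$ and the elementary bounds on $h_p$ from \Cref{ss:setup} bounds $\norm{D^{-1}\diag(s)\mu^m}_2$ by a multiple of $\norm{q^*}_1-\norm{m^*}_1$ as well. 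Taking the maximum over sign vectors $s$ with $P_{\Null(A)}s\ne0$ then reproduces exactly the constant $C_A$ of \eqref{eq:def_C_A}; the degenerate possibility $P_{\Null(A)}s=0$ for the vector from \Cref{M_s_lemma} makes every feasible point of $\mathcal{R}(s)$ an $\ell^1$ minimizer and, using $q^*\in\mathcal{R}(s)$, again collapses to $q^*=m^*$.

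The main obstacle is this last bookkeeping step. Unlike $G_p$ in \Cref{gm_bound}, the smooth $Q_p$ does not admit a full-support auxiliary point $w\in\mathcal{R}(s)$ with $\nabla Q_p(w)=\nabla Q_p(m^*)-\diag(s)\mu^m$ (the slope of $q_p'$ at the origin is finite, not $-\infty$), so the multiplier residual $\norm{D^{-1}\diag(s)\mu^m}_2$ cannot be made to vanish and must be estimated directly from complementary slackness and the growth of $q_p'$; verifying that $q^*$ inherits the sign pattern $s$ is the other place where care is needed.
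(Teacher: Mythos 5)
Your decomposition, your use of \Cref{K_lemma} with $u$ containing the component along $P_{\Null(A)}s$, and your identification of $C_A$ are all structurally on the right track, and you are correct that the $1/\norm{P_{\Null(A)}s}_2$ factor comes from the mismatch $q^*-m^*\in\Null(A)$ versus $\Null(\tilde A)$. However, the argument has a real gap, and it lies exactly in the two places you flag at the end.

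The central problem is the multiplier residual $\norm{D^{-1}\diag(s)\mu^m}_2$. You apply KKT for $q^*$ only over $\{z:Az=y\}$, which gives $\nabla Q_p(q^*)\in\mathrm{range}(A\T)$ with \emph{no} inequality multiplier; subtracting from the constrained KKT for $m^*$ therefore leaves $\diag(s)\mu^m$ dangling, and you must estimate it separately. You assert this can be bounded by ``a multiple of $\norm{q^*}_1-\norm{m^*}_1$'' using properties of $q_p'=h_p^{-1}$, but no such bound with a constant depending only on $A$ is forthcoming: $\mu^m_i$ is determined by the Lagrange vector $\nu$ through $s_i\mu^m_i = -[\tilde A\T\nu]_i$ on $\overline{\supp(m^*)}$, and the ratio $q^*_i/q_p'(q^*_i/\alpha^p)$ appearing in $D^{-1}_{ii}$ is unbounded in $\alpha$. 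Nothing in \Cref{ss:setup} controls this. The paper avoids the residual altogether: it applies KKT to $q^*$ not on $\{Az=y\}$ but on the sign-constrained set $\mathcal{S}_{R_{q^*}}$ (this is the content of \Cref{c:argmin_claims} — $q^*$ automatically minimizes over this smaller set since it minimizes over the larger one), which produces a \emph{second} inequality multiplier $\gamma$ for $q^*$. The diagonal matrix $D$ is then built from $\xi-\eta$ where $\xi=\nabla Q_p(m^*)-\diag(s)\mu$ and $\eta=\nabla Q_p(q^*)-\diag(s)\gamma$, so that both multipliers are absorbed and $P_{\Null(\tilde A)}D(q^*-m^*)=0$ holds \emph{exactly}, with no leftover term to estimate. (One then verifies $D_{ii}>0$, which is where the particular sign choice below matters.)

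The second issue is the claim $q^*\in\mathcal{R}(s)$ for the sign vector $s$ of \Cref{M_s_lemma}. You invoke ``compatibility with the face'' via stationarity, but this is not obviously true and is not proved; the $\mathcal{M}_s$ sign vector is not unique and is determined by $\mathcal{U}(A,y)$, not by $q^*$. The paper sidesteps this entirely: it constructs $s$ coordinatewise from \emph{both} $m^*$ and $q^*$ (taking $s_i=\sign(q^*_i)$ on $\overline{\supp(m^*)}$), so that $q^*$ automatically lies in $\mathcal{S}_{R_{q^*}}$, and it uses only the one-sided H\"older inequality $s\T q^*\le\norm{q^*}_1$ in \Cref{c:ms_qs_claim} rather than equality. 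Your bound on the first term would still work with this weaker inequality, but you would need the paper's $s$ to also make the residual term disappear — at which point you have rederived the paper's argument. As written, the proof is not complete.
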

\begin{proof}
  Recall 
  \[ 
    q^* = \argmin_{z \in \{x \in \R^N : Ax=y\}} Q_p(z)
    \quad\text{and }\quad
    m^* = \argmin_{z \in \mathcal{U}(A,y)} Q_p(z).
  \]
  Throughout the proof, we will be working with a vector $s \in \R^N$, whose $i$th entry is given by
  \[
    s_i  \coloneqq
    \begin{cases}
      \sign(m^*_i) &\text{if } m^*_i \ne 0\\
      \sign(q^*_i) &\text{if } q^*_i \ne 0 \text{ and } m^*_{i}=0\\
      1 &\text{if } q^*_i = 0 \text{ and } m^*_{i}=0
    \end{cases}.
  \]
  Furthermore, let $R_{m^*} \coloneqq s^{\top}m^*$ and $R_{q^*} \coloneqq s^{\top}q^*$, and let
  \[ \mathcal{S}_r \coloneqq \{z \in \R^N : Az = y,\ s\T z = r,\ z_i s_i \ge 0\ \forall i \in \overline{\supp(m^*)}\},\quad\text{for } r\in \mathbb{R}. \]
  It is clear that $m^* \in \mathcal{S}_{R_{m^*}}$ and $q^* \in \mathcal{S}_{R_{q^*}}$.

  \begin{claim}\label{c:argmin_claims}
    We claim
    \begin{align}\label{m_W_and_q_W_claim}
      q^* = \argmin_{z \in \mathcal{S}_{R_{q^*}}} Q_p(z)
      \quad \text{and}\quad
      m^* = \argmin_{z \in \mathcal{S}_{R_{m^*}}} Q_p(z).
    \end{align}
  \end{claim}

  Start by observing that both minimizers in \eqref{m_W_and_q_W_claim} must be unique according to \cref{l:unique_min}. Moreover, since $\mathcal{S}_{R_{q^*}} \subset \{x\in \R^{N} : Ax = y\}$, and $q^* \in \mathcal{S}_{R_{q^*}}$, it is clear that $q^*$ is the minimizer in the first optimization problem in \eqref{m_W_and_q_W_claim}.

    To prove that $m^*$ is the minimizer in the second optimization problem in \eqref{m_W_and_q_W_claim} we argue by contradiction. Assume there is some $z^* \in \mathcal{S}_{R_{m^*}}$ where $z^* \ne m^*$ and $Q_p(z^*) < Q_p(m^*)$. Now, choose $\lambda \in (0,1)$ sufficiently small, so that $|m^*_i| > \lambda |z^*_i-m^*_i|$ for all $i \in \supp(m^*)$ and let $w = (1-\lambda)m^* + \lambda z^*$. By linearity $w \in \mathcal{S}_{R_{m^*}}$, and by strict convexity of $Q_p$ we have $Q_p(w) < (1-\lambda)Q_p(m^*)+\lambda Q_p(z^*) < Q_p(m^*)$. Furthermore, for all $i \in \supp(m^*)$ and for our choice of $\lambda$, we have that
  \begin{equation}
    \begin{split}\label{eq:signsplit}
      |(1-\lambda)m^*_i+\lambda z^*_i|-|m^*_i| &= |m_{i}^{*} + \lambda(z_{i}^{*} - m^{*}_i)| - |m_{i}^{*}| \\ &= \sign(m^*_i)\lambda(z^*_i-m^*_i) = \lambda s_i(z^*_i-m^*_i).
    \end{split}
  \end{equation}
  Now, since $z^* \in \mathcal{S}_{R_{m^*}}$, we know $|z^*_i| = s_i z^*_i \geq 0$ for $i\in \overline{\supp(m^*)}$. This implies
  \begin{equation} \label{eq:wmm*}
    \begin{split}
      \norm{w}_1-\norm{m^*}_1 
                            &= \sum_{i\in \overline{\supp (m^*)}} \lambda|z^*_i| + \sum_{i \in \supp (m^*)} |(1-\lambda)m^*_i+\lambda z^*_i|-|m^*_i| \\
                            &= \sum_{i\in \overline{\supp{m^*}}} \lambda s_i z_{i} + \sum_{i\in \supp (m^*)} \lambda s_i (z_{i}^{*} - m_{i}^{*}) =  \lambda s^{\top}(z^*-m^*) = 0.
    \end{split}
  \end{equation}
  That is, $\|w\|_1 = \|m^*\|_1$, which implies $w \in \mathcal{U}(A,y)$ with $Q_p(w) < Q_p(m^*)$. However, $m^*$ is defined as the minimizer of \cref{eq:m*}. This is a contradiction, which proves the claim.

  \begin{claim}\label{c:ms_qs_claim}
    We claim
    $  0 \leq s\T (q^*-m^*) \le \norm{q^*}_1-\norm{m^*}_1$.
  \end{claim}
  We start with the leftmost inequality, and assume for contradiction that $s\T m^* > s\T q^*$. Choose $\lambda \in (0,1)$ such that $|m_{i}^{*}| \geq \lambda |q_{i}^{*} - m_{i}^{*}| $ for all $i \in \supp(m^*)$ and let $w = (1-\lambda)m^* + \lambda q^*$. By linearity we have $Aw = y$. Moreover, by using the same arguments as in \cref{eq:signsplit} and \cref{eq:wmm*}, we see
  \[\|w\|_1 - \|m^*\|_1 = \lambda s^{\top}(q^* - m^*).\]
  By assumption we have $s^{\top}(q^* - m^*) < 0$, which implies $\norm{w}_1 < \norm{m^*}_1$. This contradicts the fact that $m^* \in \mathcal{U}(A,y)$, so we conclude $s\T m^* \leq s\T q^*$. To get the rightmost inequality in the claim, we bound
  \[s\T (q^*-m^*) = s\T q^* - \norm{m^*}_1 \le \norm{s}_\infty\norm{q^*}_1 - \norm{m^*}_1 = \norm{q^*}_1 - \norm{m^*}_1.\]
  This proves the claim. 

Next, let 
  \[\tilde{A} = \begin{bmatrix}A\\s\T\end{bmatrix} \quad \text{ and }
  \quad 
  \tilde{y} = \begin{bmatrix} y \\ R_{m^*} \end{bmatrix},\] 
  and observe
  \begin{alignat}{2}
      & \argmin_{z \in \R^N} Q_p(z) &&\text{ subject to } z \in \mathcal{S}_{R_{m^*}}\nonumber\\
    = & \argmin_{z \in \R^N} Q_p(z) &&\text{ subject to } \tilde{A}z = \tilde{y} \text{ and } z_i s_i \ge 0 \text{ for } i \in \overline{\supp(m^*)}\label{m_kkt}.
  \end{alignat}
  From \Cref{c:argmin_claims}, we know $m^*$ is a minimizer of \eqref{m_kkt}. It follows that $m^*$ must satisfy the KKT optimality conditions in \Cref{KKT}. This means there exist $\lambda \in \R^m$ and $\mu \in [0,\infty)^N$, with $\supp(\mu) \subset \overline{\supp(m^*)}$, such that
  \begin{alignat}{2}
    \nabla Q_p(m^*) &= \tilde{A}\T\lambda + \diag(s)\mu,&&\label{raw_kkt}\\
    \tilde{A}m^* &= \tilde y, && \nonumber \\
    \mu_i m^*_i &= 0 &&\quad \text{for } i \in \overline{\supp{m^*}}\nonumber,\\
    s_i m^*_i &\ge 0 &&\quad \text{for } i \in \overline{\supp{m^*}}\nonumber.
  \end{alignat}
  In particular, this implies $\mu_i m^*_i = 0$ for all $i \in \{1,\ldots,N\}$.

  Now, from \cref{raw_kkt} we have $\nabla Q_p(m^*)-\diag(s)\mu = \tilde A^{\top}\lambda \in \Null (A)^{\perp}$. Let $\xi \coloneqq \nabla Q_p(m^*)-\diag(s)\mu$ for notational convenience, and note
  \[ \xi_i =  q_p'\left(\frac{m^*_i}{\alpha^p}\right)-s_i\mu_i, \quad\text{for }i \in \{1,\ldots,N\}.\]

  By using the same arguments as above for $q^*$ in \cref{m_W_and_q_W_claim}, with $\hat{y}^{\top} = \begin{bmatrix} y^{\top} & R_{q^*} \end{bmatrix}$ instead of $\tilde{y}$, we know there exist $\nu \in \R^m$ and $\gamma \in [0,\infty)^N$, with $\supp (\gamma) \subset \overline{\supp (q^*)}$, such that $\gamma_iq^{*}_{i} = 0$ for $i\in \{1,\ldots, N\}$ and $s_iq_{i}^* \geq 0$ for $i \in \overline{\supp(m^*)}$. Furthermore, we let $\eta \coloneqq \nabla Q_p(q^*) - \diag (s)\gamma = \tilde{A}^{\top} \nu \in \Null(\tilde{A})^{\perp}$, and note
  \[ \eta_i =  q_p'\left(\frac{q^*_i}{\alpha^p}\right)-s_i\gamma_i, \quad\text{for }i \in \{1,\ldots,N\}.\]

  \begin{claim}\label{c:prop_D_claim}
    Let $D$ be a $N\times N$ diagonal matrix with diagonal elements 
    \begin{equation}\label{eq:D_def}
      D_{ii} = \begin{cases}
        \frac{\xi_i-\eta_i}{m^*_i-q^*_i} &\text{ if } m^*_i \ne q^*_i\\
        1 &\text{ otherwise}
      \end{cases},
      \quad\text{for } i \in \{1,\ldots,N\}.
    \end{equation}
    We claim $D_{ii} > 0$ and $D(m^*-q^*) = \xi -\eta$. 
  \end{claim}

  It is straightforward to see $D(m^*-q^*) = \xi -\eta$ from \cref{eq:D_def}, so we concentrate on proving $D_{ii} > 0$ for $i=1,\ldots,N$. Consider $i \in \{1,\ldots, N\}$. The claim is trivially true if $m^*_i = q^*_i$. Therefore, assume $m^*_i \ne q^*_i$. Notice that $q_p'$ is strictly increasing, since $q_p$ is strictly convex. It follows that if $m_{i}^{*}$ and $q^{*}_{i}$ are non-zero, then $\mu_i = \gamma_i = 0$, and 
  \[
    D_{ii} 
    = \frac{q'_p\left(\frac{m^*_i}{\alpha^p}\right)-q'_p\left(\frac{q^*_i}{\alpha^p}\right)}{m^*_i-q^*_i} > 0, 
  \]
  since $q_p'$ is strictly increasing.

  Next, observe $q'_p(0) = 0$. Indeed, since $q_p$ is a strictly convex and even function, we know $q'_p$ must be odd, which implies $q'_p(0) = 0$. Assume $m^*_i$ is non-zero and $q^*_i = 0$. Then $\mu_i = 0$, and 
  \[
    D_{ii} 
    = \frac{q'_p\left(\frac{m^*_i}{\alpha^p}\right)}{m^*_i}+ s_i \frac{\gamma_i}{m^*_i} = \frac{q'_p\left(\frac{m^*_i}{\alpha^p}\right)}{m^*_i}+\frac{\gamma_i}{|m^*_i|} > 0,
  \]
  since $q_p'$ is strictly increasing and odd, and $\gamma_i \geq 0$. A similar argument proves the case where $q^*_{i}$ is non-zero and $m^{*}_i = 0$. From this, we conclude that \Cref{c:prop_D_claim} holds. 

  \begin{claim}\label{c:elem_in_null}
    Let $\hat s = P_{\Null (A)}s$, and let $d \in \Null(A)$ be given by
    \begin{align*}
      d = \begin{cases}
        \frac{s\T (q^*-m^*)}{\norm{\hat {s}}_2^2} \hat s & \text{ if } \hat s \ne 0\\
        0 & \text{ if } \hat s = 0.
      \end{cases}.
    \end{align*}
    We claim $q^* - m^* - d \in  \Null (\tilde A)$.
  \end{claim}

First, notice that $d \in \Null(A)$ by construction, and $q^*-m^* \in \Null(A)$ since $Am^*=Aq^*=y$. Hence, it is sufficient to prove $s^{\top}(q^* - m^* - d) = 0$. If $\hat s \ne 0$, then $s\T (q^*-m^*-d) = s\T (q^*-m^*)(1-\tfrac{s^{\top}\hat s}{\norm{\hat s}_2^2}) = 0$. Otherwise, if $\hat s = 0$, then $s \in \Null(A)^{\perp}$, which implies $s\T (q^*-m^*-d) = 0$ since $q^*-m^*-d \in \Null(A)$. This implies $\tilde{A}(q^*-m^*-d) = 0$, which proves the claim.

  Now, since $\xi,\eta \in \Null(\tilde{A})^{\perp}$, we may use \Cref{c:prop_D_claim} and \Cref{c:elem_in_null} to deduce
  \[
    0
    = P_{\Null (\tilde A)} (\eta - \xi) 
    = P_{\Null (\tilde A)}D(q^* - m^*) 
    = P_{\Null (\tilde A)}D(d+P_{\Null (\tilde A)}(q^* - m^* - d)).
  \]
  It follows from \Cref{K_lemma} with $u=d$ and $v=q^* - m^* - d$, that 
  \begin{equation}\label{eq:norm_bound444}
    \norm{m^*-q^*}_2 \le \mathcal{K}(\tilde{A})\norm{d}_2.
  \end{equation}
  If $d \ne 0$, we know from \Cref{c:ms_qs_claim} that
  \begin{equation}\label{eq:d_bound55}
    \norm{d}_2 = \frac{s\T (q^*-m^*)}{\norm{\hat s}_2} \le \frac{\norm{q^*}_1-\norm{m^*}_1}{\norm{\hat s}_2}.
  \end{equation}
  Combining \cref{eq:norm_bound444} and \cref{eq:d_bound55} gives
  $\norm{m^*-q^*}_2 \le  C_A(\norm{q^*}_1-\norm{m^*}_1)$
  where $C_A$ is given by \cref{eq:def_C_A}.
\end{proof}

We need the following lemma for proving \Cref{qm_bound} with $p = 2$.
\begin{lemma}\label[lemma]{entropy_diff_bound}
  Let $u,v \in [0,e^{-1}]^N$, then
  \begin{align}
    \sum_{i=1}^N |u_i\ln(u_i)-v_i\ln(v_i)| \le \sqrt N\norm{u-v}_2\ln\left(\frac{N}{\norm{u-v}_2}\right)
  \end{align}
  where we use the convention that $0 \ln 0 = 0$.
\end{lemma}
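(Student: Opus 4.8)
The plan is to reduce the inequality to a pointwise modulus-of-continuity estimate for the scalar map $\varphi(t)=t\ln t$ on $[0,e^{-1}]$, and then upgrade it to the vector statement via concavity and a single Cauchy--Schwarz step. \emph{Step 1 (pointwise bound).} Fix $i$, set $a=\max\{u_i,v_i\}$, $b=\min\{u_i,v_i\}$, and $s=a-b\in[0,e^{-1}]$; the case $s=0$ is trivial, so assume $s>0$. Since $\varphi''(t)=1/t>0$ on $(0,e^{-1}]$, $\varphi'$ is increasing, so $F(t)\coloneqq \varphi(t)-\varphi(t+s)$ satisfies $F'(t)=\varphi'(t)-\varphi'(t+s)<0$, i.e.\ $F$ is non-increasing on $[0,e^{-1}-s]$. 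Moreover $\varphi'=\ln(\cdot)+1\le 0$ on $(0,e^{-1}]$, so $\varphi$ is non-increasing and hence $F(b)=\varphi(b)-\varphi(a)\ge 0$. Combining,
\[ |u_i\ln u_i-v_i\ln v_i| = F(b) \le F(0) = -\varphi(s) = s\ln(1/s) = |u_i-v_i|\,\ln\!\big(1/|u_i-v_i|\big). \]

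\emph{Step 2 (summation via concavity).} Put $w_i\coloneqq |u_i-v_i|\in[0,e^{-1}]$ and let $\rho(t)=-t\ln t=t\ln(1/t)$ with $\rho(0)=0$; since $\rho''(t)=-1/t<0$ on $(0,\infty)$, $\rho$ is concave on $[0,\infty)$. Summing the Step 1 estimate over $i$ and applying the (finite) Jensen inequality,
\[ \sum_{i=1}^N |u_i\ln u_i-v_i\ln v_i| \le \sum_{i=1}^N \rho(w_i) \le N\,\rho\!\Big(\tfrac1N\textstyle\sum_i w_i\Big) = \norm{w}_1\,\ln\!\big(N/\norm{w}_1\big). \]

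\emph{Step 3 ($\ell^1\to\ell^2$) and conclusion.} The function $t\mapsto t\ln(N/t)$ has derivative $\ln(N/t)-1$, which is $\ge 0$ for $t\le N/e$, so it is non-decreasing on $(0,N/e]$. Since $w_i\le e^{-1}$ we have $\norm{w}_1\le N/e$ and $\sqrt N\norm{w}_2\le \sqrt N\cdot\sqrt N\norm{w}_\infty\le N/e$, while Cauchy--Schwarz gives $\norm{w}_1\le\sqrt N\norm{w}_2$. Hence, assuming $w\ne 0$,
\[ \norm{w}_1\ln\!\big(N/\norm{w}_1\big) \le \sqrt N\norm{w}_2\,\ln\!\Big(\tfrac{N}{\sqrt N\norm{w}_2}\Big) = \sqrt N\norm{w}_2\,\ln\!\Big(\tfrac{\sqrt N}{\norm{w}_2}\Big) \le \sqrt N\norm{w}_2\,\ln\!\Big(\tfrac{N}{\norm{w}_2}\Big), \]
the last step because $\sqrt N\le N$. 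Chaining Steps 1--3 and recalling $\norm{w}_2=\norm{u-v}_2$ yields the claim; the degenerate case $u=v$ holds since both sides vanish under the stated convention (the right side via $s\ln(N/s)\to 0$ as $s\to 0^+$).

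\emph{Main obstacle.} No step is deep; the only real care needed is keeping every logarithm in its admissible range, which is precisely what the hypothesis $u,v\in[0,e^{-1}]^N$ provides — it forces $\varphi$ to be convex and non-increasing on the relevant interval and keeps both $\norm{w}_1$ and $\sqrt N\norm{w}_2$ below the threshold $N/e$ beyond which $t\mapsto t\ln(N/t)$ stops being monotone. The one mildly creative choice is to route through the coordinatewise modulus $|a-b|\ln(1/|a-b|)$ first, after which concavity (Jensen) and monotonicity close the argument mechanically.
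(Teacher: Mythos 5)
Your proof is correct and follows essentially the same three-step route as the paper: a pointwise bound $|u_i\ln u_i - v_i\ln v_i|\le |u_i-v_i|\ln(1/|u_i-v_i|)$, then Jensen for the concave function $t\mapsto t\ln(1/t)$, then a passage from $\norm{\cdot}_1$ to $\norm{\cdot}_2$ using monotonicity of $t\mapsto t\ln(N/t)$ on $(0,N/e]$. Your Step 1, via the decreasing finite difference $F(t)=\varphi(t)-\varphi(t+s)$, is algebraically identical to the paper's appeal to subadditivity of $f(t)=-t\ln t$ (which follows from concavity and $f(0)=0$); and your Step 3 merely spells out the monotonicity/Cauchy--Schwarz details that the paper states as a one-line assertion.
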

\begin{proof}
  We start by considering the problem in one variable. Let $a,b \in [0,e^{-1}]$ and let $f(x) = -x\ln(x)$ for $x\geq 0$. Assume that $a\geq b$. Since $f$ is subadditive on $[0,e^{-1}]$, we have that $f(b) - f(a)\leq f(b-a)$. By symmetry this yields that  $|f(b)-f(a)|\leq f(|b-a|)$.

    Using this fact, together with Jensen's inequality with uniform weights (for concave functions), we see that
\begin{align*}
    \sum_{i=1}^N |u_i\ln(u_i)-v_i\ln(v_i)|  &\le \sum_{i=1}^N f(|u_i-v_i|) 
    \le N f\left(\frac{\norm{u-v}_1}{N}\right)\\
    = \norm{u-v}_1\ln\left(\frac{N}{\norm{u-v}_1}\right) &\le \sqrt N\norm{u-v}_2\ln\left(\frac{N}{\norm{u-v}_2}\right).
  \end{align*}
\end{proof}

We are now ready to bound $\norm{q^*-m^*}_2$ directly.
\begin{proposition}\label[proposition]{qm_bound}
  Let $A \in \R^{m\times N}$ with rank$(A) = m, y \in \R^m$, $\alpha > 0$ and $p\geq 2$. 
  Let $q^*$ and $m^*$ be given by \eqref{eq:q*} and \eqref{eq:m*}.
  Then, if $p > 2$ there exists a constant $C>0$ only depending on $A$ and $p$ such that
  \begin{align*}
    \norm{q^*-m^*}_2 &\le C\alpha^p.
  \end{align*}
  Furthermore, if $p = 2$ there exist constants $C_1,C_2 > 0$ only depending on $A$ such that
  \begin{align*}
    \norm{q^*-m^*}_2 &\le C_1\norm{y}_2\left(\frac{\alpha^2}{\norm{y}_2}\right)^{C_2}.
  \end{align*}
\end{proposition}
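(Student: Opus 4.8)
The plan is to start from \Cref{l2_l1_bound}, which already reduces the problem to bounding the basis‑pursuit suboptimality $\Delta_1 := \norm{q^*}_1 - \norm{m^*}_1 \ge 0$, since $\norm{q^*-m^*}_2 \le C_A \Delta_1$ with $C_A = C_A(A)$ (if $C_A = 0$ then $q^* = m^*$, so assume $C_A > 0$). The one structural input I will use is that $q^*$ minimizes $Q_p$ over $\{z : Az = y\}$, a set containing $m^* \in \mathcal{U}(A,y)$, whence $Q_p(q^*) \le Q_p(m^*)$. To turn this into an $\ell^1$‑statement I will sandwich $Q_p$ between $G_p$ and $G_p + O(\alpha^p)$. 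We already have $q_p \ge g_p$ (proof of \Cref{l:QG_coercive}), so $Q_p \ge G_p$. For the upper bound, integrating the right inequality of \Cref{qg'_bound} gives $q_p(u) - g_p(u) \le g_p(u+1) - g_p(u) - g_p(1)$; for $p > 2$ this stays below $p/2$ for all $u$, while for $p = 2$ it diverges, so there I will instead use the explicit identity $q_2'(v) - g_2'(v) = \ln\tfrac{1+\sqrt{1+4v^{-2}}}{2}$, which is nonnegative and $O(v^{-2})$ as $v\to\infty$ (and integrably singular at $0$), giving $q_2(u) - g_2(u) \le \int_0^\infty(q_2'-g_2')=:\delta_2 < \infty$. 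Either way there is a constant $\delta_p = \delta_p(p) < \infty$ with $0 \le Q_p(z) - G_p(z) \le N\delta_p\alpha^p$ for all $z$; hence $G_p(q^*) - G_p(m^*) \le Q_p(q^*) - Q_p(m^*) + N\delta_p\alpha^p \le N\delta_p\alpha^p$.

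For $p > 2$, since $G_p(z) = \norm{z}_1 - \tfrac{p}{2}\alpha^{p-2}\norm{z}_{2/p}^{2/p}$, the previous line becomes $\Delta_1 \le N\delta_p\alpha^p + \tfrac{p}{2}\alpha^{p-2}\bigl(\norm{q^*}_{2/p}^{2/p} - \norm{m^*}_{2/p}^{2/p}\bigr)$. The key new point is that the $\norm{\cdot}_{2/p}^{2/p}$‑gap is controlled by $q^*-m^*$ itself, uniformly in $y$: subadditivity of $t\mapsto t^{2/p}$ (as $2/p \in (0,1)$) gives $|q^*_i|^{2/p} - |m^*_i|^{2/p} \le |q^*_i - m^*_i|^{2/p}$ termwise, and then concavity of $t\mapsto t^{2/p}$ (Jensen with uniform weights, then $\norm{\cdot}_1 \le \sqrt N\norm{\cdot}_2$) gives $\norm{q^*}_{2/p}^{2/p} - \norm{m^*}_{2/p}^{2/p} \le N^{1-1/p}\norm{q^*-m^*}_2^{2/p}$. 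Feeding \Cref{l2_l1_bound} back in and writing $\Delta := \norm{q^*-m^*}_2$ yields $\Delta \le a\alpha^p + b\alpha^{p-2}\Delta^{2/p}$ with $a,b > 0$ depending only on $A,p$. A dichotomy then closes it: either $a\alpha^p \ge b\alpha^{p-2}\Delta^{2/p}$ and $\Delta \le 2a\alpha^p$, or $\Delta \le 2b\alpha^{p-2}\Delta^{2/p}$, so $\Delta^{(p-2)/p} \le 2b\alpha^{p-2}$ and $\Delta \le (2b)^{p/(p-2)}\alpha^p$; in both cases $\Delta \le C\alpha^p$ with $C = C(A,p)$. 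Nothing here restricts $\alpha$ or $\norm{y}_2$, which is exactly why the $p>2$ bound has no $\norm{y}_2$‑dependence.

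For $p = 2$, I will use $G_2(z) = -H(z) - (1+2\ln\alpha)\norm{z}_1$: for $\alpha < e^{-1/2}$ the inequality $G_2(q^*) - G_2(m^*) \le N\delta_2\alpha^2$ rearranges to $\bigl(2\ln(1/\alpha) - 1\bigr)\Delta_1 \le N\delta_2\alpha^2 + \bigl(H(q^*) - H(m^*)\bigr)$, and the growing prefactor drives the decay. To control $H(q^*) - H(m^*)$ I will rescale $q^*,m^*$ by a factor $\propto \norm{y}_2/\sigma_{\min}(A)$ so their entries lie in $[0,e^{-1}]$ (legitimate by the loose bounds \Cref{q_infty_bound,psi_bound}), apply \Cref{entropy_diff_bound}, and account for the rescaling of $H$, getting $|H(q^*) - H(m^*)| \le C(A)\norm{q^*-m^*}_2\bigl(\ln\tfrac{\norm{y}_2}{\norm{q^*-m^*}_2} + 1 + |\ln\norm{y}_2|\bigr)$. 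Substituting $\norm{q^*-m^*}_2 \le C_A\Delta_1$ then produces a self‑referential logarithmic inequality for $\Delta_1$. By the homogeneity of the gradient flow I may normalise $\norm{y}_2 = 1$; the range $\alpha \ge e^{-1/2}$ is then absorbed into the loose $O(\norm{y}_2)$ bound, and for $\alpha < e^{-1/2}$ a short bootstrap argument resolves the inequality, giving $\norm{q^*-m^*}_2 \le C_1\alpha^{2C_2}$ with $C_1 = C_1(A)$ and an exponent $C_2 = C_2(A) \in (0,1)$ of size roughly $(1 + \sqrt N\,C_A)^{-1}$; undoing the normalisation restores the stated form $C_1\norm{y}_2(\alpha^2/\norm{y}_2)^{C_2}$.

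The hard part is the $p = 2$ endgame. For $p > 2$ the correction $\alpha^{p-2}\norm{\cdot}_{2/p}^{2/p}$ is tamed cleanly into a contraction in $\Delta$, but for $p = 2$ the entropy difference is only controlled up to a logarithm $\ln(1/\norm{q^*-m^*}_2)$ that is comparable to the favourable factor $\ln(1/\alpha)$, so the implicit inequality must be resolved with care to extract the correct $C_A$‑dependent exponent $C_2$ and to remain valid for all $\alpha > 0$ and all $y \ne 0$ — which is precisely where the scaling symmetry of the flow is essential. A subsidiary nuisance is the finiteness of $\delta_2$: the estimate obtained directly from \Cref{qg'_bound} diverges at $p = 2$, forcing the more explicit computation of $q_2' - g_2'$ and its quadratic tail.
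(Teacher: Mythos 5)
Your proof is correct and follows essentially the same strategy as the paper: reduce via \Cref{l2_l1_bound} to bounding $\Delta_1 = \|q^*\|_1 - \|m^*\|_1$, feed in $Q_p(q^*) \le Q_p(m^*)$, pass to $G_p$, and close a self-referential inequality in $\|q^*-m^*\|_2$ (a multiplicative bootstrap for $p>2$, a logarithmic one for $p=2$). The one genuine technical difference is the passage from $Q_p$ to $G_p$. You prove the $z$-uniform two-sided bound $G_p(z) \le Q_p(z) \le G_p(z) + N\delta_p\alpha^p$, correctly noting that the integrated version of \Cref{qg'_bound} blows up at $p=2$ and instead using the explicit identity $q_2'(v)-g_2'(v)=\ln\frac{1+\sqrt{1+4v^{-2}}}{2}$ with its $O(v^{-2})$ tail to get $\delta_2<\infty$. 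The paper, by contrast, splits the sum for $Q_p(m^*)-Q_p(q^*)$ according to the sign of $|m^*_i|-|q^*_i|$, applies the two one-sided estimates of \Cref{qg'_bound} inside each integral, and carries the shifted argument $g_p(\cdot+1)$ through — avoiding any additive $N\delta_p\alpha^p$ slack but requiring the auxiliary shifted vectors $u,v$ in the $p=2$ computation. Both routes work; yours is conceptually cleaner (it decouples the analytic fact $\sup_u(q_p-g_p)<\infty$ from the optimization argument), the paper's gives slightly tighter constants for $p>2$. When you flesh out the $p=2$ endgame, be careful to track the term $\ln(M')\Delta_1$ that appears when you rescale $H$ by $M' \propto \|y\|_2/\sigma_{\min}(A)$ — it moves to the left-hand side and augments the $(2\ln(1/\alpha)-1)$ coefficient, so it is benign, but it makes the threshold for the small-$\alpha$ regime $A$-dependent rather than the universal $e^{-1/2}$ you wrote; the complementary large-$\alpha$ regime must then be swallowed by the loose bounds of \Cref{q_infty_bound}, exactly as the paper's $2e\alpha^2 \lessgtr M$ case split does.
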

\begin{proof}
    If $\Null(A) = \{0\}$, we know from \cref{l2_l1_bound} that $m^*=q^*$, and the proposition holds immediately. Therefore, assume throughout that the nullspace of $A$ is non-trivial.

    We have $Q_p(q^*) \le Q_p(m^*)$, since $q^*$ since the feasible set in \cref{eq:q*} is a superset of the feasible set in \cref{eq:m*}. From \cref{prop:q_p_properties} we know that $q_p$ is an even function. By using this fact, together with the Fundamental Theorem of Calculus and \cref{qg'_bound}, we get

{ \allowdisplaybreaks
    \begin{align}
      0 &\le Q_p(m^*)-Q_p(q^*) = \alpha^p \sum_{i=1}^N \left( q_p\left(\frac{|m^*_i|}{\alpha^p}\right)-q_p\left(\frac{|q^*_i|}{\alpha^p}\right) \right)\nonumber\\
      &= \alpha^p \sum_{\substack{i=1 \\ |m^*_i| \ge |q^*_i|}}^N \int_{\frac{|q^*_i|}{\alpha}}^{\frac{|m^*_i|}{\alpha^p}} q'_p(x)\ dx
      - \alpha^p \sum_{\substack{i=1 \\ |m^*_i| < |q^*_i|}}^N \int_{\frac{|m^*_i|}{\alpha}}^{\frac{|q^*_i|}{\alpha^p}} q'_p(x)\ dx \nonumber\\
    \le\ &\alpha^p \sum_{\substack{i=1 \\ |m^*_i| \ge |q^*_i|}}^N \int_{\frac{|q^*_i|}{\alpha^p}}^{\frac{|m^*_i|}{\alpha^p}} g'_p(x+1)\ dx
    - \alpha^p \sum_{\substack{i=1 \\ |m^*_i| < |q^*_i|}}^N \int_{\frac{|m^*_i|}{\alpha^p}}^{\frac{|q^*_i|}{\alpha^p}} g'_p(x)\ dx \nonumber\\
    =\ &\alpha^p \sum_{\substack{i=1 \\ |m^*_i| \ge |q^*_i|}}^N g_p\left(\frac{|m^*_i|}{\alpha^p}+1\right)-g_p\left(\frac{|q^*_i|}{\alpha^p}+1\right)
    + \alpha^p \sum_{\substack{i=1 \\ |m^*_i| < |q^*_i|}}^N g_p\left(\frac{|m^*_i|}{\alpha^p}\right)-g_p\left(\frac{|q^*_i|}{\alpha^p}\right) \label{qg'_sums}
  \end{align}
}
  Next, we consider the case $p > 2$. Recall that $\|x\|_s \leq N^{1/s - 1/q}\|x\|_{q}$ for all $x\in \R^{N}$, whenever $0 < s < q\leq \infty$, see e.g., \cite[Eq.\ A.3]{Foucart13}. Furthermore, consider $a,b \geq 0$ and observe that since $t\mapsto |t|^{2/p}$ is subadditive, we have that 
  \begin{align*}
    |g_p(a)-g_p(b) - (a-b)| = \frac{p}{2}|a^{2/p}-b^{2/p}| \le \frac{p}{2}|a-b|^{2/p} \le p|a-b|^{2/p}.
  \end{align*}
  Using these inequalities, we get
  \begin{align*}
    &\alpha^p \sum_{\substack{i=1 \\ |m^*_i| \ge |q^*_i|}}^N g_p\left(\frac{|m^*_i|}{\alpha^p}+1\right)-g_p\left(\frac{|q^*_i|}{\alpha^p}+1\right)
    + \alpha^p \sum_{\substack{i=1 \\ |m^*_i| < |q^*_i|}}^N g_p\left(\frac{|m^*_i|}{\alpha^p}\right)-g_p\left(\frac{|q^*_i|}{\alpha^p}\right)\\
    \le\ & \norm{m^*}_1-\norm{q^*}_1 + p\alpha^{p-2}\sum_{i=1}^N |m^*_i-q^*_i|^{2/p}
    \le \norm{m^*}_1-\norm{q^*}_1 + Np\alpha^{p-2}\norm{m^*-q^*}_2^{2/p}.
  \end{align*}
  It follows that $\norm{m^*}_1-\norm{q^*}_1 + Np\alpha^{p-2}\norm{m^*-q^*}_2^{2/p} \ge 0$. Combining this with \Cref{l2_l1_bound}, yields
  \begin{align*}
    \norm{m^*-q^*}_2 \le C_A(\norm{q^*}_1-\norm{m^*}_1) \le C_ANp\alpha^{p-2}\norm{m^*-q^*}_2^{2/p},
  \end{align*}
  and solving for $\norm{m^*-q^*}_2$, gives
  \begin{align*}
    \norm{m^*-q^*}_2 \le (C_A Np)^{\frac{p}{p-2}}\alpha^p.
  \end{align*}

  Next, we consider the case $p=2$. Let $M = \frac{2e\sqrt N}{\sigma_{\min}(A)}\norm{y}_2$, then we know from \Cref{q_infty_bound} that $\norm{m^*}_\infty, \norm{q^*}_\infty \le \frac{M}{2e}$. Assume that $2e\alpha^2 \le M$, and let $u,v \in \R^N$ be given by 
  \begin{align*}
    u_i &= \begin{cases}
      \frac{|m^*_i|}{M}+\frac{\alpha^2}{M} & \text{ if } |m^*_i| \ge |q^*_i|\\
      \frac{|m^*_i|}{M} & \text{ if } |m^*_i| < |q^*_i|
    \end{cases},\quad\text{ and }\quad
    v_i = \begin{cases}
      \frac{|q^*_i|}{M}+\frac{\alpha^2}{M} & \text{ if } |m^*_i| \ge |q^*_i|\\
      \frac{|q^*_i|}{M} & \text{ if } |m^*_i| < |q^*_i|
    \end{cases}
  \end{align*}
    for $i \in \{1,\ldots,N\}$. 
    We insert $u$ and $v$ into \eqref{qg'_sums} and simplify:
  \begin{align}
    &\alpha^2 \sum_{\substack{i=1 \\ |m^*_i| \ge |q^*_i|}}^N g_2\left(\frac{|m^*_i|}{\alpha^2}+1\right)-g_2\left(\frac{|q^*_i|}{\alpha^2}+1\right)
    + \alpha^2 \sum_{\substack{i=1 \\ |m^*_i| < |q^*_i|}}^N g_2\left(\frac{|m^*_i|}{\alpha^2}\right)-g_2\left(\frac{|q^*_i|}{\alpha^2}\right)\nonumber\\
    =\ &\alpha^2 \sum_{i=1}^N g_2\left(\frac{Mu_i}{\alpha^2}\right)-g_2\left(\frac{Mv_i}{\alpha^2}\right)
    = M \sum_{i=1}^N u_i \ln\left(\frac{Mu_i}{e\alpha^2}\right)-v_i\ln\left(\frac{Mv_i}{\alpha^2}\right)\nonumber\\
    =\ &M \ln\left(\frac{M}{e\alpha^2}\right)(\norm{u}_1-\norm{v}_1) + M\sum_{i=1}^N u_i \ln(u_i)-v_i\ln(v_i). \label{eq:tmp4432}
  \end{align}
    Next, note that $(u-v)_i = \frac{|m^*_i|-|q^*_i|}{M} \le \frac{|m^*_i-q^*_i|}{M}$ for all $i\in \{1,\ldots,N\}$. This implies that  $\norm{u-v}_2 \le \frac{\norm{m^*-q^*}_2}{M}$, and that $\norm{u}_1-\norm{v}_1 = \frac{\norm{m^*}_1-\norm{q^*}_1}{M}$. Furthermore, by assumption we have that $u, v \in [0,e^{-1}]^N$. Thus, applying \Cref{entropy_diff_bound} and the above inequalities to \cref{eq:tmp4432}, now yields
  \begin{align*}
    &M \ln\left(\frac{M}{e\alpha^2}\right)(\norm{u}_1-\norm{v}_1) + M\sum_{i=1}^N u_i \ln(u_i)-v_i\ln(v_i)\\
    \le\ &M \ln\left(\frac{M}{e\alpha^2}\right)(\norm{u}_1-\norm{v}_1) + \sqrt NM\norm{u-v}_2\ln\left(\frac{N}{\norm{u-v}_2}\right) \\
    \le\ &\ln\left(\frac{M}{e\alpha^2}\right)(\norm{m^*_i}_1-\norm{q^*}_1) + \sqrt N\norm{m^*-q^*}_2\ln\left(\frac{NM}{\norm{m^*-q^*}_2}\right).
  \end{align*}
  We conclude that $\ln\left(\frac{M}{e\alpha^2}\right)(\norm{m^*_i}_1-\norm{q^*}_1) + \sqrt N\norm{m^*-q^*}_2\ln\left(\frac{NM}{\norm{m^*-q^*}_2}\right) \ge 0$ and combine this with \Cref{l2_l1_bound} to get
  \begin{align*}
      \norm{m^*-q^*}_2 \le C_A(\norm{q^*}_1-\norm{m^*}_1) \le \frac{C_A\sqrt N}{\ln\left(\frac{M}{e\alpha^2}\right)}\norm{m^*-q^*}_2\ln\left(\frac{NM}{\norm{m^*-q^*}_2}\right),
  \end{align*}
where $C_A > 0$ is the constant from \Cref{l2_l1_bound}.
Solving for $\norm{m^*-q^*}_2$ we get
  \begin{align*}
    \norm{m^*-q^*}_2 \le NM \left(\frac{e\alpha^2}{M}\right)^{\frac{1}{C_A \sqrt N}}.
  \end{align*}
  Having established a bound when $2e\alpha^2 \le M$, we now consider the case when $2e\alpha^2 > M$. Recall $\norm{m^*}_\infty, \norm{q^*}_\infty \le \frac{M}{2e}$ by \Cref{q_infty_bound}. Therefore, 
  \begin{align*}
    \norm{m^*-q^*}_2 \le \norm{m^*}_2+\norm{q^*}_2 \le NM.
  \end{align*}
  In both cases $2e\alpha^2 \le M$ and $2e\alpha^2 > M$ we see that
  \begin{align*}
    \norm{m^*-q^*}_2 \le NM \left(\frac{2e\alpha^2}{M}\right)^{\frac{1}{C_A \sqrt N}} = \left(2e\sqrt N\left(\frac{\sqrt N}{\sigma_{\min}(A)}\right)^{1-\frac{1}{C_A \sqrt N}}\right) \norm{y}_2 \left(\frac{\alpha^2}{\norm{y}_2}\right)^{\frac{1}{C_A \sqrt N}}.
  \end{align*}
\end{proof}

\subsubsection{Proof of \texorpdfstring{\Cref{thm:grad_flow}}{} part \ref{it:dep_bound}}

Now that we have bounded $\norm{q^*-m^*}_2$ and $\norm{m^*-g^*}_2$, we simply gather the lemmas and apply the triangle inequality. We separate into two cases $p = 2$ and $p > 2$.
\begin{proof}[Proof when $p > 2$]
  By \Cref{psi_infty_eq_q}, we have $\psi_{\alpha}(\infty) = q^*$. Furthermore, by \Cref{Wp_Gp_id} we have $\mathcal{W}_p(A,y) = g^*$. Next, we use \Cref{qm_bound} with $p > 2$ and \Cref{gm_bound} to find constants $C_1,C_2$, depending only on $A$ and $p$, such that $\norm{q^*-m^*}_2 \le C_1\alpha^p$ and $\norm{m^*-g^*}_2 \le C_2\alpha^p$. Then, by the triangle inequality
  \begin{align*}
    \norm{\psi_{\alpha}(\infty)-\mathcal{W}_p(A,y)}_2 = \norm{q^*-g^*}_2 \le \norm{q^*-m^*}_2+\norm{m^*-g^*}_2 \le (C_1+C_2) \alpha^p.
  \end{align*}
\end{proof}

\begin{proof}[Proof when $p = 2$]
  By \Cref{psi_infty_eq_q}, we have $\psi_{\alpha}(\infty) = q^*$. Furthermore, by \Cref{Wp_Gp_id} we have $\mathcal{W}_2(A,y) = g^*$. Next, we use \Cref{qm_bound} with $p = 2$ and \Cref{gm_bound} to find constants $C_1, C_2$ and $C_3$ depending only on $A$ such that $\norm{q^*-m^*}_2 \le C_1\norm{y}_2\left(\frac{\alpha^2}{\norm{y}_2}\right)^{C_2}$ and $\norm{m^*-g^*}_2 \le C_3\alpha^2$. 

  Next, let $C_2' =\min\{1, C_2\}$, and consider the case where $\alpha^2 \le \norm{y}_2$. Then 
  \[\max\left\{\frac{\alpha^2}{\norm{y}_2}, \left(\frac{\alpha^2}{\norm{y}_2}\right)^{C_2}\right\} \le \left(\frac{\alpha^2}{\norm{y}_2}\right)^{C_2'}\]
  so that
  \begin{align*}
      \norm{\psi_{\alpha}(\infty)-\mathcal{W}_2(A,y)}_2 &= \norm{q^*-g^*}_2 \le \norm{q^*-m^*}_2+\norm{m^*-g^*}_2\\
      &\le C_1\norm{y}_2\left(\frac{\alpha^2}{\norm{y}_2}\right)^{C_2}+C_3\alpha^2 \le (C_1+C_3)\norm{y}_2\left(\frac{\alpha^2}{\norm{y}_2}\right)^{C_2'}
  \end{align*}
    as desired.

    Next, consider the case where $\alpha^2 > \norm{y}_2$. By \Cref{q_infty_bound} we have $\norm{q^*}_\infty \le \norm{m^*}_1 \le \frac{\sqrt N}{\sigma_{\min}(A)}\norm{y}_2$. Since $m^*,g^* \in \mathcal{U}(A,y)$, we also have $\norm{g^*}_1 = \norm{m^*}_1$. Thus we may bound
  \begin{align*}
    \norm{\psi_{\alpha}(\infty)-\mathcal{W}_2(A,y)}_2 &= \norm{q^*-g^*}_2 \le \norm{q^*}_2+\norm{g^*}_2 \le \sqrt N \norm{q^*}_\infty+\norm{g^*}_1\\
    \le (\sqrt{N}+1)\norm{m^*}_1 &\le \frac{N+\sqrt N}{\sigma_{\min}(A)}\norm{y}_2 < \frac{N+\sqrt N}{\sigma_{\min}(A)}\norm{y}_2\left(\frac{\alpha^2}{\norm{y}_2}\right)^{C_2'}.
  \end{align*}
  Thus, both when $\alpha^2 \le \norm{y}_2$ and when $\alpha^2 > \norm{y}_2$, we have
  \begin{align*}
    \norm{\psi_{\alpha}(\infty)-\mathcal{W}_2(A,y)}_2 &\le \left(C_1+C_3+\frac{N+\sqrt N}{\sigma_{\min}(A)}\right)\norm{y}_2\left(\frac{\alpha^2}{\norm{y}_2}\right)^{C_2'}.
  \end{align*}
\end{proof}

\section{Discretization of the gradient flow -- Proof of \Cref{prop:main1} part \ref{it:m2}}\label{s:disc_steps}
\subsection{Preliminaries}
Up until now we have been working with gradient flow. When implemented on a computer however, the flow \eqref{eq:grad_flow} is discretized. This is typically done using (variants of) gradient descent. We pick a step length $\eta$ and define the sequence
\begin{equation}\label{eq:grad_desc}
  \begin{aligned}
    \theta_0 &= \alpha {\bf 1}_{2N}\\
    \theta_{k+1} &= \theta_k - \eta \nabla L(\theta_k), \quad k \in \{0,1,\dots\}.
  \end{aligned}
\end{equation}
In contrast to the gradient flow path, the discretized path might encounter non-positive elements in $\theta$. To resolve this, we extend $L$ to handle negative values
\begin{align}\label{eq:def_L_disc}
  L(\theta) = \frac{1}{2}\norm{A(|\theta_+|^p-|\theta_-|^p)-y}_2^2,
\end{align}
where $\theta = (\theta_+, \theta_-)$, and $|\cdot|^p$ is elementwise absolute values to elementwise powers. Note that since $p \ge 2$, $L$ is now twice continuously differentiable. Our new $L$ coincides with the old one which was defined only for non-positive $\theta$, so it may be taken as the original definition of $L$. We may also view the absolute values as a trick to handle negative values in the implementation of gradient descent for non-integer $p$.

The proposition below tells us that for fixed $A$,$y$,$p$ and $\alpha$, we may approximate the gradient flow to arbitrary accuracy using gradient descent. We just need to pick small enough step length $\eta$.
\begin{proposition}\label{grad_desc_bound}
  Let $\alpha > 0, p \ge 2, \epsilon > 0$, and $t \in [0,\infty)$. Define $M = \frac{2\sqrt N}{\sigma_{\min}(A)}\norm{y}_2+\alpha^p$, $C_1 = 40 p \sqrt N \norm{A}_{\mathrm{op}}^2 M^\frac{2p-1}{p}$ and $C_2 = 50 p^2 \sqrt N \norm{A}_{\mathrm{op}}^2 M^\frac{2p-2}{p}$. Then gradient descent \eqref{eq:grad_desc} with step length $\eta \le \min\{\epsilon, \frac{\alpha}{p}\}\frac{1}{C_1}e^{-C_2 t}$ satisfies
  \begin{align*}
    \|\theta_{\lfloor t / \eta \rfloor} &- \theta(t)\|_2 \le \epsilon.
  \end{align*}
\end{proposition}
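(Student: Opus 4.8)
The plan is to regard \eqref{eq:grad_desc} as the forward Euler discretisation of the gradient flow \eqref{eq:grad_flow} and run the classical local‑truncation‑error estimate; the only non‑routine point is to confine \emph{both} the flow and its discrete iterates to one fixed compact set on which $\nabla L$ and $\nabla^2 L$ are bounded. Throughout I would write $e_k \coloneqq \theta_k - \theta(k\eta)$, so $e_0 = 0$, and use that $L$ (extended by absolute values as in \eqref{eq:def_L_disc}) is $C^2$ because $p \ge 2$.

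\emph{Step 1 — the flow stays in a compact tube.} First I would show $\norm{\theta(s)}_\infty \le M^{1/p}$ for all $s \ge 0$. By \Cref{psi_bound}, $\norm{\psi_\alpha(s)}_\infty \le \tfrac{2\sqrt N}{\sigma_{\min}(A)}\norm{y}_2 = M - \alpha^p$; and the explicit solution formulas from \Cref{psi_integral} show that, for each coordinate $i$, one of $\theta_{+,i}(s),\theta_{-,i}(s)$ lies in $(0,\alpha]$ while, since $\theta_{+,i}^p - \theta_{-,i}^p = [\psi_\alpha]_i$, the other satisfies $\theta_{\pm,i}(s)^p \le \norm{\psi_\alpha(s)}_\infty + \alpha^p \le M$. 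I then fix the compact set $\mathcal B \coloneqq \{\theta \in \R^{2N} : \norm{\theta}_\infty \le M^{1/p} + \tfrac{\alpha}{p}\}$. Since $\alpha \le M^{1/p}$, every $\theta \in \mathcal B$ has $\norm{\theta}_\infty \le M^{1/p}(1+\tfrac2p)$, hence $\norm{\theta}_\infty^{\,q} \le e^{2} M^{q/p}$ for all $0 \le q \le 2p$; this is exactly what keeps the final constants polynomial (not exponential) in $p$.

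\emph{Steps 2–3 — explicit Lipschitz bounds and the error recursion.} On $\mathcal B$ I would bound $G \coloneqq \sup_{\theta\in\mathcal B}\norm{\nabla L(\theta)}_2$ and $\ell \coloneqq \sup_{\theta\in\mathcal B}\norm{\nabla^2 L(\theta)}_{\mathrm{op}}$ explicitly. From $[\nabla_{\theta_\pm}L]_i = \pm p\,|\theta_{\pm,i}|^{p-1}\sign(\theta_{\pm,i})\,[A^\top(A\psi-y)]_i$ with $\psi = |\theta_+|^p-|\theta_-|^p$, together with $\norm{A\psi-y}_2 \le \norm{A}_{\mathrm{op}}\norm{\psi}_2 + \norm{y}_2$ and the estimates $\norm{\psi}_2 \le \sqrt N\norm{\psi}_\infty$ on $\mathcal B$ and $\norm{y}_2 \le \tfrac{\sigma_{\min}(A)}{2\sqrt N}M$, one obtains $G \le c\,p\sqrt N\,\norm{A}_{\mathrm{op}}^2 M^{(2p-1)/p}$; and writing $\nabla^2 L$ as a Gauss–Newton part $p^2 J^\top A^\top A J$ with $\norm{J}_{\mathrm{op}} \le (e^{2}M)^{(p-1)/p}$ plus a residual part of size $\sim p(p-1)\,\diag(|\theta_i|^{p-2})$ times $A^\top(A\psi-y)$ gives $\ell \le c'\,p^2\sqrt N\,\norm{A}_{\mathrm{op}}^2 M^{(2p-2)/p}$, with universal $c,c'$ — precisely the forms of $C_1,C_2$ (the stated numerical constants $40,50$ are chosen once these estimates are done carefully, so that $C_1 \ge 2G$ and $C_2 \ge \ell$). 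Then, Taylor expanding $\theta((k{+}1)\eta) = \theta(k\eta) - \eta\nabla L(\theta(k\eta)) + \tfrac{\eta^2}{2}\ddot\theta(\xi_k)$ with $\ddot\theta = \nabla^2 L(\theta)\,\nabla L(\theta)$, and subtracting \eqref{eq:grad_desc}, yields — as long as $\theta_k$ and $\{\theta(s):s\le(k{+}1)\eta\}$ lie in $\mathcal B$ — the recursion
\[
  \norm{e_{k+1}}_2 \le (1+\eta\ell)\norm{e_k}_2 + \tfrac{\eta^2}{2}\ell G ,
\]
which unrolls (using $e_0=0$ and $\eta k \le t$) to $\norm{e_k}_2 \le \tfrac{\eta G}{2}\big((1+\eta\ell)^k-1\big) \le \tfrac{\eta G}{2}e^{\ell t}$.

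\emph{Step 4 — bootstrapping and conclusion; the main obstacle.} The delicate point is that the recursion is valid only while the iterates remain in $\mathcal B$, so I would prove \emph{simultaneously}, by induction on $k \le \lfloor t/\eta\rfloor$, that (a) $\theta_k \in \mathcal B$ and (b) $\norm{e_k}_2 \le \tfrac{\eta G}{2}e^{\ell t}$. The base case holds since $\theta_0 = \alpha{\bf 1}$ with $\norm{\theta_0}_\infty = \alpha \le M^{1/p}$. In the inductive step, once (a) holds up to index $k$ (and the flow lies in $\mathcal B$ by Step 1), Step 3 gives (b) up to $k{+}1$; and if $\eta \le \tfrac{\alpha}{p}\cdot\tfrac{1}{C_1}e^{-C_2 t}$ then $\norm{e_{k+1}}_\infty \le \norm{e_{k+1}}_2 \le \tfrac{\eta G}{2}e^{\ell t} \le \tfrac{\alpha}{p}$, so $\norm{\theta_{k+1}}_\infty \le \norm{\theta((k{+}1)\eta)}_\infty + \tfrac{\alpha}{p} \le M^{1/p}+\tfrac{\alpha}{p}$, i.e. $\theta_{k+1}\in\mathcal B$, closing the induction. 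Finally, with $n = \lfloor t/\eta\rfloor$ so $|t-n\eta|<\eta$ and $\norm{\dot\theta}_2 = \norm{\nabla L(\theta)}_2 \le G$ on $\mathcal B$,
\[
  \norm{\theta_n - \theta(t)}_2 \le \norm{e_n}_2 + \norm{\theta(n\eta)-\theta(t)}_2 \le \tfrac{\eta G}{2}e^{\ell t} + G\eta \le \epsilon ,
\]
the last inequality using $\eta \le \epsilon\cdot\tfrac{1}{C_1}e^{-C_2 t}$. I expect Step 4 — the joint induction that keeps the discrete trajectory inside $\mathcal B$, and the careful tracking of constants so that the stated $C_1,C_2$ genuinely suffice — to be the main obstacle; the remaining steps are bookkeeping with the explicit formulas of \Cref{psi_integral,psi_bound} and the $C^2$‑smoothness of $L$.
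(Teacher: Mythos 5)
Your proposal follows essentially the same route as the paper: the paper likewise confines the flow to the box $\norm{\theta(s)}_\infty^p \le M$ via the conservation laws extracted in the proof of \Cref{psi_integral} (this is exactly \Cref{theta_bound}), bounds $\nabla L$ and $\nabla^2 L$ on the $\ell^\infty$-tube of radius $\alpha/p$ around the flow (\Cref{L_bounds}, yielding the same $40$ and $50$), and then runs precisely the joint-induction forward-Euler argument you sketch in Steps~3--4, packaged there as the standalone \Cref{euler_method}. The only discrepancies are bookkeeping slips — $(1+1/p)$ rather than $(1+2/p)$ in Step~1, and in Step~4 one should retain the $-1$ in $\tfrac{\eta G}{2}\bigl(e^{\ell t}-1\bigr)$ so that the final sum telescopes to $\eta G e^{\ell t}\le\epsilon$ rather than the $\tfrac32\epsilon$ your stated form produces — neither of which affects the validity of the approach.
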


To prove \Cref{grad_desc_bound} using \Cref{euler_method}, we will need to bound $\nabla L$ and $\nabla^2 L$ close to the gradient flow path. We will first bound $\theta(t)$, which will be used to bound $\nabla L$ and $\nabla^2 L$.

\begin{lemma}\label{theta_bound}
  Let $\theta$ and $\psi_{\alpha}$ be as defined in \eqref{eq:param}, evolving by \eqref{eq:grad_flow}, for some $p \ge 2$ and $\alpha > 0$. Then $\norm{\theta(t)}_\infty^p \le \norm{\psi_{\alpha}(t)}_\infty + \alpha^p \le \frac{2\sqrt N}{\sigma_{\min}(A)}\norm{y}_2 + \alpha^p$ for all $t\geq 0$.
\end{lemma}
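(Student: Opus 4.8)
The claim is a pair of inequalities: first $\norm{\theta(t)}_\infty^p \le \norm{\psi_{\alpha}(t)}_\infty + \alpha^p$, and second $\norm{\psi_{\alpha}(t)}_\infty + \alpha^p \le \frac{2\sqrt N}{\sigma_{\min}(A)}\norm{y}_2 + \alpha^p$. The second inequality is immediate from \Cref{psi_bound}, which already gives $\norm{\psi_{\alpha}(t)}_\infty \le \frac{2\sqrt N}{\sigma_{\min}(A)}\norm{y}_2$ for all $t \ge 0$; I would just cite it. So the work is entirely in the first inequality, which is a componentwise statement about the two branches $\theta_+(t)$ and $\theta_-(t)$.

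**Main step.** Fix $i \in \{1,\dots,N\}$ and a time $t \ge 0$. I would use the explicit solution of the flow from \Cref{psi_integral}: writing $I_i \coloneqq C\int_0^t [r(u)]_i\,du$ (with $C = 4$ if $p=2$ and $C = p(p-2)\alpha^{p-2}$ if $p>2$), we have $[\theta_+(t)]_i^p = \alpha^p(1 - \tfrac{p-2}{p}I_i)^{-p/(p-2)}$ when $p > 2$ and $[\theta_+(t)]_i^p = \alpha^p e^{2I_i \cdot (\text{appropriate factor})}$ when $p=2$, from the formulas $[\theta_+(T)]_i = \alpha\exp(2\int_0^T[r]_i)$ and $[\theta_+(T)]_i = (\alpha^{2-p} - p(p-2)\int_0^T[r]_i)^{-1/(p-2)}$ in the proof of \Cref{psi_integral}. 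Similarly $[\theta_-(t)]_i^p$ has the same form with the sign of $I_i$ flipped. The key elementary observation is that both $[\theta_+(t)]_i^p$ and $[\theta_-(t)]_i^p$ are nonnegative, and that $[\psi_\alpha(t)]_i = [\theta_+(t)]_i^p - [\theta_-(t)]_i^p$. Hence for each $i$,
\[
  \min\{[\theta_+(t)]_i^p, [\theta_-(t)]_i^p\} \cdot \text{(something)}
\]
— more cleanly: since the product $[\theta_+(t)]_i^p \cdot [\theta_-(t)]_i^p = \alpha^{2p}$ (the $\pm I_i$ exponents cancel, in both the $p=2$ and $p>2$ cases this is a direct computation), we get by AM–GM that $\min\{[\theta_+(t)]_i^p,[\theta_-(t)]_i^p\} \le \alpha^p$. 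Therefore
\[
  \max\{[\theta_+(t)]_i^p, [\theta_-(t)]_i^p\} = \min\{\cdots\} + \bigl|[\theta_+(t)]_i^p - [\theta_-(t)]_i^p\bigr| \le \alpha^p + \bigl|[\psi_\alpha(t)]_i\bigr| \le \alpha^p + \norm{\psi_\alpha(t)}_\infty.
\]
Taking the max over $i$ of the left side gives $\norm{\theta(t)}_\infty^p \le \norm{\psi_\alpha(t)}_\infty + \alpha^p$, which is the first inequality. (One should double-check the edge case: for $p>2$ one must confirm $1 - \tfrac{p-2}{p}I_i > 0$ so the power is well-defined, but this is guaranteed since $\theta_+(t)$ stays positive along the flow, as noted implicitly in \Cref{psi_integral}.)

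**Anticipated obstacle.** The only mildly delicate point is verifying the product identity $[\theta_+(t)]_i^p [\theta_-(t)]_i^p = \alpha^{2p}$ cleanly in both the $p=2$ and $p>2$ regimes and handling the case $p>2$ where one of the factors could in principle blow up if the integral $\int_0^t [r]_i$ were too large; but since $\theta_\pm$ solves the ODE globally in time with positive values, the relevant quantities $\alpha^{2-p} \mp p(p-2)\int_0^t[r]_i$ remain positive, so the product identity holds verbatim. Everything else is AM–GM and the triangle inequality, so I expect this to be a short proof; the substantive content is really just packaging \Cref{psi_integral} and \Cref{psi_bound} together.
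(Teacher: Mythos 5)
Your structure is right---the second inequality is exactly \Cref{psi_bound}, and for the first you correctly reduce to showing $\min\{[\theta_+(t)]_i^p, [\theta_-(t)]_i^p\} \le \alpha^p$ and then invoke $\max\{x,y\} = \min\{x,y\} + |x-y|$. The gap is the claimed conserved quantity $[\theta_+(t)]_i^p[\theta_-(t)]_i^p = \alpha^{2p}$, which holds only when $p=2$. For $p > 2$, from \eqref{eq:pg2_theta} one computes
\[
[\theta_+(t)]_i^p\,[\theta_-(t)]_i^p = \left(\left(\alpha^{2-p}\right)^2 - \left(p(p-2)\int_0^t [r(u)]_i\,du\right)^2\right)^{-\frac{p}{p-2}},
\]
and since $-p/(p-2)<0$ this is strictly \emph{greater} than $\alpha^{2p}$ whenever $\int_0^t [r(u)]_i\,du \ne 0$. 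The ``$\pm$ exponents cancel'' intuition is specific to $p=2$, where $e^{2I}e^{-2I}=1$; for $p>2$ one has $(a-b)^{-1/(p-2)}(a+b)^{-1/(p-2)}=(a^2-b^2)^{-1/(p-2)}$, which is not $(a^2)^{-1/(p-2)}$. Consequently your AM--GM step yields only $\min\{[\theta_+]_i^p,[\theta_-]_i^p\} \le \sqrt{[\theta_+]_i^p[\theta_-]_i^p}$, which need not be $\le \alpha^p$. The quantity the flow actually conserves for $p>2$ is additive, not multiplicative: $[\theta_+(t)]_i^{2-p}+[\theta_-(t)]_i^{2-p}=2\alpha^{2-p}$, read off directly from \eqref{eq:pg2_theta}. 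From this the needed $\min$ bound does follow: both summands are positive, so at least one of them, say $[\theta_+]_i^{2-p}$, is $\ge \alpha^{2-p}$, and since $2-p<0$ this forces $[\theta_+]_i\le\alpha$, hence $\min\{[\theta_+]_i^p,[\theta_-]_i^p\}\le\alpha^p$. With the false product invariant replaced by this additive one, your proof goes through; the paper's own argument uses the same additive invariant but wraps it in a direct contradiction rather than separating out the $\min$ bound.
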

\begin{proof}
  Let $t \geq 0$ and $i \in \{1,\ldots,N\}$. In the proof of \Cref{psi_integral} we derived explicit expressions \eqref{eq:p2_theta} and \eqref{eq:pg2_theta} for $\theta_+(t)$ and $\theta_-(t)$. We observe the following elementwise invariants for $i \in \{1,\dots,N\}$,
  \begin{alignat*}{2}
    [\theta_+(t)]_i\cdot [\theta_-(t)]_i &= \alpha^2 &&\text{ if } p = 2\\
    [\theta_+(t)]_i^{2-p} + [\theta_-(t)]_i^{2-p} &= 2\alpha^{2-p} \quad &&\text{ if } p > 2.
  \end{alignat*}
  Consider the case where $p = 2$, and assume (for a contradiction) that $[\theta_+(t)]_i^2 > \norm{\psi_{\alpha}(t)}_\infty + \alpha^2$ for some $i \in \{1,\dots,N\}$. Then 
  \begin{align*}
    [\psi_{\alpha}(t)]_i = [\theta_+(t)]_i^2 - [\theta_-(t)]_i^2 = [\theta_+(t)]_i^2 - \frac{\alpha^4}{[\theta_+(t)]_i^2} > \norm{\psi_{\alpha}(t)}_\infty + \alpha^2 - \frac{\alpha^4}{\alpha^2} = \norm{\psi_{\alpha}(t)}_\infty,
  \end{align*}
    which is a contradiction. A similar argument can be used to show that $[\theta_-(t)]_i^2 \leq \norm{\psi_{\alpha}(t)}_\infty + \alpha^2$.

    Next, consider the case where $p > 2$, and assume $[\theta_+(t)]_i^p > \norm{\psi_{\alpha}(t)}_\infty + \alpha^p$. Then
  \begin{align*}
    \norm{\psi_{\alpha}(t)}_\infty &\ge [\psi_{\alpha}(t)]_i = [\theta_+(t)]_i^p - [\theta_-(t)]_i^p = [\theta_+(t)]_i^p - \left(2\alpha^{2-p}-[\theta_+(t)]_i^{2-p}\right)^{-\frac{p}{p-2}}\\
                                   &> \norm{\psi_{\alpha}(t)}_\infty + \alpha^p - \left(2\alpha^{2-p}-\alpha^{2-p}\right)^{-\frac{p}{p-2}} = \norm{\psi_{\alpha}(t)}_\infty,
  \end{align*}
  which is again a contradiction. Again, a similar argument can be used to show that $[\theta_-(t)]_i^2 \leq \norm{\psi_{\alpha}(t)}_\infty + \alpha^2$. Finally, since our choices of $t$ and $i$ were arbitrary, this holds for all $t\geq0$ and $i \in \{1,\ldots,N\}$.
 Using \Cref{psi_bound} we get the final inequality $\norm{\psi_{\alpha}(t)}_\infty + \alpha^p \le \frac{2\sqrt N}{\sigma_{\min}(A)}\norm{y}_2 + \alpha^p$.
\end{proof}

We can now bound $\nabla L$ and $\nabla^2 L$ to apply \Cref{euler_method}.
\begin{lemma}\label{L_bounds}
  Let $A \in \R^{m\times N}$ with rank$(A) = m$ and $y \in \R^m$. Let $\psi_{\alpha}(t)$ be the state of the gradient flow problem \eqref{eq:grad_flow} for some $p \ge 2, \alpha > 0$ at time $t$. Suppose $\hat \theta \in \R^{2N}$ satisfies $\norm{\hat\theta-\theta(t)}_\infty \le \frac{\alpha}{p}$ for some $t \in [0,\infty)$. Then
    \begin{align*}
      \norm{\nabla L(\hat\theta)}_2 &\le 40p\sqrt N\norm{A}_\mathrm{op}^2 M^\frac{2p-1}{p},\quad\text{and}
      &\norm{\nabla^2 L(\hat\theta)}_\mathrm{op} \le 50p^2\sqrt N\norm{A}_\mathrm{op}^2 M^\frac{2p-2}{p},
    \end{align*}
    where $M = \frac{2\sqrt N}{\sigma_{\min}(A)}\norm{y}_2+\alpha^p$.
  \end{lemma}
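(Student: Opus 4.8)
The plan is to put $L$ in Gauss--Newton form, bound $\|\hat\theta\|_\infty$ in terms of $M$ via \Cref{theta_bound}, bound the residual $A^\top(A\hat\psi-y)$, and then assemble the two estimates with crude universal constants; here and below $\hat\psi:=|\hat\theta_+|^p-|\hat\theta_-|^p$. Writing $\psi(\theta)=|\theta_+|^p-|\theta_-|^p$ and letting $D\psi(\theta)\in\R^{N\times 2N}$ denote its Jacobian, the chain rule gives $\nabla L(\theta)=D\psi(\theta)^\top A^\top\big(A\psi(\theta)-y\big)$ and
\[
\nabla^2 L(\theta)=D\psi(\theta)^\top A^\top A\,D\psi(\theta)+\sum_{k=1}^N\big[A^\top(A\psi(\theta)-y)\big]_k\,D^2\psi_k(\theta),
\]
with $D^2\psi_k$ the Hessian of the $k$-th component of $\psi$. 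Since $\psi_k$ depends only on $(\theta_+)_k$ and $(\theta_-)_k$, the matrix $D\psi(\theta)$ is built from two $N\times N$ diagonal blocks whose entries have modulus $p|(\theta_\pm)_k|^{p-1}$, so $\|D\psi(\theta)\|_{\mathrm{op}}\le\sqrt2\,p\,\|\theta\|_\infty^{p-1}$; likewise each $D^2\psi_k(\theta)$ has only the two nonzero diagonal entries $\pm p(p-1)|(\theta_\pm)_k|^{p-2}$ — valid for all $p\ge2$, with the convention $0^0=1$ in the case $p=2$ — so the residual sum $\sum_k[A^\top(A\psi-y)]_k D^2\psi_k$ is itself a \emph{diagonal} matrix, with operator norm at most $p(p-1)\|\theta\|_\infty^{p-2}\,\|A^\top(A\psi-y)\|_\infty$.

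Next I would bound the inputs. By \Cref{theta_bound}, $\|\theta(t)\|_\infty\le M^{1/p}$, and $M\ge\alpha^p$ gives $\alpha\le M^{1/p}$, hence $\|\hat\theta\|_\infty\le\|\theta(t)\|_\infty+\alpha/p\le M^{1/p}(1+1/p)$, so that $\|\hat\theta\|_\infty^p\le M(1+1/p)^p<eM$ (using $(1+1/p)^p<e$). Consequently $\|\hat\psi\|_\infty\le\|\hat\theta\|_\infty^p<eM$ and $\|\hat\psi\|_2\le\sqrt N\,\|\hat\psi\|_\infty<\sqrt N\,eM$, while the definition $M=\tfrac{2\sqrt N}{\sigma_{\min}(A)}\|y\|_2+\alpha^p$ gives $\|y\|_2\le\tfrac{\sigma_{\min}(A)}{2\sqrt N}M\le\tfrac{\|A\|_{\mathrm{op}}}{2\sqrt N}M$. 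Combining these and using $\tfrac{1}{2\sqrt N}\le\tfrac{\sqrt N}{2}$,
\[
\|A^\top(A\hat\psi-y)\|_\infty\le\|A^\top(A\hat\psi-y)\|_2\le\|A\|_{\mathrm{op}}\big(\|A\|_{\mathrm{op}}\|\hat\psi\|_2+\|y\|_2\big)<\big(e+\tfrac12\big)\sqrt N\,M\,\|A\|_{\mathrm{op}}^2 .
\]

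Finally I would assemble. For the gradient, the explicit formula and the diagonal-block bound on $D\hat\psi$ give $\|\nabla L(\hat\theta)\|_2\le\sqrt2\,p\,\|\hat\theta\|_\infty^{p-1}\,\|A^\top(A\hat\psi-y)\|_2<\sqrt2\,p\,(eM)^{(p-1)/p}\,(e+\tfrac12)\sqrt N\,M\,\|A\|_{\mathrm{op}}^2\le\sqrt2\,e(e+\tfrac12)\,p\sqrt N\,\|A\|_{\mathrm{op}}^2\,M^{(2p-1)/p}$, and $\sqrt2\,e(e+\tfrac12)<40$. For the Hessian, the Gauss--Newton term is $\le\|D\hat\psi\|_{\mathrm{op}}^2\|A\|_{\mathrm{op}}^2\le2p^2(eM)^{2(p-1)/p}\|A\|_{\mathrm{op}}^2\le2e^2p^2\|A\|_{\mathrm{op}}^2 M^{(2p-2)/p}$, and the diagonal residual term is $\le p(p-1)(eM)^{(p-2)/p}(e+\tfrac12)\sqrt N\,M\,\|A\|_{\mathrm{op}}^2\le e(e+\tfrac12)p^2\sqrt N\,\|A\|_{\mathrm{op}}^2 M^{(2p-2)/p}$; adding them and using $p^2\le p^2\sqrt N$ gives $\|\nabla^2 L(\hat\theta)\|_{\mathrm{op}}<(3e^2+\tfrac e2)\,p^2\sqrt N\,\|A\|_{\mathrm{op}}^2 M^{(2p-2)/p}<50\,p^2\sqrt N\,\|A\|_{\mathrm{op}}^2 M^{(2p-2)/p}$, as claimed. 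The only genuine obstacle is bookkeeping: one must track how the exponents $(p-1)/p$, $2(p-1)/p$, $(p-2)/p$ combine with the extra factor $M$ coming from the residual to produce exactly $M^{(2p-1)/p}$ and $M^{(2p-2)/p}$, and — crucially for obtaining $\sqrt N$ rather than $N$ in the Hessian bound — one must exploit that the residual term $\sum_k[A^\top(A\psi-y)]_k D^2\psi_k$ is diagonal, so that $\|A^\top(A\psi-y)\|_\infty$ (and not its $\ell^1$ norm) appears.
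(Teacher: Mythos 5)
Your proof is correct and follows essentially the same route as the paper: the Gauss--Newton decomposition you write down is exactly the paper's explicit block formulas for $\nabla L(\hat\theta)$ and $\nabla^2 L(\hat\theta)$, both arguments exploit that the second-order residual term is diagonal (so $\|A^\top(A\hat\psi-y)\|_\infty$ rather than an $\ell^1$ norm enters), and both invoke \Cref{theta_bound} to get $\|\hat\theta\|_\infty^p \le eM$. Your constants are marginally tighter (e.g.\ $\|D\psi\|_{\mathrm{op}}\le\sqrt2\,p\|\theta\|_\infty^{p-1}$ instead of $2p\|\theta\|_\infty^{p-1}$, and $|a^p-b^p|\le\max(a,b)^p$ instead of $\le a^p+b^p$), but this is cosmetic and both land comfortably under the stated $40$ and $50$.
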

  \begin{proof}
    Recall that $\hat\theta = (\hat\theta_+, \hat\theta_-)$, 
    \[ L(\hat\theta) = \frac{1}{2}\norm{A(|\hat\theta_+|^p-|\hat\theta_-|^p)-y}_2^2,
    \quad \text{and let }\quad r(\hat\theta) = A\T\left(A(|\hat\theta_+|^p-|\hat\theta_-|^p)-y\right).
  \]
  Then
    \begin{align*}
      \nabla L(\hat\theta) &= p
      \begin{bmatrix*}[r]
        \diag\left(\sign(\hat\theta_+) \odot |\hat\theta_+|^{p-1}\right)\\
        -\diag\left(\sign(\hat\theta_-) \odot |\hat\theta_-|^{p-1}\right)
      \end{bmatrix*}
      r(\hat\theta), 
      \\
      \nabla^2 L(\hat\theta) &= p^2
      \begin{bmatrix*}[r]
        \diag\left(\sign(\hat\theta_+) \odot |\hat\theta_+|^{p-1}\right)\\
        -\diag\left(\sign(\hat\theta_-) \odot |\hat\theta_-|^{p-1}\right)
      \end{bmatrix*} 
      A^\top A
      \begin{bmatrix*}[r]
        \diag\left(\sign(\hat\theta_+) \odot |\hat\theta_+|^{p-1}\right)\\
        -\diag\left(\sign(\hat\theta_-) \odot |\hat\theta_-|^{p-1}\right)
      \end{bmatrix*} \T + \\
      p(p-1)&\begin{bmatrix*}
        \diag\left(|\hat\theta_+|^{p-2} \odot r(\hat\theta)\right) & 0\\
        0 & -\diag\left(|\hat\theta_-|^{p-2} \odot r(\hat\theta)\right)
      \end{bmatrix*},
    \end{align*}
    where $\odot$ denotes the elementwise (Hadamard) product.
      
    We bound the norm
    \begin{align*}
      \norm{\nabla L(\hat\theta)}_2 &\le p
      \norm{\begin{matrix*}[r]
        \diag\left(\sign(\hat\theta_+) \odot |\hat\theta_+|^{p-1}\right)\\
        -\diag\left(\sign(\hat\theta_-) \odot |\hat\theta_-|^{p-1}\right)
      \end{matrix*}}_\mathrm{op}
      \norm{r(\hat\theta)}_2\\
                                    &\le p \left(\norm{\diag\left(|\hat\theta_+|^{p-1}\right)}_\mathrm{op} + 
                                    \norm{\diag\left(|\hat\theta_-|^{p-1}\right)}_\mathrm{op}\right)\norm{r(\hat\theta)}_2
      \le 2p\big\|\hat\theta\big\|_\infty^{p-1}\big\|r(\hat\theta)\big\|_2.
    \end{align*}
    By a similar argument
    \begin{align*}
      \norm{\nabla^2 L(\hat\theta)}_\mathrm{op} &\le 4p^2\norm{A}_\mathrm{op}^2\big\|\hat\theta\big\|_\infty^{2p-2} + p(p-1)\big\|\hat\theta\big\|_\infty^{p-2}\big\|r(\hat\theta)\big\|_\infty.
  \end{align*}
  To bound $r(\hat\theta)$ we will first bound $\big\|\hat\theta\big\|_\infty^p$. By the assumption on $\hat\theta$ we have $t \ge 0$ such that $\norm{\hat\theta-\theta(t)}_\infty \le \frac{\alpha}{p}$. Combining this with \Cref{theta_bound}, yields
  \begin{align*}
    \big\|\hat\theta\big\|_\infty^p &\le \left(\big\|\theta(t)\big\|_\infty + \big\|\hat\theta-\theta(t)\big\|_\infty\right)^p \le \left(\big\|\theta(t)\big\|_\infty + \frac{\alpha}{p}\right)^p
                                    \le \left(\left(\frac{2\sqrt N}{\sigma_{\min}(A)}\norm{y}_2+\alpha^p\right)^{\frac{1}{p}} + \frac{\alpha}{p}\right)^p.
  \end{align*}
  Next, we use $M \coloneqq \frac{2\sqrt N}{\sigma_{\min}(A)}\norm{y}_2+\alpha^p$ to simplify this expression
  \begin{align*}
    \left(\left(\frac{2\sqrt N}{\sigma_{\min}(A)}\norm{y}_2+\alpha^p\right)^{\frac{1}{p}} + \frac{\alpha}{p}\right)^p = \left(M^\frac{1}{p} + \frac{\alpha}{p}\right)^p \le \left(M^\frac{1}{p} + \frac{1}{p}M^\frac{1}{p}\right)^p \le eM.
  \end{align*}
  Which means $\big\|\hat\theta\big\|_\infty^p \le eM$. Using this, we may finally bound
  \begin{align*}
    \norm{r(\hat\theta)}_\infty &\le \norm{r(\hat\theta)}_2 = \norm{A\T\left(A(|\hat\theta_+|^p-|\hat\theta_-|^p)-y\right)}_2 \le \norm{A}_{\mathrm{op}}\norm{y}_2 + \norm{A}_{\mathrm{op}}^2\norm{|\hat\theta_+|^p-|\hat\theta_-|^p}_2\\
                                &\le \norm{A}_{\mathrm{op}}\norm{y}_2 + 2\sqrt N\norm{A}_{\mathrm{op}}^2\norm{\hat\theta}_\infty^p \le \norm{A}_{\mathrm{op}}\norm{y}_2 + 2e\sqrt N\norm{A}_{\mathrm{op}}^2M \le 7\sqrt N\norm{A}_{\mathrm{op}}^2M.
  \end{align*}
  For the last inequality we used that $\norm{y}_2 \le \norm{A}_{\mathrm{op}}M$.
  We may now use the bounds on $\hat\theta$ and $r(\hat\theta)$ to bound the derivatives
  \begin{align*}
    \norm{\nabla L(\hat\theta)}_2 &\le 2p\big\|\hat\theta\big\|_\infty^{p-1}\big\|r(\hat\theta)\big\|_2 \le 2p(eM)^\frac{p-1}{p} 7\sqrt N\norm{A}_{\mathrm{op}}^2 M\\
                                  &\le 40p\sqrt N\norm{A}_{\mathrm{op}}^2 M^\frac{2p-1}{p}\\
    \norm{\nabla^2 L(\hat\theta)}_\mathrm{op} &\le 4p^2\norm{A}_{\mathrm{op}}^2\big\|\hat\theta\big\|_\infty^{2p-2} + p(p-1)\big\|\hat\theta\big\|_\infty^{p-2}\big\|r(\hat\theta)\big\|_\infty\\
                                       &\le 4p^2\norm{A}_{\mathrm{op}}^2(eM)^\frac{2p-2}{p} + p(p-1)(eM)^\frac{p-2}{p}7\sqrt N\norm{A}_{\mathrm{op}}^2M\\
                                       &\le 50 p^2 \sqrt N\norm{A}_{\mathrm{op}}^2 M^\frac{2p-2}{p}.
  \end{align*}
\end{proof}

\begin{proof}[Proof of \Cref{grad_desc_bound}]
  The claim follows directly from applying \Cref{euler_method} with $f = -\nabla L$ and $\delta = \frac{\alpha}{p}$. Because $\norm{f}_2 = \norm{\nabla L}_2$ and $\norm{\nabla f}_{\mathrm{op}} = \norm{\nabla^2 L}_{\mathrm{op}}$, we get the required regularity conditions on $f$ from \Cref{L_bounds}.
\end{proof}

\subsection{Proof of \Cref{prop:main1} part \ref{it:m2}}
Now that we have proved \Cref{grad_desc_bound}, we can use it to prove \Cref{prop:grad_desc}, which is a precise version of \Cref{prop:main1} part \ref{it:m2}.
\begin{theorem}\label{prop:grad_desc}
  Let $\alpha > 0, p \ge 2, \epsilon > 0$, and $t \in [0,\infty)$ be given. Let
  \begin{align*}
    0 < \eta &\le \frac{\min\{\hat\epsilon,\ \alpha/p\}}{40p\sqrt N \norm{A}_\mathrm{op}^2 K^{2p-1}}e^{-50p^2\sqrt N\norm{A}_\mathrm{op}^2 K^{2p-2}t},
             && K = \left(\frac{2\sqrt N \norm{y}_2}{\sigma_{\min}(A)}+\alpha^p\right)^\frac{1}{p},\\
    J &= \lfloor t / \eta \rfloor, && 
    \hat\epsilon = \min\left\{K, \frac{\epsilon}{2^p p N K^{p-1}}\right\}.
  \end{align*}
  Furthermore, let $\hat\psi \coloneqq |\theta_{+,J}|^p-|\theta_{-,J}|^p$, where $\theta_{J} = (\theta_{+,J},\theta_{-,J})$ is the $J$th iterate of \eqref{eq:grad_desc} with step length $\eta$. Then
  \[ \norm{\hat\psi-\psi_\alpha(t)}_2 \le \epsilon. \]
\end{theorem}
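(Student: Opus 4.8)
The plan is to reduce the statement to \Cref{grad_desc_bound} (which controls the gradient descent iterates in $\theta$-space) together with \Cref{theta_bound} (which bounds the flow), and then to pay a single Lipschitz factor to pass from $\theta$-space to $\psi$-space. First I would note that with $M=\frac{2\sqrt N}{\sigma_{\min}(A)}\norm{y}_2+\alpha^p$ one has $K=M^{1/p}$, hence $K^{2p-1}=M^{(2p-1)/p}$ and $K^{2p-2}=M^{(2p-2)/p}$, so the hypothesis on $\eta$ in \Cref{prop:grad_desc} is exactly the hypothesis of \Cref{grad_desc_bound} with the accuracy parameter there taken to be $\hat\epsilon$ (and $\hat\epsilon>0$ since $\epsilon>0$ and $K>0$). \Cref{grad_desc_bound} then yields $\norm{\theta_J-\theta(t)}_2\le\hat\epsilon$.

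Next I would obtain an a priori $\ell^\infty$ bound on the two points being compared. \Cref{theta_bound} gives $\norm{\theta(t)}_\infty^p\le M=K^p$, i.e.\ $\norm{\theta(t)}_\infty\le K$, and since $\hat\epsilon\le K$ by the definition of $\hat\epsilon$, it follows that $\norm{\theta_J}_\infty\le\norm{\theta(t)}_\infty+\norm{\theta_J-\theta(t)}_\infty\le 2K$. Thus every coordinate of $\theta_J$ and of $\theta(t)$ lies in $[-2K,2K]$, and on that interval $s\mapsto|s|^p$ is differentiable with $\big|\tfrac{d}{ds}|s|^p\big|=p|s|^{p-1}\le p(2K)^{p-1}$, hence Lipschitz with that constant. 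Writing $\hat\psi=|\theta_{+,J}|^p-|\theta_{-,J}|^p$ and $\psi_\alpha(t)=|\theta_+(t)|^p-|\theta_-(t)|^p$ (the absolute values being harmless since the flow keeps $\theta$ nonnegative), the mean value theorem applied coordinatewise gives
\[
  |\hat\psi_i-\psi_\alpha(t)_i|\le p(2K)^{p-1}\big(|\theta_{+,J,i}-\theta_{+,i}(t)|+|\theta_{-,J,i}-\theta_{-,i}(t)|\big),\qquad i=1,\dots,N.
\]

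Finally I would sum these up: bounding $\norm{\cdot}_2\le\sqrt N\norm{\cdot}_\infty$ and $\norm{\theta_J-\theta(t)}_\infty\le\norm{\theta_J-\theta(t)}_2\le\hat\epsilon$, one gets $\norm{\hat\psi-\psi_\alpha(t)}_2\le 2^p\,p\,\sqrt N\,K^{p-1}\hat\epsilon$, with room to spare in the constant. Since $\hat\epsilon\le\frac{\epsilon}{2^p pNK^{p-1}}$, the right-hand side is at most $\epsilon/\sqrt N\le\epsilon$, which is the claim. The only genuinely delicate point — and it is minor — is that $s\mapsto|s|^p$ is not globally Lipschitz, so the a priori bound $\norm{\theta_J}_\infty\le 2K$ (which is precisely why $\hat\epsilon$ is capped at $K$) is what makes the Lipschitz step legitimate; the remainder is bookkeeping with the constants, which are chosen in the statement exactly to absorb the factors $2^p$, $p$, $N$ and $K^{p-1}$.
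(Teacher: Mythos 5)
Your proof is correct and takes essentially the same route as the paper: invoke \Cref{grad_desc_bound} with accuracy parameter $\hat\epsilon$ to get $\norm{\theta_J-\theta(t)}_2\le\hat\epsilon$, then pass from $\theta$-space to $\psi$-space via a local Lipschitz estimate for $s\mapsto|s|^p$ using the a priori bound $\norm{\theta(t)}_\infty\le K$ from \Cref{theta_bound} and the cap $\hat\epsilon\le K$. The paper simply factors this second step out as \Cref{theta_phi_bound} (deriving $|u^p-v^p|\le p\max\{u,v\}^{p-1}|u-v|$ from the power mean inequality where you use the mean value theorem), so the substance is the same.
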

\begin{proof}
  We choose $\eta$ such that $\norm{\theta_{\lfloor t / \eta \rfloor}-\theta(t)}_2 \le \hat\epsilon\ $ by \Cref{grad_desc_bound}. Then, by the choice of $\hat\epsilon$ and $J = \lfloor t / \eta \rfloor$, \Cref{theta_phi_bound} gives $\|\hat{\psi} - \psi_{\alpha}(t)\|_2 \leq \epsilon$.
\end{proof}

\begin{lemma}\label{theta_phi_bound}
  Let $\hat\theta = (\hat\theta_+, \hat\theta_-)  \in \R^{2N}$, $p \ge 2$, $\epsilon > 0$ and $t \ge 0$ satisfy $\norm{\hat\theta-\theta(t)}_\infty \le \min\left\{K, \frac{\epsilon}{2^p p N K^{p-1}}\right\}$ where $K \coloneqq \left(\frac{2\sqrt N}{\sigma_{\min}(A)}\norm{y}_2 + \alpha^p\right)^\frac{1}{p}$ and $\theta(t)$ is given by \eqref{eq:grad_flow}. Then $\|\hat\psi - \psi_{\alpha}(t)\|_2 \le \epsilon$ where $\hat\psi \coloneqq |\hat\theta_+|^p-|\hat\theta_-|^p$ and $\psi_{\alpha}(t)$ is given by \eqref{psi_def}.
\end{lemma}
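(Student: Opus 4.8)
The plan is to reduce everything to a componentwise, \emph{localized} Lipschitz estimate for the map $x \mapsto |x|^p$. Since along the gradient flow $\theta_+(t),\theta_-(t) \ge 0$, we have $\psi_\alpha(t) = \theta_+(t)^p - \theta_-(t)^p = |\theta_+(t)|^p - |\theta_-(t)|^p$, so the triangle inequality gives
\[ \norm{\hat\psi - \psi_{\alpha}(t)}_2 \le \norm{|\hat\theta_+|^p - |\theta_+(t)|^p}_2 + \norm{|\hat\theta_-|^p - |\theta_-(t)|^p}_2, \]
and it suffices to bound each of the two terms by $\epsilon/2$.

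First I would record the elementary inequality $\big||a|^p - |b|^p\big| \le p\max\{|a|,|b|\}^{p-1}|a-b|$, valid for $p \ge 2$ and all $a,b \in \R$. This follows from the mean value theorem applied to $x \mapsto |x|^p$, whose derivative $p|x|^{p-1}\sign(x)$ is bounded in absolute value by $p\max\{|a|,|b|\}^{p-1}$ on the segment with endpoints $a$ and $b$.

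Next I would control the magnitudes involved. By \Cref{theta_bound} we have $\norm{\theta(t)}_\infty^p \le \frac{2\sqrt N}{\sigma_{\min}(A)}\norm{y}_2 + \alpha^p = K^p$, i.e. $\norm{\theta(t)}_\infty \le K$. Combined with the first part of the hypothesis, $\norm{\hat\theta - \theta(t)}_\infty \le K$, this gives $\norm{\hat\theta}_\infty \le 2K$, and hence $\max\{|\hat\theta_{\pm,i}|,\,|\theta_{\pm,i}(t)|\} \le 2K$ for every index $i$. Inserting this into the elementary inequality above, and using the second part of the hypothesis, $\norm{\hat\theta - \theta(t)}_\infty \le \frac{\epsilon}{2^p p N K^{p-1}}$, yields for each $i$
\[ \big||\hat\theta_{\pm,i}|^p - |\theta_{\pm,i}(t)|^p\big| \le p(2K)^{p-1}\cdot \frac{\epsilon}{2^p p N K^{p-1}} = \frac{\epsilon}{2N}. \]
Passing from the $\ell^\infty$ bound to the $\ell^2$ bound via $\norm{v}_2 \le \sqrt N\,\norm{v}_\infty$ gives $\norm{|\hat\theta_\pm|^p - |\theta_\pm(t)|^p}_2 \le \sqrt N \cdot \frac{\epsilon}{2N} = \frac{\epsilon}{2\sqrt N} \le \frac{\epsilon}{2}$, and summing the two contributions in the triangle inequality completes the proof.

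The only real obstacle — and it is a mild one — is that $x \mapsto |x|^p$ is not globally Lipschitz, so one cannot apply a single Lipschitz constant: one must first bound $\norm{\theta(t)}_\infty$ through \Cref{theta_bound}, then bound $\norm{\hat\theta}_\infty$ using the ``$\le K$'' part of the hypothesis, and only afterwards invoke the local Lipschitz constant $p(2K)^{p-1}$. The role of the seemingly arbitrary constant $2^p p N K^{p-1}$ in the hypothesis is precisely to cancel this Lipschitz constant together with the $\sqrt N$ from the norm comparison and the factor $2$ from splitting into $\theta_+$ and $\theta_-$ parts; everything else is routine arithmetic.
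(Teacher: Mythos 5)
Your proof is correct and follows essentially the same route as the paper: both reduce to the componentwise localized Lipschitz estimate $\big||a|^p-|b|^p\big|\le p\max\{|a|,|b|\}^{p-1}|a-b|$, use \Cref{theta_bound} and the $\le K$ part of the hypothesis to confine everything to $[-2K,2K]$, and then let the constant $2^p p N K^{p-1}$ cancel the local Lipschitz constant. The only cosmetic differences are that you derive the power inequality by the mean value theorem (the paper uses the power mean inequality) and you use the sharper $\norm{\cdot}_2\le\sqrt N\norm{\cdot}_\infty$, giving a slightly better final constant.
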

\begin{proof}
  Using the definitions, the triangle inequality, and $\norm{\cdot}_2 \le N\norm{\cdot}_\infty$, we bound
  \begin{align}
    \|\hat\psi - \psi_{\alpha}(t)\|_2 &= \|(|\hat\theta_+|^p-|\hat\theta_-|^p) - (\theta_+(t)^p-\theta_-(t)^p)\|_2\nonumber\\
                                      &\le \||\hat\theta_+|^p-\theta_+(t)^p\|_2 + \|\theta_-(t)^p-|\hat\theta_-|^p\|_2 \le 2N\norm{|\hat\theta|^p-\theta(t)^p}_\infty.\label{eq:psi_diff_bound}
  \end{align}
  Consider the $i$th term $\big||\hat\theta|_i^p-\theta(t)_i^p\big|$. Using $\theta(t)_i > 0$, we have $\big||\hat\theta|_i-\theta(t)_i\big| \le \norm{\hat\theta-\theta(t)}_\infty$. Furthermore, by \Cref{theta_bound}, we have $\theta(t)_i \le K$.

Next, consider $u^\frac{p-1}{p}v^\frac{1}{p} \le \left(\frac{(p-1)u^p+v^p}{p}\right)^\frac{1}{p}$ for $u,v \in [0,\infty)$ by the power mean inequality. Rearranging, we get $u^p-v^p \le pu^{p-1}(u-v)$ and by symmetry $v^p-u^p \le pv^{p-1}(v-u)$. In combination, $|u^p-v^p| \le p\max\{u,v\}^{p-1}|u-v|$. Inserting $|\hat\theta|_i$ and $\theta(t)_i$, using the bounds from above, and finally the assumed bound on $\norm{\hat\theta-\theta(t)}_\infty$, we get that 
  \begin{align*}
    \big||\hat\theta|_i^p-\theta(t)_i^p\big| &\le p \max\{|\hat\theta|_i,\theta(t)_i\}^{p-1}\big||\hat\theta|_i-\theta(t)_i\big|\\
                                             &\le p \left(K+\big||\hat\theta|_i-\theta(t)_i\big|\right)^{p-1}\big||\hat\theta|_i-\theta(t)_i\big|\\
                                             &\le p \left(K+\norm{\hat\theta-\theta(t)}_\infty\right)^{p-1}\norm{\hat\theta-\theta(t)}_\infty\\
                                             &\le p (2K)^{p-1} \frac{\epsilon}{2^p pNK^{p-1}}
                                             \le \frac{\epsilon}{2N}.
  \end{align*}
  Combining with \eqref{eq:psi_diff_bound}, we proved the lemma
   $ \|\hat\psi - \psi_{\alpha}(t)\|_2 \le 2N\norm{|\hat\theta|^p-\theta(t)^p}_\infty \le \epsilon $.
\end{proof}

\section{Sharpness of the results -- Proof of \Cref{thm:grad_flow} part \ref{it:forall}}
In \Cref{thm:grad_flow} \ref{it:dep_bound}, we prove that the bound in \eqref{e:pe2} holds for a specific matrix $A$. However, it is clear from the proofs that the constant $C_1$ depends on $A$ and that this constant can get arbitrarily large for certain choices of $A$. A prominent question in this respect, is whether this is an artifact of our proof, or whether this is a sharp result. In this section, we shall see that it is indeed sharp. The proof requires some background in the framework behind the Solvability Complexity Index hierarchy which we recall in the next section. 

\subsection{Preliminaries -- Mathematical tools from the SCI hierarchy}

The Solvability Complexity Index (SCI) hierarchy is a mathematical framework designed to classify the intrinsic difficulty of computational problems found in mathematics. The theory is now comprehensive, and thus we mention only certain results \cite{opt_big, SCI, Hansen_JAMS, Hansen2016ComplexityII, CRAS, ben2022computing, paradox22, Ben_Artzi2022, Colbrook_2019, colbrook2019foundations, colbrook2021computing}.  In this section, we will introduce the parts of this framework that are needed to prove our main results. We start by defining what we mean by a computational problem.  
\begin{definition}[Computational problem]\label{d:ComputationalProblem}
Let $\Omega$ be some set, which we call the {domain},
and $\Lambda$ be a set of complex valued functions on $\Omega$ such that for $\iota_1, \iota_2 \in \Omega$, then $\iota_1 = \iota_2$ if and only if $f(\iota_1) = f(\iota_2)$ for all $f \in \Lambda$, called an {evaluation} set. Let $(\mathcal{M},d)$ be a metric space, and finally let $\Xi:\Omega\to \mathcal{M}$ be a function which we call the {problem} function. We call the collection $\{\Xi,\Omega,\mathcal{M},\Lambda\}$ a {computational problem}. When it is clear what $\mathcal{M}$ and $\Lambda$ are, we write $\{\Xi,\Omega\}$ for brevity. 
\end{definition}

\begin{remark}[Multivalued problems]
\label{mulivalued_remark}
Some computational problems, such as the optimization problem \eqref{eq:BP_argmin}, may have more than one solution. In these cases, we abuse notation, and set $d(x,\Xi(\iota))=\mathrm{dist}(x,\Xi(\iota))=\inf_{z\in\Xi(\iota)}d(x,z)$. It should be clear from the context when this is the case. 
\end{remark}

In the above definition, $\Omega$ consists of the set of objects that give rise to the computational problem, whereas $\Xi\colon \Omega\to\mathcal{M}$ is the problem function we are interested in computing. The set $\Lambda$ consists of functions which allow us to read information about the objects in $\Omega$. For example, $\Omega$ could consist of a collection of matrices $A$ and data $y$ in \eqref{eq:BP_argmin}, $\Lambda$ could consist of the pointwise entries of the vectors and matrices in $\Omega$, $\Xi$ could represent the solution set $\mathcal{U}$ in \eqref{eq:BP_argmin} (with the possibility of more than one solution as in \Cref{mulivalued_remark}) and $(\mathcal{M},d)$ could be $\mathbb{R}^N$ with the usual Euclidean metric (or any other suitable metric). In this paper, we restrict our attention to sets $\Lambda = \{f_j\}_{j\in\beta}$ whose cardinality is at most countable. 

Given the definition of a computational problem, we introduce the concept of a general algorithm. This concept was introduced in \cite{SCI, Hansen_JAMS}, and consists of conditions which any reasonable notion of a deterministic algorithm satisfies.   

\begin{definition}[General Algorithm]\label{definition:Algorithm}
Given a  computational problem $\{\Xi,\Omega,\mathcal{M},\Lambda\}$, a {general algorithm} is a mapping $\Gamma:\Omega\to\mathcal{M}$ such that for each $\iota\in\Omega$
\begin{enumerate}[label=(\roman*)]
\item There exists a non-empty finite subset of evaluations $\Lambda_\Gamma(\iota) \subset\Lambda$, \label{property:AlgorithmFiniteInput}
\item The action of $\,\Gamma$ on $\iota$ only depends on $\{\iota_f\}_{f \in \Lambda_\Gamma(\iota)}$ where $\iota_f \coloneqq  f(\iota),$\label{property:AlgorithmDependenceOnInput}
\item For every $\kappa\in\Omega$ such that $\kappa_f=\iota_f$ for every $f\in\Lambda_\Gamma(\iota)$, it holds that $\Lambda_\Gamma(\kappa)=\Lambda_\Gamma(\iota)$.\label{property:AlgorithmSameInputSameInputTaken}\end{enumerate}
\end{definition}

The first condition above, says that a general algorithm can only ask for a finite amount of information. However, the amount of information it reads is allowed to depend on the input, and can thus be chosen adaptively. The second condition says that the output of $\Gamma$ is only allowed to depend on the information it has read, whereas the final condition ensures that general algorithms behaves consistently, given the same information. These three conditions are chosen, as any reasonable definition of algorithm should satisfy the above three clauses. In particular, the above definition is general enough to encompass both Turing machines \cite{turing1937computable} and Blum--Shub--Smale (BSS) machines \cite{BCSS}, but also much more general models of computations.  

\begin{remark}
The generality in \Cref{definition:Algorithm}, serves two purposes. First, it provides the strongest possible impossibility bounds. That is, any statement saying that no algorithm can solve a given problem, holds in any model of computation, including the Turing and BSS models. Second, it simplifies the proofs, as general algorithms have no restrictions on the operations involved.  
\end{remark}

Now let $\{\Xi, \Omega, \mathcal{M}, \Lambda\}$ be a given computational problem. In many areas of mathematics, it is a computational task on it own to obtain the complex numbers $f(\iota)$, for $\iota \in \Omega$ and $f\in \Lambda$. This is for example the case for $\sqrt{\pi}$, $e^{5\pi i}$, or $\sin(2)$. Thus, typically we do not work with the exact numbers $f_j(\iota)$ on a computer, but rather approximations $f_{j,n}(\iota)$, where $f_{j,n}(\iota)\to f_j(\iota)$ as $n\to \infty$. This idea is formalized in the following definition.

\begin{definition}[$\Delta_{1}$-information \cite{opt_big, SCI} ]\label{definition:Lambda_limits}
	Let $\{\Xi, \Omega, \mathcal{M}, \Lambda\}$ be a computational problem. We say that $\Lambda$ has $\Delta_{1}$-information if each $f_j \in \Lambda$ is not available, however, there are mappings $f_{j,n}: \Omega \rightarrow \mathbb{Q} + i \mathbb{Q}$ such that $|f_{j,n}(\iota)-f_j(\iota)|\leq 2^{-n}$ for all $\iota\in\Omega$. Finally, if $\widehat \Lambda$ is a collection of such functions described above such that $\Lambda$ has $\Delta_1$-information, we say that $\widehat \Lambda$ provides $\Delta_1$-information for $\Lambda$. Moreover, we denote the family of all such $\widehat \Lambda$ by $\mathcal{L}^1(\Lambda)$. 
\end{definition}

We typically want to develop algorithms that work for any choice of $\Delta_1$-information. The following definition clarifies how a computational problem with this type of information is defined. 

\begin{definition}[Computational problem with $\Delta_1$-information]
Given $\{\Xi, \Omega, \mathcal{M},\Lambda\}$ with $\Lambda = \{f_j\}_{j\in \beta}$, the corresponding computational problem with $\Delta_1$-information is defined as 
	$
	\{\Xi,\Omega,\mathcal{M},\Lambda\}^{\Delta_1} \coloneqq  \{\widetilde \Xi,\widetilde \Omega,\mathcal{M},\widetilde \Lambda\},
	$ 
where 
\[
\widetilde \Omega = \left\{ \widetilde \iota = \{f_{j,n}(\iota)\}_{j, n\in \beta \times \mathbb{N}} \, : \, \iota \in \Omega, \{f_j\}_{j \in \beta} = \Lambda, |f_{j,n}(\iota)-f_j(\iota)|\leq 2^{-n}\right\},
\]
$\tilde \Xi (\tilde \iota) = \Xi(\iota)$, and $\tilde{\Lambda} = \{\tilde f_{j,n}\}_{(j,n)\in\beta \times \N}$, where $\tilde f_{j,n}(\tilde \iota) = f_{j,n}(\iota)$. Due to \Cref{d:ComputationalProblem}, we know that for each $\tilde{\iota} \in \tilde \Omega$ there is a unique $\iota \in \Omega$. We say that this $\tilde\iota$ \emph{corresponds to} $\iota$.  
\end{definition}

A few comments are in order. First, note that $\tilde \Xi$ is well-defined, since $\tilde \iota$ uniquely identifies $\iota$. Furthermore, note that $\tilde \Omega$ includes all possible instances of $\Delta_1$-information $\widehat \Lambda \in \mathcal{L}^1(\Lambda)$. 
In other words, there could be several $\tilde \iota \in \tilde \Omega$ which correspond to a given $\iota \in \Omega$. Moreover, as we shall see below, we will require an algorithm to work for any $\tilde \iota$, that is, any sequence approximating $\iota$, and not just one. First, however, we discuss how information is read by different algorithms.  We start with Turing machines, whose definition is rather lengthy and can be found in any standard text. See e.g., \cite{arora2009computational}. For readers not familiar with Turing machines, one can think of these as digital computers with no restrictions on their memory. In particular, any computer program that runs on a digital computer can be executed on a Turing machine.

\begin{remark}[General algorithms include oracle Turing machines]
\label{d:OracleTM}
    Given the definition of a Turing machine, an oracle Turing machine for $\{\Xi,\Omega,\mathcal{M}, \Lambda\}^{\Delta_1} = \{\tilde\Xi, \tilde \Omega, \mathcal{M}, \tilde \Lambda\}$, where $\Lambda = \{f_j\}_{j\in \beta\subset \N}$, is a Turing machine that has an oracle input tape that, on input $(j,n) \in \beta\times \N$ (where $j$ and $n$ may be encoded in a finite alphabet), prints the unique finite string representing the rational number $\tilde f_{j,n}(\tilde \iota)$. This is obviously a general algorithm. 
\end{remark}
As general algorithms are not tied down to a particular computational model, we do not need to specify how such algorithms read the information. However, in some cases, to simplify the proofs, we will let our general algorithms utilize Turing machines for parts of the computations. In these cases, if a general algorithm needs to execute a given Turing machine $\mathrm{TM}$ on an irrational input $c$, it is implicitly assumed that the general algorithm executes the Turing machine with an oracle tape as input, consisting of a rational sequence $\{c_k\}_{k\in \N}$, satisfying $|c-c_k|\leq 2^{-k}$ for each $k\in \N$.

A central pillar in computability theory is the concept of a quantity being \emph{computable}. Informally speaking, we would say that a real number $x$ or a function $f$ is computable if it can be approximated to arbitrary accuracy with control of the error. 
Next, we define this concept for oracle Turing machines and general algorithms.  
\begin{remark}[Computable computational problem]\label{def:comp_prob}
    Given a computational problem $\{\tilde \Xi, \tilde \Omega, \mathcal{M}, \tilde \Lambda\}$ with $\Delta_1$-information $\tilde \Lambda $, where $\mathcal{M} \subset \R^s$ for some $s\in \N$ and the metric $d$ is induced by an $\ell^r$-norm for some $r \in [1,\infty]$. We say that the problem function $\tilde \Xi \colon \Omega \to \mathcal{M}$ is:  
\begin{itemize}
    \item \emph{computable in the Turing sense}, if there exists an oracle Turing machine $\Gamma\colon \tilde \Omega \times \N \to \mathbb{Q}^s$ which upon input $(\tilde \iota, n)$ computes an approximation satisfying $d(\Gamma(\tilde \iota, n), \tilde \Xi(\tilde \iota))\leq 2^{-n}$.
    \item \emph{computable in the general sense}, if there exists a general algorithm $\Gamma\colon \tilde \Omega \times \N \to \mathbb{R}^s$ which upon input $(\tilde \iota, n)$ computes an approximation satisfying $d(\Gamma(\tilde \iota, n), \tilde \Xi(\tilde \iota))\leq 2^{-n}$.
\end{itemize}
Note that the former is obviously stronger than the latter. As we will use Theorem \ref{GHA:BP} -- which is true even for general algorithms -- we only need to consider latter case. Thus, we will with slight abuse of terminology -- for simplicity -- refer to the latter as computable.

\end{remark}
\begin{remark}[Relation to the SCI hierarchy]
     The above definition resembles the definition of being computable found in standard textbooks on computability theory \cite{ko1991computational}. In the language of the SCI hierarchy we would say that the computational problem $\{\tilde \Xi, \tilde \Omega, \mathcal{M}, \tilde \Lambda\} \in \Delta_{1}^{\mathrm{A}}$ if it is computable in a Turing sense, and that $\{\tilde \Xi, \tilde \Omega, \mathcal{M}, \tilde \Lambda\} \in \Delta_{1}^{\mathrm{G}}$ if it is computable \cite{opt_big, SCI}.
\end{remark}

\begin{remark}[Compositions of computable functions] \label{rem:comp_func}
When arguing that a given problem function $\Xi \colon \Omega \to \mathcal{M}$ is computable (with $\Delta_1$-information), it is often useful to check whether each of the operations needed to compute $f$ is itself computable, as computability is closed under function composition. Indeed, since the output of a computable function can be computed with error control, we can use the output of an algorithm approximating such a function as $\Delta_1$-information, approximating the input to another algorithm. See e.g. \cite{ko1991computational, weihrauch2000computable} for how this is done in the Turing model.
\end{remark}

The motivation for the above definition is to extend the concept of Turing computability to general algorithms. This is needed, as we will design general algorithms which can solve computational problems with $\Delta_1$-information with error control. 
\subsection{Computing basic functions}
\label{ss:Ex_non_comp}
Most functions known from calculus are computable.  
\begin{lemma}[{\cite[Sec.\ 4.3]{weihrauch2000computable}}]\label{l:comp_func}
    The following functions are computable: 
    \begin{enumerate}[label=(\roman*)]
        \item $x\mapsto c$ (where $c \in \R$ is a Turing computable constant),
        \item\label{it:arit} $+,-,\cdot \colon \R \times \R \to \R$, and $/ \colon \R\times \R\setminus \{0\} \to \R$,
        \item\label{it:max_min} $\max, \min \colon \R\times \R \to \R$,
        \item $\exp, \sin, \cos \colon \R\to\R$, $\log \colon (0,\infty)\to \R$,
        \item $\|\cdot\|_2 \colon \R^n \to \R$,
        \item\label{it:pow_pos} $(x,p)\mapsto x^p$ for any $p\in \R$ and $x > 0$ (via the identity $x^p = \exp(p\log(x))$),
        \item\label{it:abs_comp} $x\mapsto |x|$ for $x \in \R$. 
    \end{enumerate}
\end{lemma}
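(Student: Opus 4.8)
The plan is to construct, for each function $f$ in the list, an explicit oracle Turing machine — which by \Cref{d:OracleTM} is a general algorithm — that, on $\Delta_1$-information for the input together with a target accuracy $2^{-n}$, outputs a rational approximation to $f$ of the input within $2^{-n}$; by \Cref{def:comp_prob} this gives computability. Throughout, write $\iota$ for the real input(s) and $\iota_{j,k}\in\mathbb{Q}+i\mathbb{Q}$ for the available approximations with $|\iota_{j,k}-\iota_j|\le 2^{-k}$. The key observations are that by reading $\iota_{j,k}$ for large $k$ we know each coordinate to arbitrary precision, and that by reading it once (say at $k=1$) we get an a priori bound $|\iota_j|\le B:=|\iota_{j,1}|+1$. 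Computability is closed under composition (\Cref{rem:comp_func}), so it suffices to handle the building blocks.

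First the ``total'' cases. For a Turing-computable constant $c$, output the defining machine's $2^{-n}$-approximation. For $+,-$, output $\iota_{1,n+1}\pm\iota_{2,n+1}$, with error $\le 2^{-n-1}+2^{-n-1}=2^{-n}$. For $\cdot$, use $|\iota_{1,k}\iota_{2,k}-\iota_1\iota_2|\le(|\iota_1|+1)2^{-k}+|\iota_2|2^{-k}$ and pick $k$ making the right-hand side $\le 2^{-n}$ via the a priori bounds. For $\max,\min$, output $\max(\iota_{1,n},\iota_{2,n})$ and use $|\max(a,b)-\max(c,d)|\le\max(|a-c|,|b-d|)$; for $|\cdot|$, output $|\iota_{1,n}|$ and use $\bigl||a|-|b|\bigr|\le|a-b|$. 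For $\exp,\sin,\cos$, restrict to the a priori box $|\iota_1|\le B$: on it the Taylor series converges, so for $K$ large the tail is $\le 2^{-n-1}$, and one computes the first $K$ terms from a sufficiently precise rational approximation of $\iota_1$ so that the accumulated rational-arithmetic error is $\le 2^{-n-1}$. For $\|\cdot\|_2$ on $\R^d$, note $\bigl|\,\|x\|_2-\|y\|_2\,\bigr|\le\sqrt d\,\|x-y\|_\infty$, so $\|(\iota_{1,k},\dots,\iota_{d,k})\|_2$ is within $\sqrt d\,2^{-k}$ of $\|\iota\|_2$; then approximate this square root to accuracy $2^{-n-1}$ by a computable root routine (bisection or Newton) and choose $k$ accordingly.

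The delicate cases are division and $\log$ (and, through $x^p=\exp(p\log x)$, the map $(x,p)\mapsto x^p$), because the input domain excludes the singular point but gives no uniform lower bound on the distance to it. For $/\colon\R\times(\R\setminus\{0\})\to\R$ with input $(\iota_1,\iota_2)$, $\iota_2\ne 0$: read $\iota_{2,k}$ for $k=1,2,\dots$ until $|\iota_{2,k}|>2^{-k}$; this search terminates because $\iota_2\ne 0$, and once it does we have the strict lower bound $|\iota_2|\ge|\iota_{2,k}|-2^{-k}>0$, after which $1/\iota_2$, hence $\iota_1/\iota_2$, is approximated to any accuracy by standard error propagation for quotients. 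For $\log\colon(0,\infty)\to\R$, run the same search on $\iota_1>0$ to obtain $\delta$ with $\iota_1\ge\delta>0$, then evaluate $\log$ by a series about a rational point in $[\delta,B]$, again controlling tail and arithmetic error. Finally $(x,p)\mapsto x^p=\exp(p\log x)$ follows by composing the computable maps $x\mapsto\log x$ (on $x>0$), multiplication by $p$, and $\exp$, using \Cref{rem:comp_func}. In every case the constructed map is an oracle Turing machine, hence a general algorithm, so each listed function is computable in the sense of \Cref{def:comp_prob}; this is precisely the content of \cite[Sec.\ 4.3]{weihrauch2000computable}.

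I expect the main obstacle to be the non-uniformity in the division and logarithm cases: one must argue that the search for a lower bound on $|\iota_2|$ (resp.\ on $\iota_1$) halts — which relies on $\iota_2\ne 0$ (resp.\ $\iota_1>0$) — and that such an adaptive, unbounded-but-finite read is legitimate, which it is, since \Cref{definition:Algorithm} permits exactly this. The remaining work — tail bounds for the elementary power series and bookkeeping of accumulated rational-arithmetic error — is routine.
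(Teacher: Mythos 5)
The paper does not prove this lemma at all: it is stated as a black box with the single citation to Weihrauch's book, and the results are used downstream without re-deriving them. So there is no internal proof to compare against. That said, your sketch is a correct, standard computable-analysis argument and is essentially what the cited reference does: for the total continuous functions ($+,-,\cdot,\max,\min,|\cdot|,\exp,\sin,\cos,\|\cdot\|_2$) you use an a priori bound on the input magnitude plus a modulus-of-continuity / truncated-series estimate to turn accuracy-$2^{-k}$ inputs into accuracy-$2^{-n}$ outputs; for the partial functions on open domains (division, $\log$, and hence $x^p=\exp(p\log x)$) you perform the adaptive, finitely-terminating search for a certified lower bound on the distance to the singularity, which is exactly the extra step these cases need. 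You also correctly identify why the adaptive read is legitimate (finite but input-dependent, permitted by the definition of a general algorithm) and why composition closes the argument. Two very small notational wrinkles, neither a gap: in item (v) the statement uses $n$ both as the ambient dimension and (implicitly, via the computability definition) as the accuracy index, so your switch to $d$ for the dimension is the right disambiguation; and for $\max$/$\min$ you should output $\max(\iota_{1,n},\iota_{2,n})$ rounded to a rational if the $\iota_{j,n}$ are complex-rational approximations of real inputs, but since the inputs here are real this is immaterial. In short: no gap, and you have supplied the proof the paper delegates to the literature.
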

Next, we extend the domain of $x^p$ to non-negative values of $x$, and $p \geq 1$. We also show that when the discontinuous $\sign$-function is multiplied with $|x|^p$, the result becomes computable. 

\begin{lemma}\label{lem:pow} 
The following functions are computable.
 \begin{enumerate}[label=(\roman*)]
     \item\label{it:comp_pow} $x\mapsto x^p$ for $x\geq 0$ and $p\geq 1$.
     \item\label{it:sign_pow} $(x,p)\mapsto \mathrm{sign}(x)|x|^{p}$, for $p\geq 1$ and $x\in \R$. 
 \end{enumerate}
\end{lemma}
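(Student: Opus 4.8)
The plan is to prove each part of \Cref{lem:pow} by exhibiting the function as a composition of operations already known to be computable from \Cref{l:comp_func}, together with a little case analysis near the points where naive formulas break down. The only genuine subtlety is that $x \mapsto x^p = \exp(p\log x)$ from \Cref{l:comp_func}\ref{it:pow_pos} is only guaranteed for $x > 0$, so we must handle $x = 0$ (and, in part \ref{it:sign_pow}, the sign discontinuity at $0$) by hand, and we must argue that the resulting piecewise procedure is still a single \emph{general algorithm} that produces approximations with error control from $\Delta_1$-information.

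For part \ref{it:comp_pow}, first I would note that on $\{x > 0\}$ the function is computable by \Cref{l:comp_func}\ref{it:pow_pos}, and the only issue is continuity/decidability at $x = 0$ — we cannot decide whether the input is exactly $0$ from finite $\Delta_1$-information. The clean fix is to use the bound $0 \le x^p \le x$ for $x \in [0,1]$ and $p \ge 1$ (and, more generally, for any input, to truncate: if an approximation $\tilde x$ to $x$ satisfies $|\tilde x| \le 2^{-n}$ then $x \le 2^{-n} + 2^{-n}$, so $x^p \le (2^{-n+1})^p \le 2^{-n+1}$, which is already within the desired tolerance of the correct answer $0$ or of the true value). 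Concretely, given target accuracy $2^{-N}$, the algorithm reads $\Delta_1$-information for $x$ to high enough precision to get $\tilde x$ with $|x - \tilde x| \le \delta$ for a suitably small $\delta$; if $\tilde x$ is bounded below away from $0$ it invokes the computable $\exp(p\log(\cdot))$ on a positive quantity, and otherwise it outputs $0$, using $x^p \le x \le \tilde x + \delta$ small to control the error. This is a finite, adaptive, input-consistent procedure, hence a general algorithm, and by \Cref{rem:comp_func} (closure under composition) the positive branch is legitimate.

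For part \ref{it:sign_pow}, the key observation is that although $\mathrm{sign}(x)$ is discontinuous at $0$, the product $\mathrm{sign}(x)|x|^p$ is continuous there (it equals $0$ and both one-sided limits are $0$, since $p \ge 1$), and in fact $|\,\mathrm{sign}(x)|x|^p\,| = |x|^p \le |x|$ for $|x| \le 1$. So the same truncation trick applies: we compute $|x|$ by \Cref{l:comp_func}\ref{it:abs_comp}, feed it into the part \ref{it:comp_pow} algorithm to get $|x|^p$ with error control, and then attach the sign — but we only attach a nonzero sign when our approximation to $x$ is provably bounded away from $0$; when it is close to $0$ we output $0$, and the bound $|\,\mathrm{sign}(x)|x|^p\,| \le |x|$ guarantees this is within tolerance. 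Writing $\mathrm{sign}(x)|x|^p$ this way as a composition of $x \mapsto |x|$, the map from part \ref{it:comp_pow}, multiplication, and a safe thresholded sign, \Cref{rem:comp_func} gives computability.

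The main obstacle — really the only place any care is needed — is making the ``$x$ might be exactly $0$'' case rigorous: one cannot test equality to $0$ with $\Delta_1$-information, so the algorithm must never branch on that test, and instead must exploit the uniform Lipschitz-type bound $x^p \le x$ (resp.\ $|\,\mathrm{sign}(x)|x|^p\,| \le |x|$) near $0$ to make \emph{both} possible branches (``output the $\exp\log$ formula'' vs.\ ``output $0$'') correct to within the requested accuracy whenever the read approximation is small. Once that is phrased correctly, everything else is bookkeeping: choosing the precision $n$ of the $\Delta_1$-information as an explicit function of the target accuracy and of $p$, and checking the three clauses of \Cref{definition:Algorithm}, which hold because the procedure reads finitely many approximations, depends only on what it reads, and is consistent under identical inputs.
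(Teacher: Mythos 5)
Your part \ref{it:comp_pow} argument is essentially the paper's: both loop until the $\Delta_1$-approximation either certifies $x>0$ (invoke $\exp(p\log x)$ from \Cref{l:comp_func}\ref{it:pow_pos}) or certifies $x<2^{-n}$ (output $0$, justified by $x^p\le x$ on $[0,1]$ for $p\ge 1$), and both correctly note that equality-to-zero is undecidable so the only safe move is the threshold. The real divergence is in part \ref{it:sign_pow}. You re-run the threshold trick a second time to handle the discontinuous sign (compute $|x|^p$ via part \ref{it:comp_pow}, attach $\pm 1$ only when the approximation is provably bounded away from $0$, output $0$ otherwise, using $|\mathrm{sign}(x)|x|^p|\le |x|$ near zero). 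That works, but you should be careful not to describe the thresholded sign as a computable function in its own right (it is not; $\mathrm{sign}$ has no computable extension), and so you cannot literally invoke \Cref{rem:comp_func} for it -- what you really have is a bespoke algorithm whose correctness follows from the continuity of the \emph{product}. The paper sidesteps the second case analysis entirely with the algebraic identity $\mathrm{sign}(x)|x|^p = \max\{0,x\}^p-\max\{0,-x\}^p$, which reduces part \ref{it:sign_pow} to a genuine composition of $\max$ (\Cref{l:comp_func}\ref{it:max_min}), part \ref{it:comp_pow}, and subtraction, so closure under composition applies immediately with no further error bookkeeping. Your route is correct but duplicates work; the paper's identity is the cleaner one.
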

\begin{proof} We start the proof of \ref{it:comp_pow}.
    Let $x \in [0,\infty)$ and $p \geq 1$ be given. We denote the input to the algorithm by $(\{x_k\}_{k\in \N}, n)$, where $|x_k - x|\leq 2^{-k}$ for each $k$. Next, we design a general algorithm, which upon this input, outputs an approximation to $x^p$ with accuracy $2^{-n}$. The algorithm will, for certain inputs, apply the oracle Turing machine from \Cref{l:comp_func} \ref{it:pow_pos}, denoted by $\mathrm{TM}_{\mathrm{pow}}$. As input to this Turing machine we need a rational sequence $\{p_k\}_{k\in \N}\subset \mathbb{Q}$ of approximations to $p$, satisfying $|p_k-p|\leq 2^{-k}$ for each $k$.

The algorithm works as follows.
  \begin{algorithmic}
    \State \textbf{Input:} $(\{x_k\}_{k\in \N}, n)$
    \For{$k = 1,2,3,\ldots$}
        \State Read $x_k$.
        \If{$x_k - 2^{-k} > 0$}
            \State \Return $\mathrm{TM}_{\mathrm{pow}}(\{x_k\}, \{p_k\}, n)$. 
        \ElsIf{$x_k + 2^{-k} < 2^{-n}$}
            \State \Return $0$
        \EndIf
    \EndFor
    \end{algorithmic}
    A few comments are in order. Observe that if $x > 0$, there exists a $k\in \N$, such that $x_k - 2^{-k} > 0$. Thus, if the first if-test is true, we know that $x > 0$ and we can use the oracle Turing machine $\mathrm{TM}_{\mathrm{pow}}$ to compute an approximation to $x^p$ to accuracy $2^{-n}$. 
Furthermore, if $0\leq x < 2^{-n}$, then there exists a $k\in \N$ for which $x_k+2^{-k} < 2^{-n}$ such that the second if-test is true. Now, since we consider $p\geq 1$, we have that  
$
0-x^{p}| \leq |0 - x| \leq 2^{-n}
$
for $x < 2^{-n}$. 
It follows that $0$ approximates $x^p$ to accuracy $2^{-n}$.

Next consider \ref{it:sign_pow}, and notice that $\mathrm{sign}(x)|x|^p = \max\{0, x\}^p - \max\{0,-x\}^p$. Now, from \Cref{l:comp_func} \ref{it:max_min} we know $\max$ is computable, and from \ref{it:comp_pow} we know that $a\mapsto a^p$ is computable for $a \geq 0$ and $p\geq 1$. The result now follows since the composition of computable functions is computable.    
\end{proof}

The concept of computability is very strict in the sense that it requires the algorithm (Turing or general) to compute an approximation to arbitrary accuracy. However, in most applications we might just need approximations that are accurate to $5, 10$ or $16$ digits.  The concept of the Strong breakdown epsilon allows us to classify which (non-computable) problems that can be approximated and to which accuracy. 

\begin{definition}[{Strong breakdown epsilon \cite{opt_big}}]
    Given a computational problem $\{\Xi, \Omega, \mathcal{M}, \Lambda\}$, the strong breakdown epsilon for this problem is
\[\epsilon_{\mathrm{B}}^{\mathrm{s}} \coloneqq \sup\{\epsilon \geq 0  \,|\, \forall \text{ general algorithms }\Gamma, \exists \, \iota \in \Omega \text{ such that }\mathrm{dist}_{\mathcal{M}}(\Gamma(\iota), \Xi(\iota)) > \epsilon \}.\] 
\end{definition}

The next result from \cite[Prop.\ 8.33]{opt_big} shows that the basis pursuit problem \eqref{eq:BP_argmin} is not computable. However, as the theorem reveals, this does not mean that it is impossible to compute approximations to solutions of the basis pursuit problem, but we cannot compute such solutions to an accuracy smaller than the strong breakdown epsilon. Note that we state a simplified version of \cite[Prop.\ 8.33]{opt_big} below. 
\begin{theorem}[{\cite[Prop.\ 8.33]{opt_big}}]\label{thm:extended_smale}
    Let $\mathcal{U}$ be the solution map from \eqref{eq:BP_argmin} and let the metric on $\mathcal{M}=\R^N$ be induced by the $\|\cdot\|_r$-norm for an arbitrary $r\in[1,\infty]$. Let $K \geq 1$ be an integer.
	Then there exist a set of inputs $\Omega_{K}\subset  \R^{m\times N} \times \R^m$ for the map 
	\begin{equation}\label{eq:Omega_set}
		\mathcal{U}:  \Omega_{K}\rightrightarrows \mathcal{M},
	\end{equation}
as well as sets of $\Delta_1$-information $\hat \Lambda \in \mathcal{L}^1(\Lambda)$ such that for the computational problem $\{\mathcal{U},\Omega_{K},\mathcal{M}, \hat \Lambda\}$ we have 
 $\epsilon_{\mathrm{B}}^{\mathrm{s}} > 10^{-K}$. 
	The statement above is true even when we require the inputs in $\Omega_{K}$ to be well-conditioned and bounded from above and below. In particular, for any input $\iota = (A,y) \in \Omega_{K}$ we have 
	$\mathrm{Cond}(AA^*)\leq 3.2$, $\|y\|_\infty\leq 2$, and $\|A\|_{\max} = 1$.
\end{theorem}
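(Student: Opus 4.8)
The plan is to follow the standard ``indistinguishability'' template behind generalized hardness of approximation: I would exhibit an explicit, finite family of basis pursuit instances on which the $\ell^1$-minimizer jumps discontinuously, and then play this discontinuity against the consistency axioms of a general algorithm (\Cref{definition:Algorithm}). It suffices to treat $m=1$, $N=2$ (general $m,N$ with $N>m$ follow by padding $A$ with a block-diagonal identity block and zero data, which affects neither $\|A\|_{\max}$ nor the boundedness of $y$, and keeps $\mathrm{Cond}(AA^\ast)\le 2$). Set $\delta_k=2^{-k}$ and let
\[
  \iota_0=\left(\begin{bmatrix}1&1\end{bmatrix},\,1\right),\quad
  \iota_k^{+}=\left(\begin{bmatrix}1&1-\delta_k\end{bmatrix},\,1\right),\quad
  \iota_k^{-}=\left(\begin{bmatrix}1-\delta_k&1\end{bmatrix},\,1\right),
\]
and $\Omega_K=\{\iota_0\}\cup\{\iota_k^{\pm}:k\in\N\}$. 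All these inputs are dyadic, hence computable, and one checks at once that $\|A\|_{\max}=1$, all entries lie in $[\tfrac12,1]$, $\|y\|_\infty=1\le 2$, and $AA^\ast$ is the scalar $a^2+b^2\in[1,2]$, so $\mathrm{Cond}(AA^\ast)=1\le 3.2$ and $\sigma_{\min}(A)\ge1$; thus the well-conditioning and boundedness claims hold. Since $|y|=|Ax|\le(\max_i|A_i|)\|x\|_1$, every feasible $x$ has $\|x\|_1\ge1$, with equality only if $x$ is supported on the (unique) coordinate where $|A_i|$ attains its maximum; hence $\mathcal{U}(\iota_k^{+})=\{(1,0)\}$, $\mathcal{U}(\iota_k^{-})=\{(0,1)\}$, whereas $\mathcal{U}(\iota_0)=\{(t,1-t):t\in[0,1]\}$. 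In particular $\mathcal{U}(\iota_k^{+})$ and $\mathcal{U}(\iota_k^{-})$ lie at distance $\ge1$ in every $\ell^r$-metric.

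Next I would fix the $\Delta_1$-information $\hat\Lambda\in\mathcal{L}^1(\Lambda)$ as follows: for any entry whose value is dyadic (all of $\iota_0$, all the ``$1$'' entries, and the data) report the exact value at every precision, and for the entry $1-\delta_k=1-2^{-k}$ report the rational $1$ at every precision level $2^{-n}$ with $n\le k$ and an honest $2^{-n}$-approximation of $1-2^{-k}$ for $n>k$. This is a legitimate element of $\mathcal{L}^1(\Lambda)$ because $|1-(1-2^{-k})|=2^{-k}\le 2^{-n}$ whenever $n\le k$. The point of this choice is that, at every precision coarser than $2^{-k}$, the information reported on $\iota_k^{\pm}$ is literally identical to the information reported on $\iota_0$.

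The last step is the diagonal argument. Let $\Gamma$ be an arbitrary general algorithm for $\{\mathcal{U},\Omega_K,\mathcal{M},\hat\Lambda\}$. Running $\Gamma$ on $\iota_0$, it reads a finite set of evaluations $\Lambda_\Gamma(\iota_0)$; let $2^{-N_0}$ be the finest precision occurring among them, and choose $k>N_0$. Then every evaluation in $\Lambda_\Gamma(\iota_0)$ returns the same value on $\iota_k^{+}$ and on $\iota_k^{-}$ as it does on $\iota_0$, so clauses (ii) and (iii) of \Cref{definition:Algorithm} force $\Gamma(\iota_k^{+})=\Gamma(\iota_k^{-})=\Gamma(\iota_0)$. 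Since $\mathcal{U}(\iota_k^{+})$ and $\mathcal{U}(\iota_k^{-})$ are at distance $\ge1$, the single point $\Gamma(\iota_0)$ cannot lie within $\tfrac12$ of both; hence for at least one $\iota\in\{\iota_k^{+},\iota_k^{-}\}\subset\Omega_K$ we get $\mathrm{dist}_{\mathcal{M}}(\Gamma(\iota),\mathcal{U}(\iota))\ge\tfrac12>10^{-K}$ for every $K\ge1$. As $\Gamma$ was arbitrary, $\epsilon_{\mathrm{B}}^{\mathrm{s}}\ge\tfrac12>10^{-K}$, which is the assertion.

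The step I expect to be the main obstacle --- and the one that genuinely uses the $\Delta_1$-information model rather than exact data --- is the construction and verification of $\hat\Lambda$: it has to be simultaneously valid for all the $\iota_k^{\pm}$ while enforcing indistinguishability from $\iota_0$ at every bounded precision, which is precisely what the scaling $\delta_k=2^{-k}$ together with the ``report $1$ until precision $2^{-k}$'' rule achieves. A secondary point to verify carefully is the padding reduction for general $m,N$: the appended identity and zero blocks must be identical dyadic data across $\iota_0$ and all $\iota_k^{\pm}$ so that they furnish no new way for an algorithm to tell the inputs apart, and one must check that padding leaves $\mathrm{Cond}(AA^\ast)$, $\|A\|_{\max}$ and $\|y\|_\infty$ within the stated bounds; both are routine, and with them the argument is complete.
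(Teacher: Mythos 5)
This theorem is imported verbatim from \cite[Prop.\ 8.33]{opt_big}; the paper does not prove it, so there is no ``paper's own proof'' to compare against. Your argument is a correct, self-contained proof of the simplified statement quoted here, and it is the canonical SCI-hierarchy indistinguishability argument: a one-parameter family $\iota_k^{\pm}$ converging to a degenerate input $\iota_0$ whose $\ell^1$-minimizer set is a segment, $\Delta_1$-information chosen adversarially so that the first $k$ precision levels cannot separate $\iota_k^{\pm}$ from $\iota_0$, and then clauses (ii)--(iii) of \Cref{definition:Algorithm} forcing $\Gamma(\iota_k^+)=\Gamma(\iota_k^-)$ for $k$ beyond the finitely many precision levels $\Gamma$ reads on $\iota_0$. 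The conditioning and boundedness checks, and the block-diagonal padding to general $m<N$, are correct (for $N=m$ there is no room to pad, but then $A$ is square and well-conditioned, the solution is unique and Lipschitz in the data, and the statement is vacuous, so no generality is lost). Two differences from the original are worth flagging. First, your construction gives $\epsilon_{\mathrm{B}}^{\mathrm{s}}\ge \tfrac12$ independently of $K$, which is \emph{stronger} than the quoted $\epsilon_{\mathrm{B}}^{\mathrm{s}}>10^{-K}$ but, precisely because of that, your $\Omega_K$ cannot exhibit the phase transition of \Cref{GHA:BP}: for your $\Omega_K$ there is also no algorithm achieving accuracy $\epsilon$ for $10^{-K}<\epsilon<\tfrac12$, whereas the actual \cite[Prop.\ 8.33]{opt_big} constructs $\Omega_K$ so that accuracy $>10^{-K}$ \emph{is} achievable. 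Since the present paper only invokes the one-sided strong-breakdown lower bound (together with $\mathrm{rank}(A)=m$ and $y\neq 0$) in the proof of \Cref{thm:grad_flow}~\ref{it:forall}, your construction does supply everything that is actually used. Second, the original proposition is formulated over a considerably richer input class (with robust nullspace properties and explicit Markov-style exit flags) to support the weak breakdown epsilons and Turing-level lower bounds in \cite{opt_big}; your argument gives only the general-algorithm strong breakdown, which again is all that is quoted. In short: a correct, more elementary proof of strictly the statement as stated, not of the full result it paraphrases.
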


\subsection{Proof of \Cref{thm:grad_flow} part \ref{it:forall}}
Throughout this section let
\begin{equation}\label{eq:def_Omega}
     \Omega = \left\{(A,y) \in \R^{m\times N}\times \R^{m} : \mathrm{rank}(A) = m, y\neq 0\right\}.
\end{equation}
Furthermore, observe that the gradient descent step for \eqref{eq:def_L_disc} with step length $\eta$ can be written as  
\[(\theta^{(k+1)}_{+},\theta^{(k+1)}_{-}) = \mathrm{GD}_{\eta,p}(\theta^{(k)}_{+}, \theta^{(k)}_{-}) \] 
where 
\begin{equation}\label{eq:GD}
    \mathrm{GD}_{\eta,p} (\theta_+,\theta_-) = 
    \begin{bmatrix} 
        \theta_+ - \eta p \left(\mathrm{sign}(\theta_+)\odot |\theta_+|^{p-1}\odot A^{\top}(A(|\theta_+|^p - |\theta_-|^p) - y )\right) \\
        \theta_- + \eta p \left(\mathrm{sign}(\theta_-)\odot |\theta_-|^{p-1}\odot A^{\top}(A(|\theta_+|^p - |\theta_-|^p) - y )\right) \\
    \end{bmatrix}.
\end{equation}
The full algorithm is presented in \Cref{alg:l1_min2}.
\begin{algorithm}
  \caption{The gradient descent algorithm with step length $\eta$ for $L(\theta)$ given by \eqref{eq:def_L_disc}.}
\label{alg:l1_min2}
  \begin{algorithmic}
 \State \textbf{Input:} $(A, y, \alpha, \eta, p, J)$.
    \State $\theta^0_+ \gets \alpha {\bf 1}_N$
    \State $\theta^0_- \gets \alpha {\bf 1}_N$
    \For{$k = 0,1,\dots,J-1 $}
      \State $(\theta^{(k+1)}_+,\theta^{(k+1)}_{-}) \gets \mathrm{GD}_{\eta,p}(\theta^{(k)}_+, \theta^{(k)}_{-})$, where $\mathrm{GD}_{\eta,p}$ is given by \eqref{eq:GD}
    \EndFor
    \State $\hat\psi \gets \left|\theta^{(J)}_+\right|^p-\left|\theta^{(J)}_-\right|^p$
    \State\Return $\hat\psi$
  \end{algorithmic}
\end{algorithm}

Next, we clarify the assumptions of \Cref{thm:grad_flow} \ref{it:forall} and write the theorem in the language of the SCI hierarchy.
\begin{assumption}\label{assump:C_bounded2}
   Let $\Omega$ be given by \eqref{eq:def_Omega}  and assume that there exists a $p\geq 2$ and constants $C_1, C_2 > 0$ such that for any $\alpha > 0$ we have that 
    \[
    \norm{\psi_{\alpha}(\infty)-\mathcal{W}_p(A,y)}_2 \le C_1\norm{y}_2\left(\frac{\alpha^p}{\norm{y}_2}\right)^{C_2} \quad \forall (A,y) \in \Omega.
    \]
\end{assumption}
\begin{proposition}
    Let $\mathcal{U}$ be the solution map from \eqref{eq:BP_argmin}, let $\Omega$ be given by \eqref{eq:def_Omega} and let $\{\mathcal{U}, \Omega\}^{\Delta_1} = \{\tilde{\mathcal{U}}, \tilde \Omega\}$ denote the computational problem with $\Delta_1$-information. Furthermore, suppose that \Cref{assump:C_bounded2} is true.  
    \begin{enumerate}[label=(\roman*)]
        \item\label{it:no_alg1} 
Then for any $\delta > 0$ there exists a general algorithm $\Gamma_{\delta}$ which satisfies 
\begin{equation}\label{eq:imp_bound}
 \inf_{x^* \in \tilde{\mathcal{U}}(\tilde A,\tilde y)}\|\Gamma_{\delta}(\tilde A,\tilde y) - x^*\|_2 \leq \delta, \quad\forall (\tilde A,\tilde y) \in \tilde \Omega. 
\end{equation}
        \item\label{it:no_alg2} The statement in \ref{it:no_alg1} above contradicts \Cref{thm:extended_smale}, which implies that \Cref{assump:C_bounded2} does not hold.
    \end{enumerate}
\end{proposition}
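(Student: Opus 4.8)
The goal is to show that \Cref{assump:C_bounded2} implies the existence, for every $\delta > 0$, of a general algorithm $\Gamma_\delta$ computing $\delta$-approximate solutions to basis pursuit over all of $\tilde\Omega$, and that this contradicts \Cref{thm:extended_smale}. The plan for part \ref{it:no_alg1} is to build $\Gamma_\delta$ in three layers. First, given the $\Delta_1$-information $\tilde\iota = (\tilde A, \tilde y)$ corresponding to $(A,y)\in\Omega$, fix once and for all a value of $\alpha = \alpha(\delta) > 0$ small enough that the right-hand side of the bound in \Cref{assump:C_bounded2}, namely $C_1\|y\|_2(\alpha^p/\|y\|_2)^{C_2}$, is at most $\delta/3$. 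Here I would use that $\Omega$ (via \Cref{thm:extended_smale}) can be taken to consist of inputs with $\|y\|_\infty \le 2$ and bounded below, so $\|y\|_2$ lies in a fixed compact subinterval of $(0,\infty)$; hence a single $\alpha$ works uniformly over the relevant input class. (Strictly, $\|\Gamma_\delta(\tilde A,\tilde y) - \mathcal{W}_p(A,y)\|_2 \le \delta/3$ plus $\mathcal{W}_p(A,y)\in\mathcal{U}(A,y)$ gives $\inf_{x^*\in\mathcal{U}}\|\Gamma_\delta - x^*\|_2 \le \delta/3 + \|\psi_\alpha(\infty) - \mathcal{W}_p\|_2$.) So $\psi_\alpha(\infty)$ is itself a $2\delta/3$-approximate solution.

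Second, approximate $\psi_\alpha(\infty)$ by $\psi_\alpha(t)$ for finite $t$: by \Cref{psi_convergence} (the precise form of \Cref{prop:main1}\ref{it:m1}) the error decays like $K_1\exp(-K_2 t)$ with $K_1,K_2$ explicit functions of $N,p,\alpha$ and of $\sigma_{\min}(A),\|y\|_2$; pick $t = t(\delta)$ so this is $\le \delta/6$. Third, discretize: by \Cref{prop:grad_desc} (the precise form of \Cref{prop:main1}\ref{it:m2}), running gradient descent (\Cref{alg:l1_min2}, i.e.\ iterating $\mathrm{GD}_{\eta,p}$) with step length $\eta = \eta(\delta)$ sufficiently small and $J = \lfloor t/\eta\rfloor$ steps yields $\hat\psi$ with $\|\hat\psi - \psi_\alpha(t)\|_2 \le \delta/6$. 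Summing the three estimates, $\inf_{x^*\in\mathcal{U}}\|\hat\psi - x^*\|_2 \le \delta$. The remaining point is that this whole procedure is a \emph{general algorithm} in the sense of \Cref{definition:Algorithm} operating on $\Delta_1$-information: the thresholds $\alpha,t,\eta,J$ depend only on $N,p$ and on computable quantities extracted from $(A,y)$, namely $\sigma_{\min}(A)$ and $\|y\|_2$, both of which are computable from $\tilde A,\tilde y$ by \Cref{l:comp_func} (singular values via $\|\cdot\|_2$ and arithmetic; alternatively one can replace $\sigma_{\min}(A)$ by any computable lower bound on it, since all the $K_i,C_i$ are monotone in the right direction); and each step of $\mathrm{GD}_{\eta,p}$ involves only $+,-,\cdot$, matrix products, and the map $x\mapsto \mathrm{sign}(x)|x|^{p-1}$, which is computable by \Cref{lem:pow}\ref{it:sign_pow}. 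Since computability is closed under composition (\Cref{rem:comp_func}), $\Gamma_\delta$ is realizable as a general algorithm producing output with error control, and it works for every $\tilde\iota \in \tilde\Omega$ corresponding to some $(A,y)\in\Omega$ — in particular for all the multiple $\Delta_1$-representatives of a given $(A,y)$, since the error bounds are in terms of $(A,y)$ alone.

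For part \ref{it:no_alg2}: \Cref{thm:extended_smale} (with, say, $K = \lceil \log_{10}(1/\delta)\rceil + 1$, so $10^{-K} < \delta$) furnishes a family $\Omega_K \subset \mathbb{R}^{m\times N}\times\mathbb{R}^m$ and $\Delta_1$-information $\hat\Lambda$ with $\epsilon_{\mathrm{B}}^{\mathrm{s}} > 10^{-K}$ for $\{\mathcal{U},\Omega_K,\mathcal{M},\hat\Lambda\}$; moreover the inputs in $\Omega_K$ have $\mathrm{Cond}(AA^*)\le 3.2$, $\|y\|_\infty \le 2$, $\|A\|_{\max}=1$ — in particular $\mathrm{rank}(A)=m$ and $y\ne 0$, so $\Omega_K\subset\Omega$, and the well-conditioning / boundedness is exactly what lets the single choice $\alpha(\delta)$ work uniformly. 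But $\epsilon_{\mathrm{B}}^{\mathrm{s}} > 10^{-K}$ means: for \emph{every} general algorithm $\Gamma$ there is an input $\iota\in\Omega_K$ (with its $\Delta_1$-data) such that $\mathrm{dist}_{\mathcal{M}}(\Gamma(\iota),\mathcal{U}(\iota)) > 10^{-K}$. Applying this to $\Gamma = \Gamma_\delta$ contradicts \eqref{eq:imp_bound} whenever $10^{-K} \ge \delta$, which fails; so take $\delta$ with $\delta < 10^{-K}$, equivalently just note that for any $\delta>0$ we may instead choose $K$ with $10^{-K} < \delta$ and then $\Gamma_\delta$ beats the breakdown epsilon, a contradiction. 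Hence \Cref{assump:C_bounded2} is false, which is precisely the conclusion of \Cref{thm:grad_flow}\ref{it:forall}.

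**Anticipated main obstacle.** The delicate step is the \emph{uniformity} of the choice $\alpha = \alpha(\delta)$ across the whole input family: \Cref{assump:C_bounded2} gives $C_1,C_2$ universal (independent of $(A,y)$), so the bound $C_1\|y\|_2(\alpha^p/\|y\|_2)^{C_2}\le\delta/3$ can indeed be met by one $\alpha$ provided $\|y\|_2$ is bounded above and away from $0$ — which is guaranteed on $\Omega_K$ by \Cref{thm:extended_smale}. The second subtlety is bookkeeping that $\alpha,t,\eta$ can all be chosen by the algorithm from the $\Delta_1$-data: this needs a computable \emph{lower} bound on $\sigma_{\min}(A)$ (the true value may be irrational and only approximable, but since $\mathrm{Cond}(AA^*)\le 3.2$ and $\|A\|_{\max}=1$ one gets a uniform positive lower bound, or one approximates $\sigma_{\min}$ from above/below with error control and uses monotonicity of the constants). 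Everything else — the three-term triangle inequality, invoking \Cref{psi_convergence} and \Cref{prop:grad_desc}, closure of computability under composition — is routine.
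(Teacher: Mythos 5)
Your three-stage decomposition of part \ref{it:no_alg1} — controlling $\|\psi_\alpha(\infty)-\mathcal{W}_p\|_2$, then $\|\psi_\alpha(t)-\psi_\alpha(\infty)\|_2$ via \Cref{psi_convergence}, then $\|\hat\psi-\psi_\alpha(t)\|_2$ via \Cref{prop:grad_desc}, with computability of $\mathrm{GD}_{\eta,p}$ coming from \Cref{lem:pow} and closure under composition — matches the paper's structure, as does your treatment of part \ref{it:no_alg2}. However, there is a genuine gap in the very first step. You propose to ``fix once and for all a value of $\alpha=\alpha(\delta)>0$,'' justified by appealing to the boundedness conditions that \Cref{thm:extended_smale} imposes on $\Omega_K$. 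This does not work as stated, for two reasons. First, the proposition explicitly takes $\Omega$ to be the full set $\{(A,y): \mathrm{rank}(A)=m,\, y\neq 0\}$ from \eqref{eq:def_Omega}, not the restricted class $\Omega_K$; \eqref{eq:imp_bound} must hold for every $(\tilde A,\tilde y)\in\tilde\Omega$. On $\Omega$, $\|y\|_2$ ranges over all of $(0,\infty)$, and the quantity $C_1\alpha^{pC_2}\|y\|_2^{1-C_2}$ is unbounded as $\|y\|_2\to 0$ (when $C_2>1$) or as $\|y\|_2\to\infty$ (when $C_2<1$), so no fixed $\alpha$ can drive it below $\delta/3$ uniformly. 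Second, even if one were content to restrict to $\Omega_K$ (which would only prove a weaker version of \ref{it:no_alg1}, though still enough for \ref{it:no_alg2}), \Cref{thm:extended_smale} gives $\|y\|_\infty\le 2$ but provides no explicit positive lower bound on $\|y\|_2$, so your ``bounded below'' claim is not supplied by the theorem, and the uniform choice still fails if $C_2>1$.

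The paper avoids both problems by choosing $\alpha$ \emph{adaptively from the input}: the algorithm computes a rational lower bound $u_{\alpha^\sharp}>0$ on the input-dependent threshold $U_{\alpha^\sharp}(A,y)=(\epsilon/(3C_1\|y\|_2))^{1/(pC_2)}\|y\|_2^{1/p}$ (this is computable by \Cref{lem:comp_const}, since $\|y\|_2$ and $\sigma_{\min}(A)$ are computable with $\Delta_1$-information) and then picks $\alpha\in(0,u_{\alpha^\sharp})$; $t$, $\hat\eta$, $J$ and $\eta$ are chosen adaptively in the same way. You already invoke exactly this mechanism for $\sigma_{\min}(A)$ later in your argument, so the fix is to apply it to $\alpha$ as well rather than fixing $\alpha$ globally. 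A smaller bookkeeping point: your triangle inequality budgets $\delta/3+\delta/6+\delta/6=2\delta/3$, but you should explicitly reserve the remaining slack for the fact that the algorithm outputs only a rational approximation $\xi$ to $\hat\psi$ (the paper splits $\delta/2+\delta/2$ for this reason); you acknowledge ``error control'' but do not carry this fourth term in the sum. With these corrections, your argument coincides with the paper's.
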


\begin{proof}
    We start with \ref{it:no_alg1}. Let $\delta > 0$ be given, set $\epsilon = \delta/2$ and let $p$, $C_1$ and $C_2$ be the constants from \Cref{assump:C_bounded2}. We will design a general algorithm $\Gamma_{\delta}$ that for any input $(\tilde A,\tilde y) \in \tilde \Omega$ outputs a vector that satisfies \eqref{eq:imp_bound}. This is done as follows.

  Let $U_{\alpha^{\sharp}}$, $U_{\eta^{\sharp}}$ and $L_{ t^{\sharp}}$ denote the constants from \Cref{p:a1_prop1}. From \Cref{lem:comp_const}, we know that the constant $U_{\eta^{\sharp}}$ -- with the choices of $p$, $C_1$, $C_2$ and $\epsilon$ listed above -- is computable on inputs $(\tilde A,\tilde y) \in \tilde \Omega$. Thus, for a given input $(\tilde A,\tilde y) \in \tilde \Omega$ our general algorithm can compute a non-zero lower bound $0 < u_{\alpha^{\sharp}} \leq  \tilde U_{\alpha^{\sharp}}(\tilde A,\tilde y)$ and pick a $\alpha \in (0,u_{\alpha^{\sharp}})$. With this choice of $\alpha$, we can repeat the argument for $L_{ t^{\sharp}}$, and compute an upper bound $l_{ t^{\sharp}} \geq \tilde L_{ t^{\sharp}}(\tilde A,\tilde y)$ for the given input $(\tilde A,\tilde y) \in \tilde \Omega$. Next, our general algorithm computes such an upper bound and then picks a $t \geq l_{ t^{\sharp}}$. Next, for these choices of $t$ and $\alpha$, the algorithm computes a non-zero lower bound $0 < u_{\eta^{\sharp}} \leq \tilde U_{\eta^{\sharp}}(\tilde A,\tilde y)$ (invoking \Cref{lem:comp_const} once more) and then picks a rational $\hat\eta \in (0,u_{\eta^{\sharp}})$. Afterwards, the algorithm picks an integer $J \in (t/\hat\eta, t/\hat \eta+2)$, and finally $\eta \in \left(\frac{t}{J+1}, \frac{t}{J}\right)$.

It now follows from \Cref{l:l1_computable} that \Cref{alg:l1_min2} represents a function that is computable on $\tilde \Omega$ for these choices of constants $\alpha,\eta > 0$, $p \geq 2$ and $J \in \N$. We can, therefore, compute an approximation $\xi$ to $\hat\psi$ which satisfies $\|\xi - \hat\psi\|_2 < \delta/2$. Furthermore, since $\alpha,\eta$ and $J$ all satisfies the conditions of \Cref{p:a1_prop1}, we know that 
\[\inf_{x^* \in \tilde{\mathcal{U}}(\tilde A,\tilde y)} \|\hat\psi - x^{*}\|_2 \le \delta/2.\]
It follows that the computed approximation $\xi$ satisfies
\[\inf_{x^* \in \tilde{\mathcal{U}}(\tilde A,\tilde y)} \|\xi - x^{*}\|_2 \leq   \|\xi -\hat\psi\|_2 +\inf_{x^* \in \tilde{\mathcal{U}}(\tilde A,\tilde y)}\|\hat\psi - x^{*}\|_2 \leq \delta,\]
as desired.

Consider \ref{it:no_alg2} and pick an integer $K\geq 1$. From \Cref{thm:extended_smale} we know that $\mathrm{cond}(AA^*) \leq 3.2$ for each $(A,y) \in \Omega_K$. This implies that $(AA^*)^{-1}$ exists for all  $(A,y) \in \Omega_K$, and thus that $\mathrm{rank}(A) = m$ for each of the matrices in $\Omega_{K}$. Furthermore, it is clear from \cite[{Eq.\ (11.4)}]{opt_big} that $y\neq 0$. From this we conclude that $\Omega_{K}\subset \Omega$, and thus that $\epsilon_{\mathrm{B}}^{\mathrm{s}} > 10^{-K}$ for the computational problem $\{\tilde{\mathcal{U}}, \tilde \Omega\}$. This contradicts the existence of the general algorithm in \ref{it:no_alg1}. We conclude that \Cref{assump:C_bounded2} does not hold.
\end{proof}

\begin{proposition}\label{p:a1_prop1}
  Assume that \Cref{assump:C_bounded2} holds. Let $(A,y)\in \Omega$ and let $ \epsilon > 0$ be given. Furthermore, let $\alpha, t > 0$ and let  
  \begin{align*}
    U_{\alpha^{\sharp}} &= \left(\frac{\epsilon}{3C_1\|y\|_2}\right)^{1/(pC_2)} \|y\|_{2}^{1/p}, && 
    K = \left(\frac{2\sqrt N \norm{y}_2}{\sigma_{\min}(A)}+\alpha^p\right)^\frac{1}{p},  \\
    L_{ t^{\sharp}} &= \max\left\{\frac{\ln(24K^{3p-2})-\ln(\epsilon\alpha^{2p-2})}{2p^2\sigma_{min}^2(A)\alpha^{2p-2}}, 1\right\}, && 
    \hat\epsilon = \min\left\{K, \frac{\epsilon}{3\cdot 2^p p N K^{p-1}}\right\}, \\
    U_{\eta^{\sharp}} &= \frac{\min\{\hat\epsilon,\ \alpha/p\}}{40p\sqrt N \norm{A}_\mathrm{op}^2 K^{2p-1}}\exp\left(-50p^2\sqrt N\norm{A}_\mathrm{op}^2 K^{2p-2}t\right).
  \end{align*}
  Then if $\alpha \in (0,U_{\alpha^{\sharp}})$, $t\geq L_{ t^{\sharp}}$, $\hat \eta \in (0,U_{\eta^{\sharp}})$, $J \in (t / \hat \eta, t / \hat \eta+2)$, $\eta \in \left(\frac{t}{J+1}, \frac{t}{J}\right)$, where $J$ is a positive integer, we have that
    \begin{equation}\label{eq:alg_bound}
     \|\hat \psi - \mathcal{W}(A,y)_p\|_2 \leq \epsilon,
    \end{equation}
    where $\hat\psi$ is the output of \Cref{alg:l1_min2}. 
\end{proposition}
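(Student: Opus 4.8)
The plan is to prove \Cref{p:a1_prop1} by chaining together the three estimates already established: \Cref{thm:grad_flow} part \ref{it:dep_bound} (in the form of \Cref{assump:C_bounded2}), the exponential convergence of the gradient flow from \Cref{psi_convergence}, and the gradient-descent discretization bound from \Cref{prop:grad_desc} (equivalently \Cref{grad_desc_bound} together with \Cref{theta_phi_bound}). The triangle inequality splits the target quantity as
\begin{equation*}
\|\hat\psi - \mathcal{W}_p(A,y)\|_2 \le \|\hat\psi - \psi_\alpha(t)\|_2 + \|\psi_\alpha(t) - \psi_\alpha(\infty)\|_2 + \|\psi_\alpha(\infty) - \mathcal{W}_p(A,y)\|_2,
\end{equation*}
and the three choices $U_{\alpha^\sharp}, L_{t^\sharp}, U_{\eta^\sharp}$ are tailored so that each term is at most $\epsilon/3$.

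First I would bound the third term. By \Cref{assump:C_bounded2} we have $\|\psi_\alpha(\infty) - \mathcal{W}_p(A,y)\|_2 \le C_1\|y\|_2(\alpha^p/\|y\|_2)^{C_2}$, and a direct computation shows this is $\le \epsilon/3$ precisely when $\alpha \le U_{\alpha^\sharp} = (\epsilon/(3C_1\|y\|_2))^{1/(pC_2)}\|y\|_2^{1/p}$; so the hypothesis $\alpha \in (0, U_{\alpha^\sharp})$ handles this term. Second, for the middle term I would invoke \Cref{psi_convergence}: $\|\psi_\alpha(t)-\psi_\alpha(\infty)\|_2 \le K_1\exp(-K_2 t)$ with $K_1 = \alpha^p(2\sqrt N\|y\|_2/(\sigma_{\min}(A)\alpha^p)+2)^{(3p-2)/p}$ and $K_2 = 2p^2\sigma_{\min}^2(A)\alpha^{2p-2}$. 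Here the bookkeeping is to observe that with $K = (2\sqrt N\|y\|_2/\sigma_{\min}(A)+\alpha^p)^{1/p}$ one has $K_1 \le (\text{const})K^{3p-2}$ — indeed $2\sqrt N\|y\|_2/(\sigma_{\min}(A)\alpha^p)+2 \le 2(2\sqrt N\|y\|_2/\sigma_{\min}(A)+\alpha^p)/\alpha^p = 2K^p/\alpha^p$, so $K_1 \le \alpha^p(2K^p/\alpha^p)^{(3p-2)/p} \le 2^{(3p-2)/p}\alpha^{-(2p-2)}K^{3p-2} \le 4K^{3p-2}\alpha^{-(2p-2)}$ (using $p\ge 2$). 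Requiring $4K^{3p-2}\alpha^{-(2p-2)}\exp(-K_2 t)\le \epsilon/3$, i.e. $12K^{3p-2} \le \epsilon\alpha^{2p-2}\exp(K_2 t)$, and solving for $t$ gives exactly $t \ge L_{t^\sharp}$ after taking logarithms (the $\max\{\cdot,1\}$ just guards against a negative or tiny bound). So $t \ge L_{t^\sharp}$ controls the middle term.

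Third, for the discretization term I would apply \Cref{prop:grad_desc} (or \Cref{grad_desc_bound} plus \Cref{theta_phi_bound}) with accuracy parameter $\epsilon/3$ in place of $\epsilon$. That theorem says: if $\eta \le \min\{\hat\epsilon, \alpha/p\}/(40p\sqrt N\|A\|_{\mathrm{op}}^2 K^{2p-1})\cdot\exp(-50p^2\sqrt N\|A\|_{\mathrm{op}}^2 K^{2p-2}t)$ — which is exactly the condition $\eta \le U_{\eta^\sharp}$ — and $J = \lfloor t/\eta\rfloor$, then $\|\hat\psi - \psi_\alpha(t)\|_2 \le \epsilon/3$, where $\hat\epsilon = \min\{K, (\epsilon/3)/(2^p p N K^{p-1})\}$ matches the definition in the statement. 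The one subtlety is that the proposition allows $\hat\eta\in(0,U_{\eta^\sharp})$, then takes an integer $J\in(t/\hat\eta, t/\hat\eta+2)$ and finally sets $\eta\in(t/(J+1), t/(J))$; I would check that this $\eta$ still satisfies $\eta < \hat\eta \le U_{\eta^\sharp}$ (since $\eta < t/J < \hat\eta$) and that $J = \lfloor t/\eta\rfloor$ (since $t/(J+1) < \eta < t/J$ forces $J < t/\eta < J+1$), so \Cref{prop:grad_desc} applies verbatim. This indirection is present because the algorithm must choose rational $\eta$ and an integer $J$ consistently — a point that matters for the computability argument in the surrounding proposition but is harmless here.

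Adding the three pieces gives $\|\hat\psi - \mathcal{W}_p(A,y)\|_2 \le \epsilon/3 + \epsilon/3 + \epsilon/3 = \epsilon$, which is \eqref{eq:alg_bound}. The main obstacle, such as it is, is purely the constant-tracking: one must verify that the somewhat opaque formulas for $U_{\alpha^\sharp}$, $L_{t^\sharp}$, $\hat\epsilon$, $U_{\eta^\sharp}$ are exactly the ones produced by forcing each of the three bounds below $\epsilon/3$, and in particular the intermediate estimate $K_1 \le 4K^{3p-2}\alpha^{-(2p-2)}$ that converts \Cref{psi_convergence}'s constant into a power of $K$. There is no genuinely new mathematical content — everything reduces to the three cited theorems and elementary inequalities — so I expect the proof to be short, essentially a paragraph of triangle inequality plus three back-references with the constant computations spelled out.
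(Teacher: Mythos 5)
Your approach is the same as the paper's: split via the triangle inequality into three terms and use the hypotheses on $\alpha$, $t$, $\eta$ to force each term below $\epsilon/3$, invoking \Cref{assump:C_bounded2}, \Cref{psi_convergence}, and \Cref{prop:grad_desc} respectively, with the $\hat\eta,J,\eta$ indirection checked at the end. The paper's own proof is essentially a prose summary of exactly this.

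One small slip in the constant-tracking: you claim $K_1 \le 4K^{3p-2}\alpha^{-(2p-2)}$ ``using $p\ge 2$,'' but the exponent satisfies $2 \le (3p-2)/p < 3$, so $2^{(3p-2)/p}$ can be as large as (just below) $8$, not $4$; equality with $4$ holds only at $p=2$. The correct intermediate bound is $K_1 \le 8K^{3p-2}\alpha^{-(2p-2)}$, which after the $\epsilon/3$ requirement gives $24K^{3p-2} \le \epsilon\alpha^{2p-2}\exp(K_2t)$ and hence the $\ln(24K^{3p-2})$ in the stated $L_{t^{\sharp}}$ — whereas your computation produces $\ln(12K^{3p-2})$, so it does not ``give exactly $L_{t^\sharp}$'' as you assert. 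This is harmless for the truth of the proposition (the stated $L_{t^\sharp}$ with $24$ is the larger threshold, so it certainly suffices), but the derivation should use $8$ to match the formula in the statement.
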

\begin{proof} 
  We start by noticing that since $(A,y) \in \Omega$, we have that $\rank(A) = m$ and $y\neq 0$. This implies that $\norm{A}_\mathrm{op} \geq \sigma_{min}(A) > 0$ and that $\|y\|_2 > 0$, which ensures that $U_{\alpha^{\sharp}}$, $U_{\eta^{\sharp}}$ and $L_{ t^{\sharp}}$ are well-defined. 
   Next, we comment on how the three quantities $U_{\alpha^{\sharp}}, L_{ t^{\sharp}}$ and $U_{\eta^{\sharp}}$ are chosen and the implications of these choices. The constant $U_{\alpha^{\sharp}}$ is chosen so that if $\alpha \in (0,U_{\alpha^{\sharp}})$, then we know from \Cref{assump:C_bounded2} that we have $\norm{\psi_\alpha(\infty)-\mathcal{W}_p(A,y)}_2 \le \frac{\epsilon}{3}$. Furthermore, if $t \geq L_{ t^{\sharp}}$, then we know from \Cref{psi_convergence} that $\norm{\psi_{\alpha}(t)-\psi_{\alpha}(\infty)}_2 \le \frac{\epsilon}{3}$.
    Next, $U_{\eta^{\sharp}}$ is chosen to ensure that if $\eta \in (0,U_{\eta^{\sharp}})$ and $J = \lfloor t / \eta \rfloor$, we have $\|\hat{\psi} - \psi_{\alpha}(t)\|_2 \leq \frac{\epsilon}{3}$ by \Cref{prop:grad_desc}. Note that $\eta$ was chosen so that $\lfloor t / \eta \rfloor = J$. Furthermore, by the choices of $\eta$, $J$ and $\hat \eta$, we have $0 < \eta < \frac{t}{J} < \hat \eta < U_{\eta^{\sharp}})$. Hence, $\|\hat{\psi} - \psi_{\alpha}(t)\|_2 \leq \frac{\epsilon}{3}$.
Finally, using the triangle inequality twice on the derived inequalities gives the bound in \eqref{eq:alg_bound}.
\end{proof}

\begin{lemma}\label{lem:comp_const}
    Let $\epsilon, \alpha, t, C_1,C_2 > 0$ and $p\geq 2$ be given constants and let $\Omega$ be as in \eqref{eq:def_Omega}. For these constants, view the real-valued numbers 
$U_{\alpha^{\sharp}}$, $U_{\eta^{\sharp}}$ and $L_{ t^{\sharp}}$ from \Cref{p:a1_prop1} as functions, mapping $(A,y) \in \Omega$ to $\R$. Then the corresponding computational problems with $\Delta_1$-information $\{\tilde U_{\alpha^{\sharp}}, \tilde\Omega\}$, $\{\tilde U_{\eta^{\sharp}}, \tilde\Omega\}$, $\{\tilde L_{ t^{\sharp}},\tilde \Omega\}$ are computable. 
\end{lemma}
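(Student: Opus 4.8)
The plan is to exhibit each of the three quantities $U_{\alpha^\sharp}$, $L_{t^\sharp}$ and $U_{\eta^\sharp}$ of \Cref{p:a1_prop1} as a finite composition of functions already known to be computable, and then invoke closure of computability under composition (\Cref{rem:comp_func}). Inspecting the formulas, each of them is built from: the Turing computable constants $\epsilon,\alpha,t,C_1,C_2,p,N$ together with numerical constants such as $3,24,40,50,2^p$; the map $y\mapsto\|y\|_2$ (\Cref{l:comp_func}); the maps $A\mapsto\sigma_{\min}(A)$ and $A\mapsto\|A\|_{\mathrm{op}}=\sigma_1(A)$; the arithmetic operations $+,-,\cdot,/$ (\Cref{l:comp_func} \ref{it:arit}); $\exp$ and $\log$ (\Cref{l:comp_func}); $\max$ and $\min$ (\Cref{l:comp_func} \ref{it:max_min}); and powers $x\mapsto x^r$ for $x>0$ (\Cref{l:comp_func} \ref{it:pow_pos}) or for $x\ge 0$ and $r\ge 1$ (\Cref{lem:pow} \ref{it:comp_pow}). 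Since $K$ is such a composition, so is $\hat\epsilon$, and hence so are $U_{\alpha^\sharp}$, $L_{t^\sharp}$ and $U_{\eta^\sharp}$. The only ingredient not already supplied by the earlier lemmas is the computability of the extreme singular values, and this is the main step.

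To handle that step I would argue that $A\mapsto\sigma_{\min}(A)$ and $A\mapsto\sigma_1(A)$ are computable with $\Delta_1$-information on all of $\R^{m\times N}$. By Weyl's perturbation inequality $|\sigma_i(A)-\sigma_i(B)|\le\|A-B\|_{\mathrm{op}}\le\sqrt{mN}\,\max_{k,l}|A_{kl}-B_{kl}|$, every singular value is a globally Lipschitz, hence continuous, function of the matrix entries with an explicit modulus of continuity. One then builds an algorithm that, given $\Delta_1$-information for $A$ and a target accuracy $2^{-n}$, reads the entries of $A$ to accuracy $2^{-n-1}/\sqrt{mN}$ to obtain a rational matrix $\tilde A$ with $\|A-\tilde A\|_{\mathrm{op}}\le 2^{-n-1}$, then computes the singular values of $\tilde A$ to accuracy $2^{-n-1}$ --- these are the nonnegative square roots of the eigenvalues of the symmetric positive semidefinite rational matrix $\tilde A\tilde A\T$, i.e.\ roots of a polynomial with rational coefficients, which a Turing machine can approximate to any accuracy by standard root isolation (see \cite{weihrauch2000computable}) --- and returns $\sigma_{\min}(\tilde A)$ and $\sigma_1(\tilde A)$. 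By the triangle inequality these lie within $2^{-n}$ of $\sigma_{\min}(A)$ and $\sigma_1(A)$, which establishes the claim.

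Finally I would check that on $\Omega$ every composition is well formed, i.e.\ no division by zero or logarithm of a nonpositive number occurs. For $(A,y)\in\Omega$ we have $\mathrm{rank}(A)=m$, so $\sigma_{\min}(A)>0$, and $y\ne 0$, so $\|y\|_2>0$; hence $\|A\|_{\mathrm{op}}\ge\sigma_{\min}(A)>0$ and $K>0$, so every quantity appearing in a denominator --- including the positive powers of $\sigma_{\min}(A)$, $\|A\|_{\mathrm{op}}$, $\|y\|_2$ and $K$ --- is nonzero, while the arguments $24K^{3p-2}$ and $\epsilon\alpha^{2p-2}$ of the logarithms and every base of a power with a possibly non-integer exponent are strictly positive (using $p\ge 2$, so that the exponents $3p-2,2p-1,2p-2,p-1,\tfrac{1}{p},\tfrac{1}{pC_2}$ fall in the ranges allowed by \Cref{l:comp_func} \ref{it:pow_pos} and \Cref{lem:pow} \ref{it:comp_pow}). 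Then \Cref{rem:comp_func} yields that $\{\tilde U_{\alpha^\sharp},\tilde\Omega\}$, $\{\tilde U_{\eta^\sharp},\tilde\Omega\}$ and $\{\tilde L_{t^\sharp},\tilde\Omega\}$ are computable. I expect the singular-value computability to be the only non-routine point; the remainder is bookkeeping of which earlier lemma covers each operation, together with the checks on domains of definition above.
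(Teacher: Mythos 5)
Your proof is correct and mirrors the paper's structure at the top level: reduce everything to computability of $A\mapsto\|A\|_{\mathrm{op}}$, $A\mapsto\sigma_{\min}(A)$, $y\mapsto\|y\|_2$ and the elementary functions, then invoke closure under composition (\Cref{rem:comp_func}). Where you diverge is in how the singular-value computability is established. The paper delegates this to an external result (Proposition~17 of \cite{ziegler2004computability}, computability of eigenvalues of real symmetric matrices) together with the identity $\sigma_k(A)=\sqrt{\lambda_k(AA\T)}$. You instead give a self-contained argument: Weyl's perturbation inequality supplies an explicit Lipschitz modulus for each $\sigma_k$ in the matrix entries, so one can truncate the $\Delta_1$-information to a rational matrix $\tilde A$ within a controlled operator-norm distance of $A$, compute the singular values of $\tilde A$ to the same tolerance by root isolation for the rational characteristic polynomial of $\tilde A\tilde A\T$, and combine the two errors by the triangle inequality. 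This buys you a fully elementary proof with an explicit algorithm and error budget, at the cost of re-deriving what the cited proposition provides as a black box. Your additional explicit check that no division by zero, logarithm of a nonpositive number, or fractional power of a nonpositive base occurs on $\Omega$ is also a welcome refinement --- the paper's proof leaves those domain verifications implicit in the references to \Cref{l:comp_func} and \Cref{lem:pow}. One small point worth double-checking in your bookkeeping: the exponent $\tfrac{1}{p}\le\tfrac12$ and the exponent $\tfrac{1}{pC_2}$ need not be $\ge 1$, so \Cref{lem:pow}~\ref{it:comp_pow} does not apply to them; but in every place they occur (namely $K$ and $U_{\alpha^\sharp}$) the base is strictly positive on $\Omega$, so \Cref{l:comp_func}~\ref{it:pow_pos} is the right tool, exactly as you flag.
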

\begin{proof}
We will argue that for each of these functions, there exists a general algorithm which takes $(\tilde A,\tilde y) \in \tilde{\Omega}$ as input, generates the constants listed in the first part of the proposition and executes the corresponding computations with error control. To achieve the desired error control, we will argue that the functions involved are all computable. Thus, as computability is closed under function compositions, the functions $\tilde U_{\alpha^{\sharp}}$, $\tilde U_{\eta^{\sharp}}$, and $\tilde L_{ t^{\sharp}}$ are computable. We refer to \Cref{rem:comp_func} for how to compose oracle Turing machines with general algorithms. 

\begin{claim}\label{sigma_claim}
  The two mappings $A \mapsto \norm{A}_\mathrm{op}$ and $A \mapsto \sigma_{min}(A)$ are computable with $\Delta_1$-information.
\end{claim}
Assume that the claim is true. Then the result follows from the claim, \Cref{l:comp_func} and the discussion above, as each of the three functions $U_{\alpha^{\sharp}}$, $U_{\eta^{\sharp}}$ and $L_{ t^{\sharp}}$ only consists of compositions of computable functions. 

We proceed to prove the claim. From Proposition 17 in \cite{ziegler2004computability}, we know that the eigenvalues of any real-valued symmetric $m\times m$ matrix is computable in a Turing sense. Furthermore, from \Cref{l:comp_func} \ref{it:arit} it is clear that $A\mapsto AA^{\top}$ is computable in a Turing sense. Note that $\norm{A}_\mathrm{op} = \sigma_1(A)$ and $\sigma_{min}(A) = \sigma_m(A)$. Now, since $\sigma_k(A) = \sqrt{\lambda_k(AA^{\top})}$ for any $k\in \{1,\ldots, m\}$, where $\lambda_{k}(AA^{\top})$ denotes the $k$'th eigenvalue of the symmetric matrix $AA^{T}$, the claim follows by composing computable functions.
\end{proof}

\begin{lemma}\label{l:l1_computable}
Let $\alpha, \eta > 0$, $p\geq 2$ and $J \in \N$ be given, and let $\Omega$ be the set in \eqref{eq:def_Omega}. For any $(A,y) \in \Omega$, and any set of $\Delta_1$-information $\hat{\Lambda}\in\mathcal{L}^1(\Lambda)$ for $(A,y)$ the output vector $\hat{\psi}$ of \Cref{alg:l1_min2} is computable. 
\end{lemma}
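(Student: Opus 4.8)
The plan is to show that \Cref{alg:l1_min2} consists entirely of compositions of computable operations, and then invoke the fact (\Cref{rem:comp_func}) that computability is closed under composition. First I would observe that the algorithm takes as genuine numerical input only the pair $(A,y)$, supplied with $\Delta_1$-information, while $\alpha,\eta,p,J$ are fixed rational/real parameters (with $p\ge 2$ and $J\in\N$) that the ambient general algorithm may treat as Turing-computable constants via \Cref{l:comp_func}(i); so it suffices to track how $(A,y)$ is processed. The body of the algorithm is the $J$-fold iteration of the map $\mathrm{GD}_{\eta,p}$ from \eqref{eq:GD}, followed by the final readout $\hat\psi = |\theta^{(J)}_+|^p - |\theta^{(J)}_-|^p$.

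Next I would decompose $\mathrm{GD}_{\eta,p}$ into elementary computable pieces. Each application requires: (a) forming the products $|\theta_\pm|^p - |\theta_\mp|^p$ and hence the residual $A(|\theta_+|^p-|\theta_-|^p)-y$, which uses only $x\mapsto x^p$ on $x\ge0$ (computable by \Cref{lem:pow}\ref{it:comp_pow}, since $p\ge1$), finitely many additions and multiplications (\Cref{l:comp_func}\ref{it:arit}), and the entries of $A$ and $y$ read through $\Delta_1$-information; (b) multiplying by $A^\top$, again finitely many arithmetic operations; (c) forming $\mathrm{sign}(\theta_\pm)\odot|\theta_\pm|^{p-1}$, which is exactly the function $(x,p-1)\mapsto \mathrm{sign}(x)|x|^{p-1}$ that is computable by \Cref{lem:pow}\ref{it:sign_pow} (using $p-1\ge1$); and (d) the outer subtraction/addition of $\eta p(\cdots)$, more arithmetic. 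Thus $\mathrm{GD}_{\eta,p}$ is a composition of computable functions, hence computable. Iterating it $J$ times — a fixed finite number — is again a composition of computable functions, so $\theta^{(J)}$ is computable from $(A,y)$ with $\Delta_1$-information. Finally the readout $\hat\psi=|\theta^{(J)}_+|^p-|\theta^{(J)}_-|^p$ is one more application of \Cref{lem:pow}\ref{it:comp_pow} together with subtraction, so $\hat\psi$ is computable.

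The only subtlety — and the place where one must be slightly careful rather than purely formal — is the use of $\mathrm{sign}$: the map $x\mapsto\mathrm{sign}(x)$ alone is not computable at $0$, so one cannot naively compose "compute $\mathrm{sign}(\theta_\pm)$, then multiply." The point is that \Cref{lem:pow}\ref{it:sign_pow} packages $\mathrm{sign}(x)|x|^{p-1}$ as a single computable function (its proof exploits $\mathrm{sign}(x)|x|^{p-1}=\max\{0,x\}^{p-1}-\max\{0,-x\}^{p-1}$), so as long as in the decomposition of $\mathrm{GD}_{\eta,p}$ we never isolate $\mathrm{sign}(\theta_\pm)$ but always keep it bundled with the factor $|\theta_\pm|^{p-1}$, every node in the composition graph is a genuinely computable function of its inputs. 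I expect this to be the main thing to get right; everything else is bookkeeping of finitely many arithmetic operations and appeals to \Cref{l:comp_func}, \Cref{lem:pow}, and \Cref{rem:comp_func}. I would close by noting that the total computation uses only finitely many evaluations from $\hat\Lambda$ (consistent with \Cref{definition:Algorithm}), since $A$, $y$ are finite-dimensional and $J$ is finite, so the resulting map is a general algorithm computing $\hat\psi$ with error control, i.e.\ $\hat\psi$ is computable.
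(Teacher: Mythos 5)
Your proposal is correct and follows essentially the same route as the paper's own proof: decompose $\mathrm{GD}_{\eta,p}$ into arithmetic operations, $x\mapsto x^p$ on $[0,\infty)$ (\Cref{lem:pow}\ref{it:comp_pow}), and the bundled map $x\mapsto\sign(x)|x|^{p-1}$ (\Cref{lem:pow}\ref{it:sign_pow}, valid since $p-1\ge 1$), then close under composition via \Cref{rem:comp_func}. The only thing worth making explicit, which the paper does and you leave implicit, is that computing $|\theta_\pm|^p$ from a possibly negative $\theta_\pm$ also requires $x\mapsto|x|$ (\Cref{l:comp_func}\ref{it:abs_comp}) before applying the power map on $[0,\infty)$.
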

\begin{proof}
The proof uses the same line of arguments as in the proof of \Cref{lem:comp_const}, and utilizes that the composition of computable functions is computable. The main hurdle is to show that the function $\mathrm{GD}_{\eta,p}$ is computable for inexact input $A,y$ and $\theta$ (the $\theta$ will be inexact after the first iteration). Now, recall from \Cref{lem:pow} \ref{it:sign_pow} that $(z,q) \mapsto \sign(z)|z|^q$ for $q\geq 1$ and $z \in \R$ is computable. Furthermore, from the same lemma we have that $(z,q)\mapsto z^q$ is computable for $z\geq 0$ and $q\geq 1$. It is clear from \Cref{l:comp_func} \ref{it:abs_comp} that also $x\mapsto |x|$ is computable. Thus, as $p-1\geq 1$ for $p\geq 2$, and all the other operations in $\mathrm{GD}_{\eta,p}$ are arithmetic operations, it is clear that $\mathrm{GD}_{\eta,p}$ is computable for inexact input $A,y$ and $\theta$. This implies that $\hat\psi$ is computable.     
\end{proof}

\appendix
\section{Appendix}
\subsection{If \texorpdfstring{$A$}{} has linearly dependent rows}\label{rank_A_l_m}
Throughout this manuscript we have assumed that $\mathrm{rank}(A) =m$. If this is not the case, then we can still analyze the flow of the pair $(A,y)$ via some rather straightforward observations. First, observe that if $A = 0$, then the dynamics studied in this manuscript become trivial. So assume that  $0 < \rank(A) = m' < m$. 

Now, let $QR=A$ be a thin QR decomposition of $A$, where $Q \in \R^{m\times m'}$ is semi-orthogonal and $R \in \R^{m'\times N}$. Next, let $\tilde A = Q\T A$ and $\tilde y = Q\T y$, and 
observe that the gradient flow (and gradient descent) with the data pair $(A,y)$ is identical to the flow of the pair $(\tilde A,\tilde y)$. Indeed, we may rewrite the loss in \eqref{eq:flow1} as follows
{ \allowdisplaybreaks
\begin{align*}
  \frac{1}{2}\norm{A\psi_\theta-y}_2^2 &= \frac{1}{2}\norm{(QQ\T+I-QQ\T)(A\psi_\theta-y)}_2^2\\
                                       &= \frac{1}{2}\norm{QQ\T(A\psi_\theta-y)}_2^2 + \frac{1}{2}\norm{(I-QQ\T)(A\psi_\theta-y)}_2^2\\
                                       &= \frac{1}{2}\norm{Q\T(A\psi_\theta-y)}_2^2 + \frac{1}{2}\norm{(I-QQ\T)y}_2^2\\
                                       &= \frac{1}{2}\norm{\tilde A\psi_\theta-\tilde y)}_2^2 + \frac{1}{2}\norm{(I-QQ\T)y}_2^2.
\end{align*}
}
Now, since the expression $\frac{1}{2}\norm{(I-QQ\T)y}_2^2$ is constant with respect to the parameters $\theta$, it does not change the gradient flow \eqref{eq:grad_flow} or gradient descent \eqref{eq:grad_desc}. We, therefore, conclude that $(A,y)$ and $(\tilde A,\tilde y)$ have identical flows. 
This observation can be used to extend the convergence result in \Cref{thm:grad_flow}\ref{it:dep_bound} to cases where $\rank(A) < m$, in which case we would find that $\lim_{\alpha\to 0}\psi_{\alpha}(\infty) = \mathcal{W}_p(\tilde A, \tilde y)$.

\subsection{Details for \Cref{ex:positive}}\label{s:ex_details}
{ \allowdisplaybreaks
\begin{alignat*}{2}
  A_1 &= 
  \begin{bmatrix*}[r]
  -0.111 & 0.120 & -0.370 & -0.240 & -1.197\\
0.209 & -0.972 & -0.755 & 0.324 & -0.109\\
0.210 & -0.391 & 0.235 & 0.665 & 0.353
  \end{bmatrix*}, \quad
  &&y_1 = 
  \begin{bmatrix*}[r]
  0.973\\
-0.039\\
-0.886
  \end{bmatrix*},\\
  A_2 &= 
  \begin{bmatrix*}[r]
  1 & 1 & 1\\
3 & 0 & 1
  \end{bmatrix*}, \quad
  &&y_2 = 
  \begin{bmatrix*}[r]
  3\\3
  \end{bmatrix*},\\
  A_3 &= 
  \begin{bmatrix*}[r]
  2 & -1 & 0 & 1\\
0 & 3 & 2 & 0
  \end{bmatrix*}, \quad
  &&y_3 = 
  \begin{bmatrix*}[r]
  0\\
6
  \end{bmatrix*}.
\end{alignat*}
}
$A_1$ and $y_1$ were sampled randomly. The rest were chosen to showcase how $\mathcal{W}_{p}$ depends on $p$. Note that $\mathcal{U}(A_2,y_2) = \{(1-\mu,2-2\mu,3\mu)\T \colon \mu \in [0,1]\}$ and $\mathcal{U}(A_3,y_3) = \{(1-\mu,2-2\mu,3\mu,0)\T \colon \mu \in [0,1]\}$. Furthermore, let $\mu_2 = \frac{4-6\sqrt[3]{2}+9\sqrt[3]{4}}{31}$ and $\mu_p = 1/\left(1+\left(\frac{3^{2/p}}{2^{2/p}+1}\right)^{-\frac{p}{p-2}}\right)$ for $p > 2$. Then for $p \ge 2$, $\mathcal{W}_{p}(A_2,y_2) = (1-\mu_p,2-2\mu_p,3\mu_p)\T$ and $\mathcal{W}_{p}(A_3,y_3) = (1-\mu_p,2-2\mu_p,3\mu_p, 0)\T$.

\subsection{Precise statements of well-known theorems}
Since it is challenging to find references in the literature with the exact statements we need, we include the precise statement of some well-known theorems to ease the referencing. Our first theorem is the well-known KKT-conditions, see e.g., \cite[Sec.\ 3.7]{beck2017first}, \cite[Sec.\ 5.5]{boydVand} or \cite[Sec. 5.5--5.7]{andreasson2005introduction}. 
\begin{theorem}[Karush–Kuhn–Tucker conditions with linearity constraint qualification]\label{KKT}
  Let $A \in \R^{m \times N}$, $y \in \R^m$, $B \in \R^{k\times N}$, and $z \in \R^k$. Let $f \colon \R^N \to \R$ be continuously differentiable and convex, when restricted to the set $\{x \in \R^N : Ax = y, Bx \ge z\}$. Here $\ge$ denotes elementwise inequalities. Consider the optimization problem
  \begin{align*}
    \min_{x \in \R^N} f(x) \text{ subject to } Ax = y \text{ and } Bx \ge z.
  \end{align*}

  Then $x^* \in \R^N$ is a minimizer of the above optimization problem if and only if there exist $\lambda \in \R^m, \mu \in [0,\infty)^k$ such that
  \begin{align*}
    \nabla f(x^*) &= A\T\lambda + B\T\mu,\\
    Ax^* &= y,\\
    Bx^* &\ge z,\\
    [Bx^*-z]_i \mu_i &= 0 \quad \forall i \in \{1,\dots,k\}.
  \end{align*}
\end{theorem}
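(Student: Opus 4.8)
The final statement to prove is \Cref{KKT} --- the Karush--Kuhn--Tucker conditions with linear constraint qualification --- so the plan is essentially to assemble a clean proof from standard convex-analysis facts, since the paper explicitly says it includes this ``to ease the referencing.'' The key observation is that all constraints are affine, so the linear independence constraint qualification is automatically satisfied (this is the ``linearity constraint qualification'' named in the theorem). Concretely, I would split into the two implications of the ``if and only if.''

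For the easy direction (\emph{sufficiency}), suppose $x^*$ is feasible and there exist multipliers $\lambda \in \R^m$, $\mu \in [0,\infty)^k$ with $\nabla f(x^*) = A\T\lambda + B\T\mu$ and complementary slackness $[Bx^*-z]_i\mu_i = 0$. Take any other feasible $x$, so $Ax=y=Ax^*$ and $Bx \ge z$. Convexity of $f$ on the feasible set gives $f(x) - f(x^*) \ge \nabla f(x^*)\T(x-x^*) = \lambda\T A(x-x^*) + \mu\T B(x-x^*)$. The first term vanishes since $Ax = Ax^*$; the second equals $\mu\T(Bx - z) - \mu\T(Bx^* - z) = \mu\T(Bx-z) \ge 0$ using complementary slackness and $\mu \ge 0$, $Bx \ge z$. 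Hence $f(x) \ge f(x^*)$, so $x^*$ is a global minimizer.

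For the harder direction (\emph{necessity}), suppose $x^*$ is a minimizer. The plan is to invoke a standard first-order optimality / Farkas-type argument: since $f$ is differentiable and convex on the feasible polyhedron $P = \{x : Ax = y,\ Bx \ge z\}$, $x^*$ minimizes $f$ over $P$ iff $\nabla f(x^*)\T(x - x^*) \ge 0$ for all $x \in P$, i.e. $-\nabla f(x^*)$ lies in the polar of the tangent cone of $P$ at $x^*$. Because $P$ is polyhedral, the tangent cone at $x^*$ is exactly $T = \{d : Ad = 0,\ B_{\mathcal{I}}d \ge 0\}$ where $\mathcal{I} = \{i : [Bx^*]_i = z_i\}$ is the active index set, and its polar is the finitely generated cone spanned by $\pm$(rows of $A$) and the rows $\{B_i : i \in \mathcal{I}\}$ with nonnegative coefficients (this is Farkas' lemma / the Minkowski--Weyl theorem for polyhedral cones). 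Therefore there exist $\lambda \in \R^m$ and $\mu_i \ge 0$ for $i \in \mathcal{I}$ with $\nabla f(x^*) = A\T\lambda + \sum_{i\in\mathcal{I}}\mu_i B_i$; extending $\mu_i = 0$ for $i \notin \mathcal{I}$ gives the stationarity equation globally, and $\mu_i = 0$ for inactive $i$ is exactly complementary slackness $[Bx^*-z]_i\mu_i = 0$. Feasibility $Ax^*=y$, $Bx^*\ge z$ is given. This completes the proof, and the cleanest write-up simply cites \cite[Sec.\ 5.5--5.7]{andreasson2005introduction} or \cite[Sec.\ 3.7]{beck2017first} for the polyhedral first-order condition and Farkas' lemma rather than reproving them.

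The main obstacle is purely expository: the necessity direction genuinely relies on a constraint-qualification-free version of KKT that holds precisely because the constraints are affine, and one must be careful to state the tangent-cone / Farkas step at the right level of generality (convexity of $f$ only on the feasible set, not on all of $\R^N$, is enough since the tangent cone only sees feasible directions). Since the paper's stated intent is to record a citable statement, I would keep the necessity argument short --- reduce to the active constraints, quote Farkas' lemma, and note that the affine structure makes the tangent cone of the feasible polyhedron equal to the cone of feasible directions --- rather than developing the polyhedral theory from scratch.
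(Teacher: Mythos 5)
Your proof is correct at the sketch level, but it takes a genuinely different route from the paper: the paper does not reprove KKT at all. Its entire proof is a one-sentence pointer to the textbook of Andr\'easson, Evgrafov and Patriksson --- necessity from their Theorem 5.33 together with the automatic constraint qualification for affine constraints (their Proposition 5.44), and sufficiency from their Theorem 5.45. You anticipate this yourself in your last sentence. What your version buys is a self-contained argument: sufficiency is a three-line computation combining the gradient inequality, complementary slackness and feasibility, while necessity is reduced, via the tangent cone of the feasible polyhedron and Farkas' lemma (or LP duality), to a standard fact about polyhedral cones. Your remark that convexity of $f$ on the feasible set alone suffices is exactly the right observation to make explicit, since the theorem as stated assumes no more.

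One minor wrinkle in the necessity step: you write that $-\nabla f(x^*)$ lies in the polar of the tangent cone $T=\{d: Ad=0,\ B_{\mathcal I}d\ge 0\}$, and then describe this polar as the cone generated by $\pm$\,(rows of $A$) and $\{B_i\}_{i\in\mathcal I}$ with nonnegative coefficients. Under the standard sign convention $T^\circ=\{v: v\T d\le 0\ \text{for all } d\in T\}$, the cone you describe is in fact the dual cone $T^*=-T^\circ$, and the first-order condition reads $\nabla f(x^*)\in T^*$. Your two sign slips cancel, and the conclusion $\nabla f(x^*)=A\T\lambda+\sum_{i\in\mathcal I}\mu_i B_i$ with $\mu\ge 0$ is correct; just tidy the intermediate sentence so the cone you describe matches whichever of $T^\circ$ or $T^*$ you are invoking.
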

\begin{proof}
  An introduction to and proof of the KKT optimality conditions can be found in sections 5.5-5.7 in \cite{andreasson2005introduction}. Specifically, necessity is proven in \cite[Thm.\ 5.33]{andreasson2005introduction} with linearity constraint qualification \cite[Prop.\ 5.44]{andreasson2005introduction}, while sufficiency holds by \cite[Thm.\ 5.45]{andreasson2005introduction}.
\end{proof}

Our next theorem looks at the global truncation error of the forward Eulers method. The crucial part here is the bound on the step size $\eta$. In e.g, \cite[Thm. 7.4]{hairer1993solving} one state the result for a \enquote{sufficiently small} choice of $\eta$, and in \cite[Thm.\ 1.1]{iserles2009first} one states that Euler's method converges.  
\begin{theorem}[Global truncation error of the forward Euler method]\label{euler_method}
  Let $y \colon [0,\infty) \to \R^N$ satisfy the differential equation
  \begin{align*}
    \frac{d}{dt}y = f(y).
  \end{align*}
  where the differentiable function $f \colon \R^N \to \R^N$ satisfies the following regularity conditions. There are positive constants $C_1,C_2$ and $\delta$ such that for $t \in [0,\infty)$ and $x \in \R^N$, if $\norm{x-y(t)}_2 \le \delta$, then
  \begin{align}
    \norm{f(x)}_2 &\le C_1\label{f_bound_assump}\\
    \norm{\nabla f(x)}_{\mathrm{op}} &\le C_2.\label{f_lip_assump}
  \end{align}
  Fix $T \in [0,\infty)$ and pick $\epsilon > 0$. Let $0 < \eta \le \frac{\min\{\epsilon,\delta\}}{C_1}e^{-C_2T}$ and
  \begin{align*}
    y_0 &= y(0)\\
    y_{k+1} &= y_k+\eta f(y_k)\ \text{ for } k \in \{0,\dots,\lfloor T / \eta \rfloor-1\}.
  \end{align*}
  Then
$
    \norm{y(T)-y_{\lfloor T / \eta \rfloor}}_2 \le \epsilon.
$
\end{theorem}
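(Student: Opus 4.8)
The plan is to run the standard one-step-error accumulation argument for the forward Euler method, but with care to verify that the hypothesis $\norm{x-y(t)}_2 \le \delta$ under which the bounds \eqref{f_bound_assump}--\eqref{f_lip_assump} hold is never violated along the numerical trajectory. Write $t_k = k\eta$, $e_k = y_k - y(t_k)$, and let $n = \lfloor T/\eta\rfloor$ so that $t_n \le T$. The key quantitative input is that $f$ is $C_2$-Lipschitz (in operator norm of the derivative) and bounded by $C_1$ in the tube of radius $\delta$ around the exact solution; from the bound on $f$ one immediately gets $\norm{y(t)-y(s)}_2 \le C_1|t-s|$, and in particular each exact step moves by at most $C_1\eta$.

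First I would set up an induction hypothesis of the form $\norm{e_k}_2 \le \phi_k := \frac{C_1\eta}{2}\,\frac{e^{C_2 t_k}-1}{?}$ — more precisely the textbook recursion $\norm{e_{k+1}}_2 \le (1+C_2\eta)\norm{e_k}_2 + \tau$, where $\tau$ is the local truncation error of one Euler step, bounded by $\tfrac12 C_1 C_2 \eta^2$ using Taylor's theorem with the second-derivative-type bound coming from $\norm{\nabla f}_{\mathrm{op}}\le C_2$ (since $\frac{d}{dt}f(y(t)) = \nabla f(y(t))\,f(y(t))$ has norm $\le C_1 C_2$). Solving the recursion gives $\norm{e_n}_2 \le \tau\,\frac{(1+C_2\eta)^n - 1}{C_2\eta} \le \frac{C_1\eta}{2}(e^{C_2 t_n}-1) \le \frac{C_1\eta}{2}e^{C_2 T}$. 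With the stated choice $\eta \le \frac{\min\{\epsilon,\delta\}}{C_1}e^{-C_2 T}$ this is $\le \tfrac12\min\{\epsilon,\delta\} \le \epsilon$, which is the claimed bound. The constant factors here are slightly loose relative to the hypothesis, which is fine — the hypothesis on $\eta$ has room to spare.

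The one subtlety — and the main thing to get right — is that all of the above is only legitimate if, at every step $k \le n$, the point $y_k$ stays within distance $\delta$ of the exact solution, so that the bounds \eqref{f_bound_assump}--\eqref{f_lip_assump} may be invoked when estimating the step from $y_k$. I would fold this into the induction: assuming $\norm{e_j}_2 \le \frac{C_1\eta}{2}e^{C_2 t_j} \le \frac12\delta < \delta$ for all $j \le k$ (which holds by the same computation as above, using $t_j \le T$ and the bound on $\eta$), the bounds apply at $y_k$, the recursion step is valid, and one obtains $\norm{e_{k+1}}_2 \le \frac{C_1\eta}{2}e^{C_2 t_{k+1}} \le \frac12\delta$, closing the induction. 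Since $\eta \le \delta/(C_1 e^{C_2 T}) $ already forces $C_1\eta \le \delta$, the base case $\norm{e_0}_2 = 0$ and the inductive bound both sit comfortably inside the tube. Once the induction is closed through $k = n$, the final estimate $\norm{y(T) - y_n}_2 \le \epsilon$ follows from the $k=n$ case together with $y_n$ being at parameter $t_n = \lfloor T/\eta\rfloor\eta$; if one wants the error exactly at $T$ rather than at $t_n$, note $\norm{y(T)-y(t_n)}_2 \le C_1(T-t_n) \le C_1\eta$, but in this paper $n = \lfloor T/\eta\rfloor$ is exactly the index used in \Cref{grad_desc_bound}, so no extra term is needed.
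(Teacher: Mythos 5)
Your approach is essentially the paper's: the standard Euler global-error recursion $\norm{e_{k+1}}_2 \le (1+C_2\eta)\norm{e_k}_2 + \tfrac12 C_1C_2\eta^2$, unrolled to get $\norm{e_k}_2 \le \tfrac{C_1\eta}{2}(e^{C_2 k\eta}-1)$, together with an induction that simultaneously certifies the iterates stay inside the $\delta$-tube where \eqref{f_bound_assump}--\eqref{f_lip_assump} apply. That part matches the paper and is sound.

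The flaw is the final step, where you assert that ``no extra term is needed.'' The theorem's conclusion is $\norm{y(T)-y_{\lfloor T/\eta\rfloor}}_2 \le \epsilon$, whereas your induction controls $\norm{y(t_n)-y_n}_2$ with $t_n = \lfloor T/\eta\rfloor\eta$. Since $T$ need not be an integer multiple of $\eta$, $t_n < T$ in general and the triangle inequality forces you to add $\norm{y(T)-y(t_n)}_2 \le C_1(T-t_n) < C_1\eta$, exactly as the paper does. This is not cosmetic: with your intermediate simplification $\norm{e_n}_2 \le \tfrac{C_1\eta}{2}e^{C_2 T}$ (where you dropped the $-1$), adding the gap term gives $\tfrac{C_1\eta}{2}e^{C_2 T} + C_1\eta \le \tfrac12\min\{\epsilon,\delta\} + \min\{\epsilon,\delta\} = \tfrac32\min\{\epsilon,\delta\}$, which can exceed $\epsilon$. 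Keeping the $-1$, as the paper does, closes the argument: $\tfrac{C_1\eta}{2}(e^{C_2 T}-1) + C_1\eta = C_1\eta\,\tfrac{e^{C_2 T}+1}{2} \le C_1\eta\, e^{C_2 T} \le \min\{\epsilon,\delta\} \le \epsilon$.
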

\begin{proof}
  We first reformulate the first regularity condition on $f$. Let $a, b \in [0,\infty),\ a \le b$, then 
  \begin{align}\label{y_lip}
    \norm{y(b)-y(a)}_2 = \norm{\int_a^b f(y(t))\ dt}_2 &\le \int_a^b \norm{f(y(t))}_2\ dt \le C_1(b-a).
  \end{align}
  The last inequality uses \eqref{f_bound_assump}.

  We will now use induction to prove 
  \begin{align}\label{induction}
    \norm{y(k\eta)-y_k}_2 \le \eta\frac{C_1}{2}\left(e^{C_2\eta k}-1\right) \text{ for } k \in \{0,\dots,\lfloor T / \eta \rfloor\}.
  \end{align}
  The base case $\norm{y(0)-y_0}_2 = 0$ holds. Next, assume \eqref{induction} is true up to some $k\in \{0,\dots,\lfloor T / \eta \rfloor-1\}$, we will prove it holds for $k+1$. First, we bound the local truncation error using \eqref{f_lip_assump} and \eqref{f_bound_assump}
  \begin{align*}
    \norm{y(k\eta+\eta)-y(k\eta)-\eta f(y(k\eta))}_2 &= \norm{\int_{k\eta}^{k\eta+\eta} f(y(t))-f(y(k\eta))\ dt}_2\\
    = \norm{\int_{k\eta}^{k\eta+\eta}\int_{k\eta}^t \nabla f(y(s))f(y(s))\ ds\ dt}_2
    &\le \int_{k\eta}^{k\eta+\eta}\int_{k\eta}^t \norm{\nabla f(y(s))}_\mathrm{op}\norm{f(y(s))}_2\ ds\ dt\\
    &\le \frac{C_1 C_2}{2}\eta^2.
  \end{align*}
  Next, by \eqref{induction} and the assumption on $\eta$, we have $\norm{y(k\eta)-y_k}_2 \le \eta\frac{C_1}{2}\left(e^{C_2\eta k}-1\right) \le \eta\frac{C_1}{2}\left(e^{C_2 T}-1\right) \le \delta$, so we may use the assumption \eqref{f_lip_assump} to get 
  \begin{align}\label{f_lip}
    \begin{split}
  &\norm{f(y(k\eta))-f(y_k)}_2 = \norm{\int_0^1 \nabla f\big(y_k+t\cdot(y(k\eta)-y_k)\big)(y(k\eta)-y_k)\ dt}_2\\
      \le\ &\int_0^1 \norm{\nabla f\big(y_k+t(y(k\eta)-y_k)\big)}_\mathrm{op}\norm{y(k\eta)-y_k}_2 dt \le C_2 \norm{y(k\eta)-y_k}_2.
    \end{split}
  \end{align}
  Finally, we can prove the next induction hypothesis using the triangle inequality, \eqref{f_lip}, and \eqref{induction}
  \begin{align*}
    & \norm{y\big((k+1)\eta\big)-y_{k+1}}_2 = \norm{y(k\eta+\eta)-y_k-\eta f(y_k)}_2\\
    \le\ &\norm{y(k\eta+\eta)-y(k\eta)-\eta f(y(k\eta))}_2+\norm{y(k\eta)-y_k}_2+\norm{\eta f(y(k\eta))-\eta f(y_k)}_2\\
    \le\ &\frac{C_1C_2}{2}\eta^2+\norm{y(k\eta)-y_k}_2+\eta \norm{f(y(k\eta))-f(y_k)}_2
    \le\ \frac{C_1C_2}{2}\eta^2+(1+C_2\eta)\norm{y(k\eta)-y_k}_2\\
    \le\ &\frac{C_1C_2}{2}\eta^2+(1+C_2\eta)\eta\frac{C_1}{2}\left(e^{C_2\eta k}-1\right)
    = \eta\frac{C_1}{2}\big((1+C_2\eta)e^{C_2\eta k}-1\big) \le \eta\frac{C_1}{2}\left(e^{C_2\eta (k+1)}-1\right).
  \end{align*}
  By induction, we hence have $\norm{y(\lfloor T / \eta \rfloor\eta) - y_{\lfloor T / \eta \rfloor}}_2 \le \eta\frac{C_1}{2}\left(e^{C_2\eta \lfloor T / \eta \rfloor}-1\right) \le \eta C_1\left(e^{C_2T}-1\right)$. Using \eqref{y_lip} to bound the last part of the path $y$, we get the desired bound
  \begin{align*}
    &\norm{y(T)-y_{\lfloor T / \eta \rfloor}}_2 \le \norm{y(T)-y(\lfloor T / \eta \rfloor\eta)}_2 + \norm{y(\lfloor T / \eta \rfloor\eta)-y_{\lfloor T / \eta \rfloor}}_2\\
    \le &C_1\big(T-\lfloor T / \eta \rfloor\eta\big)+\eta C_1\left(e^{C_2T}-1\right) \le C_1\eta + \eta C_1\left(e^{C_2T}-1\right) \le \epsilon.
  \end{align*}
\end{proof}

{\small
\bibliographystyle{abbrv}
\bibliography{references}
}

\end{document}